\pgfplotsset{compat=1.18}
\definecolor{mygreen}{rgb}{0.16,.55,0.0}
\renewcommand{\div}{\mathrm{div}}
\theoremstyle{plain}
\newtheorem{prop}{Proposition}[section]
\newtheorem{lem}[prop]{Lemma}
\newtheorem{thm}[prop]{Theorem}
\newtheorem{cor}[prop]{Corollary}
\theoremstyle{definition}
\newtheorem{example}[prop]{Example}
\newtheorem{rem}[prop]{Remark}
\theoremstyle{plain}
\theoremstyle{def}
\theoremstyle{plain}
\theoremstyle{def}
\theoremstyle{plain}
\theoremstyle{def}
\def\varddots{\mathinner{\raise7pt\vbox{\kern3pt\hbox{.}}\mkern1mu\smash{\raise4pt\hbox{.}}\mkern1mu\smash{\raise1pt\hbox{.}}}}
\DeclareMathAlphabet{\mathpzc}{OT1}{pzc}{m}{it}
\DeclareMathOperator{\CFM}{CFM}
\DeclareMathOperator{\Id}{Id}
\DeclareMathOperator*{\argmin}{argmin}
\newcommand{\E}{\mathbb{E}}
\newcommand{\NN}{\mathbb{N}}
\renewcommand{\L}{\mathcal{L}}
\newcommand{\N}{\mathcal{N}}
\renewcommand{\u}{\mathfrak{u}}
\newcommand{\dd}{\mathrm{d}}
\newcommand{\PP}{\mathbb{P}}
\newcommand{\T}{\mathcal{T}}
\newcommand{\R}{\mathbb{R}}
\newcommand{\K}{\mathcal{K}}
\newcommand{\B}{\mathcal{B}}
\newcommand{\weakly}{\rightharpoonup}\normalfont
\newcommand{\tT}{\mathrm{T}}
\definecolor{forestgreen}{RGB}{34,139,34}
\renewcommand{\P}{\mathcal{P}}
\theoremstyle{plain}
\title{
Flow Matching: Markov Kernels, \\
Stochastic Processes and Transport Plans
}
\author{Christian Wald\footnotemark[1] 
		\and Gabriele Steidl\footnote{Institute of Mathematics,
	   Technische Universität Berlin,
	   Stra{\ss}e des 17.\ Juni 136, 
	   10623 Berlin, Germany,
	   {\ttfamily\{wald, steidl\}@math.tu-berlin.de},
      \url{http://tu.berlin/imageanalysis}.
	} 
 }
\date{\today }
\begin{document}

\maketitle
\begin{abstract}
Among generative neural models, flow matching techniques stand
out for their simple applicability and good scaling properties.
Here, velocity fields of curves connecting a
simple latent and a target distribution are learned.
Then the corresponding ordinary differential equation can be used to sample from a target distribution, starting in samples from the latent one.
This paper reviews from a mathematical point of view 
different techniques to learn the velocity fields of
absolutely continuous curves in the Wasserstein geometry.
We show how the velocity fields can be characterized and learned via i) transport plans (couplings) between latent and target distributions,
ii) Markov kernels and iii) stochastic processes, where the latter two include the coupling approach, but are in general broader.

Besides this main goal, we show how flow matching can
be used for solving Bayesian inverse problems, where the definition of conditional Wasserstein distances plays a central role.

Finally, we briefly address continuous normalizing flows and score matching techniques, which approach the learning of velocity fields of curves
from other directions.
\end{abstract}
\newpage
\tableofcontents

%------------------------------------------
\section{Introduction}
%------------------------------------------
Generative models with neural network approximations have shown impressive 
results in many applications, in particular in inverse problems and data assimilation. First developments in this direction were
generative adversarial networks by Goodfellow et al. \cite{goodfellow2014} in 2014 and variational autoencoders by Kingma and Welling \cite{KW2013} in 2013. 
In this paper, 
we are interested in recent flow matching techniques
pioneered by Liu et al.  \cite{liu2022rectified,liu2023flow} in 2022/23
 from the point of view of
stochastic processes, by Lipman et al. \cite{lipman2023flow} in 2023
via conditional probability paths and by Albergo et al. \cite{albergobuilding,albergo2023stochastic} in 2022/2023 under the name of stochastic interpolants..
Flow matching is closely related  to
 continuous normalizing flows (CNFs), also called neural ordinary differential equations by Chen et al. \cite{CRBD2018} in 2018, and to 
 score-based diffusion models introduced by  Sohl-Dickstein et al. \cite{pmlr-v37-sohl-dickstein15} and Ermon and Song \cite{SE2019} in 2015 and 2019, respectively.
 The later ones can be considered as an instance of CNFs, but add the  perspective of  stochastic differential equations.
To provide a more complete picture, we will briefly consider these approaches
in  Sections \ref{sec:CNF} and \ref{sec:diffusion}.
Flow matching techniques stand out for their simple applicability and good scaling properties. They can be easily incorporated into other optimization techniques like
plug-and-play approaches \cite{martin2024pnp}
and can be generalized 
for solving Bayesian inverse problems as we will see in Section \ref{sec:fm_inverse_problems}.  Recently, flow matching was generalized towards generator matching in \cite{holderrieth2025generator,JCHWS2025}.

We will approach the topic of flow matching from a mathematical point of view, where we intend to make the paper to a certain degree self-contained. However, for the practical implementation of flow matching, we assume that the reader is familiar
with training neural networks once a tractable loss function is given.

The basic task of generative modeling consists in generating new samples from a probability
distribution $P_{\text{data}}$, where we have only access to  a set of samples, e.g. from the distribution of face images.
This is a quite classical task in stochastics, since it is in general hard to sample from a high-dimensional distribution
even if its density is known.
Few examples, where it is easy to sample from in multiple dimensions, are the Gaussian distribution and their relatives. 
In this paper, we focus on the standard Gaussian distribution as so-called ''latent'' distribution $P_{\text{latent}}$.
Then, the idea is to approximate a ''nice'' curve $t \mapsto \mu_t$, mapping a time $t\in [0,1]$ to a probability measure $\mu_t$,
where 
$$\mu_0 = P_{\text{latent}} \quad \text{ and } \quad \mu_1 = P_{\text{data}}.
$$
Our curves of interest are absolutely continuous curves
in the Wasserstein space, which we introduce in Section \ref{acw_curves}.
Due to the special metric  
of the Wasserstein space, 
such a curve  possesses a 
vector field $v: [0,1] \times \R^d \to \R^d$ such that 
the curve-velocity pair $(\mu_t,v_t)$ fulfills the continuity equation
$$
\partial_t \mu_t +  
\nabla_x \cdot  (\mu_t v_t) = 0.
$$
For the associated vector field $v_t$, under mild additional conditions, there  exists a solution $\phi: [0,1] \times \R^d \to \R^d$
of the ordinary differential equation (ODE)
$$
\partial_t \phi(t,x)=v_t(\phi(t,x)), \quad \phi(0,x)=x,
$$
and the curve $\mu_t$ is just the push-forward of the latent distribution by this solution
$$\mu_t = \phi(t,\cdot)_\sharp \mu_0 \coloneqq \mu_0 \circ \phi^{-1} (t,\cdot).$$
Thus, if the vector field $v_t$ is known, it remains to solve the ODE by some standard solver starting at $t=0$ with a sample of the latent distribution to produce a desired data sample at time $t=1$.
An illustration of the flow of samples from the one-dimensional standard Gaussian distribution
to a Gaussian mixture
is given in Figure \ref{fig:flow_s}.
For a higher-dimensional example, see Figure \ref{fig:cat}. 

\begin{figure}
\begin{center}
    \includegraphics[width = 0.4\textwidth]{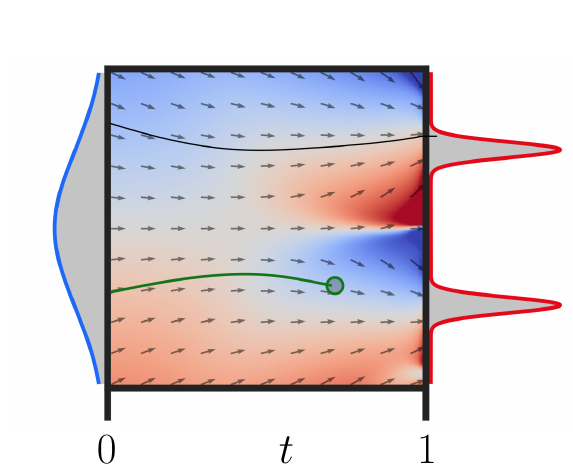}
    \end{center}    
    \caption{Illustration of a curve from the standard Gaussian distribution to a Gaussian mixture in one dimension. The plot shows
    $t \mapsto \phi(t,x^i)$, $i=1,2$ for two different samples $x^i$ (black, green), the vectors $(1, \partial_t \phi)$ 
    and the red-blue color-coded  velocity field $\partial_t \phi$.      Courtesy: Blogpost \cite{sego} } \label{fig:flow_s}
\end{figure}

Before learning the velocity fields via a neural network approximation, we have to  determine appropriate curve-velocity pairs $(\mu_t,v_t)$ fulfilling in particular the continuity equation.
In Sections \ref{sec:planes} - \ref{subsec:conde},
we provide three approaches. 
We will see that the first one follows as a special
case both from the second and third one. Ultimately, we develop the second approach with Markov kernels to  explain  the setting of Lipman et al. in \cite{lipman2023flow}  also for measures without densities.
\begin{itemize}
 \item[1.] \textbf{Curves induced by couplings}: Given a probability measure $\alpha$ on $\R^d \times \R^d$ with marginals $\mu_0$ and $\mu_1$, also called ''coupling'' 
 of $\mu_0$ and $\mu_1$, the curve-velocity pair induced by $\alpha$
 is given by the push-forwards
 $$
\mu_t \coloneqq e_{t,\sharp} \alpha \quad \text{and} \quad v_t \mu_t \coloneqq e_{t,\sharp} [(y-x) \alpha]
$$
with 
$$e_t(x,y)\coloneqq (1-t)x+ty, \quad t \in [0,1].$$ 
Here $v_t \mu_t = e_{t,\sharp} [(y-x) \alpha]$ is a vector valued measures, where you may note that for $\gamma \in \P_2(\R^d),f\in L^1(\R^ d,\gamma)$ the vector valued measure $f\gamma$ is defined by 
$$
\int_{\R^d} \varphi(x)\dd f\gamma(x)=\int_{\R^ d}\varphi(x)f(x)\dd\gamma(x)$$ 
for every bounded measurable function $\varphi:\R^d\to \R$.
Most common couplings are independent couplings
$\alpha = \mu_0 \times \mu_1$ and so-called
''optimal'' transport couplings 
with respect to the Wasserstein distance.
The optimal couplings stand out by the fact that the induced curves are geodesics in the Wasserstein space, which in particular are  minimal length
curves connecting $\mu_0$ and $\mu_1$.
In an alternative formulation, 
we may start with 
random variables
with joint law 
$\alpha = P_{X_0,X_1}$ so that
$\mu_0 = P_{X_0}$ 
and 
$\mu_1 = P_{X_1}$. Then the induced curve by $\alpha$
becomes 
$
\mu_t= P_{X_{t}}
$, 
where
$$X_t\coloneqq e_t(X_0,X_1) = (1-t)X_0+ t X_1.$$
%-------------------------------------------
\item[2.] \textbf{Curves via Markov kernels}:
We start with a family of Markov kernels $(\K_t)_{t \in [0,1]}$ with the property that ''conditioned to $y \in \R^d$'',
the curve $t \mapsto \K_t(y,\cdot)$ is  absolutely continuous with associated vector field $v_t^y$.
Then we define a time-dependent sequence of measures on $\R^d \times \R^d$ by glueing the Markov kernel with $\mu_1$ as second marginal, i.e.
\begin{equation} \label{dis}
\alpha_t\coloneqq \K_t(y, \cdot) \times_y \mu_1.
\end{equation}
Denoting by $\nu_t$ its first marginal and using the
disintegration
$\alpha_t = \alpha_t^x \times_x \nu_t$,
we will show that the curve $t \mapsto \nu_t$ is also absolutely continuous with associated velocity field
$$
    v_t (x) \coloneqq \int_{\R^d} v_t^y  \, \dd\alpha_t^x(y).
$$
Note that the curve $\nu_t$ does not necessarily fulfill $\nu_1=\mu_1$ or admit a $\nu_0$ that is easy to sample from. But for reasonable choices it is possible to obtain  that $\nu_0$ is a standard Gaussian distribution and $\nu_1\cong \mu_1$.
An tractable pair $(\nu_t,v_t)$ which cannot be
deduced from a curve-velocity pair induced by a coupling
between $\mu_0$ and $\mu_1$ is given in Example \ref{ex:conv}.

However, for the following special setting, we obtain our induced
curves from couplings $\alpha$ of $\mu_0$ and $\mu_1$:
Let  $\alpha = \alpha^y \times_y \mu_1$ be the disintegration with respect to the second marginal.
Then, $\alpha_t:= (e_t, \pi^2)_\sharp\alpha$ fits into the above setting with
\begin{equation} \label{cond}
\mathcal K_t(y,\cdot) = e_{t,\sharp}( \alpha^y \times \delta_y) \quad \text{and} \quad
v_t^y = \frac {y-x}{1-t}, \quad 
t \in (0,1)
\end{equation}
and $\alpha_t$ has the first marginal $\nu_t = e_{t,\sharp}\alpha$ which is exactly $\mu_t$ from item 1.

%----------------------------------------------
 \item[3.] \textbf{Curves via stochastic processes}: Starting with a time differentiable stochastic process $(X_t)_{t \in [0,1]}$, Liu et al.  \cite{liu2022rectified,liu2023flow} 
 determined $(\mu_t,v_t)$ using the conditional expectation
$$
\mu_t \coloneqq P_{X_t} \quad \text{and} \quad v_t(x) 
\coloneqq \E[\partial_t X_t|X_t= x ].
$$
For the special process
$X_t \coloneqq (1-t) X_0 + t X_1$, $t \in [0,1]$, this  results again in a curve-velocity pair
induced by the coupling from the joint distribution 
$\alpha = P_{X_0,X_1}$.
\end{itemize}

Section \ref{sec:fm} deals with the actual flow matching, i.e.,  how we can learn the above vector field  $v_t$ by a neural network $u_t^\theta$.
Clearly, e.g. in the Markov kernel approach, the function
$$
F(\theta) \coloneqq 
\int_0^1 \int_{\R^d \times \R^d} \|u_t^\theta (x)-v_t(x)\|^2 \, \dd \nu_t(x) \dd t
$$
would be a nice loss, but unfortunately we do 
not have access to $v_t$. On the other hand, we know the conditioned velocities $v_t^y$ from \eqref{cond}.
Fortunately, we will see that the vector field $v$
of the curve-velocity pair $(\nu_t,v_t)$
appears as  minimizer of the functional
\begin{align}
 v = \argmin_u 
 \int_0^1 \int_{\R^d \times \R^d} \|u_t(x)-v^y_t(x)\|^2 \, \dd \alpha_t(x,y) \dd t 
\end{align}
or in other words the above loss $F$
coincides up to a constant with the 
loss function
$$
\text{FM}_{\alpha_t}(\theta) \coloneqq 
 \int_0^1 \int_{\R^d \times \R^d} \|u_t^\theta(x)-v^y_t(x)\|^2 \, \dd \alpha_t(x,y) \dd t.
$$
In the original paper \cite{lipman2023flow}, the notation ''conditional flow matching'' (CFM) was used instead of FM which refers to a ''conditional flow path''. We prefer FM, since we will later deal with Bayesian
inverse problems, where the notation of being ''conditional'' is occupied in a different way. 
For the numerical simulation, we  have on the one hand to sample from $\alpha_t=\K_t(y,\cdot) \times_y\mu_1$ and on the other hand  to compute $v_t^y(x)$. Samples $y_i$, $i=1,\ldots,n$ from the data distribution $\mu_1$ are usually available
and it should be easy to sample from the distributions
$\K_t(y_i,\cdot)$, $i=1,\ldots,n$.
In the special case, where $(\nu_t,v_t) = (\mu_t,v_t)$ is induced by a coupling $\alpha$, we have a simple formula for  $v_t^y$ for which the loss function becomes
\begin{equation}\label{eq:fm}\tag{FM} 
\text{FM}_{\alpha}(\theta)= \int_0^1 \int_{\R^d \times \R^d}\|u_t^\theta(e_t(x,y))-(y-x)\|^2
\, \dd \alpha(x,y) \dd t.
\end{equation}
 
Similarly, for the approach via a stochastic process, we get
\begin{align}
	v&=\argmin_{u} \int_0^1 \E \left[\|\partial_t X_t-u_t(X_t)\|^2 \right] \, \dd t.
\end{align}    
Again, for the special stochastic process $X_t=(1-t)X_0+tX_1$ 
leading to a curve-velocity pair 
induced by the plan $\alpha = P_{X_0,X_1}$
this results in \eqref{eq:fm}.
For the independent coupling $\alpha = \mu_0 \times \mu_1$ and the
''optimal'' one, this can be minimized numerically.

Let us start with the necessary notation agreements in the next section.

 %-------------------------------------------------------------
\section{Preliminaries and Notation} \label{sec:prelim}
%-------------------------------------------------------------
Throughout this paper, we equip $\R^d$ with 
the $\sigma$-algebra of Borel sets 
$\mathcal B(\R^d)$, and by ''measurable'' sets/functions we always refer to ''Borel measurable''. Speaking about absolutely continuous measures on $\R^d$, we mean
absolutely continuous with respect to the Lebesgue measure $\mathcal L$
on $\R^d$. By $\mathcal L_A$ we denote the Lebesgue measure restricted to the measurable set $A \subset \R^ d$.
By $e_i$, $i=1,\ldots,d$ we denote the canonical basis elements of $\R^d$.

\subsection{Special Metric Spaces}
Let $C_b(\R^d)$ be the Banach space of bounded, continuous, real-valued functions with norm
\begin{equation} \label{eq:infty-norm}
\|\varphi\|_\infty \coloneqq \sup_{x \in \R^d} |\varphi (x)|,
\end{equation}
$C_0(\R^d)$ its closed subspace of continuous functions vanishing at infinity, and $C_c(\R^d)$ the space of continuous functions with compact support.
Further, $C^l(\R^d)$, $l \ge 1$, denotes the space of $l$ times continuously 
differentiable functions and $C^\infty(\R^d)$ the space of infinity often differentiable functions likewise combined with the above compact support property $C_c^\infty(\R^d)$.
Instead of $\R^d$, functions may be also defined on time intervals $I \subset \R$
or direct products $I \times \R^d$ with the corresponding adaptation of the notation.
We will also consider the vector-valued functions mapping into $\R^m$
and use the notation $C^l(\R^d,\R^m)$ here.

By $\mathcal M(\R^d)$, we denote the Banach space of finite Borel measures on $\R^d$ 
equipped with the \emph{total variation norm}
\begin{equation*}
  \|\mu\|_{\text{TV}} := \sup \Big\{ \sum_{k=1}^\infty
  \lvert\mu(B_k)\rvert : \bigcup_{k=1}^\infty B_k = \R^d \Big\}
\end{equation*}
for any pairwise disjoint Borel sets $B_k$, $k \in \mathbb N$. 
It is the dual space of $C_0(\mathbb R^d)$, in particular, every measure
$\mu \in \mathcal M(\mathbb R^d)$ defines a continuous, linear functional on $C_0(\mathbb R^d)$ via 
$$\varphi \mapsto \langle\mu,\varphi\rangle 
= 
\int_{\mathbb R^d} \varphi(x) \,{\rm d}
\mu( x)
$$ 
and
 \begin{equation*}
  \|\mu\|_{\mathcal M(\mathbb R^d)}
  = \sup_{\lVert\varphi\rVert_{ C_0(\mathbb R^d)} \le 1} |\langle\mu,\varphi\rangle|.
\end{equation*}
A sequence $(\mu_n)_n \subset \mathcal M(\mathbb R^d)$ \emph{converges weak}-* 
to a measure $\mu \in \mathcal M(\mathbb R^d)$,
if
$$
 \lim_{n \to \infty}
\int_{\R^d} \varphi \, \dd \mu_n = \int_{\R^d} \varphi \, \dd \mu \quad \text{for all} \quad \varphi \in  C_0(\mathbb R^d).
$$
We are mainly interested in the subset of probability measures
$$\mathcal P(\R^d) \coloneqq \{ \mu \in  \mathcal M(\R^d): \mu \ge 0, \; \mu(\R^d) = 1 \}.$$
Unfortunately, a sequence of probability measures $(\mu_n)_n$ may not 
converge in the weak-* sense to a probability measure. For example, $\mu_n \coloneqq \delta_n$, $n \in \mathbb N$, converges weak-* towards $\mu \equiv 0$
which is not a probability measure.
Therefore, the concept of narrow convergence\footnote{ 
In measure theory, narrow convergence of measures is also called weak convergence. We do not use the later notation to avoid confusion with the notation of weak convergence in functional analysis.}
was introduced: a sequence $(\mu_n)_n \subset \mathcal M(\mathbb R^d)$ \emph{converges narrowly} 
to a measure $\mu \in \mathcal M(\mathbb R^d)$, written
$\mu_n \weakly \mu$ as $n \to \infty$,  
if
$$
 \lim_{n \to \infty} \int_{\R^d} \varphi \, \dd \mu_n = \int_{\R^d} \varphi \, \dd \mu \quad \text{for all} \quad \varphi \in C_b(\mathbb R^d).
$$
If $\mu_n \in \mathcal P(\mathbb R^d)$
with $\mu_n \weakly \mu$ as $n \to \infty$, then also
$\mu \in \mathcal P(\mathbb R^d)$.
However, note that $\mathcal M (\R^d)$ is not the dual space of $C_b(\mathbb R^d)$,
which is indeed the space $\text{rba} (\R^d)$ of finitely additive set functions. For more information, see \cite[Section 4.4]{PPST2023}.

\begin{rem}[Test functions] 
The interpretation of $\mathcal{M}(\R^d)$ as the dual of $C_0(\R^d)$ implies that $\mu=\nu$ in $\mathcal{M}(\R^d)$ if and only if 
\begin{align}\label{test}
\int_{\R^d} \varphi \, \dd\mu 
=
\int_{ \R^d} \varphi \, \dd\nu \text{ for all } \varphi\in C_0(\R^d).
\end{align}
Since $C_c^\infty(\R^d)$ is dense in $C_0(\R^d)$ with respect to the $\|\cdot\|_{\infty}$ norm, we could also just test against all 
$\varphi \in C_c^{\infty}(\R^d)$. \hfill $\diamond$

\end{rem}

For $\mu\in \P(\R^d)$, let  $L^p(\R^m,\mu)$, $p \in [1,\infty)$, denote the Banach space of (equivalence
classes of) $\mu$-measurable functions $f : \R^d \to \R^m$ with 
$$
\| f\|_{L^p(\R^m,\mu)} \coloneqq
\Big( \int_{\R^d}
\| f\|^p \, \dd \mu\Big)^\frac1p < \infty,
$$
where $\| \cdot \|$ is the Euclidean norm on $\R^m$.
In case of the Lebesgue measure $\mu=\L$, we will skip the $\mu$ in the notation. Furthermore, we write $L^1$ for $L^1(\R,\L)$.

%---------------------------------------------------
\subsection{Push-Forward Operator}
%---------------------------------------------------
The \emph{push-forward measure} of 
$\mu \in \mathcal M(\R^d)$ 
by a measurable map 
$T:\R^d \to \R^n$
is defined by
$$
T_\# \mu = \mu \circ T^{-1}
$$
with the preimage $T^{-1}(B)$ of  $B \in \mathcal B(\R^n)$.

Then, a function $f:\R^ n \to \R$ is integrable with respect to $T_\# \mu$ if $f\circ T$ is integrable with respect to $\mu$ and
\begin{equation} \label{push}
    \int_{\R^n} f(y) \, \dd T_\# \mu (y) 
    = \int_{T^{-1} (\R^ n)} f\left(T(x) \right) \, \dd \mu(x).
\end{equation}

If $T: \R^d \to \R^d$ is a $C^1$ diffeomorphism and $\mu$ is absolutely continuous with
density $p_\mu$, then we have by the \emph{transformation theorem}, also known as \emph{change-of-variable formula},
for all measurable, bounded functions $f$ that
 \begin{align} \label{push_density}
\int_{\R^d} f(y) \, \dd T_\# \mu (y) 
    &= \int_{\R^d} f(T(x) ) \, p_\mu(x) \, \dd  x
    = \int_{\R^d} f(y) \, p_\mu\big(T^{-1}(y) \big)| \text{det} \big( \nabla T^{-1} (y) \big) | \, \dd y,
\end{align}
where  $\nabla T$ denotes the Jacobian of $T$. In particular, $T_\# \mu$ has the density
\begin{align} \label{push_density_1}
    p_{T_\# \mu} (y) =  p_\mu \big(T^{-1}(y) \big) | \text{det} \nabla T^{-1} (y)| =  \left(\frac{p_\mu }{|\text{det} \nabla T |}\circ T^{-1}\right)(y).
\end{align}

Recall that the convolution of measures $ \mu, \nu \in \P (\R^d)$ is defined to be the measure $\mu * \nu \in \P(\R^d)$ which fulfills
$$
\int_{\R^d } \varphi(x) \, \dd (\mu *\nu)(x)
=
\int_{\R^d \times \R^d} \varphi(x+y) \, \dd\mu(x) \dd \nu(y)
$$
for all $\varphi \in  C_b(\R^d)$. If $\mu, \nu$ have densities $p_\mu, p_\nu \in L^1$, then $\mu*\nu$
has the density $p_{\mu * \nu}\in L^1$ given by
$$
p_{\mu * \nu} = \int_{\R^d} p_\mu(x) p_{\nu}(\cdot  - x) \, \dd x.
$$
Measures can be characterized via laws of random variables.

\begin{rem}[Random variables]
Let $(\Omega,\Sigma, \mathbb P)$ be a probability space.
Recall that a random variable $X: \Omega \rightarrow \R^d$ is a measurable map $X: \Omega \to \R^d$ and the push-forward measure  
$$P_X \coloneqq X_{\#} \mathbb P = \mathbb P \circ X^{-1}$$ 
is known as law of $X$. Note that different random variables can have the same law. For every measure $\mu\in\P_2(\R^d)$ there exists a random variable with $P_X=\mu$. If $X_0,X_1: \Omega \to \R^d$ are independent random variables, then 
\begin{itemize}
\item[i)]
$(X_0,X_1): \Omega \to \R^d \times \R^d$ has law
$P_{X_0,X_1} = P_{X_0} \times P_{X_1}$, and
\item[ii)]
$Z \coloneqq a_0 X_0 + a_1 X_1 : \Omega \rightarrow \R^d$, $a_0,a_1 \not = 0$ has law
$P_Z = a_{0,\sharp} P_{X_0} *a_{1,\sharp} P_{X_1}$, 
where we abbreviated $a_i\Id$ by $a_i$ in the last formula. 
If $P_{X_i}$ has a density $p_{X_i}$, then
$a_{i,\sharp} P_{X_i}$ has the density $p_{X_i}(a_i^{-1} \cdot)$, 
$i=0,1$,
and $Z$ has the density
$$
p_Z = \int_{\R^d} p_{X_0}(a_0^{-1} x) p_{X_1}\left(a_1^{-1}(\cdot -x) \right) \, \dd x.
\qquad  \qquad\diamond
$$
\end{itemize}
\end{rem}

For comparing measures, in particular $T_\sharp P_{\text{latent}}$ and $P_{\text{data}}$,
divergences between measures can be used. 
A divergence has the main property that it achieves a minimum exactly if both measures coincide. 
One of the most frequently used divergence
is the Kullback-Leibler one.

\begin{rem}[Kullback-Leibler divergence]\label{logl_kl}
The \emph{Kullback-Leibler (KL) divergence} 
$\mathrm{KL}\colon {\mathcal P}(\mathbb R^d) \times {\mathcal P}(\mathbb R^d) \rightarrow \mathbb [0, +\infty]$
of two measures
$\mu,\nu\in {\mathcal P}(\mathbb R^d)$ with existing Radon-Nikodym derivative 
$\frac{\dd \mu}{\dd \nu}$ of $\mu$ with respect to $\nu$ is defined by 
\begin{equation} \label{KLdef}
\mathrm{KL} (\mu,\nu) \coloneqq 
\int_{\mathbb R^d} \log \left(\frac{\dd  \mu}{\dd  \nu} \right) \, \dd  \mu. 
\end{equation}
If the  Radon-Nikodym derivative does not exist, we set $\mathrm{KL} (\mu,\nu) \coloneqq + \infty$.
If $\mu,\nu$ have densities $p_\mu, p_\nu$, where $p_\nu(x) = 0$ implies $p_\mu(x) = 0$, then
\begin{align} \label{KLdef_density}
\mathrm{KL} (\mu,\nu) &\coloneqq 
\int_{\mathbb R^d} \log \left(\frac{p_\mu}{p_\nu} \right) \, p_\mu \, \dd  x\\
&=
\E_{x \sim p_\mu}[\log p_\mu] - \E_{x \sim p_\mu}[\log p_\nu].
\end{align}
The Kullback-Leibler divergence is a so-called Bregman distance related to the Shannon entropy. It is not a distance,
since it is neither symmetric nor fulfills a triangular inequality.
However,  it holds 
$\mathrm{KL} (\mu,\nu) \ge 0$ for all $\mu,\nu\in {\mathcal P}(\mathbb R^d)$ with equality
if and only if the measures $\nu = \mu$ coincide.

If the latent distribution $P_{\text{latent}}$ has a density $p_{\text{latent}}$, and $T:\R^d\to \R^d$ is a $C^1$ diffeomorphism, then by \eqref{push_density_1} also $T_\sharp P_{\text{latent}}$ has a density which we denote by $T_\sharp p_{\text{latent}}$. Thus, if the data distribution admits a density $p_{\text{data}}$, then we get
\begin{align}
\mathrm{KL}(P_{\text{data}}, T_{\sharp} P_{\text{latent}})
=
\underbrace{\E_{x\sim p_{\text{data}}}[\log p_{\text{data}}]}_{\text{constant}}-\E_{x\sim p_{\text{data}}}[\log T_\sharp p_{\text{latent}}].
\end{align}
\hfill $\diamond$
\end{rem}

%---------------------------------------------------
\subsection{Disintegration and Markov Kernels}
%---------------------------------------------------
In this paper, we consider projections $\pi^i: \R^d \times \R^d \to \R^d$, $i=1,2$, defined by
$$\pi^1(x,y) \coloneqq x, \quad \pi^2(x,y) \coloneqq y. $$
Then, for a measure $\alpha\in\P(\R^d \times \R^d)$,
we have that
$\pi^1_\sharp \alpha$ and $\pi^2_\sharp \alpha$
are the left and right marginals of $\alpha$, respectively.

For a measure $\alpha \in\P(\R^d \times \R^d)$ with marginal $\pi^1_\sharp \alpha=\mu$, there exists a 
$\mu$-a.e. uniquely defined
family of probability measures $\{\alpha^{x}\}_{x}$,  called 
\emph{disintegration of $\alpha$ with respect to  $\pi^1$}, such that 
the map $x \mapsto \alpha^{x}(B)$ is measurable for every  $B\in \B(\R^d)$, and 
$$
\alpha = \alpha^{x} \times_x \mu
$$
meaning that
\[
\int_{\R^d \times \R^d} f(x,y) \, \dd\alpha(x,y)
=
\int_{\R^d} \int_{\R^d}f(x,y) \, \dd \alpha^{x}(y) \dd \mu(x)
\]
for every measurable, bounded function $f:\R^d\times \R^d \to \mathbb R$.
For an illustration, see Figure \ref{fig:disint}.
Similarly, we define for a measure $\alpha \in\P(\R^d \times \R^d)$ with marginal $\pi^2_\sharp \alpha=\nu$
the \emph{disintegration of $\alpha$ with respect to  $\pi^2$} as
$$
\alpha = \alpha^{y} \times_y \nu.
$$

\begin{figure}
    \centering
    \includegraphics[width=\linewidth]{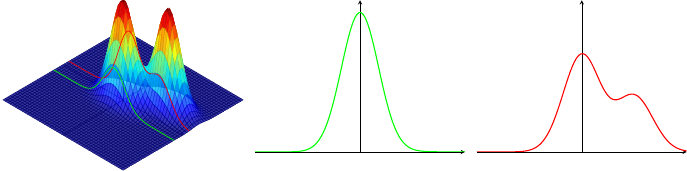}
    \caption{Disintegration of the measure $\alpha \in \mathcal P(\R \times \R)$ (left).
    Measures $\alpha^{-0.3} \in \mathcal P(\R)$ (middle, green) and $\alpha^{0.2}\in \mathcal P(\R)$ (right, red).} \label{fig:disint}
\end{figure}

The notation of disintegration is directly related to  Markov kernels.
A \emph{Markov kernel} is a map $\K:\R ^d \times \B(\R^d)\to \R$ such that
    \begin{itemize}
    \item[i)] $\K(x,\cdot)$ is a probability measure on $\R^d$ for every $x \in \R^d$, and  
    \item[ii)] $\K(\cdot,B)$ is a Borel measurable map for every $B\in\B(\R^d)$.
    \end{itemize}
Hence, given a probability measure $\mu \in \P(\R^d)$, 
we can define a new measure $\alpha \coloneqq \alpha^x \times_x \mu\in \P(\R^d \times \R^d)$ by 
\[
\int_{\R^d\times \R^d}f(x,y) \, \dd\alpha (x,y) \coloneqq\int_{\R^d}\int_{\R^d}f(x,y) \, \dd\K(x,\cdot)(y)\dd\mu( x)
\]
for all measurable, bounded functions $f$.
Identifying $\alpha^{x}(B)$ with $\K(x,B)$, 
we see that conversely,
$\{\alpha^{x}\}_{x}$ is the disintegration of $\alpha$ with respect to $\pi^1$.

\begin{rem}[Random variables]
 Let $X_0,X_1:\Omega \to \R^d$ be random variables with joint distribution $ P_{X_0,X_1}$.
Then the conditional distribution $\{  P_{X_1|X_0=x} \}_x$ provides the disintegration 
of $P_{X_0,X_1}$, i.e.
$$
\int_{\R^d \times \R^d} f(x,y) \, \dd P_{X_0,X_1}(x,y) 
= \int_{\R^d } \int_{\R^d}  f(x,y) \, \dd  P_{X_1|X_0=x} (y) \dd P_{X_0}(x). 
$$
In other words, $\mathcal K = P_{X_1|X_0}$ is a Markov kernel.
If  $P_{X_0,X_1}$ admits a density $p_{X_0,X_1}$ and $P_{X_0}$ a density $p_{X_0}>0$, then $P_{X_1|X_0=x}$ has the density 
$p_{X_1|X_0 = x} = p_{X_0,X_1}(x, \cdot)/p_{X_0}(x)$.
\hfill $\diamond$
\end{rem}

\begin{example}\label{ex:disrete}
 For $\mathcal X \coloneqq \{x_i \in \R^d: i=1,\ldots,n\}$ and
 $\mathcal Y \coloneqq \{y_j \in \R^d, j=1,\ldots,m\}$,
  the discrete probability measure
 $$
 \alpha \coloneqq \sum_{i,j = 1}^{n,m} \alpha_{i,j} \delta_{(x_i,y_j)}, 
 $$
 has first and second marginals 
 \begin{align*}
 \mu &= \pi^1_\sharp \alpha= \sum_{i=1}^n 
\Big( \sum_{j=1}^m \alpha_{i,j} \Big) \delta_{x_i}
= \sum_{i=1}^n \mu_i \delta_{x_i},\\
\nu &= \pi^2_\sharp \alpha= \sum_{j=1}^m 
\Big( \sum_{i=1}^n \alpha_{i,j} \Big) \delta_{y_j}
= \sum_{j=1}^m \nu_j \delta_{y_j},
\end{align*}
see Figure \ref{fig_1}.
The disintegration of $\alpha$ with respect to $\pi^1$ is given by the family of measures
\begin{align} \label{markov_1}
\alpha^{x_i}  &= \mathcal K(x_i, \cdot) = \sum_{j=1}^m \frac{\alpha_{i,j}}{\mu_i} \delta_{y_j} = \sum_{j=1}^m k_{i,j} \delta_{y_j}, 
\\
\alpha^{x_i} (B) &= \mathcal K(x_i, B) = \sum_{y_j \in B} k_{i,j}
\end{align}
for all $B \in \mathcal B(\R^d)$.
The Markov kernel fulfills 
$$
\sum_{j=1}^m k_{i,j} = 1, \quad \sum_{i=1}^n k_{i,j} \mu_i = \nu_j.
$$
In other words, with the vector 
$\boldsymbol{1}_m$ 
consisting of $m$ entries one, and
$\boldsymbol{\mu} \coloneqq (\mu_i)_{i=1}^n$, 
$\boldsymbol{\nu} \coloneqq (\nu_j)_{j=1}^m$, 
as well as
$\boldsymbol{K} \coloneqq \left(k_{i,j} \right)_{i,j=1}^{n,m}$,
we have
$$
\boldsymbol K \boldsymbol{1}_n = \boldsymbol{1}_m, \quad 
\boldsymbol K^\tT \boldsymbol{\mu} = \boldsymbol{\nu}
$$
meaning that $\boldsymbol K^\tT$ is a stochastic matrix and a ''transfer matrix'' from $\mu$ to $\nu$. \hfill $\diamond$
\end{example}

\begin{figure}[ht] 
\begin{center}
\begin{tabular}{c||c|c|c|c|c}
      &$\nu_1$      & \ldots & {\color{red} $\nu_j$} & \ldots & $\nu_m$\\
\hline
$\mu_1$ &$\alpha_{1,1}$& \ldots & {\color{red} $\alpha_{1,j}$}&\ldots & $\alpha_{1,m}$\\
\vdots&            &        & {\color{red}\vdots }     &       & \vdots \\
{\color{blue} $\mu_i$} &{\color{blue} $\alpha_{i,1}$}& {\color{blue} \ldots} &  $\alpha_{i,j}$&{\color{blue} \ldots} & {\color{blue} $\alpha_{i,m}$}\\
\vdots&             &        & {\color{red}\vdots  }    &       & \vdots \\
$\mu_n$ &$\alpha_{n,1}$& \ldots & {\color{red} $\alpha_{n,j}$} &\ldots & $\alpha_{n,m}$
\end{tabular}
%----------------------------
\hspace{0.5cm}
\begin{tabular}{c||c|c|c|c|c}
      & $\nu_1/\mu$&    &$\nu_j/\mu$  &  &$\nu_m/\mu$ 
      \\
\hline
$1$ &$\frac{\alpha_{1,1}}{\mu_1}$& \ldots & {\color{red} $\frac{\alpha_{1,j}}{\mu_1}$}&\ldots & $\frac{\alpha_{1,m}}{\mu_1}$\\
\vdots&            &        & {\color{red}\vdots }     &       & \vdots \\
{\color{blue} 1} &{\color{blue} $\frac{\alpha_{i,1}}{\mu_i}$}& {\color{blue} \ldots} &  $\frac{\alpha_{i,j}}{\mu_i}$&{\color{blue} \ldots} & {\color{blue} $\frac{\alpha_{i,m}}{\mu_i}$}\\
\vdots&             &        & {\color{red}\vdots  }    &       & \vdots \\
1 &$\frac{\alpha_{n,1}}{\mu_n}$& \ldots & {\color{red} $\frac{\alpha_{n,j}}{\mu_n}$} &\ldots & $\frac{\alpha_{n,m}}{\mu_n}$
\end{tabular}
\end{center}
\caption{Plan/Coupling of two discrete measures $\mu$ and $\nu$ (left) and 
Markov kernel/disintegration (right) with row and column sums. }\label{fig_1}
\end{figure}

%-------------------------------------------------
\subsection{Couplings and Wasserstein Distance}
%-------------------------------------------------
Let 
$$\P_2(\R^d)\coloneqq \{\mu\in\P(\R^d):\int_{\R^d} \|x\|^2 \, \dd\mu<\infty\}$$ 
be the probability measures with finite second moments.
For $\mu,\nu\in \P_2(\R^d)$, we define the set of \emph{plans} or \emph{couplings} with marginals $\mu$ and $\nu$ by 
    \begin{align}
    \Gamma(\mu,\nu)\coloneqq\left\{\alpha\in\P(\R^d \times \R^d):
		\pi^1_\sharp \alpha=\mu, \, \pi^2_\sharp \alpha=\nu\right\}.
    \end{align}
The set $\Gamma(\mu,\nu)$ is a compact subset of $\P_2(\R^d \times \R^d)$ with respect to the narrow topology.
The \emph{Wasserstein(-$2$) distance} is defined by
\begin{align} \label{wasserstein}
W_2(\mu,\nu)\coloneqq \min_{\alpha\in\Gamma(\mu,\nu)}\|x-y\|_{L^2(\R^d \times \R^d,\alpha)}
= 
\min_{\alpha\in\Gamma(\mu,\nu)}\left(\int_{\R^d \times \R^d}\|x-y\|^2 \, \dd\alpha(x,y)\right)^{\frac{1}{2}}.
\end{align}
Indeed, the above minimum is attained, see, e.g. \cite[Theorem 1.7]{santambrogio2015optimal}, and 
the \emph{set of optimal plans} is denoted by $\Gamma_o(\mu,\nu)$.
The space $(\P_2(\R^d),W_2)$ is a complete metric space. Speaking about $\P_2(\R^d)$, we will always equip it with the $W_2$ metric.
Then we have for a sequence $(\mu_n)_n$ of measures in $\P_2(\R^d)$  that 
$$\mu_n \to \mu \quad \text{if and only if} \quad
\mu_n \weakly \mu \; \text{ and }
\lim_{n\to \infty} \int_{\R^d} \|x\|^2 \, \dd \mu_n
= 
\int_{\R^d} \|x\|^2 \, \dd \mu.
$$
In general, the minimizer in \eqref{wasserstein} is not unique.
However, if $\mu$ is absolutely continuous, then uniqueness is ensured by the following theorem.

\begin{thm}[Brenier]\label{thm:brenier}
   Let $\mu \in \mathcal P_2(\R^d)$ be absolutely continuous and $\nu \in \mathcal P_2(\R^d)$. 
	Then there is a unique plan $\alpha \in \Gamma_o(\mu, \nu)$,
	 which is  induced by a unique
	measurable optimal transport map, also called Monge map,  $T\colon \R^d \to \R^d$, i.e.,
	\begin{equation} \label{eq:T}
	 \alpha = ({\rm{Id}}, T)_\# \mu
	\end{equation}
	and
  \begin{equation} \label{eq:monge}
	 W_2^2(\mu, \nu) =
 \min_{T\colon  \R^d \to  \R^d}
  \int_{\R^d}  \|x - T(x)\|_2^2  \, \dd \mu(x) 
  \quad \text{subject \; to} \quad T_\# \mu = \nu.
  \end{equation}
	Further, $T = \nabla \psi$, where $\psi\colon \R^d \to (-\infty,+\infty]$
	is convex, lower semi-con\-tin\-u\-ous (lsc) and $\mu$-a.e.\ differentiable. 
	Conversely, if $\psi$ is convex, lsc and $\mu$-a.e.\ differentiable with $\nabla \psi \in L_2(\mu,\R^d)$,
	then $T \coloneqq \nabla \psi$ is an optimal map from $\mu$ to $\nu \coloneqq   T_\# \mu \in \mathcal P_2(\R^d)$. 
\end{thm}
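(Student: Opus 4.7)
The plan is to convert the quadratic minimization into a Kantorovich dual problem involving convex conjugate functions, and then exploit absolute continuity of $\mu$ to extract a transport map from the subdifferential of the dual optimizer.

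First, I would expand $\|x-y\|^2 = \|x\|^2 - 2\langle x,y\rangle + \|y\|^2$ and note that since $\mu,\nu \in \P_2(\R^d)$, the outer terms integrate to constants depending only on the marginals. Hence minimizing the quadratic cost over $\Gamma(\mu,\nu)$ is equivalent to maximizing $\alpha \mapsto \int \langle x,y\rangle \, \dd\alpha(x,y)$. Kantorovich duality for this bilinear cost produces the dual as the infimum of $\int u \, \dd\mu + \int u^* \, \dd\nu$ over proper convex lsc functions $u$, where $u^*$ denotes the Legendre conjugate; the primal optimum is attained by narrow compactness of $\Gamma(\mu,\nu)$, and dual attainment with $u \in L^1(\mu)$, $u^*\in L^1(\nu)$ follows from the standard $c$-concavity/Rockafellar machinery, using that $\langle x,y\rangle \leq \tfrac{1}{2}(\|x\|^2+\|y\|^2)$ so the dual value is finite.

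Second, I would invoke the equality case: for any optimal plan $\alpha$ and any dual optimizer $u$, the pointwise inequality $u(x) + u^*(y) \geq \langle x,y\rangle$ must become equality $\alpha$-a.e., which is exactly the Fenchel identity characterizing $y \in \partial u(x)$. Hence $\mathrm{supp}(\alpha)$ lies in the graph of $\partial u$. Here absolute continuity of $\mu$ plays the decisive role: a convex function is Lebesgue-a.e.\ differentiable on the interior of its domain (a classical consequence of Rademacher applied to the locally Lipschitz $u$, or Alexandrov's theorem), hence $\mu$-a.e.\ differentiable, so $\partial u(x) = \{\nabla u(x)\}$ for $\mu$-a.e.\ $x$. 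The disintegration $\alpha = \alpha^x \times_x \mu$ must therefore satisfy $\alpha^x = \delta_{\nabla u(x)}$ for $\mu$-a.e.\ $x$, which yields $\alpha = (\mathrm{Id},T)_\sharp \mu$ with $T = \nabla u$, and in particular $T_\sharp \mu = \pi^2_\sharp \alpha = \nu$. Setting $\psi \coloneqq u$ gives the claimed representation.

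Uniqueness then follows essentially for free: any other optimal plan $\alpha'$ would still be supported in the graph of $\partial u$ for the same dual optimizer, and on the $\mu$-full set where $\nabla u$ is single-valued this forces $\alpha' = \alpha$, hence $T' = T$ $\mu$-a.e. For the converse, if $\psi$ is convex lsc with $\nabla\psi \in L^2(\mu,\R^d)$, the Fenchel identity $\psi(x)+\psi^*(\nabla\psi(x)) = \langle x, \nabla\psi(x)\rangle$ holds $\mu$-a.e., which matches primal and dual values and hence certifies $(\mathrm{Id},\nabla\psi)_\sharp \mu$ as the optimal plan from $\mu$ to $\nabla\psi_\sharp \mu$. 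The hardest step is establishing dual attainment and the $c$-cyclical monotonicity structure of the support of any optimal plan, which is the technical core of the Brenier–Knott–Smith argument; absolute continuity of $\mu$ is only used at the end, but it is exactly the hypothesis that collapses the multivalued subdifferential into a genuine map.
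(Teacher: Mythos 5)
The paper does not prove this theorem; it is stated as a classical result (with the attainment of the minimum in \eqref{wasserstein} referred to Santambrogio's book), so there is no in-paper argument to compare against. Your proposal is a correct outline of the standard Knott--Smith--Brenier proof: reduce the quadratic cost to maximizing $\int\langle x,y\rangle\,\dd\alpha$, pass to the Kantorovich dual over Legendre-conjugate pairs $(u,u^*)$, use complementary slackness to place $\mathrm{supp}(\alpha)$ in the graph of $\partial u$, and use absolute continuity of $\mu$ to collapse $\partial u$ to $\nabla u$ $\mu$-a.e. Two points deserve slightly more care than your sketch gives them. First, a.e.\ differentiability of $u$ holds on the \emph{interior} of $\mathrm{dom}\,u$, so you must also note that $\mu(\{u=+\infty\})=0$ (because $u\in L^1(\mu)$) and that $\partial(\mathrm{dom}\,u)$, being the boundary of a convex set, is Lebesgue-null and hence $\mu$-null; only then is $u$ genuinely $\mu$-a.e.\ differentiable. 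Second, in the converse direction the ``matching of primal and dual values'' requires $\psi\in L^1(\mu)$ and $\psi^*\in L^1(\nu)$, which need not hold a priori; the clean fix is to argue instead that the graph of $\partial\psi$ is cyclically monotone and that any plan concentrated on a cyclically monotone set is optimal for the quadratic cost under the finite-second-moment hypothesis. With those repairs the argument is complete and is the proof one finds in the standard references.
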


The transport map between Gaussian distributions can be given analytically.

\begin{example}
For two Gaussians
$\mu = \mathcal N(m_\mu, \Sigma_\mu)$
and
$\nu = \mathcal N(m_\nu, \Sigma_\nu)$,
where $\mathcal N(m, \Sigma)$ has the density
$$
p(x) \coloneqq (2\pi)^{-\frac{d}{2}} \,  (\text{det} \, \Sigma )^{-\frac12} \, \text{e}^{- \frac12 (x-m)^\tT \Sigma^{-1} (x-m)},
$$
the transport map is given by 
\begin{equation} \label{ex:gaussian}
T(x) = m_\nu + A (x-m_\mu), \quad 
A \coloneqq \Sigma_\mu^{-\frac12} \left(  \Sigma_\mu^\frac12 \Sigma_\nu \Sigma_\mu^\frac12\right)^{\frac12} \Sigma_\mu^{-\frac12},
\end{equation}
see e.g. \cite[Remark 2.31]{peyre2019computational}.
\hfill $\diamond$
\end{example}

%---------------------------------------------------
\section{Absolutely Continuous Curves in $(\P_2(\R^d), W_2)$
}\label{acw_curves}
%---------------------------------------------------
In the rest of this paper, let $I \coloneqq [a,b]$, $a < b$.
The main player in this paper are curves
$$\mu_t: I \to \mathcal P_2(\R^d), \quad t \mapsto \mu_t.$$
Unfortunately, there is an ambiguity here which is usual in the literature, namely that
the curve itself as well as the value of the curve at time $t \in I$ is denoted by $\mu_t$.
Usually it becomes clear from the context what is meant. 
We are only interested in \emph{narrowly continuous curves}, meaning that for 
$t \to t'$ we have $\mu_t \weakly \mu_{t'}$.
Indeed, to make the definition of integrals 
$$\int_I\int_{\R^d}f(t,x) \, \dd\mu_t\dd t$$ 
meaningful for any measurable, bounded function $f$, we have to consider $\mu_t:I\times \B(\R^d)\to \R$ as a Markov kernel, which is possible by the following theorem whose proof is given in Appendix \ref{app_a}.

\begin{thm} \label{m-markov}
Let $\mu_t: I\to \mathcal P_2(\R^d)$ be a narrowly continuous curve. Then, for every Borel set $B\subseteq \R^d$, we have that $t\mapsto \mu_t(B)$ is measurable, i.e., $\mu_t:I\times\B(\R^d)\to \R$ is a Markov kernel.
\end{thm}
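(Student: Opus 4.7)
The plan is to start from the narrow continuity, which gives measurability (in fact continuity) of $t \mapsto \int \varphi \, \dd\mu_t$ for every $\varphi \in C_b(\R^d)$, and to promote this to measurability of $t \mapsto \mu_t(B)$ for all Borel sets $B$ by a Dynkin-class argument based on the open sets.

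First I would handle open sets. Given an open set $U \subseteq \R^d$, I would build an increasing sequence $\varphi_n \in C_b(\R^d)$ with $0 \le \varphi_n \le 1$ and $\varphi_n \uparrow \mathbf 1_U$ pointwise, for instance
\begin{equation}
\varphi_n(x) \coloneqq \min\bigl(1, n \cdot \mathrm{dist}(x, \R^d \setminus U)\bigr).
\end{equation}
By narrow continuity, each $t \mapsto \int \varphi_n \, \dd \mu_t$ is continuous, and by the monotone convergence theorem (applied to each fixed $t$),
\begin{equation}
\mu_t(U) = \lim_{n\to\infty} \int_{\R^d} \varphi_n \, \dd \mu_t.
\end{equation}
Thus $t \mapsto \mu_t(U)$ is a pointwise limit of continuous functions, hence Borel measurable. (Equivalently, one could invoke Portmanteau to say that $t \mapsto \mu_t(U)$ is lower semicontinuous.)

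Next I would use a $\pi$-$\lambda$ argument to pass to arbitrary Borel sets. Define
\begin{equation}
\mathcal D \coloneqq \{ B \in \B(\R^d) : t \mapsto \mu_t(B) \text{ is Borel measurable} \}.
\end{equation}
It is routine to check that $\mathcal D$ is a Dynkin system: $\R^d \in \mathcal D$ because $\mu_t(\R^d) = 1$ for all $t$; if $B \in \mathcal D$, then $\mu_t(\R^d \setminus B) = 1 - \mu_t(B)$ is measurable in $t$; and if $(B_k)_k \subset \mathcal D$ are pairwise disjoint, then $\mu_t(\bigcup_k B_k) = \sum_k \mu_t(B_k)$ is a pointwise series of measurable functions, hence measurable. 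By Step~1, $\mathcal D$ contains the collection $\mathcal U$ of open sets, and $\mathcal U$ is a $\pi$-system (closed under finite intersections). Dynkin's $\pi$-$\lambda$ theorem then gives $\sigma(\mathcal U) \subseteq \mathcal D$, i.e.\ $\B(\R^d) \subseteq \mathcal D$, which is the claim.

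The only genuinely non-trivial step is the first one, reducing indicators of open sets to continuous test functions; once that is done, the extension to general Borel sets is a standard measure-theoretic bookkeeping. No use of the Wasserstein metric or of the second moment assumption is needed — narrow continuity of $\mu_t$ is the sole input.
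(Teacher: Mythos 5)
Your proof is correct, and it reaches the conclusion by a somewhat cleaner route than the paper. Both arguments share the same essential first step — using narrow continuity to get continuity of $t \mapsto \int \varphi\, \dd\mu_t$ for $\varphi \in C_b(\R^d)$ and then approximating indicators of open sets by such $\varphi$, so that $t \mapsto \mu_t(U)$ is a pointwise limit of continuous functions — but they diverge in the generated-class machinery. The paper uses the monotone class theorem, which requires an \emph{algebra} of sets as the generating family; this forces it to introduce an auxiliary class $\mathcal A$ of sets whose indicators are $L^1(\mu_t)$-approximable by uniformly bounded continuous functions for every $t$, and to verify closure of $\mathcal A$ under intersections and complements by manipulating the approximating sequences (via $\min\{f_n,g_n\}$ and $1-f_n$). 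You instead apply Dynkin's $\pi$-$\lambda$ theorem with the open sets as the $\pi$-system, and the class $\mathcal D$ of ``good'' Borel sets only needs to be checked to be a $\lambda$-system — which follows immediately from $\mu_t(\R^d)=1$, $\mu_t(\R^d\setminus B)=1-\mu_t(B)$, and countable additivity, with no auxiliary approximation class needed. Your observation that the second-moment assumption and the Wasserstein structure play no role is also accurate; only narrow continuity (indeed, only measurability of $t\mapsto\int\varphi\,\dd\mu_t$ for each $\varphi\in C_b$) is used. One tiny point of care: in your formula $\varphi_n(x)=\min\bigl(1, n\cdot\mathrm{dist}(x,\R^d\setminus U)\bigr)$ you should note the convention $\mathrm{dist}(x,\emptyset)=+\infty$ to cover $U=\R^d$, but this is cosmetic.
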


Recall that in a complete metric space $(X,\text{d})$,
a curve $\gamma: I \to X$  is called \emph{absolutely continuous}, if there exists as 
$m \in L^1( I)$ such that 
\begin{equation} \label{eq:abs_cont}
\text{d}(\gamma(s),\gamma(t)) \leq \int_s^t m(r) \, \dd r \quad \text{for all} \quad s,t \in I, \; s \le t.
\end{equation}
Here $L^1([a,b])$ denotes the space of (equivalence classes of) absolutely integrable real-valued function defined on an interval $I$.
 Then the \emph{metric derivative} of $\gamma$   defined by
$$
|\gamma'(t)| \coloneqq \lim_{h \to 0} \frac{\text{d}\left(\gamma(t),\gamma(t+h) \right)}{|h|}
$$
exists a.e. and is the smallest function $m$ in 
\eqref{eq:abs_cont}.  The space of absolutely continuous curves 
is denoted by $\text{AC}^1(X,\dd)$. If we replace in the above definition $m \in L^p(I)$,
$1 \le p < \infty$, we obtain the
spaces $\text{AC}^p(X,\dd)$.

For our special space $\mathcal P_2(\R^d)$ with $\text{d} = W_2$, 
absolute continuity of a curve can be characterized by a continuity equation.
To this end, let $\mu_t: I \to \P_2(\R^d)$ be a narrowly continuous curve
and  
$v:I\times \R^d \to \R^d$ be a measurable vector field. 
For $v(t,\cdot): \R^d\to \R^d$ 
we alternatively write 
$v_t$. 
We say that the curve-velocity pair $(\mu_t,v_t)$ satisfies the \emph{continuity equation}
\begin{align}\label{eq:ce}\tag{CE}
\partial_t \mu_t + 
\nabla_x \cdot (\mu_t v_t)=0
\end{align}
in the sense of distributions, if 
\begin{align}\label{eq:cont_dist}
\int_I\int_{\R^d}\partial_t\varphi +\langle \nabla_x \varphi,v_t\rangle \, \dd\mu_t \, \dd t=0
\end{align}
for all $\varphi\in C_c^{\infty}((a,b) \times \R^d)$. Here $\nabla_x$ is the gradient with respect to $x$ for $\varphi=\varphi(t,x)$.

\begin{rem}
The  term "in the sense of distributions" for the (CE) has the following meaning: Consider the measure $\mu=\mu_t\times_t \dd t$. 
Then $\partial_t\mu$ is 
the distribution 
$$
\partial_t\mu(\varphi) 
= -\mu(\partial_t \varphi)
= -\int_{I} \int_{\R^d} \partial_t \varphi \, \dd\mu, \quad \varphi\in C_c^\infty((a,b)\times \R^d).
$$
Further $\mu v(\varphi)= \mu(v\varphi)$ and 
\begin{align}
\div(\mu v)(\varphi)
&=\sum_{i=1}^d\partial_{i}(\mu v_i)(\varphi)
=-\sum_{i=1}^d(\mu v_i)(\partial_i\varphi)
=-\sum_{i=1}^d\mu(v_i\partial_i \varphi)\\
&=-\mu\left(\langle \nabla_x\varphi,v\rangle\right)
=
- \int_{I} \int_{\R^d} \langle \nabla_x\varphi,v\rangle \, \dd\mu.
\end{align}
Thus, the continuity equation can be seen as an equation of distributions.
\hfill $\diamond$
\end{rem}

Now we have the following fundamental theorem. 

\begin{thm}\cite[Theorem 8.3.1]{AGS2008}\label{thm:abscont_ce}
Let $\mu_t: I\to \P_2(\R^d)$ be a narrowly continuous curve. Then $\mu_t$ is absolutely continuous
if and only if there exists a Borel measurable vector field $v:I\times \R^d \to \R^d$ such that the following two conditions are fulfilled:
\begin{itemize}
    \item[\rm{i)}]  $\|v_t\|_{L^2(\R^d,\mu_t)}\in L^1(I)$, 
\item[\rm{ii)}]  $(\mu_t,v_t)$ fulfills \rm{\eqref{eq:ce}}.
\end{itemize}
In this case, the metric derivative $|\mu_t'|$ exists and we have 
$|\mu'_t|\leq \|v_t\|_{L^2(\R^d, \mu_t)}$ for a.e. $t\in I$. 
\end{thm}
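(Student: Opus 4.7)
The statement is an equivalence, so I split the argument into two directions. The sufficient direction ((i)+(ii) $\Rightarrow$ absolute continuity with the sharp bound $|\mu_t'|\le\|v_t\|_{L^2(\R^d,\mu_t)}$) is the easier one and proceeds by smoothing and comparing with the characteristic flow of $v_t$. The necessary direction is harder: it must reconstruct a Borel measurable vector field out of the purely geometric hypothesis of absolute continuity, and I would do so by a Riesz-representation argument on the closure of gradients inside $L^2(\R^d,\mu_t;\R^d)$.

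\textbf{Sufficiency.} Given \eqref{eq:ce} with $\|v_t\|_{L^2(\R^d,\mu_t)}\in L^1(I)$, I fix a smooth radial mollifier $\eta_\varepsilon\in C_c^\infty(\R^d)$ with $\eta_\varepsilon>0$, set $\mu_t^\varepsilon\coloneqq\mu_t*\eta_\varepsilon$, and define $v_t^\varepsilon$ via $\mu_t^\varepsilon v_t^\varepsilon\coloneqq(\mu_t v_t)*\eta_\varepsilon$. The pair $(\mu_t^\varepsilon,v_t^\varepsilon)$ still satisfies the continuity equation in the classical sense, and Jensen's inequality applied to the probability measure $\eta_\varepsilon(x-y)\dd\mu_t(y)/\mu_t^\varepsilon(x)$ gives the energy estimate $\|v_t^\varepsilon\|_{L^2(\R^d,\mu_t^\varepsilon)}\le\|v_t\|_{L^2(\R^d,\mu_t)}$. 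Smoothness of $v^\varepsilon$ lets me solve the ODE $\partial_r X_r^\varepsilon=v_r^\varepsilon(X_r^\varepsilon)$, $X_s^\varepsilon=\Id$, with the transport identity $(X_r^\varepsilon)_\sharp\mu_s^\varepsilon=\mu_r^\varepsilon$. Using the coupling $(\Id,X_t^\varepsilon)_\sharp\mu_s^\varepsilon\in\Gamma(\mu_s^\varepsilon,\mu_t^\varepsilon)$ together with Minkowski's integral inequality yields
\begin{align}
W_2(\mu_s^\varepsilon,\mu_t^\varepsilon)\le\int_s^t\|v_r^\varepsilon\|_{L^2(\R^d,\mu_r^\varepsilon)}\,\dd r\le\int_s^t\|v_r\|_{L^2(\R^d,\mu_r)}\,\dd r.
\end{align}
Passing $\varepsilon\to 0$ via narrow convergence with uniform second-moment control (to transfer the bound from $W_2(\mu_s^\varepsilon,\mu_t^\varepsilon)$ to $W_2(\mu_s,\mu_t)$), I obtain \eqref{eq:abs_cont} with $m(r)=\|v_r\|_{L^2(\R^d,\mu_r)}$, and dividing by $|t-s|$ and taking $s\to t$ yields $|\mu_t'|\le\|v_t\|_{L^2(\R^d,\mu_t)}$ at Lebesgue points.

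\textbf{Necessity.} Assume now $\mu_t$ is absolutely continuous with metric derivative $|\mu_t'|\in L^1(I)$. The central estimate to be established is that for every $\phi\in C_c^\infty(\R^d)$ and a.e.\ $t\in I$,
\begin{align}
\left|\tfrac{\dd}{\dd t}\int_{\R^d}\phi\,\dd\mu_t\right|\le|\mu_t'|\cdot\|\nabla\phi\|_{L^2(\R^d,\mu_t)}.
\end{align}
To prove it, for small $h>0$ I pick $\alpha_h\in\Gamma_o(\mu_t,\mu_{t+h})$ and use the expansion $\phi(y)-\phi(x)=\int_0^1\langle\nabla\phi((1-s)x+sy),y-x\rangle\,\dd s$; Cauchy-Schwarz in $\dd\alpha_h$ then controls the finite difference by $W_2(\mu_t,\mu_{t+h})$ times an $L^2$-norm of $\nabla\phi$ against the interpolating measures $((1-s)\pi^1+s\pi^2)_\sharp\alpha_h$, which converges to $\|\nabla\phi\|_{L^2(\R^d,\mu_t)}$ as $h\to 0$ by narrow continuity. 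This shows that $L_t\colon\nabla\phi\mapsto\tfrac{\dd}{\dd t}\int\phi\,\dd\mu_t$ is a bounded linear functional on the subspace $V_{\mu_t}\coloneqq\overline{\{\nabla\phi:\phi\in C_c^\infty(\R^d)\}}\subset L^2(\R^d,\mu_t;\R^d)$ with operator norm at most $|\mu_t'|$. Riesz representation then produces a unique $v_t\in V_{\mu_t}$ representing $L_t$ with $\|v_t\|_{L^2(\R^d,\mu_t)}\le|\mu_t'|$, and the identity $L_t(\nabla\phi)=\int\langle\nabla\phi,v_t\rangle\,\dd\mu_t$ is precisely the continuity equation \eqref{eq:ce} tested against $\xi(t)\phi(x)$.

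\textbf{Main obstacles.} The delicate point is the sharp estimate above: a naive use of Kantorovich--Rubinstein only gives $\|\nabla\phi\|_\infty|\mu_t'|$, which would produce a bounded vector field rather than the sharp $L^2(\mu_t)$ statement needed for the Riesz argument; the segment expansion of $\phi$ along an optimal plan and the narrow convergence of the interpolating measures are essential here. A secondary technical hurdle is Borel measurability of the selection $t\mapsto v_t$, since Riesz representation is applied pointwise in $t$: this is recovered by running the construction simultaneously against a countable dense family of $\phi$'s in $C_c^\infty(\R^d)$, obtaining measurability of $t\mapsto\int\langle v_t,\nabla\phi\rangle\,\dd\mu_t$ for each such $\phi$, and from this measurability of $t\mapsto v_t$ as a Borel vector field.
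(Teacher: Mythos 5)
The paper does not prove this theorem at all: it is imported verbatim from \cite[Theorem 8.3.1]{AGS2008}, so there is no internal proof to compare against. Your argument is, in essence, a correct reconstruction of the proof in that reference: the sufficiency direction via mollification of $\mu_t$ and of the momentum $v_t\mu_t$, the Jensen estimate $\|v_t^\varepsilon\|_{L^2(\mu_t^\varepsilon)}\le\|v_t\|_{L^2(\mu_t)}$, the characteristic flow coupling, and lower semicontinuity of $W_2$ under narrow convergence; and the necessity direction via the segment expansion along optimal plans, which yields the sharp bound $|L_t(\nabla\phi)|\le|\mu_t'|\,\|\nabla\phi\|_{L^2(\mu_t)}$ and hence Riesz representation on $\overline{\{\nabla\phi\}}^{L^2(\mu_t)}$. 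The one place where your sketch is thinner than the reference is the measurability of the selection $t\mapsto v_t$: passing from measurability of $t\mapsto\int\langle v_t,\nabla\phi\rangle\,\dd\mu_t$ for countably many $\phi$ to a Borel representative of $(t,x)\mapsto v_t(x)$ is not immediate, because the underlying Hilbert spaces $L^2(\R^d,\mu_t)$ vary with $t$; the cleaner route (and the one taken in AGS) is to run the duality argument once, globally, in $L^2(\R^{d},\mu_t\times_t\dd t)$, where the minimal-norm representative is produced as a single measurable object and a Borel version is then chosen as in the paper's treatment of \eqref{eq:v_char}. This is a repairable technical point, not a conceptual gap.
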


We call a  measurable vector field $v:I \times \R^d \to \R ^d$ 
\emph{associated to an absolutely continuous curve} $\mu_t: I\to \P_2(\R^d)$, if $(\mu_t,v_t)$ fulfill \eqref{eq:ce} and
$\|v_t\|_{L^2(\R^d,\mu_t)}\in L^1(I)$.

Indeed, an absolutely continuous curve may admit different associated velocity fields. For an illustration, see Example \ref{example:not_tangential} and Figure \ref{fig:not_tangential}.
By the following remark, a unique associated vector field is characterized
by having minimal $L^2(\R^d,\mu_t)$-norm for a.e. $t \in I$,  or equivalently, by being in the tangent space of $\mu_t$.

\begin{rem}[Minimal velocity fields of absolutely continuous curves]
\label{rem:v_min}

Let $v_t, \tilde v_t$ be associated vector fields of an absolutely continuous curve $\mu_t$ with minimal norm $\|v_t \|_{L^2(\R^d,\mu_t)} = \|\tilde v_t \|_{L^2(\R^d,\mu_t)} \eqqcolon z_t$ for a.e. $t \in I$ among all associated vector fields of $\mu_t$. By the linear structure of \eqref{eq:ce}, also $\frac{v_t + \tilde v_t}{2}$ is associated to $\mu_t$. Assume that $v_t \ne \tilde v_t$. By the strict convexity of $L^2(\R^d,\mu_t)$, it follows the contradiction $\|\frac{v_t + \tilde v_t}{2}\|_{L^2(\R^d,\mu_t)} < z_t$. Hence, we have the uniqueness $v_t = \tilde v_t$.

For an equivalent description, consider the \emph{regular tangent space} $\mathcal T_{\mu} = T_{\mu}\P_2(\R^d)$  at $\mu \in \P_2(\R^d)$ which is defined by
\begin{align} \label{tan_reg}
     \mathcal T_{\mu}
    &\coloneqq 
      \overline{
      \left\{ 
      \nabla \phi: \phi \in C^\infty_{\mathrm c}(\R^d) 
      \right\}}^{L^2(\R^d, \mu)} \\
    &=
      \overline{
      \left\{ \lambda (T- \Id): (\Id ,T)_{\#} \mu \in \Gamma_o(\mu , T_{\#} \mu), \; \lambda >0
      \right\} }^{L^2(\R^d,\mu)},
\end{align}
see \cite[§~8]{AGS2008}. The second description can be interpreted as locally moving mass in an ''optimal way''.
Note that  $\mathcal  T_{\mu}$ is an
	infinite-dimensional subspace of $L^2(\R^d,\mu)$
	if $\mu$ is absolutely continuous, and it is just $\R^d$ if $\mu = \delta_x$, $x \in \R^d$.
There is another description of $\mathcal T_\mu$, namely, for a vector field $v\in L^2(\R^d,\mu)$, we have
    \begin{align} \label{tang}
    &v\in \mathcal T_\mu\mathcal  \; \Longleftrightarrow \;
        \int_{\R^d} \langle w,v\rangle \, \dd\mu=0 \text{ for all } w\in L^2(\R^d,\mu)\text{ with }\nabla\cdot(w \mu)=0,    \\
    &\; \;  \Longleftrightarrow \|v+w\|_{L^2(\R^d,\mu)}\geq\|v\|_{L^2(\R^d,\mu)} \text{ for all } w\in L^2(\R^d,\mu)\text{ with }\nabla\cdot(w \mu)=0,
    \end{align}
    where the last equation is meant again in the distributional sense $\int_{\R^d} \langle w,\nabla \varphi \rangle \, \dd\mu=0$ for all $\varphi\in C_c^\infty(\R^d)$, see \cite[Lemma 8.4.2]{AGS2008}.

Now let $v_t \in \mathcal T_{\mu_t}$  be a vector field associated to $\mu_t$. For any other vector field $\tilde v_t$ associated to $\mu_t$, it holds $\nabla\cdot((v_t - \tilde v_t) \mu)=0$, and hence by \eqref{tang}, $\|\tilde v_t\|_{L^2(\R^d,\mu)}\geq\|v_t\|_{L^2(\R^d,\mu)}$, meaning that $v_t$ has minimal $L^2(\R^d,\mu_t)$-norm. On the other hand, let $v_t$ be associated to $\mu_t$ having minimal norm. By \cite[Theorem 8.3.1]{AGS2008}, there \emph{exists} a vector field $\tilde v_t \in \mathcal T_{\mu_t}$ being associated to $\mu_t$ and having minimal norm. By the uniqueness shown above, it follows $v_t = \tilde v_t \in \mathcal T_{\mu_t}$.\\ Therefore, we have proven that the associated velocity field $v_t$ of an absolutely continuous curve $\mu_t$ is unique if we require that it has minimal $L^2(\R^d,\mu_t)$-norm, or equivalently, lies in the tangent space $\mathcal T_{\mu_t}$.
\hfill $\diamond$
\end{rem} 

The following theorem connects absolutely continuous curves with flow ODEs.

\begin{thm}\cite[Theorem 8.1.8]{AGS2008}\label{prop:abs_flow}
Let $\mu_t: I \to \P_2(\R^d)$ be an absolutely continuous curve with associated vector field $v_t$ such that for every compact Borel set $B\subset \R^d$ it holds 
\begin{align}
\int_I \sup_{x \in B} \|v_t(x)\|+\mathrm{Lip}(v_t,B) \, \dd t<\infty.
\end{align}
Then there exists a solution $\phi:I\times\R^d\to \R^d$ of the ODE
\begin{align}\label{eq:flow_ode} %\tag{ODE}
\partial_t \phi(t,x)=v_t(\phi(t,x)), \quad \phi(0,x)=x
\end{align}
and $\phi(t,\cdot)_\sharp\mu_0=\mu_t$.
\end{thm}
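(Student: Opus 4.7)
The plan is to carry out the classical three-step argument: construct the flow via Cauchy--Lipschitz, check that the push-forward curve $\tilde\mu_t \coloneqq \phi(t,\cdot)_\sharp \mu_0$ satisfies \eqref{eq:ce}, and close by uniqueness of solutions to the continuity equation.

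First, I would build the flow $\phi$. The hypothesis gives, for a.e.\ $t$, a locally bounded and locally Lipschitz velocity $v_t$, with $\sup_B\|v_t\|$ and $\mathrm{Lip}(v_t,B)$ in $L^1(I)$ for every compact $B\subset\R^d$. This is precisely the Carath\'eodory setting in which Cauchy--Lipschitz yields, for each $x\in\R^d$, a unique maximal solution $\phi(\cdot,x)$ of \eqref{eq:flow_ode}. To extend this solution to all of $I$, I would run a Gr\"onwall estimate on the ball $\overline{B(0,R)}$: the bound $\|v_t(y)\|\le \|v_t(0)\|+\mathrm{Lip}(v_t,\overline{B(0,R)})\,\|y\|$, combined with the integrability hypothesis, precludes blow-up on $I$. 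Joint measurability of $(t,x)\mapsto \phi(t,x)$ follows from uniform convergence of Picard iterates on compacta, and for each $t\in I$ the map $\phi(t,\cdot)$ is a bi-Lipschitz homeomorphism of $\R^d$ (again by Gr\"onwall applied to the difference of two trajectories).

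Second, I would set $\tilde\mu_t\coloneqq \phi(t,\cdot)_\sharp \mu_0$ and verify that $(\tilde\mu_t,v_t)$ solves \eqref{eq:ce}. For $\varphi\in C_c^\infty((a,b)\times\R^d)$, the chain rule along characteristics reads
$$\frac{\dd}{\dd t}\varphi(t,\phi(t,x))=\partial_t\varphi(t,\phi(t,x))+\langle\nabla_x\varphi(t,\phi(t,x)),\,v_t(\phi(t,x))\rangle.$$
Rewriting the integrand of \eqref{eq:cont_dist} for $\tilde\mu_t$ via the push-forward formula \eqref{push} with $T=\phi(t,\cdot)$, swapping the $t$- and $x$-integrals by Fubini (justified by compact support of $\varphi$ and the Gr\"onwall bound above), and integrating the total derivative in $t$ over $I$ collapses everything to $\int_{\R^d}[\varphi(b,\phi(b,x))-\varphi(a,\phi(a,x))]\,\dd\mu_0(x)=0$, since $\varphi$ vanishes at the endpoints of $I$. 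Combined with $\tilde\mu_0=\mu_0$, this shows $\tilde\mu_t$ solves the same initial value problem for \eqref{eq:ce} as $\mu_t$.

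The final step is uniqueness, and this is the main obstacle, because \eqref{eq:ce} is generically under-determined without regularity on $v_t$; the local Lipschitz control is precisely what allows one to close the argument. I would carry it out by duality: fix $t^*\in I$ and $\psi\in C_c^\infty(\R^d)$, and define $u(s,y)\coloneqq \psi\big(\phi(t^*,\phi(s,\cdot)^{-1}(y))\big)$ for $s\le t^*$. By construction $u$ is constant along characteristics and hence solves $\partial_s u+\langle v_s,\nabla_x u\rangle=0$ with $u(t^*,\cdot)=\psi$. After a standard mollification in $s$ that turns $u$ into an admissible test function, plugging it into \eqref{eq:cont_dist} shows that $s\mapsto \int u(s,\cdot)\,\dd\mu_s$ is constant, whence
$$\int_{\R^d}\psi\,\dd\mu_{t^*}=\int_{\R^d}\psi(\phi(t^*,x))\,\dd\mu_0(x)=\int_{\R^d}\psi\,\dd\tilde\mu_{t^*}.$$
Varying $\psi$ and invoking density of $C_c^\infty(\R^d)$ in $C_0(\R^d)$ gives $\mu_{t^*}=\tilde\mu_{t^*}$ for every $t^*\in I$. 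Equivalently, this uniqueness step can be subsumed by invoking the superposition principle from \cite{AGS2008}, keeping the argument within the framework used throughout the paper.
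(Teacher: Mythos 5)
Your overall strategy (Cauchy--Lipschitz for the characteristics, showing the push-forward solves \eqref{eq:ce}, and closing by uniqueness for the continuity equation) is the right one and is essentially the route of the cited source \cite[Theorem 8.1.8]{AGS2008}; the paper itself offers no proof beyond the citation. However, your first step has a genuine gap. The hypothesis controls $\sup_{x\in B}\|v_t(x)\|$ and $\mathrm{Lip}(v_t,B)$ only on each \emph{fixed} compact $B$, with constants that may grow arbitrarily with $B$. Your Gr\"onwall bound $\|v_t(y)\|\le\|v_t(0)\|+\mathrm{Lip}(v_t,\overline{B(0,R)})\,\|y\|$ is valid only while the trajectory remains inside $\overline{B(0,R)}$, and nothing forces it to stay there. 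The autonomous field $v(x)=x^2$ on $\R$ satisfies the integrability hypothesis for every compact set, yet its trajectories blow up in finite time; so local boundedness plus local Lipschitz control does \emph{not} preclude blow-up, and the flow cannot be defined on all of $I\times\R^d$ by ODE arguments alone. What \cite{AGS2008} actually proves is that the maximal solution is global for $\mu_0$-a.e.\ $x$, and the proof of this uses the measure-theoretic input that $(\mu_t,v_t)$ solves \eqref{eq:ce} with $\int_I\|v_t\|_{L^2(\R^d,\mu_t)}\,\dd t<\infty$: a comparison principle for the continuity equation shows that the push-forward of $\mu_0$ restricted to the set of points whose trajectory survives up to time $t$ is dominated by $\mu_t$, and conservation of total mass then forces that set to have full $\mu_0$-measure. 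This step is absent from your argument and cannot be replaced by Gr\"onwall.

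Two further points on the uniqueness step. The function $u(s,y)=\psi\big(\phi(t^*,\phi(s,\cdot)^{-1}(y))\big)$ is only locally Lipschitz in $y$ (the backward flow is merely bi-Lipschitz), so it is not an admissible test function for \eqref{eq:cont_dist}; one must mollify in $x$ as well as in $s$ and control the commutator between $v_s\cdot\nabla$ and the mollification in $L^1(\mu_s)$, which is where the local Lipschitz hypothesis is really spent and is the technical heart of the argument rather than a ``standard'' step. Moreover, once the flow is only defined for $\mu_0$-a.e.\ $x$, both your Step 2 computation and the duality argument must be localized to that full-measure set. Invoking the superposition principle at the end is a legitimate way to subsume these issues, but be aware that it is itself a theorem of comparable depth in \cite{AGS2008}, so it cannot serve as a shortcut if the goal is a self-contained proof.
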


 Theorem \ref{prop:abs_flow} indicates why absolutely continuous curves can be used as a tool for sampling. A common practice considers such curves starting in 
$\mu_0\coloneqq\mathcal{N}(0,1)$ towards a target measure $\mu_1 = P_{\text{data}}$ and try to approximate the vector field $v_t$ by a neural network $v_t^\theta$ for trainable parameters $\theta$. To sample from $\mu_1$, we then can just sample from $\mu_0=\mathcal{N}(0,1)$ and solve the ODE \eqref{eq:flow_ode} for these samples.

\begin{rem} \label{rem:conversely}
Also the other direction 
in Theorem \ref{prop:abs_flow}
is true under some conditions. To see this, assume that $\phi$ is measurable and satisfies \eqref{eq:flow_ode} for some locally bounded vector field $v_t$. For $\varphi\in C^\infty_c((a,b) \times \R^d)$, we can compute
\begin{align}
0&=\varphi(b,\phi(b,x))-\varphi(a,\phi(a,x))=\int_{\R^d}\varphi(b,\phi(b,x))-\varphi(a,\phi(a,x))\dd\mu_0\\
&=
\int_{\R^d}\int_a^b\frac{\dd}{\dd t}(\varphi(t,\phi(t,x))) \, \dd t\dd \mu_0 \\
&= 
\int_{\R^d}\int_a^b \langle \nabla_x\varphi \big(t,\phi(t,x) \big),v_t \big(t, \phi(t,x) \big)\rangle + (\partial_t \varphi) \big( t,\phi(t,x) \big) \, \dd t\dd\mu_0\\
&=
\int_a^b\int_{\R^d} \langle \nabla_x\varphi,v_t\rangle + \partial_t \varphi \, \dd [\phi(t,\cdot)_\sharp\mu_0]\dd t
\end{align}
and thus $\mu_t\coloneqq \phi(t,\cdot)_\sharp \mu_0$ satisfies the continuity equation (in a weak sense).
\hfill $\diamond$
\end{rem}

In the following Sections \ref{sec:planes} - \ref{subsec:conde}
we show different methods for the construction of 
curve-velocity pairs fulfilling the conditions of Theorem \ref{thm:abscont_ce}, i.e. providing absolutely continuous curves in the Wasserstein geometry.
Having \eqref{eq:flow_ode} in mind,
this will lead to a  sampling procedure.

%-----------------------------
\section{Curves Induced by Couplings} \label{sec:planes}
%----------------------------
We start with curves induced by couplings $\alpha \in \Gamma(\mu_0,\mu_1)$. There are three important kinds of couplings:
\begin{itemize}
\item optimal couplings $\alpha\in \Gamma_o(\mu_0,\mu_1)$, since they correspond to geodesics in $\P_2(\R^d)$.
\item couplings arising from maps, since their induced curves and vector fields are particularly simple, as we will see later.
\item product couplings $\mu_0\times\mu_1$, since we can easily sample from them by sampling from $\mu_0$ and $\mu_1$ and no extra computation is needed.
\end{itemize}
Using couplings it is also easy to see, that the space $\P_2(\R^d)$ is \emph{path connected}, i.e., any two measures  can be connected by a  continuous curve (with respect to $W_2$). Even more, 
$\P_2(\R^d)$ is a \emph{geodesic space}, meaning that any two measures can be connected by a geodesic. Recall that a curve $\mu_t: [0,1] \to \mathcal P_2(\R^d)$ is called a (constant speed) \emph{geodesic} if
$$
W_2(\mu_s,\mu_t) = (t-s) W_2(\mu_0,\mu_1) \quad \text{for all} \quad 0\le s\le t \le 1.
$$
Let $e_t:\R^d \times \R^d\to \R^d$ be defined by 
$$e_t(x,y)\coloneqq (1-t)x+ty, \quad t \in [0,1].$$
Let $\mu_0, \mu_1 \in \mathcal P_2(\R^d)$ and
$\alpha\in \Gamma(\mu_0,\mu_1)$.
We call a 
\emph{curve $\mu_t: [0,1] \to \mathcal P_2(\R^d)$ 
induced by the coupling} or \emph{plan} $\alpha$, if
\begin{equation} \label{eq:ind_c}
\mu_t \coloneqq e_{t,\sharp}\alpha,
\end{equation}
i.e., by definition of the push-forward measure,
\begin{equation} \label{push_1}
\int_{\R^d}  f \, \dd \mu_t 
= 
\int_{\R^d \times \R^d}  f\big(  e_{t} (x,y) \big) \, \dd \alpha
=
\int_{\R^d \times \R^d}  f\left( (1-t) x + t y \right)  \, \dd \alpha
\end{equation}
for every measurable, bounded function $f: \R^d \to \R$. 

\begin{rem}[Couplings induced by maps]
For couplings induced by maps, i.e., $\alpha = (\Id,T)_\sharp \mu_0$,
the induced curves admit a simple form, namely
\begin{equation} \label{curve_from_map}
\mu_t = (T_t)_\sharp \mu_0 
\quad \text{with} \quad
T_t(x) = (1-t) x + t T(x).
\end{equation}
In general, also optimal plans $\alpha \in \Gamma_o(\mu_0,\mu_1)$
may not  be induced by a  Monge map. 
However, if $\alpha$ is optimal and 
$\mu_t=e_{t,\sharp}\alpha$, then, 
for $s \in (0,1)$,
there exists an optimal map $T^s$ such that $\Gamma_o(\mu_s,\mu_1)=\{(\Id, T^s)_\sharp\mu_t\}$, see \cite[Lemma 7.2.1]{AGS2008}.
Moreover, by \cite[Theorem 7.2.2]{AGS2008}, the curve induced by $T^s$ coincides, up to time parametrizations, with $\mu_t$ for $t\in[s,1]$, $s >0$. Figure \ref{fig:opt_plan} shows  a curve 
from an optimal plan which does not arise from a map and Figure \ref{fig:monge} a curve induced by a plan 
with Monge map. However, in each case, the plan $\Gamma_o(\mu_s,\mu_1)$ is induced by a map for $s \in (0,1)$. \\
This behavior is not necessarily the case for non-optimal plans even if they are induced by a map. 
As an example, see Figure \ref{fig:non_monge}, 
where the paths of two particles cross at a certain time $s \in (0,1)$,
and $\Gamma_0(\mu_s,\mu_1)$ cannot be obtained from a map.
\hfill $\diamond$
\end{rem}

\begin{figure}[h!]
\centering
\begin{subfigure}{0.32\linewidth}
    \includegraphics[width=\linewidth]{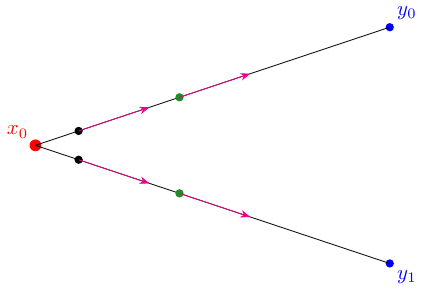}
    \caption{$\mu_t$ for an optimal plan without map.}\label{fig:opt_plan}
\end{subfigure}
\begin{subfigure}{0.32\linewidth}
    \includegraphics[width=\linewidth]{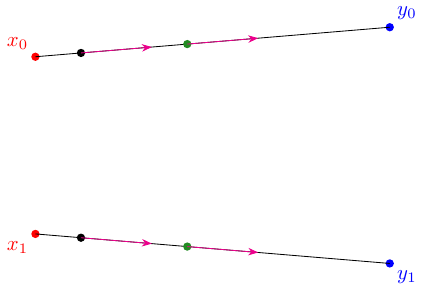}
    \caption{$\mu_t$ for an optimal plan with Monge map $T$.}\label{fig:monge}
\end{subfigure}
\begin{subfigure}{0.32\linewidth}
    \includegraphics[width=\linewidth]{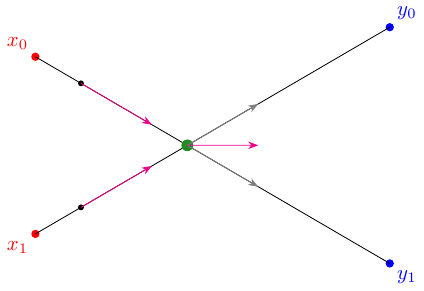}
    \caption{$\mu_t$ for a non-optimal plan with map $T$.}
    \label{fig:non_monge}
\end{subfigure}
\caption{Curve induced by $\alpha=(\Id,T)_\sharp\mu_0$ from 
$\mu_0 = \delta_{x_0}$, resp.
$\mu_0 = \tfrac12 (\delta_{x_0} + \delta_{x_1})$ to $\mu_1 = \tfrac12 (\delta_{y_0} + \delta_{y_1})$.
In (c), at the crossing time $s$ of the path,
there does not exist a map $T_s$ that induces an element in
$\Gamma_o(\mu_s,\mu_1)$. Red arrows: vector fields computed via  \eqref{eq:speed_1}}.
\end{figure}

\begin{rem}[Random variables] 
Curves induced by plans can also be interpreted via random variables: Let $X_0,X_1:\Omega\to\R^d$ be random variables with laws $P_{X_0} = \mu_0$ resp. $P_{X_1} = \mu_1$. 
Then $\alpha = P_{X_0,X_1} \in \Gamma(\mu_0,\mu_1)$ and the induced curve in \eqref{eq:ind_c} reads as
$$
\mu_t= P_{X_{t}}, \quad \text{where} \quad X_t\coloneqq (1-t)X_0+ t X_1.
$$
In particular, if $X_0$ and $X_1$ are independent, 
then $P_{X_0,X_1}= \mu_0 \times \mu_1$ and 
$$\mu_t= P_{X_t}=\left((1-t)_\sharp P_{X_0}\right)*\left(t_\sharp P_{X_1}\right).$$ 
Conversely, for $\alpha\in\Gamma(\mu_0,\mu_1)$, there is a random variable $Z:\Omega\to \R^d\times \R^d$ with law $P_Z=\alpha$, we can choose 
$X_0 \coloneqq \pi^1\circ Z$, 
$X_1 =\pi^2\circ Z$ 
and obtain 
$\mu_0=P_{X_0},\, \mu_1=P_{X_1}$ and $P_{X_0,X_1}=\alpha$.   
\hfill $\diamond$
\end{rem}

First of all, curves induced by plans are narrowly continuous, as the following lemma shows.
\begin{lem}\label{thm:n_cont}
    Let $\mu_0,\mu_1\in \P_2(\R^d)$ and $\alpha\in \Gamma(\mu_0,\mu_1)$. 
		Then $\mu_t \coloneqq e_{t,\sharp}\alpha$  is narrowly continuous.
\end{lem}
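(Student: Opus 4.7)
The plan is to verify narrow continuity directly from the definition. Let $\varphi \in C_b(\R^d)$ be arbitrary, and let $t_n \to t$ in $[0,1]$. I need to show
$$\int_{\R^d} \varphi \, \dd \mu_{t_n} \xrightarrow[n \to \infty]{} \int_{\R^d} \varphi \, \dd \mu_t.$$

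First I would rewrite both sides via the push-forward formula \eqref{push_1}, which gives
$$\int_{\R^d} \varphi \, \dd \mu_{t_n} = \int_{\R^d \times \R^d} \varphi\bigl((1-t_n) x + t_n y\bigr) \, \dd \alpha(x,y),$$
and similarly for the limit. So the problem reduces to passing to the limit inside the integral against $\alpha$.

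The key step is an application of the dominated convergence theorem on the product space $(\R^d \times \R^d, \alpha)$. For each fixed $(x,y)$, continuity of $\varphi$ together with the continuity of the affine map $t \mapsto (1-t)x + ty$ yields the pointwise convergence
$$\varphi\bigl((1-t_n) x + t_n y\bigr) \longrightarrow \varphi\bigl((1-t) x + t y\bigr).$$
For the domination, since $\varphi \in C_b(\R^d)$, we have the uniform bound
$$\bigl|\varphi\bigl((1-t_n) x + t_n y\bigr)\bigr| \le \|\varphi\|_\infty$$
for every $n$ and every $(x,y)$, and the constant function $\|\varphi\|_\infty$ is integrable with respect to $\alpha$ because $\alpha \in \mathcal{P}(\R^d \times \R^d)$ is a probability measure. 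Dominated convergence then gives the desired convergence of the integrals, which by the arbitrariness of $\varphi \in C_b(\R^d)$ is exactly narrow continuity $\mu_{t_n} \weakly \mu_t$.

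There is no substantive obstacle here: boundedness of $\varphi$ makes the integrability assumption trivial (no second-moment argument is needed for narrow continuity), and no regularity of $\alpha$ beyond being a probability measure is used. If one wanted continuity in the stronger $W_2$ topology, one would additionally need to check convergence of second moments, but that is beyond what the lemma asserts.
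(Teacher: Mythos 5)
Your argument is correct and is essentially the paper's own proof: both rewrite $\int \varphi \, \dd\mu_{t_n}$ as $\int \varphi \circ e_{t_n} \, \dd\alpha$, note the pointwise convergence $\varphi \circ e_{t_n} \to \varphi \circ e_t$ and the uniform bound $\|\varphi\|_\infty$, and conclude by dominated convergence. No differences worth noting.
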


\begin{proof}
Let $f\in C_b(\R^d)$. Then $f\circ e_t\in C_b(\R^d\times\R^d)$ is measurable and bounded by
$| (f\circ e_t) (x,y)|\leq \|f\|_{\infty}$ for all $(x,y) \in \R^d\times\R^d$
and $f\circ e_{t'}\to f\circ e_t$ pointwise for $t'\to t$. Thus, by the dominated convergence theorem, we have 
\begin{align}
\lim_{t'\to t}\int_{\R^d} f \, \dd\mu_{t'}
& =\lim_{t'\to t}\int_{\R^d} f\circ e_{t'}\, \dd\alpha = \int_{\R^d} \lim_{t'\to t}f\circ e_{t'}\, \dd\alpha\\
&=\int_{\R^d} f\circ e_t \, \dd\alpha
=\int_{\R^d} f\dd\mu_t
\end{align}
and hence the claim.
\end{proof}

Moreover, curves induced by optimal plans are geodesics, see \cite[Chapter 7.2]{AGS2008}.

\begin{prop} \label{prop:geodesic}
   Let $\mu_0,\mu_1\in \P_2(\R^d)$ and $\alpha\in \Gamma_o(\mu_0,\mu_1)$. 
		Then $\mu_t \coloneqq e_{t,\sharp}\alpha$ 
		 is a geodesic between $\mu_0$ and $\mu_1$ and every 
         geodesic connecting $\mu_0$ and $\mu_1$ is of this form.
\end{prop}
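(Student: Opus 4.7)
My plan is to squeeze $W_2(\mu_s,\mu_t)$ between an explicit upper bound and the triangle inequality. Given $\alpha\in\Gamma_o(\mu_0,\mu_1)$ and $\mu_t\coloneqq e_{t,\sharp}\alpha$, the plan $\beta\coloneqq(e_s,e_t)_\sharp\alpha$ lies in $\Gamma(\mu_s,\mu_t)$, and a direct push-forward computation gives $\int\|u-v\|^2\,\dd\beta(u,v)=(t-s)^2\int\|x-y\|^2\,\dd\alpha=(t-s)^2 W_2^2(\mu_0,\mu_1)$ by optimality of $\alpha$. Hence $W_2(\mu_s,\mu_t)\le(t-s)W_2(\mu_0,\mu_1)$. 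Summing this bound over $[0,s],[s,t],[t,1]$ and comparing with $W_2(\mu_0,\mu_1)\le W_2(\mu_0,\mu_s)+W_2(\mu_s,\mu_t)+W_2(\mu_t,\mu_1)$ forces equality in every term, yielding the geodesic identity.

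\textbf{Converse, at a single time.} For the converse I would use a glue-and-squeeze argument. Fix $t\in(0,1)$, pick $\gamma_1\in\Gamma_o(\mu_0,\mu_t)$ and $\gamma_2\in\Gamma_o(\mu_t,\mu_1)$, and glue them via the standard gluing lemma into $\sigma\in\P(\R^d\times\R^d\times\R^d)$ with $(\pi^{1,2})_\sharp\sigma=\gamma_1$ and $(\pi^{2,3})_\sharp\sigma=\gamma_2$. Setting $\alpha\coloneqq(\pi^{1,3})_\sharp\sigma\in\Gamma(\mu_0,\mu_1)$, the chain
$$
W_2(\mu_0,\mu_1)\le\|x-z\|_{L^2(\sigma)}\le\|x-y\|_{L^2(\sigma)}+\|y-z\|_{L^2(\sigma)}=W_2(\mu_0,\mu_t)+W_2(\mu_t,\mu_1)=W_2(\mu_0,\mu_1),
$$
whose final identity uses the geodesic hypothesis, is forced to be an equality throughout. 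This yields $\alpha\in\Gamma_o(\mu_0,\mu_1)$, the pointwise relation $\|x-z\|=\|x-y\|+\|y-z\|$ $\sigma$-a.e.\ (so $y$ lies on the segment from $x$ to $z$), and Minkowski equality making $\|x-y\|/\|y-z\|$ $\sigma$-a.e.\ constant. Matching this constant against $t/(1-t)$ via $\|x-y\|_{L^2(\sigma)}=tW_2(\mu_0,\mu_1)$ gives $y=e_t(x,z)$ $\sigma$-a.e., hence $\mu_t=(\pi^2)_\sharp\sigma=e_{t,\sharp}\alpha$.

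\textbf{Main obstacle: a single plan for every $t$.} The construction above produces $\alpha=\alpha^{(t)}$ depending on $t$, while the proposition asks for a single plan inducing the whole curve; this is the genuine difficulty. My plan is to iterate the gluing over progressively finer dyadic subdivisions. For each $n\ge 1$, choose $\gamma_k\in\Gamma_o(\mu_{k/2^n},\mu_{(k+1)/2^n})$ and glue into $\sigma_n\in\P((\R^d)^{2^n+1})$ whose consecutive marginals are the $\gamma_k$. The chained Minkowski estimate, combined with the geodesic identity $W_2(\mu_{k/2^n},\mu_{(k+1)/2^n})=2^{-n}W_2(\mu_0,\mu_1)$, forces the whole tuple $(x_0,\ldots,x_{2^n})$ to be equispaced on a common affine line $\sigma_n$-a.e.; hence $\alpha^{(n)}\coloneqq(\pi^{0,2^n})_\sharp\sigma_n\in\Gamma_o(\mu_0,\mu_1)$ satisfies $\mu_{k/2^n}=e_{k/2^n,\sharp}\alpha^{(n)}$ for all $k$. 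By narrow compactness of $\Gamma(\mu_0,\mu_1)$ a subsequence $\alpha^{(n_j)}\weakly\alpha$; optimality passes to the limit via lower semicontinuity of the quadratic cost, and continuity of push-forwards under narrow convergence gives $\mu_t=e_{t,\sharp}\alpha$ at every dyadic $t$. Density of the dyadic rationals together with the narrow continuity of $t\mapsto\mu_t$ and of $t\mapsto e_{t,\sharp}\alpha$ from Lemma~\ref{thm:n_cont} then extends the identity to all $t\in[0,1]$.
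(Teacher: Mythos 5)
The paper does not actually prove this proposition; it only cites \cite[Chapter 7.2]{AGS2008}, so your argument has to be judged as a self-contained reconstruction, and as such it is correct. The forward direction (push $\alpha$ forward by $(e_s,e_t)$ to get $W_2(\mu_s,\mu_t)\le (t-s)\,W_2(\mu_0,\mu_1)$ and squeeze against the triangle inequality) is the standard argument. For the converse, AGS produce a single plan valid for all times by exploiting the uniqueness of optimal plans emanating from interior points of a geodesic (their Lemma 7.2.1, which the paper quotes elsewhere); your route — gluing over dyadic subdivisions, reading off equality in the chained triangle inequality, and then passing to a narrow limit using compactness of $\Gamma(\mu_0,\mu_1)$ and lower semicontinuity of the quadratic cost — is a legitimate and arguably more elementary alternative that avoids that lemma, at the price of the limiting argument. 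Two points are worth making explicit in a full write-up: (i) dispose of the degenerate case $W_2(\mu_0,\mu_1)=0$ separately (the geodesic is then constant and $\alpha=(\Id,\Id)_\sharp\mu_0$ works), since your Minkowski-equality step divides by $\|y-z\|_{L^2(\sigma)}$ and compares a ratio to $t/(1-t)$; (ii) state precisely what equality in the triangle inequality of the Hilbert space $L^2(\sigma;\R^d)$ yields, namely $x-y=c\,(y-z)$ $\sigma$-a.e.\ for a single scalar constant $c\ge 0$ — the pointwise collinearity and the constant ratio are consequences of this, not separate inputs — and the analogous $N$-term statement (all increments $x_k-x_{k+1}$ positively proportional, hence, having equal $L^2$-norms, equal a.e.) is exactly what justifies the ``equispaced on a common line'' claim in the dyadic step.
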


Next, we want to show that a curve induced by a plan is also absolutely continuous. For this, we have to find a velocity field
fulfilling i) and ii) in Theorem \ref{thm:abscont_ce}
so that $(\mu_t,v_t)$ satisfy (CE).
To this end, we associate to a plan $\alpha$ the velocity field  
$v: [0,1] \times \R^d \to \R^d$ as follows:
Consider
$\underline {{\boldsymbol \alpha}}\coloneqq \mathcal{L}_{[0,1]}\times \alpha$ and 
$$e:[0,1]\times \R^d\times \R^d\to [0,1]\times\R^d, ~(t,x,y)\mapsto (t,e_t(x,y)).$$ 
Then, by \cite[Section 17]{ambrosio2021lectures}, there exists a \emph{unique vector field} $v\in L^2(e_\sharp \underline {{\boldsymbol \alpha}},\R^d)$ 
such that 
\begin{align}\label{eq:v_char}
v (e_\sharp \underline {{\boldsymbol \alpha}}) = e_\sharp[(y-x)\underline {{\boldsymbol \alpha}}].
\end{align}
Since every $e_\sharp{\boldsymbol{\alpha}}$-measurable function has a Borel measurable representative, see \cite[Proposition 2.1.11]{bogachev2007measure}, we can choose $v$ Borel measurable.
We call a measurable \emph{vector field} $v: [0,1] \times \R^d \to \R^d$
\emph{induced by a plan} $\alpha$, if it fulfills \eqref{eq:v_char}.

\begin{lem}\label{eq:velo}
Let $\mu_t = e_{t,\sharp} \alpha$ be the curve induced by the plan $\alpha$.
Then \eqref{eq:v_char} is equivalent to
\begin{equation} \label{eq:ind_v}
v_t\mu_t=e_{t,\sharp}[(y-x)\alpha] \quad \text{for a.e. } t\in [0,1].
\end{equation}
\end{lem}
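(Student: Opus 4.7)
The plan is to unfold both equalities as identities of vector-valued measures tested against scalar test functions, and then pass between a single equality on $[0,1]\times\R^d$ and a $t$-parametrized family of equalities on $\R^d$ via Fubini and a countable dense family of test functions.

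First I would identify the pushforward $e_\sharp\underline{\boldsymbol\alpha}$. For any bounded measurable $\varphi:[0,1]\times\R^d\to\R$, the definition of pushforward together with Fubini gives
\begin{align}
\int_{[0,1]\times\R^d} \varphi \, \dd(e_\sharp\underline{\boldsymbol\alpha})
= \int_0^1 \int_{\R^d\times\R^d}\varphi\bigl(t,e_t(x,y)\bigr)\,\dd\alpha(x,y)\,\dd t
= \int_0^1\int_{\R^d}\varphi(t,z)\,\dd\mu_t(z)\,\dd t,
\end{align}
so $e_\sharp\underline{\boldsymbol\alpha}$ disintegrates as $\mu_t\times_t \dd t$. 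Consequently, \eqref{eq:v_char} unfolds into the requirement that, for every bounded measurable scalar $\varphi:[0,1]\times\R^d\to\R$,
\begin{align}\label{eq:unfold_v}
\int_0^1\int_{\R^d}\varphi(t,z)\,v_t(z)\,\dd\mu_t(z)\,\dd t
=
\int_0^1\int_{\R^d\times\R^d}\varphi\bigl(t,e_t(x,y)\bigr)(y-x)\,\dd\alpha(x,y)\,\dd t,
\end{align}
while \eqref{eq:ind_v} amounts to the same identity \textit{without} the outer $\int_0^1\,\dd t$, holding for a.e.\ $t$, and for every bounded measurable $\psi:\R^d\to\R$.

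The direction $(\Leftarrow)$ is then immediate: assume \eqref{eq:ind_v} holds for a.e.\ $t$. Testing both sides against any bounded measurable $\psi$, integrating over $t\in[0,1]$, and applying Fubini recovers \eqref{eq:unfold_v} for product functions $\varphi(t,z)=\chi(t)\psi(z)$, and by a standard monotone class argument this extends to all bounded measurable $\varphi$, yielding \eqref{eq:v_char}.

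For the direction $(\Rightarrow)$, the main subtlety, and in my view the only real obstacle, is that choosing $\varphi(t,z)=\mathbf 1_A(t)\psi(z)$ in \eqref{eq:unfold_v} gives
\begin{align}
\int_A\Bigl[\int_{\R^d}\psi\,v_t\,\dd\mu_t-\int_{\R^d\times\R^d}\psi\circ e_t\,(y-x)\,\dd\alpha\Bigr]\dd t=0
\end{align}
for every Borel $A\subseteq[0,1]$, so the integrand vanishes for $t$ outside a null set $N_\psi$ \textit{that depends on $\psi$}. To get a single null set, fix a countable family $\{\psi_n\}\subset C_0(\R^d)$ dense in $\|\cdot\|_\infty$, set $N\coloneqq\bigcup_n N_{\psi_n}$, and observe that both measures $v_t\mu_t$ and $e_{t,\sharp}[(y-x)\alpha]$ are finite $\R^d$-valued Borel measures for a.e.\ $t$ (by Cauchy--Schwarz, using $v\in L^2(e_\sharp\underline{\boldsymbol\alpha},\R^d)$ and $\alpha\in\mathcal P_2(\R^d\times\R^d)$, together with Fubini). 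For $t\notin N$ outside the further null set where these integrability properties fail, the two finite vector measures agree on $C_0(\R^d)$ and hence coincide, establishing \eqref{eq:ind_v}. Borel measurability of $v$ (chosen at the outset) ensures there is no measurability issue in the Fubini steps.
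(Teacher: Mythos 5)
Your proposal is correct and follows essentially the same route as the paper: unfold both sides against test functions, note that the backward direction is immediate by Fubini, and for the forward direction reduce to a countable dense family of spatial test functions so that the exceptional null set in $t$ can be made uniform. One small point in your favor: you take the dense family in $C_0(\R^d)$, which is separable, and invoke the identification of finite vector measures via duality with $C_0$; the paper's choice of a "countable dense subset of $C_b(\R^d)$" is slightly imprecise since $(C_b(\R^d),\|\cdot\|_\infty)$ is not separable, so your variant is the cleaner way to close the argument.
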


\begin{proof}
For $f \in  C_b([0,1] \times \R^d)$, the left- hand side of \eqref{eq:v_char} means 
\begin{align}
\int_0^1 \int_{\R^d } f v \, \dd e_\sharp\underline {{\boldsymbol \alpha}}
&= \int_0^1\int_{\R^d \times \R^d} f(t,e_t(x,y))v(t,e_t(x,y)) \, \dd\underline {{\boldsymbol \alpha}}\\
&= \int_0^1\int_{\R^d} f v \, \dd(e_{t,\sharp} \alpha)  \dd t
\\
&=\int_0^1\int_{\R^d } f(t,x) v(t,x) \, \dd\mu_t \dd t
\end{align}
and the right-hand side
\begin{align}
\int_0^1 \int_{\R^d}f \, \dd e_\sharp[(y-x)\underline {{\boldsymbol \alpha}}]
&=
\int_0^1 \int_{\R^d \times \R^d} f(t,e_t(x,y))(y-x) \, \dd\underline {{\boldsymbol \alpha}}\\
&=
\int_0^1\int_{\R^d} f\, \dd e_{t,\sharp}[(y-x)\dd\alpha] \dd t.
\end{align}
Thus, \eqref{eq:ind_v} implies \eqref{eq:v_char}.

Conversely, assume that \eqref{eq:v_char} holds true. Then we have
$$
\int_0^1\int_{\R^d } f(t,x) v(t,x) \, \dd\mu_t \dd t 
= 
\int_0^1\int_{\R^d} f\, \dd e_{t,\sharp}[(y-x)\dd\alpha] \dd t.
$$
Let $\{g_n\}_{n\in \NN}\subset C_b(\R^d)$ be a dense subset. Then we obtain for any $h\in C_b([0,1])$ and all $n \in \NN$ that
\begin{align}
\int_0^1 h\left(\int_{\R^d}g_n v_t \, \dd\mu_t-\int_{\R^d}g_n \, \dd e_{t,\sharp}[(y-x)\alpha]\right)\dd t = 0 
\end{align}
and consequently 
\begin{align}\label{ax}
\int_{\R^d}g_n v_t \, \dd\mu_t=\int_{\R^d}g_n \, \dd e_{t,\sharp}[(y-x)\alpha] \quad \text{a.e. } t\in[0,1].
\end{align}
Since the number of test functions $g_n$ is countable, there exists a zero set $\mathcal I \subset [0,1]$ such that
\eqref{ax} holds true on $[0,1] \setminus \mathcal I$ for all $n \in \NN$.
 Since $\{g_n\}_{n\in\NN}$ is dense in $C_b(\R^d)$ we obtain that $v_t\mu_t=e_{t,\sharp}[(y-x)\alpha]$ for a.e. $t \in [0,1]$. More precisely, the last equation is an equation of vector-valued finite signed measures and thus it is enough to test equality at a dense subset of $C_b(\R^d)$.
\end{proof}

Using the above lemmas, we obtain that curve-velocity pairs induced by couplings have the following favorable properties.   

\begin{thm}\label{prop:alpha_curve}
Let $\mu_0,\mu_1 \in \mathcal P_2(\R^d)$ and  $\alpha\in \Gamma(\mu_0,\mu_1)$. 
Let $(\mu_t,v_t)$ be a curve-velocity pair induced by $\alpha$, 
i.e.
\begin{equation} \label{meanth}
\mu_t = e_{t,\sharp} \alpha, 
\quad \text{and} \quad
v_t\mu_t=e_{t,\sharp}[(y-x)\alpha] \quad \text{for a.e. } t\in [0,1].
\end{equation}
Then the following holds true:
\begin{itemize}
    \item[\rm{i)}] 
   $(\mu_t,v_t)$ satisfy \eqref{eq:ce} and
    $$  
    \|v_t\|_{L^2(\R^d,\mu_t)}\leq \|y-x\|_{L^2(\R^d \times \R^d,\alpha)} \quad \text{ for a.e. } t \in [0,1].
    $$
    In particular, we have $\|v_t\|_{L^2(\R^d,\mu_t)}\in L^1(I)$
    and $\mu_t$ is an absolutely continuous curve.
     \item[\rm{ii)}] If $\alpha\in \Gamma_o(\mu_0,\mu_1)$,  then $W_2(\mu_t,\mu_{t+h})=hW_2(\mu_0,\mu_1)$ and 
     \begin{align}\label{eq:v_leq_n}
     |\mu'_t|=
     W_2(\mu_0,\mu_1)
     =\|v_t\|_{L^2(\R^d,\mu_t)}
     \quad  \text{for a.e. } t \in[0,1].
     \end{align}
\end{itemize}
\end{thm}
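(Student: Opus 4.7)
My plan is to verify (i) by directly testing against a smooth compactly supported function and reducing everything to an integral of a total $t$-derivative, then to obtain the $L^2$-bound by recognizing $v$ as a conditional expectation in disguise. Part (ii) will follow quickly from Proposition \ref{prop:geodesic} and Theorem \ref{thm:abscont_ce}.

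For the continuity equation in (i), I take $\varphi \in C_c^\infty((0,1)\times \R^d)$. Using $\mu_t = e_{t,\sharp}\alpha$ for the $\partial_t \varphi$-term and the equivalent form of $v$ from Lemma \ref{eq:velo} for the $\langle \nabla_x \varphi, v_t\rangle$-term, I rewrite
\begin{align}
\int_0^1 \!\! \int_{\R^d} \partial_t \varphi + \langle \nabla_x \varphi, v_t\rangle \, \dd\mu_t \dd t
= \int_0^1 \!\! \int_{\R^d\times\R^d} \partial_t \varphi(t, e_t(x,y)) + \langle \nabla_x \varphi(t, e_t(x,y)), y-x\rangle \, \dd\alpha\, \dd t.
\end{align}
Since $\partial_t e_t(x,y) = y-x$, the inner expression is exactly $\tfrac{\dd}{\dd t}\varphi(t, e_t(x,y))$, and integrating in $t$ yields zero because of compact support in $(0,1)$. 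By Fubini this proves \eqref{eq:cont_dist}.

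For the $L^2$-bound, I exploit the variational nature of the characterization \eqref{eq:v_char}. Disintegrate $\underline{\boldsymbol{\alpha}} = \mathcal{L}_{[0,1]}\times \alpha$ with respect to the map $e(t,x,y)=(t,e_t(x,y))$ as $\underline{\boldsymbol{\alpha}} = \beta_{(t,z)} \times_{(t,z)} e_\sharp \underline{\boldsymbol{\alpha}}$. Then \eqref{eq:v_char} forces the barycentric identification $v(t,z) = \int_{\R^d\times\R^d} (y-x)\, \dd\beta_{(t,z)}(x,y)$, so Jensen's inequality gives $\|v(t,z)\|^2 \leq \int \|y-x\|^2 \, \dd\beta_{(t,z)}(x,y)$. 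Integrating against $e_\sharp \underline{\boldsymbol{\alpha}}$ (which disintegrates over $t$ as $\mu_t$) and applying Fubini yields
\begin{align}
\int_0^1 \|v_t\|_{L^2(\R^d,\mu_t)}^2 \, \dd t \;\leq\; \|y-x\|_{L^2(\R^d\times\R^d,\alpha)}^2.
\end{align}
The pointwise-in-$t$ version of the bound follows by the same argument applied to a disintegration at fixed $t$, using that $e_t$ and the map $(x,y)\mapsto y-x$ are $t$-independent. Absolute continuity of $\mu_t$ together with $\|v_t\|_{L^2(\R^d,\mu_t)}\in L^1(I)$ then follow from Theorem \ref{thm:abscont_ce}, combined with narrow continuity from Lemma \ref{thm:n_cont}.

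For (ii), optimality of $\alpha$ together with the plan $(e_s, e_t)_\sharp \alpha \in \Gamma(\mu_s,\mu_t)$ yields
\begin{align}
W_2(\mu_s,\mu_t) \leq \Big(\int \|e_s-e_t\|^2 \, \dd\alpha\Big)^{1/2} = (t-s) W_2(\mu_0,\mu_1),
\end{align}
while the triangle inequality applied to $W_2(\mu_0,\mu_1)\leq W_2(\mu_0,\mu_s)+W_2(\mu_s,\mu_t)+W_2(\mu_t,\mu_1)$ gives the reverse bound (alternatively invoke Proposition \ref{prop:geodesic}). Dividing $W_2(\mu_t,\mu_{t+h}) = h W_2(\mu_0,\mu_1)$ by $h$ yields $|\mu'_t| = W_2(\mu_0,\mu_1)$. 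Finally, combining the chain $|\mu'_t| \leq \|v_t\|_{L^2(\R^d,\mu_t)} \leq \|y-x\|_{L^2(\R^d\times\R^d,\alpha)} = W_2(\mu_0,\mu_1) = |\mu'_t|$ (first inequality from Theorem \ref{thm:abscont_ce}, second from (i)) squeezes all quantities to equality, giving \eqref{eq:v_leq_n}.

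The main obstacle is the $L^2$-bound in (i): one must properly interpret \eqref{eq:v_char} as specifying $v$ as a conditional expectation with respect to the push-forward $e_\sharp\underline{\boldsymbol{\alpha}}$, so that Jensen applies cleanly; the rest of the proof is essentially a change-of-variables computation and a squeeze argument.
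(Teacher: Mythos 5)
Your proof is correct and follows essentially the same route as the paper: for (i) the paper simply cites \cite[Theorem 17.2, Lemma 17.3]{ambrosio2021lectures}, and your total-derivative computation for the continuity equation together with the Jensen/conditional-expectation argument for the $L^2$-bound is exactly the content of those cited results; for (ii) your squeeze argument $|\mu_t'|\le\|v_t\|_{L^2(\R^d,\mu_t)}\le W_2(\mu_0,\mu_1)=|\mu_t'|$ is the paper's argument verbatim. The only added value is that you actually supply the details the paper delegates to the literature, including the correct pointwise-in-$t$ Jensen estimate via the disintegration of $\alpha$ over the fibers of $e_t$.
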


\begin{proof}
 Assertion i)  follows as in \cite[Theorem 17.2, Lemma 17.3]{ambrosio2021lectures}, see also \cite[proof of Proposition 6]{chemseddine2024conditional}.
 
 For Assertion ii), let $\alpha$ be an optimal plan. Since $\mu_t$ is a  geodesic, we obtain 
 $W_2(\mu_t,\mu_{t+h})=h W_2(\mu_0,\mu_1)$, which immediately implies the first equality \eqref{eq:v_leq_n}. 
 By i) and since $\alpha$ is optimal, we know that $\|v_t\|_{L^2(\R^d,\mu_t}\leq W_2(\mu_0,\mu_1)$ for a.e. $t\in[0,1]$. Finally, by Theorem \ref{thm:abscont_ce}, we have $\|v_t\|_{L^2(\R^d,\mu_t)}\geq |\mu_t'|=W_2(\mu_0,\mu_1)$ for a.e. $t\in[0,1]$. 
\end{proof}

By the above theorem, we immediately see that the Wasserstein distance between two measures can be described by the velocity field of any geodesic connecting them (curve energy).

\begin{cor}[Benamou-Brenier]\label{prop:benamou-brenier}
The Wasserstein distance is given by 
\begin{align}
W_2(\mu,\nu)
=\min_{(\mu_t,v_t)}
\left(
\int_0^1 \|v_t\|^2_{L^2(\R^d,\mu_t)} \, \dd t
\right)^\frac12
\end{align}
where the minimum is taken over all pairs $(\mu_t,v_t)$, where $v_t$ is a measurable vector field, $\mu_t$ a narrowly continuous curve
with $\mu_0=\mu,\mu_1=\nu$ and $(\mu_t,v_t)$ satisfy \rm{\eqref{eq:ce}}.
\end{cor}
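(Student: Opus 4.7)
The plan is to prove the two inequalities separately and to extract the existence of a minimizer from the second one.

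For the inequality $W_2(\mu,\nu) \leq \inf_{(\mu_t,v_t)}\left(\int_0^1 \|v_t\|^2_{L^2(\R^d,\mu_t)} \, \dd t\right)^{1/2}$, I would start with an arbitrary competitor $(\mu_t,v_t)$. Without loss of generality, the right-hand side is finite, so by Cauchy--Schwarz we have $\|v_t\|_{L^2(\R^d,\mu_t)} \in L^1([0,1])$, and therefore Theorem \ref{thm:abscont_ce} applies: the narrowly continuous curve $\mu_t$ is absolutely continuous with metric derivative $|\mu_t'|\leq \|v_t\|_{L^2(\R^d,\mu_t)}$ for a.e.\ $t\in[0,1]$. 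Using the defining inequality \eqref{eq:abs_cont} of absolute continuity in the complete metric space $(\P_2(\R^d),W_2)$ with $m(t)=|\mu_t'|$, together with Cauchy--Schwarz in time, I obtain
\begin{align}
W_2(\mu,\nu) = W_2(\mu_0,\mu_1) \leq \int_0^1 |\mu_t'|\, \dd t \leq \int_0^1 \|v_t\|_{L^2(\R^d,\mu_t)}\,\dd t \leq \left(\int_0^1 \|v_t\|^2_{L^2(\R^d,\mu_t)}\,\dd t\right)^{\!1/2}.
\end{align}

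For the reverse inequality and the existence of a minimizer, I would exhibit a concrete competitor that saturates the bound. Choose any optimal plan $\alpha\in\Gamma_o(\mu,\nu)$ (which exists by the discussion following \eqref{wasserstein}) and take the induced curve-velocity pair $\mu_t \coloneqq e_{t,\sharp}\alpha$ and $v_t$ defined by \eqref{eq:ind_v}. By Lemma \ref{thm:n_cont}, $\mu_t$ is narrowly continuous, with $\mu_0=\mu$ and $\mu_1=\nu$, and by Theorem \ref{prop:alpha_curve}(i), $(\mu_t,v_t)$ satisfies \eqref{eq:ce} and $\|v_t\|_{L^2(\R^d,\mu_t)}\in L^1([0,1])$, so it is an admissible competitor. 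Moreover, by Theorem \ref{prop:alpha_curve}(ii),
\begin{align}
\|v_t\|_{L^2(\R^d,\mu_t)} = W_2(\mu,\nu) \quad \text{for a.e. } t\in [0,1],
\end{align}
hence $\left(\int_0^1 \|v_t\|^2_{L^2(\R^d,\mu_t)}\,\dd t\right)^{1/2} = W_2(\mu,\nu)$. Combined with the first step, this shows both that the infimum equals $W_2(\mu,\nu)$ and that it is attained, so the ``min'' in the statement is justified.

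No serious obstacle is expected since both directions reduce to results already established in the excerpt: the forward inequality to the characterization of absolutely continuous curves in Theorem \ref{thm:abscont_ce} plus the metric-space definition \eqref{eq:abs_cont}, and the reverse inequality together with attainment to the construction of the geodesic curve-velocity pair in Theorem \ref{prop:alpha_curve}. The only point that deserves a line of care is the observation that any finite-energy competitor automatically has an $L^1$ velocity norm, which is what activates Theorem \ref{thm:abscont_ce} and lets us bound $W_2$ by the length of the curve.
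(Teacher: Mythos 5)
Your proof is correct and follows exactly the route the paper intends: the lower bound via Theorem \ref{thm:abscont_ce} and the metric-derivative estimate, and the upper bound with attainment via the curve-velocity pair induced by an optimal plan and Theorem \ref{prop:alpha_curve}(ii). The paper states the corollary as an immediate consequence of these results without writing out the details, and your proposal supplies precisely those details, including the correct observation that finite energy forces the $L^1$ integrability needed to invoke Theorem \ref{thm:abscont_ce}.
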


By the following corollary, curve-velocity pairs induced by optimal plans
have the favorable property of minimal vector fields.
This ensures ''short'' curves in the related ODE. 

\begin{cor}\label{prop:optimal_tangential}
Let $\mu_0,\mu_1 \in \mathcal P_2(\R^d)$ and  $\alpha\in \Gamma_o(\mu_0,\mu_1)$. Let $(\mu_t,v_t)$ be curve-velocity pair induced by $\alpha$.
Then, $v_t \in \mathcal T_{\mu_t}$ for a.e. $t\in[0,1]$.
\end{cor}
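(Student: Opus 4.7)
The plan is to combine the norm identity from Theorem \ref{prop:alpha_curve} (ii) with the minimality characterization of the tangent space from Remark \ref{rem:v_min}. The core observation is that for an optimal plan $\alpha$, the induced velocity field $v_t$ already has minimal $L^2(\R^d,\mu_t)$-norm among all vector fields associated to $\mu_t$, and this minimality is precisely what characterizes membership in $\mathcal T_{\mu_t}$.

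First, I would recall that by Theorem \ref{prop:alpha_curve} (ii), since $\alpha \in \Gamma_o(\mu_0,\mu_1)$, we have the identity
\begin{align}
\|v_t\|_{L^2(\R^d,\mu_t)} = |\mu_t'| = W_2(\mu_0,\mu_1) \quad \text{for a.e. } t \in [0,1].
\end{align}
Next, I would let $\tilde v_t$ be any other measurable vector field associated to the absolutely continuous curve $\mu_t$, that is, satisfying \eqref{eq:ce} together with $\|\tilde v_t\|_{L^2(\R^d,\mu_t)} \in L^1(I)$. By the inequality in Theorem \ref{thm:abscont_ce}, we have
\begin{align}
|\mu_t'| \le \|\tilde v_t\|_{L^2(\R^d,\mu_t)} \quad \text{for a.e. } t \in I,
\end{align}
and combining with the identity above yields $\|v_t\|_{L^2(\R^d,\mu_t)} \le \|\tilde v_t\|_{L^2(\R^d,\mu_t)}$ for a.e. $t$. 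Hence $v_t$ is of minimal $L^2(\R^d,\mu_t)$-norm among all associated vector fields.

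Finally, by Remark \ref{rem:v_min}, an associated vector field with minimal $L^2(\R^d,\mu_t)$-norm is unique and coincides with the one lying in the regular tangent space $\mathcal T_{\mu_t}$. Therefore $v_t \in \mathcal T_{\mu_t}$ for a.e. $t \in [0,1]$, which is the claim. I do not anticipate a serious obstacle: the only subtlety is that the null set where the norm equality and the metric-derivative inequality may fail depend on countably many invocations, but their union is still a null set, so the conclusion holds for a.e. $t$.
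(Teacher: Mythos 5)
Your proof is correct and follows essentially the same route as the paper: both arguments rest on the identity $\|v_t\|_{L^2(\R^d,\mu_t)} = |\mu_t'|$ from Theorem \ref{prop:alpha_curve}\,ii). The only difference is that the paper concludes directly by citing \cite[Proposition 8.4.5]{AGS2008}, whereas you reach the same conclusion by first deducing minimality among all associated vector fields and then invoking the equivalence established in Remark \ref{rem:v_min} — the same underlying fact, packaged differently.
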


\begin{proof} By Theorem \ref{prop:alpha_curve}$ii)$ we know that 
$\|v_t\|_{ L}^2(\R^d,\mu_t)= |\mu'_t|$ for a.e. $t\in[0,1]$. Thus \cite[Proposition 8.4.5]{AGS2008} implies $v_t\in\T_{\mu_t}$ for a.e. $t\in[0,1]$.
\end{proof}
 If $\alpha$ is not only optimal, but also induced by a Monge map $T:\R^d\to \R^d$, it is often the case that the trajectories are straight, in the sense that the solution of $\partial_t\phi_t(x)=v_t(\phi_t(x)),\, \phi_0(x)=x$, is $T_t$. This is in particular the case, if $\mu_0,\mu_1$ are empirical measures with the same number of points. For a proof and further cases, e.g. if both measures admit densities with a certain regularity, see \cite[Proposition 16]{chemseddine2024conditional}.

The above minimality property, i.e. laying in the tangent space, is quite special for optimal plans.
Indeed, by the following example, this is not the case  for arbitrary induced velocity fields, see also \cite[Example 3.5]{liu2022rectified}. 

\begin{figure}[t]
\begin{subfigure}{0.245\linewidth}
    \includegraphics[width=\linewidth]{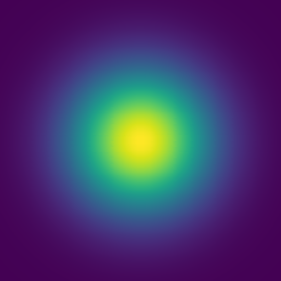}
    \caption{$\mu_0$}
\end{subfigure}
\begin{subfigure}{0.245\linewidth}
\includegraphics[width=\linewidth]{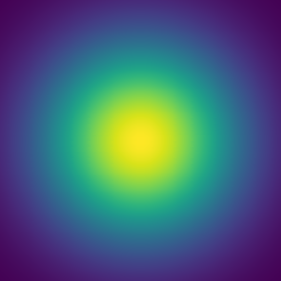}
\caption{$\mu_1$}
\end{subfigure}
\begin{subfigure}{0.245\linewidth}
\includegraphics[width=\linewidth]{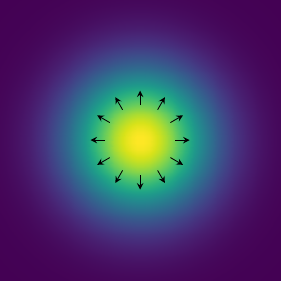}
\caption{$v_0\in \T_{\mu_0}$}
\end{subfigure}
\begin{subfigure}{0.245\linewidth}
\includegraphics[width=\linewidth]{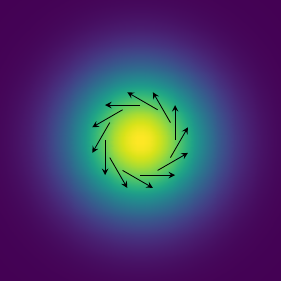}
\caption{$v_0\notin \T_{\mu_0}$}
\end{subfigure}
\caption{Illustration to Example \ref{example:not_tangential}. Both vector fields generate the same curves $\mu_t$ but in $\rm{(d)}$ mass is rotated unnecessarily, which makes the trajectories of single particles longer than for the vector field in $\rm{(c)}$.} \label{fig:not_tangential}
\end{figure}

\begin{example}\label{example:not_tangential}
We consider a  radial density
$p_0(x) \coloneqq f(\|x\|^2)$ for some  $f\in C^\infty(\R)$
with corresponding measure $\mu_0 \coloneqq p_0 \, \dd x$,
and the linear maps $T_t: \R^2 \to \R^2$, $t \in [0,1]$
defined by
$$
T_t(x) \coloneqq \left(
\begin{array}{rr}
1&-t\\t&1
\end{array}
\right) x = 
x + t w(x) , \quad w(x) \coloneqq \left(
\begin{array}{rr}
0&-1\\1&0
\end{array}
\right)
x,
$$
which fulfill
$$
T_t^{-1}(x) = \frac{1}{1+t^2}\left(
\begin{array}{rr}
1&t\\-t&1
\end{array}
\right) x = \frac{1}{1+t^2}
\big(x - t w(x)\big), \quad \text{det} \big( \nabla T_t(x) \big) = \frac{1}{1+t^2}.
$$
We are interested in the plan induced by $T=T_1$, i.e.,
$$
\alpha \coloneqq (\Id,T)_\sharp \mu_0 \in \Gamma(\mu_0 ,T_\sharp \mu_0).
$$
We will show that the induced vector field 
$v_t\mu_t=e_{t,\sharp}((y-x)\alpha)$
is not minimal.
By the change of variable  formula \eqref{push_density_1},
the induced curve 
$$\mu_t\coloneqq e_{t,\sharp}\alpha =T_{t,\sharp} \mu_0,$$
has the density 
\begin{align}
p_t(x)=\frac{(\rho\circ T_t^{-1})(x)}{\det(\nabla T_t(x))}
&=(1+t^2)  f\left(\tfrac{1}{1+t^2}\|x\|^2\right)
\end{align}
which is radial again. 
In particular, the gradient $\nabla p_t(x)$  is a multiple of $x$ and therefore we get by definition of $w$ that  $\langle \nabla p_t(x),w(x)\rangle=0$. 
Further, we have  $\nabla \cdot w=0$ and consequently
$$
\nabla\cdot(p_t w)=\langle \nabla p_t ,w\rangle + p_t \nabla \cdot w =0
$$
Hence, by \eqref{tang}, 
we know that every $u_t\in\T_{\mu_t}$ must satisfy 
$\int_{\R^d} \langle w,u_t\rangle \, \dd\mu_t =0$
for a.e. $t \in [0,1]$.
Unfortunately, the induced velocity field
fulfills 
$\int_{\R^d} \langle w,v_t\rangle \, \dd\mu_t>0$ for an non zero subset of $[0,1]$ by the following reasons:
\begin{align}
\int_{\R^d} \langle w,v_t\rangle\dd\mu_t 
&= 
\int_{\R^d \times \R ^d} \langle w(e_t(x,y)),y-x\rangle \, \dd\alpha \\
&= 
\int_{\R^d \times \R ^d} \langle w(e_t(x,y)),y-x\rangle \, \dd[(\Id,T)_\sharp \mu_0]
\\
&=\int_{\R^d} \langle w((1-t)x+tT(x)),T(x)-x\rangle \, \dd \mu_0\\
&=\int_{\R^d} \langle w(x+tw(x)),w(x)\rangle \, \dd\mu_0.
\end{align}
 The last expression is continuous in $t$ and equal to $\|w\|^2_{L^2(\R^2,\rho)}$ for $t=0$. Since $\|w\|^2_{L^2(\R^2,\rho)}>0$, there exists an open non empty interval, such that  $\int_{\R^d} \langle w,v_t\rangle \, \dd\mu_t>0$. Thus, we have $v_t\notin \mathcal T_{\mu_t}$.
  \hfill $\diamond$
\end{example}

\begin{example}
    Let $\mu_0=\frac 12\delta_{x_0}+\frac12\delta_{x_1}$, $\mu_1=\frac13\delta_{y_0}+\frac23\delta_{y_1}$ and $\alpha=\mu_0\times \mu_1$. Then 
    \begin{align}
    \mu_t &= \frac16\delta_{e_t(x_0,y_0)}+ \frac13\delta_{e_t(x_0,y_1)}+\frac16\delta_{e_t(x_1,y_0)}+ \frac13\delta_{e_t(x_1,y_1)}\\
    \alpha_t &= \frac16\delta_{e_t(x_0,y_0),y_0}+ \frac13\delta_{e_t(x_0,y_1),y_1}+\frac16\delta_{e_t(x_1,y_0),y_0}+ \frac13\delta_{e_t(x_1,y_1),y_1}.
    \end{align}
    Assume furthermore that for some $s\in(0,1)$ we have that $e_{s}(x_0,y_1)=e_{s}(x_1,y_0)\eqqcolon \hat{x}_{s}$, but $e_s(x_0,y_0)\neq \hat{x}_s\neq e_s(x_1,y_1)$, see Figure \ref{fig:indep}. We then have that 
    \begin{align}
    \mu_s &= \frac16\delta_{e_s(x_0,y_0)}+ \frac12\delta_{\hat{x}_{s}}+ \frac13\delta_{e_s(x_1,y_1)}\\
    \alpha_s &= \frac16\delta_{e_s(x_0,y_0),y_0}+ \frac13\delta_{\hat{x}_{s},y_1}+\frac16\delta_{\hat{x}_{s},y_0}+ \frac13\delta_{e_s(x_1,y_1),y_1},
    \end{align}
    which implies $\alpha_s^{\hat{x}_{s}}=\frac13\delta_{y_0}+\frac23\delta_{y_1}$. Hence using Proposition \ref{prop:vector_dis} we obtain
    \begin{align}
        v_{s}(\hat{x}_{s})=\frac13\frac{y_0-\hat{x}_{s}}{1-s}+\frac23\frac{y_1-\hat{x}_{s}}{1-s}.
    \end{align}
    Note that since at $\mu_s$ the mass in $\hat{x}_s$ has to be split, $\mu_t$ cannot be described by a pushforward of a solution of an ODE and thus the assumptions of Theorem \ref{prop:abs_flow} cannot be fulfilled. \hfill $\diamond$

\begin{figure}[h!]
    \centering
    \includegraphics[width=0.4\linewidth]{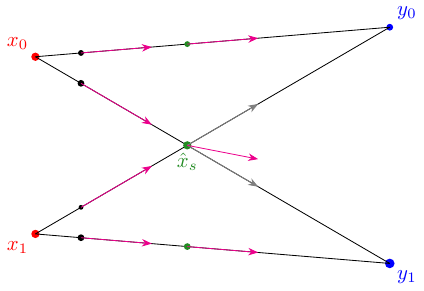}
    \caption{Curve and vector field associated to $\alpha=\mu_0\times \mu_1$, where 
    $\mu_0 = \tfrac12 \delta_{x_0} + \tfrac12 \delta_{x_1}$ 
    and 
        $\mu_1 = \tfrac13 \delta_{y_0} + \tfrac23 \delta_{y_1}$. Vectors are scaled by $0.2$ for better visibility.}
    \label{fig:indep}
\end{figure}
\end{example}
We have already seen that vector fields $v_t$ associated to optimal plans are minimal ones, meaning that $v_t\in \mathcal T_{\mu_t}$.
This is in general not true for an independent coupling 
$\alpha = \mu_0 \times \mu_1$ with an arbitrary $\mu_0$,
see Figure \ref{fig:indep}.
Fortunately,  the  independent coupling with a Gaussian marginal $\mu_0$ has this minimality property as the following example shows,  see also \cite[Section 5.1]{liu2022rectified}.

\begin{prop}\label{prop:flow_score}
Let $\mu_0\sim \N(0,I_d)$, $\mu_1 \in \mathcal P_2(\R^d)$ and $\alpha=\mu_0\times \mu_1$. Let $\mu_t \coloneqq e_{t,\sharp}\alpha$
be the induced curve and $v_t$  the induced velocity field $v_t$  in \eqref{eq:ind_v}. Then $\mu_t$ admits  
the strictly positive density 
\begin{equation} \label{eq:pt}
p_t(x) \coloneqq (1-t)^{-d}(2\pi)^{-\frac{d}{2}}\int_{\R^d} {\rm{e}}^{-\frac{\|x-ty\|^2}{2(1-t)^2}}\dd\mu_1(y), \quad t \in [0,1),
\end{equation}
and 
\begin{equation} \label{eq:vt}
v_t = \nabla f_t 
 \quad \text{with} \quad 
 f_t \coloneqq \frac{1-t}{t}\log p_t+ \frac{1}{2t}\|\cdot\|^2, \quad  t \in (0,1),
\end{equation}
Further, it holds $v_t\in \mathcal T_{\mu_t}$.
\end{prop}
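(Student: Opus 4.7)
My plan is to verify the three assertions in order by direct calculation. For the density formula, I unfold $\mu_t = e_{t,\sharp}(\mu_0 \times \mu_1)$ via Fubini: for any bounded measurable $\varphi$,
$$
\int \varphi \, d\mu_t = \iint \varphi((1-t)x + ty) \, d\mu_0(x) \, d\mu_1(y).
$$
For each fixed $y$, the linear substitution $z = (1-t)x + ty$ has Jacobian $(1-t)^{-d}$ and converts the standard Gaussian density of $\mu_0$ into $(1-t)^{-d}(2\pi)^{-d/2} e^{-\|z-ty\|^2/(2(1-t)^2)}$, yielding \eqref{eq:pt}. Strict positivity is immediate from positivity of the Gaussian kernel.

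For the velocity field, I apply the same substitution to the vector-valued identity $v_t \mu_t = e_{t,\sharp}[(y-x)\alpha]$ from \eqref{eq:ind_v}. Using $y - x = (y-z)/(1-t)$ when $z = (1-t)x + ty$, this produces
$$
v_t(z) \, p_t(z) = (1-t)^{-d-1}(2\pi)^{-d/2}\int (y-z) \, e^{-\|z-ty\|^2/(2(1-t)^2)} \, d\mu_1(y).
$$
Differentiating \eqref{eq:pt} under the integral sign (legitimate because the kernel and its $z$-derivative are locally dominated by integrable functions) gives
$$
\nabla p_t(z) = -(1-t)^{-d-2}(2\pi)^{-d/2} \int (z - ty) \, e^{-\|z-ty\|^2/(2(1-t)^2)} \, d\mu_1(y).
$$
Combining this with $\nabla f_t = \tfrac{1-t}{t}\,\nabla p_t/p_t + z/t$ and the algebraic identity $-(z-ty) + (1-t)z = t(y-z)$, a short calculation confirms $\nabla f_t \cdot p_t = v_t \, p_t$, establishing \eqref{eq:vt}. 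Smoothness of $f_t$ follows from smoothness and strict positivity of $p_t$.

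For the tangent-space inclusion, I show that $v_t = \nabla f_t$ lies in the $L^2(\R^d,\mu_t)$-closure of $\{\nabla \phi : \phi \in C_c^\infty(\R^d)\}$. Theorem \ref{prop:alpha_curve}\,i) already gives $\nabla f_t \in L^2(\R^d,\mu_t)$. I approximate by $\phi_n \coloneqq f_t \, g_n \in C_c^\infty(\R^d)$, where $g_n$ is a logarithmic cutoff of the form $g_n(x) = h\bigl(\log \|x\|/\log n\bigr)$ for $\|x\| \ge 1$ and $g_n \equiv 1$ on the unit ball, with $h \in C^\infty(\R)$ equal to $1$ near $0$ and vanishing on $[1,\infty)$. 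Then $|\nabla g_n(x)| \le C/(\|x\| \log n)$ on its support. Splitting $\nabla \phi_n = g_n \nabla f_t + f_t \nabla g_n$, the first term converges to $\nabla f_t$ by dominated convergence. For the second, the two-sided estimate $Ce^{-c_t\|x\|^2} \le p_t(x) \le C_t$ (the lower bound obtained by restricting the $\mu_1$-integral in \eqref{eq:pt} to a ball of positive $\mu_1$-mass) implies $|f_t(x)| \le C(1 + \|x\|^2)$, and hence
$$
\|f_t \nabla g_n\|_{L^2(\mu_t)}^2 \le \frac{C}{(\log n)^2} \int_{\|x\| \ge 1} \frac{|f_t|^2}{\|x\|^2} \, d\mu_t \le \frac{C'}{(\log n)^2}\Bigl(1 + \int \|x\|^2 \, d\mu_t\Bigr) \longrightarrow 0
$$
because $\mu_t \in \mathcal P_2(\R^d)$. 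Thus $\nabla \phi_n \to \nabla f_t$ in $L^2(\mu_t)$, so $v_t \in \mathcal T_{\mu_t}$. The principal obstacle is precisely this step: a standard linear cutoff with $\|\nabla g_n\|_\infty = O(1/n)$ would not suffice, since the quadratic growth of $f_t$ paired with the merely $O(1/n^2)$ tail decay of $\mu_t$ (only $\mathcal P_2$ is assumed on $\mu_1$) leaves a residual boundary contribution of order $O(1)$; the logarithmic cutoff trades uniform decay of the derivative for the pointwise factor $1/\|x\|$, producing a net $1/(\log n)^2$ decay that is absorbed by the second moment of $\mu_t$.
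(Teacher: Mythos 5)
Your proof is correct. For the density formula \eqref{eq:pt} and the identity $v_t=\nabla f_t$ you follow essentially the same route as the paper: the linear substitution $z=(1-t)x+ty$ in the push-forward integrals, the identity $-(z-ty)+(1-t)z=t(y-z)$, and differentiation of $p_t$ under the integral sign (the paper phrases the second step weakly, testing $\int g\,\nabla f_t\,\dd\mu_t$ against $\int g\,\dd e_{t,\sharp}[(y-x)\alpha]$, but the computation is the same).

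For the tangent-space inclusion your argument genuinely differs from the paper's. The paper splits $f_t$ into $\frac{1-t}{t}\log p_t$ and $\frac{1}{2t}\|\cdot\|^2$, takes $\nabla\|x\|^2\in\mathcal T_{\mu_t}$ for granted from $\mu_t\in\P_2(\R^d)$, and approximates $\nabla\log p_t$ by truncating $\log p_t$ \emph{in its range}: it composes with cutoffs $h_n$ so that $\phi_n=h_n\circ\log p_t$ agrees with $\log p_t$ on the compact sublevel sets $C_n=\{\log p_t\ge -n\}$, and the error $\int_{\R^d\setminus C_n}\|\nabla\log p_t\|^2\dd\mu_t$ vanishes because $\nabla\log p_t\in L^2(\mu_t)$. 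You instead approximate $f_t$ as a whole by a \emph{spatial} logarithmic cutoff $f_tg_n$, paying for the product rule term $f_t\nabla g_n$ with the two-sided bound $C\mathrm{e}^{-c_t\|x\|^2}\le p_t\le C_t$ (hence $|f_t|\le C(1+\|x\|^2)$) and the second moment of $\mu_t$. Both arguments are sound. The paper's version needs only $\nabla\log p_t\in L^2(\mu_t)$ and properness of $-\log p_t$, with no quantitative growth estimate on $\log p_t$; yours requires the explicit Gaussian lower bound on $p_t$ but avoids the (unproved in the paper) step that $\nabla\|x\|^2\in\mathcal T_{\mu_t}$, since you treat $f_t$ in one piece. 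Your closing observation about why a linear cutoff fails and the logarithmic one succeeds is accurate and is exactly the reason this type of cutoff is standard for quadratically growing potentials under a second-moment assumption.
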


The expression $\nabla \log p_t$ will reappear as so-called ''score''
in Section \ref{sec:diffusion}.

\begin{proof}
1. Relation \eqref{eq:pt} follows by directly computing for $g\in C_c^\infty(\R^d)$,
\begin{align}
\int_{\R^d} g \, \dd\mu_t
&= 
\int_{\R^d \times \R^d} g\left(e_t(x,y) \right) \, \dd \alpha(x,y)\\
&=\frac{1}{(2\pi)^{\frac{d}{2}}}\int_{\R^d \times \R^d} g\left( (1-t)x + ty) \right) \text{e}^{-\frac{\|x\|^2}{2}} \, \dd x \dd\mu_1(y)
\end{align}
and substituting $x$ by $\frac{z-ty}{1-t}$ then
\begin{align}
\int_{\R^d} g \, \dd\mu_t
&= \frac{1}{(1-t)^d(2\pi)^{\frac{d}{2}}}
\int_{\R^d} g(z) \text{e}^{-\frac{\|z-ty\|^2}{2(1-t)^2}} \, \dd\mu_1(y)\dd z.
\end{align}
%--------
2. Concerning relation \eqref{eq:vt}, we verify
\begin{align}
\int_{\R^d} g \, \nabla f_t \, \dd\mu_t 
&= 
\frac{1-t}{t}\int_{\R^d} g(x)\frac{\nabla p_t(x)}{p_t(x)}p_t(x) \, \dd x + \frac{1}{t}\int_{\R^d} g(x) \, x \, p_t(x) \, \dd x\\
&= 
\frac{1}{t (1-t)^{d +1}(2\pi)^{\frac{d}{2}}}
\int_{\R^d \times \R^d} g(x)(ty-x)
\text{e}^{-\frac{\|x-ty\|^2}{2(1-t)^2}}\dd\mu_1(y) \, \dd x \\
& \quad
+ \frac{1}{t(1-t)^d(2\pi)^{\frac{d}{2}}}
\int_{\R^d\times \R^d} g(x) x \text{e}^{-\frac{\|x-ty\|^2}{2(1-t)^2}}\, \dd\mu_1(y)\dd x\\
&
= \frac{1}{(1-t)^{d+1}(2\pi)^{\frac{d}{2}}}\int_{\R^d\times \R^d} g(x)(y-x) \text{e}^{-\frac{\|x-ty\|^2}{2(1-t)^2}}\, \dd\mu_1(y) \, \dd x.
\end{align}
On the other hand, we have for the induced velocity field that
\begin{align}
\int_{\R^d}  g \, v_t \, \dd \mu_t &=
\int_{\R^d}  g \, \dd e_{t,\sharp}(y-x)\alpha
=
\frac{1}{(2\pi)^{\frac{d}{2}}}\int_{ \R^d \times \R^d} g(e_t(x,y))(y-x) \text{e}^{-\frac{\|x\|^2}{2} }\, \dd\mu_1(y)\dd x\\
&=
\frac{1}{(1-t)^{d+1}(2\pi)^{\frac{d}{2}}}
\int_{\R^d \times \R^d} g(z)(y-z)\text{e}^{-\frac{\|z-ty\|^2}{2(1-t)^2}} \, \dd\mu_1(y)\dd z,
\end{align}
which yields the assertion. Note that, in particular $\nabla f_t\in L^2(\R^d,\mu_t)$.
\\
3. It remains to prove the minimality property of $v_t$.
Since $\mu_t\in \P_2(\R^d)$, we have  
$\nabla \|x\|^2\in \mathcal T_{\mu_t}$. 
Thus, we are left to show that $\nabla\log p_t\in \mathcal T_{\mu_t}$.
\\
First, we have $p_t\in C^{\infty}(\R^d)$, $t\in [0,1)$ and since $p_t>0$ also $\log p_t \in C^\infty(\R^d)$. 
Since $\nabla \|x\|^2, \nabla f_t \in L^2(\R^d,\mu_t)$, we conclude that
also $\nabla \log p_t \in L^2(\R^d,\mu_t)$.
Furthermore, it is easy to see that 
$\lim_{r \to \infty}\max\{\log p_t(x):\|x\|\geq r\}\to -\infty$. 
Consequently, we obtain for functions $h_n\in C_c^\infty(\R)$ 
that 
$\phi_n\coloneqq h_n\circ\log p_t\in C_c^\infty(\R^d)$.
Now consider especially functions with
$h_n(s)=s$ for $ -n \le s \le \|\log p_t\|_\infty$ and 
$|h_n'|\leq 1$. 
Then we obtain with
$C_n \coloneqq\{x\in \R^d: \log p_t(x) \geq -n\}$ 
that $\phi_n(x)=\log p_t(x)$ for $x \in C_n$ and thus
\begin{align}
\int_{\mathbb R^d} \|\nabla \phi_n-\nabla \log p_t\|^2 \, \dd\mu_t 
&= \int_{\R^d \setminus C_n}\|\nabla \phi_n-\nabla \log p_t\|^2 \, \dd\mu_t
\\
&
\leq 4\int_{\R^d \setminus C_n}\|\nabla \log p_t\|^2 \, \dd\mu_t.
\end{align}
Since $\nabla \log p_t\in L^2(\R^d,\mu_t)$, the latter converges to zero.
Hence, by \eqref{tan_reg}, this yields  $\nabla \log p_t \in \mathcal T_{\mu_t}$.
\end{proof}
\begin{example} \label{ex:v_explode}
We consider $\mu_0\sim \N(0,1)$, $\mu_1 \coloneqq \delta_0$ and $\alpha=\mu_0\times \mu_1$. Then,  by \eqref{eq:pt}, the  curve $\mu_t \coloneqq e_{t,\sharp}\alpha$ admits the strictly positive density 
\begin{equation}
p_t(x) = (1-t)^{-1}(2\pi)^{-\frac{1}{2}} {\rm{e}}^{-\frac{x^2}{2(1-t)^2}}, \quad t \in [0,1), ~ x \in \R.
\end{equation}
Following \eqref{eq:vt}, we calculate
\begin{align}
    f_t(x) 
    &\coloneqq \frac{1-t}{t} \log p_t(x)+ \frac{1}{2t}x^2 \\
    &= \frac{1-t}{t} 
    \big( 
    \log\left( (1-t)^{-1}(2\pi)^{-\frac12} \right) - \frac{x^2}{2(1-t)^2}  
    \big) + \frac{1}{2t}x^2
    \\
    &= \frac{1-t}{t} \log\left((1-t)^{-1}(2\pi)^{-\frac12}\right) -\frac{x^2}{2t(1-t)}
     + \frac{1}{2t}x^2
\end{align}
and the induced velocity field 
\begin{align}
  v_t(x) &= \nabla f_t = \frac{1}{t}x -\frac{x}{t(1-t)} \\
   &= \frac{x}{t-1}.
\end{align}
Hence, for any $x \in \R \setminus \{0\}$, the absolute value $|v_t(x)|$ explodes for $t \to 1$. Note that despite the exploding velocity field, the velocity of a trajectory remains constant over time. More precisely, the map $T_t(x)=(1-t)x$ is the solution of the flow ODE $\partial_t \phi(t,x)=v_t(\phi(t,x)),\phi(0,x)=x$ and thus the speed of a trajectory starting at $x$ is constant in time and equal to $v_t(T_t(x))=-x$.
\end{example}

%-----------------------------------------------------------
\section{Curves via Markov kernels} \label{sec:mk}
%------------------------------------------------------------
Unfortunately, the approach of Lipman et al. \cite{lipman2023flow},
see Example \ref{ex:conv}, is not covered by the previous section.
Indeed, we need a more general viewpoint via Markov kernels
which is given in the following Subsection \ref{subsec:mk1}.
We will see in Subsection \ref{subsec:mk2} that curve-velocity pairs induced by couplings fit into this general  setting.

\subsection{General Construction} \label{subsec:mk1}

The main observation is the following theorem, which describes how to construct a curve velocity pair $(\nu_t,v_t)$ when we are given a family of Markov kernels $\K_t(y,\cdot)$ and a measure $\mu_1$. Note that in this construction it is not necessarily the case that $\nu_0$ is tractable and $\nu_1=\mu_1$. But we will see later that for suitable chosen $\K_t(y,\cdot)$, we can achieve $\nu_0=\N(0,1)$ and $\nu_1\cong \mu_1$.  

\begin{thm}\label{prop:lip_flow}
Let $(\K_t)_{t \in [0,1]}$ be a family of Markov kernels 
$\K_t:\R^d\times \B(\R^d)\to \R$.
For a given measure $\mu_1 \in \mathcal P_2(\R^d)$, we consider 
$$\alpha_t\coloneqq \K_t(y, \cdot) \times_y \mu_1$$
and introduce the measure
$$
\nu_t\coloneqq \pi^1_\sharp\alpha_t \quad \text{so that} 
\quad \alpha_t = \alpha_t^x \times_x \nu_t.
$$
Assume that $\K_t(y,\cdot)$ is an absolutely continuous curve in $\P_2(\R^d)$
for $\mu_1$ - a.e. $y \in \R^d$ with a 
vector field $v_t^y$  such that ($\K_t(y,\cdot),v_t^y)$ fulfills \eqref{eq:ce} for a.e. $y$.
Further, let  $(t,x,y)\mapsto v_t^y$ be measurable and 
$\|v_t^y \|_{L^2(\R^d \times \R^d,\alpha_t)} \in L^1([0,1])$.
Then  $\alpha_t$ and $\nu_t$ are narrowly continuous.
The curve $\nu_t$ is absolutely continuous and $(\nu_t,v_t)$ satisfies  \eqref{eq:ce}, where
    \begin{equation} \label{eq:speed_2}
    v_t (x) \coloneqq \int_{\R^d} v_t^y  \, \dd\alpha_t^x(y).
    \end{equation}
 \end{thm}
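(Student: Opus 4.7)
The plan is to check the three claims in order: narrow continuity of $\alpha_t$ and $\nu_t$, validity of the continuity equation for $(\nu_t,v_t)$, and integrability of $\|v_t\|_{L^2(\R^d,\nu_t)}$. The last two together give absolute continuity of $\nu_t$ via Theorem \ref{thm:abscont_ce}. I expect no single step to be truly hard; the only subtle points are measurability and the use of Fubini, which is justified by the joint-measurability hypothesis on $v_t^y$.

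\textbf{Step 1 (Narrow continuity).} For $\varphi\in C_b(\R^d\times\R^d)$ I would write
\begin{align}
\int_{\R^d\times\R^d}\varphi\,\dd\alpha_t=\int_{\R^d}\int_{\R^d}\varphi(x,y)\,\dd\K_t(y,\cdot)(x)\,\dd\mu_1(y).
\end{align}
Since each curve $t\mapsto \K_t(y,\cdot)$ is absolutely continuous in $(\P_2(\R^d),W_2)$, it is in particular narrowly continuous, so the inner integral is continuous in $t$ for $\mu_1$-a.e.\ $y$ and bounded by $\|\varphi\|_\infty$. Dominated convergence then yields narrow continuity of $t\mapsto \alpha_t$. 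Narrow continuity of $\nu_t=\pi^1_\sharp\alpha_t$ follows immediately by testing $\alpha_t$ with $\psi\circ\pi^1$ for $\psi\in C_b(\R^d)$.

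\textbf{Step 2 (Continuity equation for $(\nu_t,v_t)$).} Fix $\varphi\in C_c^\infty((0,1)\times\R^d)$. Using that $\nu_t$ is the first marginal of $\alpha_t$ and the definition \eqref{eq:speed_2} together with the disintegration $\alpha_t=\alpha_t^x\times_x\nu_t$, I obtain
\begin{align}
\int_{\R^d}\partial_t\varphi\,\dd\nu_t+\int_{\R^d}\langle\nabla_x\varphi,v_t\rangle\,\dd\nu_t
=\int_{\R^d\times\R^d}\bigl(\partial_t\varphi(t,x)+\langle\nabla_x\varphi(t,x),v_t^y(x)\rangle\bigr)\,\dd\alpha_t(x,y),
\end{align}
where the pairing $\langle\nabla_x\varphi,v_t^y\rangle$ is jointly measurable by hypothesis. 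Integrating in $t$ and disintegrating $\alpha_t$ via $\K_t(y,\cdot)\times_y\mu_1$, Fubini (legal because $\|v_t^y\|_{L^2(\R^d\times\R^d,\alpha_t)}\in L^1([0,1])$ bounds the integrand in $L^1$) gives
\begin{align}
\int_0^1\!\!\int_{\R^d}\bigl(\partial_t\varphi+\langle\nabla_x\varphi,v_t\rangle\bigr)\,\dd\nu_t\,\dd t
=\int_{\R^d}\!\!\int_0^1\!\!\int_{\R^d}\bigl(\partial_t\varphi+\langle\nabla_x\varphi,v_t^y\rangle\bigr)\,\dd\K_t(y,\cdot)\,\dd t\,\dd\mu_1(y).
\end{align}
Each inner double integral vanishes because $(\K_t(y,\cdot),v_t^y)$ satisfies \eqref{eq:ce} for $\mu_1$-a.e.\ $y$, so the whole expression is $0$, which is exactly \eqref{eq:cont_dist} for $(\nu_t,v_t)$.

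\textbf{Step 3 (Integrability and absolute continuity).} For the $L^2$-bound, a Jensen / Cauchy--Schwarz argument applied to the probability measure $\alpha_t^x$ gives
\begin{align}
\|v_t(x)\|^2=\Bigl\|\int_{\R^d}v_t^y\,\dd\alpha_t^x(y)\Bigr\|^2\le \int_{\R^d}\|v_t^y\|^2\,\dd\alpha_t^x(y).
\end{align}
Integrating against $\nu_t$ and undoing the disintegration yields $\|v_t\|_{L^2(\R^d,\nu_t)}\le\|v_t^y\|_{L^2(\R^d\times\R^d,\alpha_t)}$, so by the hypothesis the left-hand side lies in $L^1([0,1])$. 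Combined with Step~2, Theorem \ref{thm:abscont_ce} then certifies that $\nu_t$ is absolutely continuous with associated vector field $v_t$, finishing the proof. The only delicate point worth double-checking is the Borel measurability of $(t,x)\mapsto v_t(x)$ built from the disintegration, which follows from the assumed joint measurability of $v_t^y$ together with standard measurable-selection results for disintegrations in Polish spaces.
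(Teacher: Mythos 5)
Your proposal is correct and follows essentially the same route as the paper's proof: narrow continuity via dominated convergence, the continuity equation via disintegration, Fubini, and the conditional continuity equations for $(\K_t(y,\cdot),v_t^y)$, and the $L^1$-in-time bound via Jensen's inequality on the probability measures $\alpha_t^x$. Your closing remark on the measurability of $(t,x)\mapsto v_t(x)$ addresses exactly the point the paper leaves to the reader.
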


\begin{proof}
1. For  $f\in C_b(\R^d\times\R^d)$ with
$\|f\|_\infty\leq C$ we have  $\left|\int_{\R^d} f \, \dd \K_t(y,\cdot)\right|\leq C$. 
Thus, using the dominated convergence theorem and the fact that $\K_t(y,\cdot)$ is absolutely continuous for $\mu_1$-a.e. $y \in \R^d$, we conclude 
\begin{align}
    \lim_{t\to t'}\int_{\R^d \times \R^d} f \, \dd \alpha_t
    &=
    \int_{\R^d} \lim_{t\to t'}\int_{\R^d} f \, \dd\K_t(y,\cdot)\dd \mu_1(y)
    =
    \int_{\R^d} \int_{\R^d} f \, \dd K_{t'}(y,\cdot) \, \dd\mu_1(y)\\
    &= 
    \int_{\R^d \times \R^d} f  \, \dd \alpha_{t'}  
    \quad \text{for all}  \quad
    f \in C_b(\R^d \times \R^d),
\end{align}
so that $\alpha_t$ is narrowly continuous 
and the same holds true for $\nu_t=\pi^1_{\sharp}\alpha_t$.
\\
2. For any $ \varphi \in  C_c^\infty([0,1] \times \R^d)$, we obtain 
\begin{align}\label{eq:cond_int_abs}
&\int_0^1 \int_{\R^d} \Big\langle\nabla_x\varphi(t,x),\int_{\R^d} v_t^y(x) \, \dd\alpha_t^x(y) \Big\rangle \, \dd \nu_t(x)\dd t\\
&=
\int_0^1 \int_{\R^d \times \R^d} \langle \nabla_x\varphi(t,x),v_t^y(x)\rangle \, \dd\alpha_t(x,y)
\, \dd t\\
&=
\int_0^1\int_{\R^d} \int_{\R^d} \langle \nabla_x\varphi(t,x),v_t^y(x)\rangle \, \dd\K_t(y,\cdot)(x) \, \dd\mu_1(y)\, \dd t\\
&=
\int_{\R^d}\int_0^1 \int_{\R^d} \langle \nabla_x\varphi(t,x),v_t^y(x)\rangle \, \dd\K_t(y,\cdot)(x) \, \dd t  \, \dd\mu_1(y)\\
&=  -
\int_{\R^d}\int_0^1 \int_{\R^d}  \partial_t \varphi(t,x) \, \dd\K_t(y,\cdot)(x) \, \dd t  \, \dd\mu_1(y)\\
&=-
\int_0^1 \int_{\R^d \times \R^d} \partial_t \varphi(t,x) \, \dd\alpha_t \dd t
=
\int_0^1\int_{\R^d \times \R^d} \partial_t \varphi(t,x) \, \dd (\pi^1_\sharp\alpha_t) \dd t\\
&=
-
\int_0^1\int_{\R^d} \partial_t\varphi(t,x) \, \dd \nu_t(x)\dd t,
\end{align}
so that $(\nu_t,v_t)$ fulfills \eqref{eq:ce}.
\\
3. Finally, we conclude
\begin{align}
\int_0^1 \|v_t\|_{L^2(\R^d,\nu_t)} \, \dd t
&= 
\int_0^1 \left(\int_{\R^d} \|v_t\|^2 \, \dd \nu_t\right)^{\frac 12} \dd t
\leq \int_0^1\left(\int_{\R^d}\int_{\R^d} \|v_t^y(x)\|^2 \, \dd\alpha_t^x(y) \dd\nu_t(x)\right)^{\frac 12}\dd t
\\
&=\int_0^1 \left(\int_{\R^d \times \R^d} \|v_t^y(x)\|^2 \, \dd\alpha_t\right)^{\frac 12}\dd t<\infty,
\end{align}
which finishes the proof, where the measurability of $v$ is left to the reader. 
\end{proof}

\begin{rem} In general, every curve $\mu_t\in\P_2(\R^d)$ can be written as in Theorem \ref{prop:lip_flow}. Namely, $\K_t(y,\cdot)\coloneqq \mu_t(\cdot)$ is a Markov kernel and for $\alpha_t\coloneqq \K_t(y,\cdot)\times_y\mu_1$ it holds that $\nu_t\coloneqq\pi^1_\sharp\alpha_t=\mu_t$. Of course, we don't win anything using this construction since $v_t^y=v_t$ and thus we do not obtain a more tractable vector field. \hfill $\diamond$
\end{rem}

\begin{rem}[Relation to Lipman et al. \cite{lipman2023flow}]\label{rem:orig}
The authors in \cite{lipman2023flow} argue only with densities. Then the notation 
of the so-called ''conditional probability path'' translates as
$$\K_t(y,\cdot)  \longleftrightarrow p_t(x|y).$$
Further, we have
$$
\alpha_t =\K_t(y, \cdot) \times_y \mu_1 \longleftrightarrow p_t(x,y), \quad
\alpha_t^x \longleftrightarrow p_t(y|x),\quad 
\pi^2_\sharp \alpha_t = \mu_1 \longleftrightarrow  q(y)
$$
and
$$
\int_{\R^d} \varphi \, \dd\nu_t(x)
=
\int_{\R^d \times \R^d} \varphi(x) \, \dd\K_t(y,\cdot)\dd\mu_1(y)
\longleftrightarrow
p_t(x) = \int_{\R^d} p_t(y|x) q(y) \, \dd y
$$
so that $\nu_t   \longleftrightarrow  p_t$.
By the Bayesian law,  we know that $p_t(y|x)= \frac{p_t(x|y)}{p_t(x)}q(y)$, so that we finally
obtain the correspondence
$$v_t(x)=\int_{\R^d} v_t^y(x)\, \dd\alpha_t^x(y) \longleftrightarrow 
v_t=\int_{\R^d} v_t(x|y)\frac{p_t(x|y)}{p_t(x)}q(y) \, \dd y.\qquad\diamond$$
\end{rem}

We recap the example in \cite[Example II]{lipman2023flow} with our approach. 

\begin{example}  \label{ex:conv}
For fixed $r \in (0,1)$, we consider the family of Markov kernels
$$
\K_t(y,\cdot) \coloneqq \N(ty,(1-rt)^2I_d), \quad t \in [0,1].
$$
By \eqref{ex:gaussian}, there is an optimal transport map $T: \R^d \to \R^d$
between 
the Gaussians $\K_0(y,\cdot) = \N(0,I_d) =: \mu_0$ and $\K_1(y,\cdot) = \N(y, (1-r)^2 I_d)$  
given by 
  $$
  T(x) = y + (1-r)x
  $$
  with corresponding plan $\gamma = (\text{Id},T)_\sharp \mu_0$
  and 
 $$
  T_t(x)= e_{t,\sharp} (x,T(x)) =
   (1-tr)x + ty, \quad
  T_t^{-1} (z) = \frac{z-ty}{1-tr}.
    $$ 
Let $\mu_1 \in \mathcal P_2(\R^d)$.    
Further, it can be easily verified that   
$$
\K_t(y,\cdot) = e_{t,\sharp} \left((\Id,T)_\sharp \mu_0 \right) = (T_t)_\sharp \mu_0.
$$
By Corollary \ref {cor:cm}, we obtain that $(\K_t(y,\cdot),v^y_t )$ with the velocity field
  \begin{align}
      v_t^{y}(x) =T(T_t^{-1}(x))-T_t^{-1}(x) =\frac{y-rx}{1-tr},
  \end{align}
fulfills \eqref{eq:ce}.
Now, let 
$\mu_1\in\P_2(\R^d)$ 
and consider the family of plans
$$
\alpha_t \coloneqq \K_t(y, \cdot) \times_y \mu_1 
\quad \text{and} \quad
\nu_t\coloneqq\pi^1_\sharp\alpha_t.
$$
It is easy to check that $\nu_0 = \N(0,I_d)$.
Then we know by Proposition \ref{prop:lip_flow} that
the curve $\nu_t$ is absolutely continuous and fulfills \eqref{eq:ce} with the velocity field $v_t$ in \eqref{eq:speed_2}. 
However, if we want to sample from the target density $\mu_1$,
this can only be done approximately by following the path of $\nu_t$,
since 
\[
\nu_1= \pi^1_\sharp\left[\N(y,(1-r)^2 I_d) \times_y \mu_1(y)\right]=\N(0,(1-r)^2I_d)*\mu_1 \neq \mu_1.
\]
By \cite[Lemma 7.1.10]{AGS2008} we have that $W_2(\nu_1,\mu_1)\leq (1-r)\sqrt{\int \|x\|^2\dd\mu_1}$ and thus $\nu_1$ converges to $\mu_1$ in $\P_2(\R^d)$ as $r\to 1$.
The authors of \cite{lipman2023flow} then learn the velocity field  $v_t$ of $\nu_t$ as explained in  Example \ref{ex:lip_ex_loss}.
\hfill $\diamond$
\end{example}

A  curve that can be described by Markov kernels, but is not induced by any coupling is given in the following example.

\begin{example}
	Let $\mu_t$ be a curve induced by  $\alpha \in \Gamma(\mu_0,\mu_1)$. Then, by Proposition \ref{prop:alpha_curve} and Corollary  \ref{prop:benamou-brenier} of  Benamou-Brenier,  we obtain  
	\[
		W_2(\mu_t,\mu_{t+h})^2\leq h^2 \|x-y\|_{L^2(\R^d\times\R^d,\alpha)}^2\leq h^2\left(\|x\|_{L^2(\R^d,\mu_0)}+\|x\|_{L^2(\R^d,\mu_1)}\right)^2,
\]
which is a bound independent of the plan.
Thus, if $\mu_t$ is not constant, we can find a monotone function $f\in C^\infty([0,1])$ with $f(0)=0,f(1)=1$, such that $\gamma_t\coloneqq \mu_{f(t)}$ cannot be induced by a plan by simply speeding up $\mu_t$ such that the above inequality for $W_2(\gamma_t,\gamma_{t+h})^2$ is not satisfied. However, we can construct it via a Markov kernel.  
Consider $\bar{\alpha}_t=\alpha_{f(t)}$, 
where as usual $\alpha_t=(e_t,\pi^2)_\sharp \alpha$ and $\alpha_t^y\times_y \mu_1=\alpha_t$. 
Then   
$\bar{\alpha}_t^y\coloneqq \alpha_{f(t)}^y$ 
is absolutely continuous. Furthermore,
for $\bar{v}_t^y\coloneqq f'(t)v^y_{f(t)}$ we obtain
that $(\bar{\alpha}_t^y,\bar{v}^y_t)$ 
fulfills the continuity equation. 
The curve $\bar{\mu}_t$ 
associated to the Markov kernel $\bar{\alpha}_t^y$ 
and $\mu_1$ is $\gamma_t$, which cannot be induced by a plan. \hfill $\diamond$
\end{example}

%------------------------------------------------
\subsection{Curves induced by Couplings via Disintegration}\label{subsec:mk2}
%------------------------------------------------
Next, let us see how curve-velocity pairs 
induced by couplings $\alpha \in \Gamma(\mu_0,\mu_1)$
fit into the general setting of the previous subsection.
Let
$(e_t, \pi^2):\R^d\times \R^d\to \R^d\times \R^d$ be given by 
$(x,y)\mapsto (e_t(x,y),y)$
and define a family of couplings
\begin{equation} \label{push_a_t}
\alpha_t:= (e_t, \pi^2)_\sharp\alpha, \quad t \in [0,1].
\end{equation}
Then $\alpha_t$ has the marginals
\begin{equation}\label{eq:marginals}
 \pi^1_\sharp \alpha_t = e_{t,\sharp}\alpha = \mu_t \quad \text{and} \quad
 \pi^2_\sharp \alpha_t = \mu_1,    
\end{equation}
and corresponding disintegrations
\begin{equation} \label{desint_alpha}
  \alpha_t =  \alpha_t^x \times_x \mu_t \quad \text{and} \quad
  \alpha_t = \alpha_t^y \times_y \mu_1 = \mathcal K_t(y,\cdot) \times_y \mu_1.
\end{equation}
By the following proposition,  $\alpha_t^y$ is a curve
induced by the independent coupling  of $\alpha^y$  and  $\delta_y$. 

\begin{prop} \label{prop:desint}
The  disintegration $\alpha_t^y$  in \eqref{desint_alpha}
fulfills
\begin{equation}\label{xx}
\alpha_t^y = e_{t,\sharp} (\alpha^y \times \delta_y) \quad \text{for } \mu_1-\text{a.e. } y 
\end{equation}
and
\begin{equation}\label{yy}
v_t^y(x) \coloneqq \frac{y-x}{1-t} , \quad t \in (0,1)
\end{equation}
is an induced velocity field of $\alpha^y \times \delta_y$.
In particular, $\alpha_t^y$ is absolutely continuous  for $\mu_1$-a.e. $y \in \R^d$,
$v_t^y \in \mathcal T_{\alpha_t^y}$
and
$(\alpha_t^y,v_t^y)$ fulfills \eqref{eq:ce}.
\end{prop}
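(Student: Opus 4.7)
The plan is to recognise that for each $y$, the measure $\alpha^y \times \delta_y$ is itself a (trivial) coupling between $\alpha^y$ and $\delta_y$, so that the results of Section~\ref{sec:planes} apply directly on a ``per-$y$'' basis. Because $\pi^2 \circ (e_t,\pi^2) = \pi^2$, the disintegration of $\alpha_t$ with respect to its second marginal is inherited from that of $\alpha$. Concretely, I would rewrite $\alpha$ as the bundle $\int_{\R^d} (\alpha^y \times \delta_y)\,\dd\mu_1(y)$ on $\R^d \times \R^d$, push forward by $(e_t,\pi^2)$ inside the integral, and note that on the support of $\alpha^y \times \delta_y$ the second coordinate is identically $y$, so that
\[
(e_t,\pi^2)_\sharp (\alpha^y \times \delta_y) = \bigl[e_{t,\sharp}(\alpha^y \times \delta_y)\bigr] \times \delta_y.
\]
This yields $\alpha_t = \int_{\R^d} \bigl[e_{t,\sharp}(\alpha^y \times \delta_y)\bigr] \times \delta_y \, \dd\mu_1(y)$, and the uniqueness of the disintegration with respect to $\pi^2$ identifies $\alpha_t^y = e_{t,\sharp}(\alpha^y \times \delta_y)$ for $\mu_1$-a.e.\ $y$, which is~\eqref{xx}.

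\textbf{Velocity field.} Setting $\beta^y \coloneqq \alpha^y \times \delta_y \in \Gamma(\alpha^y,\delta_y)$, Lemma~\ref{eq:velo} characterises the velocity field induced by $\beta^y$ through
\[
v_t^y \, \alpha_t^y = e_{t,\sharp}\bigl[(y'-x)\,\beta^y\bigr],
\]
where $(x,y')$ denote the coordinates on $\R^d \times \R^d$. Since $\beta^y$ is concentrated on $\R^d \times \{y\}$, the factor $(y'-x)$ reduces to $(y-x)$; solving for $x$ in terms of the image point $z = (1-t)x + ty$ then gives the stated formula $v_t^y(z) = (y-z)/(1-t)$ for $t \in (0,1)$.

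\textbf{Remaining claims and main obstacle.} Theorem~\ref{prop:alpha_curve}(i) applied to $\beta^y$ yields absolute continuity of $\alpha_t^y$ together with the fact that $(\alpha_t^y, v_t^y)$ satisfies~\eqref{eq:ce}. For the tangency $v_t^y \in \mathcal{T}_{\alpha_t^y}$, the key observation is that any coupling with a Dirac marginal is automatically a product, so $\Gamma(\alpha^y,\delta_y) = \{\beta^y\}$ is a singleton and hence $\beta^y$ is trivially optimal; Corollary~\ref{prop:optimal_tangential} then applies. I do not anticipate a serious obstacle here; the most delicate point is tracking the $\mu_1$-a.e.\ qualifiers throughout (the disintegration is only defined $\mu_1$-a.e.), and noting that the restriction to $t \in (0,1)$ is essential since $v_t^y$ blows up at $t=1$, consistent with $\alpha_t^y \to \delta_y$ in that limit.
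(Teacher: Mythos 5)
Your proposal is correct and follows essentially the same route as the paper: identify $\alpha_t^y$ via the uniqueness of the disintegration of $\alpha_t=(e_t,\pi^2)_\sharp\alpha$ with respect to $\pi^2$, verify \eqref{yy} through the defining identity \eqref{eq:ind_v} for the coupling $\alpha^y\times\delta_y$, and deduce absolute continuity and tangency from $\Gamma(\alpha^y,\delta_y)=\Gamma_o(\alpha^y,\delta_y)=\{\alpha^y\times\delta_y\}$ together with Theorem \ref{prop:alpha_curve} and Corollary \ref{prop:optimal_tangential}. The only difference is presentational: the paper writes out the test-function computations that you describe verbally.
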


\begin{proof}
For any measurable, bounded function $f:\R^d \to \R^d$, we obtain
\begin{align}
\int_{\R^d\times\R^d}f(x,y) \, \dd\alpha_t&=\int_{\R^d\times\R^d}f(e_t(x,y),y) \, \dd\alpha = \int_{\R^d\times\R^d}f(e_t(x,y),y)\dd\alpha^y(x)\dd\mu_1(y)\\
&=
\int_{\R^d\times\R^d}f(e_t(x,z),y) \, \dd(\alpha^y\times\delta_y)(x,z)\dd\mu_1(y)\\
&=\int_{\R^d\times\R^d}f(x,y)\dd e_{t,\sharp}[\alpha^y\times\delta_y](x) \, \dd\mu_1(y),
\end{align}
which implies  \eqref{xx}. 
Furthermore, we get for $v_t^y$ in \eqref{yy} that
\begin{align}
\int_{\R^d} f(x)v_t^y(x) \, \dd \alpha_t^y
&=
\int_{\R^d}f(x)\frac{y-x}{1-t} \, \dd e_{t,\sharp}(\alpha^y\times_y\delta_y)=\int_{\R^d}f(e_t(x,y))(y-x) \, \dd \alpha^y(x)\\
&=\int_{\R^d\times\R^d} f(e_t(x,z))(z-x) \, \dd (\alpha^y\times\delta_y)\\
&=\int_{\R^d\times\R^d}f \, \dd e_{t,\sharp}[(z-x)(\alpha^y\times\delta_y)].
\end{align}
Hence, by \eqref{eq:ind_v},  the velocity field $v_t^y$ is associated to $\alpha^y\times \delta_y$. 
Since 
$\Gamma(\alpha^y,\delta_y) = \Gamma_0(\alpha^y,\delta_y) = \{\alpha^y \times \delta_y\}$, we obtain by Theorem \ref{prop:alpha_curve} 
that $\alpha^y_t$
is absolutely continuous. By Corollary \ref{prop:optimal_tangential},
we know that $v_t^y \in \mathcal T_{\alpha_t^y}$.
\end{proof}

By the next proposition, we can rewrite the $\alpha$-induced velocity field in \eqref{eq:ind_v} using the disintegration $\alpha_t^x$.

\begin{prop}\label{prop:vector_dis}
For $\mu_0,\mu_1\in \P_2(\R^d)$, let $\alpha\in \Gamma(\mu_0,\mu_1)$
and
$\mu_t \coloneqq e_{t,\sharp}\alpha$.
Then
$v_t:\R^d\to\R^d$ defined by
\begin{equation} \label{eq:speed_1}
v_t(x) \coloneqq \int_{\R^d} \frac{y-x}{1-t} 
\, \dd \alpha^x_{t}(y), \quad t \in (0,1).
\end{equation}
is the induced velocity field by $\alpha$.
\end{prop}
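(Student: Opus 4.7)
The plan is to verify the characterization \eqref{eq:ind_v} of Lemma \ref{eq:velo}, namely that $v_t\mu_t = e_{t,\sharp}[(y-x)\alpha]$ for a.e.\ $t\in[0,1]$, with the candidate $v_t$ given by the right-hand side of \eqref{eq:speed_1}. The whole thing reduces to a direct computation that unpacks the disintegration and pushes back through $(e_t,\pi^2)$.

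First I would fix an arbitrary bounded measurable test function $f:\R^d\to\R$ and compute $\int_{\R^d} f(x)\,v_t(x)\,\dd\mu_t(x)$. Plugging in the definition of $v_t$ and using the disintegration $\alpha_t=\alpha_t^x\times_x\mu_t$ from \eqref{desint_alpha}, this becomes
\begin{align}
\int_{\R^d} f(x)\,v_t(x)\,\dd\mu_t(x)
&= \int_{\R^d}\int_{\R^d} f(x)\,\tfrac{y-x}{1-t}\,\dd\alpha_t^x(y)\,\dd\mu_t(x)\\
&= \int_{\R^d\times\R^d} f(x)\,\tfrac{y-x}{1-t}\,\dd\alpha_t(x,y).
\end{align}

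Next I would use $\alpha_t=(e_t,\pi^2)_\sharp\alpha$ (cf.\ \eqref{push_a_t}) and the change-of-variables formula \eqref{push} to rewrite the last integral against $\alpha$. The key algebraic simplification is
$$ y - e_t(x,y) \;=\; y - (1-t)x - ty \;=\; (1-t)(y-x),$$
so that the factor $\tfrac{1}{1-t}$ cancels cleanly:
\begin{align}
\int_{\R^d\times\R^d} f(x)\,\tfrac{y-x}{1-t}\,\dd\alpha_t(x,y)
&= \int_{\R^d\times\R^d} f\bigl(e_t(x,y)\bigr)\,\tfrac{y-e_t(x,y)}{1-t}\,\dd\alpha(x,y)\\
&= \int_{\R^d\times\R^d} f\bigl(e_t(x,y)\bigr)\,(y-x)\,\dd\alpha(x,y)\\
&= \int_{\R^d} f\,\dd\bigl(e_{t,\sharp}[(y-x)\alpha]\bigr).
\end{align}

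Since this holds for every bounded measurable $f$, the equality $v_t\mu_t = e_{t,\sharp}[(y-x)\alpha]$ of vector-valued measures is established, which is exactly \eqref{eq:ind_v}. By Lemma \ref{eq:velo}, $v_t$ is then the velocity field induced by $\alpha$. I do not foresee a genuine obstacle here; the only mild care is to ensure $\alpha_t^x$ makes sense, which is guaranteed by the existence of the disintegration with respect to the first marginal $\mu_t$, and to check measurability of the integrand $(t,x)\mapsto v_t(x)$, which follows from the measurability of the disintegration map $x\mapsto\alpha_t^x$.
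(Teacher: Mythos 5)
Your proof is correct and follows essentially the same route as the paper's: unpack the disintegration $\alpha_t=\alpha_t^x\times_x\mu_t$, push back through $(e_t,\pi^2)_\sharp\alpha$ so that $y-e_t(x,y)=(1-t)(y-x)$ cancels the $\tfrac{1}{1-t}$, and identify the result with $e_{t,\sharp}[(y-x)\alpha]$, i.e.\ the characterization \eqref{eq:ind_v}. The measurability of $(t,x)\mapsto v_t(x)$, which you flag at the end, is indeed the only subtle point and is treated separately in the paper (Proposition \ref{rem:v_t_alpha_x_mes} in the appendix).
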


\begin{proof}
The assertion follows by straightforward computation: 
\begin{align}
\int_{\R^d } f(x)v_t(x) \, \dd\mu_t
&= \int_{\R^d }\int_{\R^d } f(x) \frac{y-x}{1-t} \, \dd\alpha_t^{x}(y) \dd\mu_t(x)\\
&=\int_{\R^d \times \R^d} f(x)\frac{y-x}{1-t} \, \dd\alpha_t(x,y)\\
&=\int_{\R^d } f((1-t)x+ty)(y-x) \, \dd\alpha (x,y)\\
&= \int_{\R^d \times \R^d} f\dd e_{t,\sharp}\left[(y-x)\alpha\right].
\end{align}

\end{proof}
Note that it is not a priori clear, that there are representatives of $v_t$ such that $v_t$ seen as a function $[0,1]\times \R^d\to \R^d$ is measurable. We show in Proposition \ref{rem:v_t_alpha_x_mes} why this is the case. 

We summarize the last two propositions in the following theorem.

\begin{thm}
    Let $\mu_0,\mu_1\in\P_2(\R^d)$ and $\alpha\in\Gamma(\mu_0,\mu_1)$. Then the pair
$$
\mathcal K_t(y, \cdot) = e_{t,\sharp} (\alpha^y \times_y \delta_y), \quad v_t^y = \frac{y-x}{1-t}
$$ 
fulfills the conditions of Theorem \ref{prop:lip_flow}
and  in this case
$$
\mu_t \coloneqq e_{t,\sharp}\alpha = \nu_t \coloneqq \pi^1_\sharp (\mathcal K_t(y, \cdot) \times_y \mu_1), \quad
v_t(x) = \int_{\R^d} v_t^y  \, \dd \alpha^x_t(y)
$$
is a curve-velocity pair induced by $\alpha$.
\end{thm}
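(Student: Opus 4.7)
The statement is essentially a repackaging of Propositions \ref{prop:desint} and \ref{prop:vector_dis} into the language of Theorem \ref{prop:lip_flow}. The plan is to verify each of the hypotheses of that theorem in turn, and then match the conclusions against the descriptions of $\mu_t$ and of the induced velocity field already proven.

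First, I would show that $\mathcal{K}_t$ is a Markov kernel. For each $t \in [0,1]$ and each $y \in \R^d$, the measure $\mathcal{K}_t(y, \cdot) = e_{t,\sharp}(\alpha^y \times \delta_y)$ is a probability measure as the push-forward of one. For measurability of $y \mapsto \mathcal{K}_t(y, B)$, I rewrite
\begin{align}
\mathcal{K}_t(y, B) = \int_{\R^d} \mathbf{1}_B \bigl(e_t(x, y)\bigr) \, \dd \alpha^y(x),
\end{align}
which is measurable in $y$ by the defining property of the disintegration $\{\alpha^y\}_y$ applied to the bounded measurable map $(x, y) \mapsto \mathbf{1}_B(e_t(x, y))$.

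Next, I would invoke Proposition \ref{prop:desint}: since $\mathcal{K}_t(y, \cdot) = e_{t,\sharp}(\alpha^y \times \delta_y) = \alpha_t^y$, the proposition gives both that $t \mapsto \mathcal{K}_t(y, \cdot)$ is absolutely continuous in $\P_2(\R^d)$ for $\mu_1$-a.e. $y$ and that $v_t^y(x) = (y-x)/(1-t)$ is an associated velocity field satisfying \eqref{eq:ce}. Joint measurability of $(t, x, y) \mapsto v_t^y(x)$ is immediate from its explicit formula on $[0, 1) \times \R^d \times \R^d$. For the $L^1$-integrability condition, I use $\alpha_t = (e_t, \pi^2)_\sharp \alpha$ together with the identity $y - e_t(x, y) = (1-t)(y-x)$ to obtain
\begin{align}
\|v_t^y\|^2_{L^2(\R^d \times \R^d, \alpha_t)} = \int_{\R^d \times \R^d} \left\|\frac{y-x'}{1-t}\right\|^2 \dd \alpha_t(x', y) = \int_{\R^d \times \R^d} \|y-x\|^2 \, \dd \alpha(x, y),
\end{align}
which is finite since $\mu_0, \mu_1 \in \P_2(\R^d)$ and is independent of $t$, hence trivially in $L^1([0,1])$.

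Finally, to identify $\nu_t$ with $\mu_t$, I compute $\nu_t = \pi^1_\sharp \alpha_t = \pi^1_\sharp (e_t, \pi^2)_\sharp \alpha = e_{t,\sharp} \alpha = \mu_t$, using \eqref{push_a_t} and \eqref{eq:marginals}. The velocity field formula $v_t(x) = \int v_t^y \, \dd \alpha_t^x(y)$ provided by Theorem \ref{prop:lip_flow} then coincides verbatim with the expression for the induced velocity field of $\alpha$ established in Proposition \ref{prop:vector_dis}. I expect the only real subtlety to be the measurability argument for the Markov kernel; once that is in place, the remainder is a direct assembly of the previous results, and even the $L^1$ bound turns out to be constant in $t$.
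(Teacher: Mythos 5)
Your proposal is correct and matches the paper's intent exactly: the paper states this theorem as a summary of Propositions \ref{prop:desint} and \ref{prop:vector_dis} without a separate proof, and your assembly — verifying the Markov-kernel measurability, citing Proposition \ref{prop:desint} for absolute continuity of $\mathcal K_t(y,\cdot)$ and the conditional velocity, computing the constant-in-$t$ bound $\|v_t^y\|^2_{L^2(\alpha_t)}=\int\|y-x\|^2\,\dd\alpha$, and identifying $\nu_t=\pi^1_\sharp(e_t,\pi^2)_\sharp\alpha=e_{t,\sharp}\alpha$ with the velocity formula of Proposition \ref{prop:vector_dis} — is precisely the intended argument, filled in with more care than the paper itself provides.
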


Here is the relation to random variables.

\begin{rem}[Random variables]
Consider two random variables $X_0,X_1$ with law $\mu_0,\mu_1\in\P_2(\R^d)$, $X_t=(1-t)X_0+tX_1$ and $\alpha=P_{X_0,X_1}$. Then $\alpha_t=P_{X_t,X_1}$ and $\alpha_t^x=P_{X_1|X_t=x}$, where $P_{X_1|X_t=x}$ is defined via $P_{X_t,X_1}=P_{X_1|X_t=x}\times_x P_{X_t}$. Thus \eqref{eq:speed_1} reads as 
\[
v_t(x)=\int_{\R^d}\frac{y-x}{1-t}\dd P_{X_1|X_t=x}(y).
\]
\hfill $\diamond$
\end{rem}

If the coupling $\alpha \in \Gamma(\mu_0,\mu_1)$ comes from a map $T$, we can characterize the induced velocity field  using the map as in the 
following corollary.

\begin{cor}\label{cor:cm}
Let $\mu_0,\mu_1\in \P_2(\R^d)$ and 
let $T:\R^d \to \R^d$ be an invertible measurable map such that
$T_\sharp \mu_0=\mu_1$. 
Consider the coupling $\alpha \coloneqq (\Id,T)_\sharp \mu_0$ and
the curve \eqref{curve_from_map} induced by $\alpha$.
Assume that $T_t(x)\coloneqq(1-t)x+tT(x) $ is invertible for $t\in[0,1]$
Then it holds that
$\alpha_t = (T_t, T)_\sharp\mu_0$
with  disintegrations
$$\alpha_t^x=\delta_{T(T_t^{-1}(x))} 
\quad \text{and} \quad
\alpha_t^y= \delta_{T_t(T^{-1}(y))} 
$$
and the $\alpha$-velocity field in \eqref{eq:speed_1} reads as
$
v_t(x) =
T(T_t^{-1}(x))-T_t^{-1}(x)
$. 
In other words, we have a linear velocity field
$
v_t(T_t(x)) =
T(x)-x
$.
\end{cor}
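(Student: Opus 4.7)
The plan is to unwind the definitions: compute $\alpha_t$ explicitly as a push-forward of $\mu_0$, read off the disintegrations from that description using invertibility of $T_t$ and $T$, and finally substitute into the formula \eqref{eq:speed_1} from Proposition \ref{prop:vector_dis}.

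First I would compute $\alpha_t = (e_t,\pi^2)_\sharp \alpha$ via the chain rule for push-forwards. Since $\alpha = (\Id,T)_\sharp \mu_0$, the composition $(e_t,\pi^2)\circ(\Id,T)$ sends $x\mapsto (e_t(x,T(x)),T(x)) = (T_t(x),T(x))$, so that
\begin{equation}
\alpha_t = (T_t,T)_\sharp \mu_0.
\end{equation}
Since both $T_t$ and $T$ are invertible, this measure is supported on the graph $\{(T_t(x),T(x)) : x \in \R^d\}$, where the second coordinate is a deterministic function of the first (namely $y = T(T_t^{-1}(x))$) and vice versa ($x = T_t(T^{-1}(y))$).

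Next I would verify the disintegrations directly. For any bounded measurable $f:\R^d\times\R^d\to\R$,
\begin{align}
\int f(x,y)\,\dd\alpha_t(x,y) &= \int f(T_t(u),T(u))\,\dd\mu_0(u) \\
&= \int f(T_t(u), T(T_t^{-1}(T_t(u))))\,\dd\mu_0(u) \\
&= \int f(x, T(T_t^{-1}(x)))\,\dd\mu_t(x),
\end{align}
using $\mu_t = (T_t)_\sharp\mu_0$. This shows $\alpha_t = \delta_{T(T_t^{-1}(x))}\times_x \mu_t$, hence $\alpha_t^x = \delta_{T(T_t^{-1}(x))}$. The identification $\alpha_t^y = \delta_{T_t(T^{-1}(y))}$ follows by the symmetric calculation using $T(u) = T(T^{-1}(T(u)))$ and $\pi^2_\sharp\alpha_t = \mu_1 = T_\sharp\mu_0$.

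Finally I would substitute $\alpha_t^x = \delta_{T(T_t^{-1}(x))}$ into \eqref{eq:speed_1} to get
\begin{equation}
v_t(x) = \int_{\R^d}\frac{y-x}{1-t}\,\dd\delta_{T(T_t^{-1}(x))}(y) = \frac{T(T_t^{-1}(x))-x}{1-t},
\end{equation}
and simplify: writing $u = T_t^{-1}(x)$, so that $x = (1-t)u+tT(u)$, the numerator becomes $T(u)-(1-t)u-tT(u) = (1-t)(T(u)-u)$, yielding $v_t(x) = T(T_t^{-1}(x))-T_t^{-1}(x)$. Substituting $x \rightsquigarrow T_t(x)$ gives the final linear form $v_t(T_t(x)) = T(x)-x$. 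The proof is essentially a bookkeeping exercise; the only non-routine point is recognizing that invertibility of $T_t$ collapses the conditional distribution $\alpha_t^x$ to a Dirac mass, which is what makes the velocity field explicit.
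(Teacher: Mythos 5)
Your proposal is correct and follows essentially the same route as the paper's proof: identify $\alpha_t=(T_t,T)_\sharp\mu_0$ by composing push-forwards, read off the Dirac disintegrations from the graph structure via test-function calculations, and substitute into \eqref{eq:speed_1} with the same algebraic simplification. No gaps.
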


\begin{proof}
Then we see that
\begin{align}
\int_{\R^d \times \R^d} f(x,y) \, \dd \alpha_t
&= \int_{\R^d \times \R^d} f \left( e_t(x,y),y \right) \, \dd\alpha\\
&= \int_{\R^d}  f\left(e_t\left( (x,T(x) \right),T(x) \right) \,  \dd \mu_0\\
&= \int_{\R^d}  f\left( T_t(x),T(x) \right) \,  \dd \mu_0, \label{aha}
\end{align}
which yields $\alpha_t = (T_t, T)_\sharp\mu_0$ 
and by \eqref{desint_alpha} 
also $\mu_t = (T_t)_\sharp \mu_0$.
Further, this implies
\begin{align}
\int_{\R^d \times \R^d} f(x,y) \, \dd \alpha_t
&=
\int_{\R^d } f \left( T_t(x) ,T\left(T_t^{-1} T_t(x) \right)\right) \, \dd \mu_0\\
&=
\int_{\R^d \times \R^d} f \left(e_t(x,y),T(T_t^{-1}(e_t(x,y))) \right) \, \dd \alpha\\
&=
\int_{\R^d \times \R^d}  f \left(x,T(T_t^{-1}(x)) \right) \, \dd \alpha_t\\
&=
\int_{\R^d}  f \left(x,T(T_t^{-1}(x)) \right) \, \dd \pi^1_\sharp \alpha_t,
\end{align}
so that $\alpha_t^x=\delta_{T(T_t^{-1}(x))}$.
On the other hand, we obtain by \eqref{aha} that
\begin{align}
\int_{\R^d \times \R^d} f(x,y) \, \dd \alpha_t 
&=
\int_{\R^d } f((1-t)y+tT(y),T(y)) \, \dd  \mu_0\\
&=
\int_{\R^d} f((1-t)T^{-1}(T(y))+tT(y),T(y)) \, \dd  \mu_0\\
&=
\int_{\R^d \times \R^d} f((1-t)T^{-1}(y)+ty,y) \, \dd  \alpha\\
&=
\int_{\R^d \times \R^d} f((1-t)T^{-1}(y)+ty,y) \, \dd  \alpha_t\\
&=
\int_{\R^d \times \R^d} f((1-t)T^{-1}(y)+ty,y) \, \dd \pi^2_\sharp \alpha_t,
\end{align}
meaning that $\alpha_t^y= \delta_{T_t(T^{-1}(y))}$. 
Finally, we get by \eqref{eq:speed_1} that 
\begin{align}\label{eq:monge_v}
v_t(x)&=
\int_{\R^d} \frac{y-x}{1-t} \, \dd \delta_{T(T_t^{-1}(x))}
=\frac{T(T_t^{-1}(x))-x}{1-t}
=
T(T_t^{-1}(x))-T_t^{-1}(x).
\end{align}
\end{proof}

\begin{example}
In Example \ref{ex:conv},
we have seen that the curve $\nu_t$ is unfortunately not induced by the coupling 
$\N(0, I_d) \times \mu_1$. 
This would be only the case for $r=0$. But then  
$\mathcal K_1 (y, \cdot) = \delta_y$ has no longer a density.
However, also for $r \in (0,1)$, the curve $\nu_t$
is induced by another plan, namely
$$
\tilde \alpha \coloneqq \tilde e_\sharp(\N(0, I_d) \times \mu_1) \quad \text{with} \quad
\tilde e(x,y)\coloneqq(x,(1-r)x+y),
$$
since
\begin{align}\label{eq:nut_coupling}
\int_{\R^d}f(x) \, \dd\nu_t(x)&=\int_{\R^d}f(x) \, \dd\pi^1_\sharp(\K_t(y,\cdot)\times_y\mu_1(y))\\
&=\int_{\R^d\times\R^d}f(x) \, \dd\N(ty,(1-rt)^2)\dd\mu_1(y)\\
&=\int_{\R^d\times\R^d} f((1-rt)x+ty) \, \dd\N(0,1)(y)\dd\mu_1(y)\\
&=\int_{\R^d}f(x) \, \dd (e_{t}\circ \tilde e)_\sharp(\N(0,1)\times\mu_1)(x),
\end{align}
i.e., $\nu_t = e_{t,\sharp} (\tilde \alpha)$.
By Example \ref{ex:lip_ex_loss}
the vector field induced by $\tilde \alpha$ coincides with the vector field constructed in \eqref{eq:speed_2}.

However, considering the family 
	$\tilde \alpha _t = (e_t, \pi^2)_\sharp \tilde \alpha$, we have that $\pi^2_\sharp \tilde{\alpha}_t=\N(0,(1-r)^2\Id)*\mu_1$, which is not the construction of $\pi^2_\sharp\alpha_t=\mu_1$ from Example \ref{ex:conv}. In particular, the family of Markov kernels inducing a curve is not unique. \hfill $\diamond$
\end{example}

%-------------------------------------
\section{Curves via Stochastic Processes}\label{subsec:conde}
%---------------- ---------------------
This section resembles the results from the papers of  Liu et al.  \cite{liu2022rectified,liu2023flow} with our notation.
As already mentioned in the introduction, these authors use a stochastic process $(X_t)_t$ to determine an absolutely continuous curve $\mu_t$ and an associated 
vector field $v_t$ based on the conditional expectation
$$
\mu_t \coloneqq P_{X_t} \quad \text{and} \quad v_t (x)
\coloneqq \E[\partial_t X_t|X_t= x ],
$$
see Theorem \ref{thm.rv}. For the special process
$X_t \coloneqq (1-t) X_0 + t X_1$, this will again result in a curve-velocity pair
induced by a coupling, namely $\alpha = P_{X_0,X_1}$, see Corollary \ref{cor:plna-rv}.
We start by recalling the notation of conditional expectation in Subsection \ref{subsec:ce1} and use this in Subsection
\ref{subsec:ce2} to deduce appropriate curve-velocity pairs.

%------------------------------------------------------------
\subsection{Conditional Expectation}\label{subsec:ce1}
%------------------------------------------------------------
Throughout this section, let $(\Omega,\Sigma, \PP)$ be a probability space and $X,Y:\Omega\to \R^d$ be square integrable random variables, i.e.,
$\int_\Omega \|X\|^2 \, \dd \PP < \infty$. 
Furthermore, let 
$$\sigma(Y) \coloneqq \{Y^{-1}(A):A\in B(\R^d)\}$$ 
be the $\Sigma$-algebra generated by $Y$. 
The \emph{conditional expectation} $\E[X|Y]:\Omega\to \R^d$ is a square integrable random variable
on $(\Omega,\sigma(Y),\PP)$ characterized by 
\begin{equation} \label{condi}
\E_\PP[f\, \E[X|Y]]=\E_\PP[f X]
\end{equation}
for all $\sigma(Y)$-measurable, bounded functions $f:\Omega\to \R$.

Conditional expectations $\E[X|Y]$ of random vectors can also be interpreted as Borel measurable functions $\psi:\R^d\to\R^d$ via the Doob--Dynkin Lemma, see \cite[Lemma 1.14]{kallenberg1997foundations}.
\begin{prop}
[Doob--Dynkin Lemma]
Let $(\Omega, \Sigma,\PP)$ be a measure space and let $C$ and $D$ be metric spaces endowed with the Borel $\Sigma$-algebra. Let  $f:\Omega\to C,\, g:\Omega \to D$ be measurable functions. Then the following are equivalent:
\begin{itemize}
    \item[\rm{i)}] $f$ is $\sigma(g)$-measurable.
    \item[\rm{ii)}] There exists  Borel measurable function $\psi:D\to C$ such that $f=\psi\circ g$.
\end{itemize}
If $C=\R^d$, $D=\R^m$ and $f\in L^2(\Omega,\R^d,\PP)$, then we have  $\psi \in L^2(\R^m, \R^d,g_\sharp\PP)$.
\end{prop}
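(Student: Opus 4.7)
The plan is to handle the two directions separately and then read off the $L^2$ statement from the change-of-variables formula.

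The implication (ii)$\Rightarrow$(i) is immediate: if $f=\psi\circ g$ with $\psi$ Borel measurable, then for every Borel set $B\subseteq C$ we have $f^{-1}(B)=g^{-1}(\psi^{-1}(B))\in\sigma(g)$, so $f$ is $\sigma(g)$-measurable.

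For (i)$\Rightarrow$(ii) I would first establish the scalar case $C=\R$ and then reduce the vector case componentwise. Assume $f:\Omega\to\R$ is $\sigma(g)$-measurable and non-negative. Approximate $f$ from below by an increasing sequence of simple $\sigma(g)$-measurable functions $f_n=\sum_{k=1}^{K_n} c_{n,k}\mathbf{1}_{A_{n,k}}$ with $A_{n,k}\in\sigma(g)$. By the very definition of $\sigma(g)$, for each pair $(n,k)$ there is a Borel set $B_{n,k}\subseteq D$ with $A_{n,k}=g^{-1}(B_{n,k})$. Define
\[
\psi_n\coloneqq \sum_{k=1}^{K_n} c_{n,k}\mathbf{1}_{B_{n,k}}:D\to\R,
\]
which is Borel measurable and satisfies $\psi_n\circ g=f_n$ on $\Omega$. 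Setting
\[
\psi(y)\coloneqq \limsup_{n\to\infty}\psi_n(y),\qquad y\in D,
\]
gives a Borel function defined on all of $D$, and for every $\omega\in\Omega$ the sequence $\psi_n(g(\omega))=f_n(\omega)$ increases to $f(\omega)$, hence $\psi\circ g=f$. For signed $f$, decompose $f=f^+-f^-$ and treat each part separately; for $f:\Omega\to\R^d$, apply the scalar result to each coordinate $f_i$ to obtain $\psi_i$ and take $\psi=(\psi_1,\ldots,\psi_d)$.

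For the $L^2$ claim, writing $f=\psi\circ g$ and invoking the push-forward formula \eqref{push} yields
\[
\int_{\R^m}\|\psi\|^2\,\dd g_\sharp\PP=\int_\Omega \|\psi\circ g\|^2\,\dd\PP=\int_\Omega \|f\|^2\,\dd\PP<\infty,
\]
so $\psi\in L^2(\R^m,\R^d,g_\sharp\PP)$. The main obstacle I anticipate is ensuring that $\psi$ is a Borel function defined on the whole of $D$ and not merely on $g(\Omega)$ (which need not be Borel). Precisely this is achieved by the $\limsup$ construction: it is automatically Borel as a countable limsup of Borel functions, is defined everywhere on $D$, and agrees with the desired function on the image of $g$, while its values off $g(\Omega)$ are irrelevant to the identity $f=\psi\circ g$. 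A secondary subtlety is that for a truly general metric space $C$ the above scheme implicitly needs sufficient structure (e.g.\ second countability) to reduce to countably many scalar coordinates; for the use case $C=\R^d$, $D=\R^m$ employed in the paper this is automatic.
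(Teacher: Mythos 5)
The paper does not prove this proposition at all; it simply cites \cite[Lemma 1.14]{kallenberg1997foundations}, so there is no in-paper argument to compare against. Your proof is the standard textbook argument for Doob--Dynkin and is essentially correct: (ii)$\Rightarrow$(i) by preimages, (i)$\Rightarrow$(ii) by lifting an increasing simple approximation of $f$ through the identification $\sigma(g)=\{g^{-1}(B):B\in\B(D)\}$, and the $L^2$ claim by the push-forward formula \eqref{push}. One small repair is needed in the $\limsup$ step: the function $\limsup_n\psi_n$ takes values in $[0,+\infty]$ and may equal $+\infty$ on $D\setminus g(\Omega)$, so it is not yet a map $D\to\R$; you should redefine $\psi$ to be $0$ on the Borel set $\{\limsup_n\psi_n=+\infty\}$ (which is disjoint from $g(\Omega)$ because there the limit equals the finite value $f(\omega)$), preserving Borel measurability and the identity $f=\psi\circ g$. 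The same truncation keeps the decomposition $\psi=\psi^+-\psi^-$ well defined in the signed case. Your closing caveat is also well taken: for a general metric target $C$ the reduction to countably many scalar coordinates requires additional structure (separability, or a standard Borel space as in Kallenberg's formulation), and the statement as printed for arbitrary metric spaces is slightly too generous; your argument covers precisely the case $C=\R^d$, $D=\R^m$ that the paper actually uses in Subsection \ref{subsec:ce1}.
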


By the Doob-Dynkin Lemma, there exists a function $\psi\in L^2(\R^d,Y_\sharp \PP)$ such that  
$$\E[X|Y]=\psi \circ Y.$$ We also write $\E[X|Y=y]$ for $\psi(y)$. 
Strictly speaking, this notation only makes sense for $y\in Y(\Omega)$, 
but every measurable subset of $Y(\Omega)^C$ is a $Y_\sharp\PP$ zero set. 
By definition it holds that $\E[X|Y=\cdot]\circ Y=\E[X|Y]$.

We can express $\E[X|Y=y]$ also by disintegration of measures.

\begin{prop}\label{rem:cond_disintegration} 
Consider the disintegration
$$
\alpha \coloneqq P_{X,Y} = \alpha^y \times_y P_Y.
$$
Then it holds $\E[X|Y]=\psi \circ Y$ with
\begin{align}\label{eq:cond_exp_dis}
\psi(y) \coloneqq  \int_{\R^d} x \, \dd\alpha^y(x).
\end{align}
\end{prop}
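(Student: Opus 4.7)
The plan is to verify the defining property \eqref{condi} of conditional expectation for the candidate random variable $\psi \circ Y$, where $\psi$ is given by \eqref{eq:cond_exp_dis}. By the Doob--Dynkin lemma, every $\sigma(Y)$-measurable bounded function $f:\Omega \to \R$ can be written as $f = g \circ Y$ for some Borel measurable bounded $g: \R^d \to \R$. So it suffices to show that
\begin{equation}
\E_{\PP}\bigl[(g\circ Y)(\psi\circ Y)\bigr] = \E_{\PP}\bigl[(g\circ Y)\, X\bigr]
\end{equation}
for every such $g$.

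First I would handle measurability and integrability of $\psi$. Since $\alpha = \alpha^y \times_y P_Y$ is the disintegration, the map $y \mapsto \alpha^y(B)$ is Borel measurable for every $B \in \B(\R^d)$, and by a standard monotone-class argument this extends to $y \mapsto \int_{\R^d} h(x,y)\, d\alpha^y(x)$ for measurable bounded $h$. Applying this componentwise to $h(x,y)=x_i$ (with a truncation argument, since $X$ is only square-integrable, not bounded) shows that $\psi$ is Borel measurable. Square-integrability of $\psi$ with respect to $P_Y$ follows from Jensen's inequality:
\begin{equation}
\int_{\R^d} \|\psi(y)\|^2 \, dP_Y(y) \le \int_{\R^d}\int_{\R^d} \|x\|^2 \, d\alpha^y(x) \, dP_Y(y) = \E_\PP[\|X\|^2] < \infty,
\end{equation}
so in particular $\psi \circ Y \in L^2(\Omega,\sigma(Y),\PP)$.

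Then the main computation is a direct application of the push-forward formula \eqref{push} together with the disintegration identity. On the one hand,
\begin{equation}
\E_\PP[(g\circ Y)(\psi \circ Y)] = \int_{\R^d} g(y)\, \psi(y) \, dP_Y(y).
\end{equation}
On the other hand, using $P_{X,Y} = X_\sharp \PP \otimes \cdots$, more precisely \eqref{push} applied to $(X,Y)$,
\begin{equation}
\E_\PP[(g\circ Y)\, X] = \int_{\R^d\times \R^d} g(y)\, x \, d\alpha(x,y) = \int_{\R^d} g(y) \int_{\R^d} x \, d\alpha^y(x) \, dP_Y(y) = \int_{\R^d} g(y)\, \psi(y) \, dP_Y(y),
\end{equation}
where the second equality is the disintegration formula applied componentwise. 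Comparing both lines gives \eqref{condi}, so $\psi \circ Y$ satisfies the defining property of $\E[X|Y]$, and by uniqueness (up to $\PP$-a.s. equality) of the conditional expectation, $\E[X|Y]=\psi \circ Y$.

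The only mild obstacle is the componentwise reduction: the integrand $x$ in \eqref{eq:cond_exp_dis} is vector-valued and unbounded, so one needs to justify the disintegration formula beyond its standard statement for bounded scalar integrands. This is handled routinely by treating each coordinate separately, splitting into positive and negative parts, and invoking monotone convergence, using the finiteness of $\E_\PP[\|X\|^2]$ to control integrability.
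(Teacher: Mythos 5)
Your proposal is correct and follows essentially the same route as the paper's proof: reduce to $f = g\circ Y$ via Doob--Dynkin, then verify the defining property \eqref{condi} by unwinding the disintegration $\alpha = \alpha^y \times_y P_Y$ and interchanging the order of integration. The extra care you take with measurability and the Jensen-based square-integrability of $\psi$ is a welcome addition that the paper glosses over, but it does not change the argument.
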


\begin{proof}
We  check that $\E_\PP[f\psi\circ Y]=\E_\PP[fX]$ for every 
$\sigma(Y)$-measurable bounded function $f:\Omega \to \R$. 
By the Doob-Dynkin Lemma there exists a Borel measurable function $g:\R^d\to\R^d$ such that $f=g\circ Y$. Thus we can compute, changing the integral order by Fubini, 
\begin{align}
\E_\PP[f\psi\circ Y]
&=
\int_\Omega (g\circ Y)(\omega)\int_{\R^d} x \, \dd\alpha^{Y(\omega)}(x)\dd\PP(\omega)\\
&= \int_{\R^d \times \R^d} g(y)x \, \dd\alpha^y(x)  \dd (Y_\sharp \PP)(y)\\
&=\int_{\R^d \times \R^d} g(y)x \, \dd\alpha^y(x)\dd P_Y(y)
=\int_{\R^d \times \R^d} g(y)x \, \dd P_{X,Y}(x,y)\\
&=
\int_\Omega f(\omega) X(\omega) \, \dd\PP(\omega)
=\E_\PP[fX].
\end{align}
\end{proof}

%------------------------------------------------------------
\subsection{Curve via Conditional Expectation}\label{subsec:ce2}
%------------------------------------------------------------
A family of random variables $X_t:\Omega\to \R^d$, $t\in[0,1]$, is called a
\emph{stochastic process}.
We say that $X_t$ is \emph{continuously differentiable in} $t_0\in [0,1]$, 
if, for a.e. $\omega\in \Omega$, the map $t\mapsto X_t(\omega)$ is continuously differentiable in $t_0$, where we mean the continuous extension to the boundary if $t_0 \in \{0,1\}$.  
We denote the derivative with respect to $t$ by $\partial_t X_t$.

In \cite[Theorem 3.3]{liu2023flow} it was proven that for well-behaved stochastic processes $(X_t)_t$ the conditional expectation can be used to obtain a vector field $v_t$ such that $(X_{t,\sharp}\PP,v_t)$ fulfills the continuity equation. For convenience, we include the proof.

\begin{thm}\label{thm.rv}
Let  $X_t\in L^2(\Omega, \R^d,\PP)$ be continuously differentiable in every $t\in [0,1]$. Let 
\begin{equation}\label{eq:rv}
\mu_t\coloneqq X_{t,\sharp}\PP = P_{X_t}, 
\quad  \text{and} \quad
v_t \coloneqq \E[\partial_t X_t|X_t = \cdot].
\end{equation}
Assume  that $\mu_t$ is a narrowly continuous curve, 
$
v_t \in L^2(\R^d,\mu_t)
$
for 
$t \in [0,1]$
and
$\|v_t\|_{L^2(\R^d,\mu_t)}\in L^1([0,1])$. 
Then $\mu_t$ is an absolutely continuous curve in $\P_2(\R^d)$ and  $(\mu_t,v_t)$ fulfills the continuity equation.
\end{thm}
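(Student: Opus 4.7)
The plan is to invoke Theorem \ref{thm:abscont_ce}: narrow continuity and the $L^1$-in-time bound on $\|v_t\|_{L^2(\R^d,\mu_t)}$ are given, so the only nontrivial thing to verify is that $(\mu_t, v_t)$ satisfies the continuity equation in the distributional sense \eqref{eq:cont_dist}. Everything will reduce to the elementary chain rule for $t \mapsto \varphi(t, X_t(\omega))$ combined with the defining property of conditional expectation.

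First I would fix an arbitrary test function $\varphi \in C_c^\infty((0,1) \times \R^d)$. For $\PP$-a.e.\ $\omega \in \Omega$, the curve $t \mapsto X_t(\omega)$ is $C^1$, so we have the pointwise chain rule
\begin{align}
\frac{\dd}{\dd t}\varphi(t, X_t(\omega)) = \partial_t\varphi(t, X_t(\omega)) + \langle \nabla_x\varphi(t, X_t(\omega)), \partial_t X_t(\omega)\rangle.
\end{align}
Integrating in $t$ over $[0,1]$ and using that $\varphi$ vanishes at $t=0$ and $t=1$ (compact support in the open time interval) yields
\begin{align}
\int_0^1 \partial_t\varphi(t, X_t(\omega))\,\dd t = -\int_0^1 \langle \nabla_x\varphi(t, X_t(\omega)), \partial_t X_t(\omega)\rangle\,\dd t.
\end{align}
Both sides are bounded uniformly (as $\varphi$ has compact $x$-support and $\partial_t X_t$ is integrable in $\omega$ for each $t$, locally uniformly), so I can take $\E_\PP$ and apply Fubini to swap the $\dd t$ and $\dd\PP$ integrals.

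The key step is now the conditional-expectation manipulation. Since $\nabla_x\varphi(t, X_t)$ is $\sigma(X_t)$-measurable and bounded, the defining identity \eqref{condi} of the conditional expectation gives
\begin{align}
\E_\PP\!\left[\langle \nabla_x\varphi(t, X_t), \partial_t X_t\rangle\right] = \E_\PP\!\left[\langle \nabla_x\varphi(t, X_t), \E[\partial_t X_t \mid X_t]\rangle\right] = \E_\PP\!\left[\langle \nabla_x\varphi(t, X_t), v_t(X_t)\rangle\right],
\end{align}
where the last equality uses the Doob--Dynkin form $\E[\partial_t X_t \mid X_t] = v_t \circ X_t$. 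Pushing forward by $X_t$ rewrites this as $\int_{\R^d}\langle \nabla_x\varphi(t,\cdot), v_t\rangle\,\dd\mu_t$, and the analogous push-forward identity handles the $\partial_t\varphi$ term. Combining the two gives exactly \eqref{eq:cont_dist}, so Theorem \ref{thm:abscont_ce} then yields absolute continuity of $\mu_t$.

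The only mild obstacle is justifying the Fubini/dominated-convergence manipulations and the measurability of $v_t$ as a function of $(t,x)$. Measurability follows by the Doob--Dynkin lemma applied at each $t$, together with a standard selection/regularity argument using the continuous differentiability of $X_t$ in $t$; the integrability of the relevant dominating functions is built into the hypothesis $\|v_t\|_{L^2(\R^d,\mu_t)}\in L^1([0,1])$ (for the right-hand side) and into $\varphi\in C_c^\infty$ together with $X_t\in L^2$ plus continuity of $t\mapsto \partial_t X_t$ (for the left-hand side). None of these require more than routine bookkeeping, so the substance of the proof is the one-line chain-rule identity above followed by taking conditional expectations.
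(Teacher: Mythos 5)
Your proposal is correct and follows essentially the same route as the paper's proof: reduce to verifying the continuity equation via Theorem \ref{thm:abscont_ce}, apply the pointwise chain rule to $t\mapsto\varphi(t,X_t(\omega))$, integrate, and use the defining property of the conditional expectation together with the push-forward by $X_t$. The only difference is that the paper spells out the joint $(t,x)$-measurability of $v$ in detail (via the map $(t,\omega)\mapsto(t,X_t(\omega))$ and a countable dense family of test functions), which you only sketch, but you correctly identify this as the remaining bookkeeping and point to the right tools.
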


\begin{proof}
For $X_t\in L^2(\Omega,\R^d,\PP)$, we know that $\mu_t=X_{t,\sharp}\PP\in\P_2(\R^d)$. 
Since $\mu_t$ is narrowly continuous and $\|v_t\|_{L^2(\mu_t,\R^d)}\in L^1([0,1])$, it suffices by Theorem \ref{thm:abscont_ce} to show that $(\mu_t,v_t)$ satisfy the continuity equation \eqref{eq:ce}.
For $\varphi\in C_c^\infty((0,1)\times\R^d)$, we have 
\begin{align}
	\frac{\dd}{\dd t}(\varphi_t \circ X_t)
    =
    (\partial_t \varphi_t )\circ X_t + \langle \nabla_x\varphi_t \circ X_t,\partial_t X_t\rangle, 
\end{align}
so that
\begin{align}
	0&= 
    \int_{\Omega} (\varphi_1\circ X_1 - \varphi_0 \circ X_0) \, \dd \PP
    =\int_{\Omega}\int_0^1\frac{\dd}{\dd t}(\varphi_t \circ X_t) \, \dd t\dd \PP\\
&= 
\int_0^1 \int_{\Omega}(\partial_t \varphi_t)\circ X_t + \langle\nabla_x\varphi_t \circ X_t,\partial_t X_t\rangle \, \dd\PP\dd t\\
&= 
\int_0^1 \int_{\R^d}\partial_t \varphi_t \, \dd\mu_t\dd t +\int_0^1\E[\langle \nabla_x\varphi_t\circ X_t ,\partial_t X_t\rangle] \, \dd t\\
&
=\int_0^1 \int_{\R^d}\partial_t \varphi_t \, \dd\mu_t\dd t + \int_0^1\E[\langle \nabla_x\varphi_t\circ X_t,\E[\partial_tX_t|X_t]\rangle] \, \dd t\\
&=
\int_0^1 \int_{\R^d}\partial_t \varphi_t \, \dd\mu_t\dd t + \int_0^1\E[\langle \nabla_x\varphi_t\circ X_t,\E[\partial_t X_t|X_t=x]\circ X_t\rangle]\, \dd t\\
&=
\int_0^1 \int_{\R^n}\partial_t \varphi _t \, \dd\mu_t \, \dd t + \int_0^1\int_{\R^d}\langle \nabla_x\varphi_t,\E[\partial_t X_t|X_t=x]\rangle \, \dd\mu_t\dd t.
\end{align}
It remains to verify the measurability of $v$ which can be done similarly as in the proof of Proposition \ref{eq:velo}.  Set $X:[0,1]\times\Omega\to[0,1]\times\R^d$ with  $(t,\omega) \mapsto (t,X_t(\omega))$. 
Then $X$ is measurable, since it can be written as limit of measurable functions by approximating it in $t$ by step functions. 
Similarly, we have that  $d_t X: [0,1]\times\Omega \to \R^d$ given by $(t,\omega)\mapsto \partial_t X_t(\omega)$ is measurable. 
Thus, we can define a measurable function $v:[0,1]\times\R^d\to \R^d$ by
$(t,x) \mapsto \E[d_t X|X=(t,x)]$.
Then it holds
$v(t,\cdot)=\E[\partial_tX_t|X_t=\cdot]$ for a.e. $t\in[0,1]$ by the following reason: let $\{g_n\}_{n\in\NN}\subset C_b(\R^d)$ be a dense subset and $h\in C_b([0,1])$. Then we have
\begin{align}
\E_{(t,\omega)\sim\L_{[0,1]}\times\PP}[ \left( (hg_n)\circ X \right) \, 
\left( v \circ X \right)]
&=
\int_0^1 h(t)\E[\left( g_n\circ X_t \right) \, \left( v(t,\cdot)\circ X_t\right)]  \, \dd t
\end{align}
and
\begin{align}
\E_{(t,\omega)\sim\L_{[0,1]}\times\PP}[\left( (hg_n)\circ X \right) \, d_t X]
&=
\int_0^1 h(t)\E[\left( g_n\circ X_t \right) \,\partial_t X_t] \, \dd t
\end{align}
and the left hand sides are equal by definition of $v$. 
This means that
$$\E[ \left(g_n\circ X_t \right) \, \left(v(t,\cdot)\circ X_t \right)]
=
\E[ \left( g_n\circ X_t \right) \, \partial_t X_t]
$$
for a.e. $t\in[0,1]$ and all $n\in\NN$. 
Since $\{g_n\}_{n\in\NN}$ is dense in $C_b(\R^d)$ and $C_b(\R^d)$ is dense in $L^2(\R^d,X_{t,\sharp}\PP)$, which contains the bounded measurable functions, we can conclude that  
$v(t,\cdot)=\E[\partial_t X_t|X_t=\cdot]$ for a.e. $t\in[0,1]$.
\end{proof}

The following special case of Theorem \ref{prop:alpha_curve}
gives a relation to curve-velocity pairs induced by couplings.

\begin{cor} \label{cor:plna-rv}
For the special stochastic process 
$X_t=(1-t)X_0+tX_1$, $t \in [0,1]$
and the plan 
$\alpha = P_{X_0,X_1}$,
the curve-velocity field \eqref{eq:rv}
coincides with those induced by $\alpha$ in \eqref{meanth}.
\end{cor}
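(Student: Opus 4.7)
The plan is to verify the two equalities $\mu_t = e_{t,\sharp}\alpha$ and $v_t = \int \frac{y-x}{1-t}\,\dd\alpha_t^x(y)$ separately, both by unwinding definitions.

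First, for the curve, observe that $X_t = e_t(X_0,X_1)$ by construction, hence
\begin{align}
\mu_t = P_{X_t} = (e_t)_\sharp P_{X_0,X_1} = e_{t,\sharp}\alpha,
\end{align}
which matches the curve in \eqref{meanth}. For the velocity, the map $t\mapsto X_t(\omega)$ is affine for every $\omega$, so $\partial_t X_t = X_1 - X_0$ is defined pointwise; in particular the regularity hypotheses of Theorem \ref{thm.rv} are straightforward to check (the $L^2$ integrability follows from $X_0,X_1\in L^2$).

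The main step is to rewrite $X_1 - X_0$ in a way that makes the conditioning on $X_t$ transparent. From $X_t = (1-t)X_0 + tX_1$ I can solve $X_0 = (X_t - tX_1)/(1-t)$, so for $t \in (0,1)$,
\begin{align}
\partial_t X_t = X_1 - X_0 = \frac{X_1 - X_t}{1-t}.
\end{align}
Therefore
\begin{align}
v_t(x) = \E[\partial_t X_t \mid X_t = x] = \frac{1}{1-t}\,\E[X_1 - x \mid X_t = x].
\end{align}

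The next step is to identify the conditional expectation with the disintegration appearing in \eqref{desint_alpha}. By Proposition \ref{rem:cond_disintegration} applied to the pair $(X_1,X_t)$, we have $\E[X_1 \mid X_t = x] = \int_{\R^d} y\,\dd\beta^x(y)$, where $\beta^x$ disintegrates $P_{X_t,X_1}$ against its first marginal. But by definition of $\alpha_t$ in \eqref{push_a_t},
\begin{align}
P_{X_t,X_1} = (e_t,\pi^2)_\sharp P_{X_0,X_1} = (e_t,\pi^2)_\sharp \alpha = \alpha_t,
\end{align}
so $\beta^x = \alpha_t^x$. Combining these observations gives
\begin{align}
v_t(x) = \int_{\R^d} \frac{y-x}{1-t}\,\dd\alpha_t^x(y),
\end{align}
which is exactly the induced velocity field from Proposition \ref{prop:vector_dis}.

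I don't anticipate a real obstacle: all the work is bookkeeping, and the only potentially subtle point is asserting that the disintegration of $(X_t,X_1)$ against its first marginal really is $\alpha_t^x$ — but this is immediate from the push-forward formula and the uniqueness of disintegration. The $t\in\{0,1\}$ boundary requires no separate argument since the continuity equation only needs the identification almost everywhere.
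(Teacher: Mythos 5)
Your proposal is correct. The curve identification $\mu_t=e_{t,\sharp}\alpha$ and the algebraic rewriting $\partial_t X_t=X_1-X_0=\frac{X_1-X_t}{1-t}$ are sound, the identification $P_{X_t,X_1}=(e_t,\pi^2)_\sharp\alpha=\alpha_t$ is exactly right, and the appeal to the ($\mu_t$-a.e.) uniqueness of disintegration legitimately yields $\beta^x=\alpha_t^x$, so that Proposition \ref{prop:vector_dis} finishes the argument. The route differs mildly from the paper's: the paper never passes through the formula $v_t(x)=\int\frac{y-x}{1-t}\,\dd\alpha_t^x(y)$ or the disintegration $\alpha_t^x$ at all. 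Instead it applies \eqref{eq:cond_exp_dis} to the pair $(\partial_t X_t, X_t)$, tests $\int f\,v_t\,\dd\mu_t$ against $f\in C_b(\R^d)$, unfolds to the joint law $P_{\partial_t X_t,X_t}=P_{X_1-X_0,X_t}$, which is the push-forward of $\alpha$ under $(x,y)\mapsto(y-x,e_t(x,y))$, and reads off the defining vector-valued measure identity $v_t\mu_t=e_{t,\sharp}[(y-x)\alpha]$ directly. Your version buys a more transparent pointwise formula for $v_t(x)$ and makes the link to the Markov-kernel picture of Section \ref{subsec:mk2} explicit, at the cost of relying on Proposition \ref{prop:vector_dis} as an intermediate; the paper's version is self-contained modulo \eqref{eq:cond_exp_dis} and avoids the $t\in(0,1)$ division by $1-t$ entirely, since $y-x$ never gets rescaled. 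Both are valid, and your handling of the a.e.-in-$t$ caveat at the endpoints is consistent with the paper's conventions.
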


\begin{proof}
Let $(\mu_t,v_t)$ be given by \eqref{eq:rv}.
By \eqref{eq:cond_exp_dis}, we can rewrite 
$$v_t(x) = \E[\partial_t X_t|X_t=x]
=
\int_{\R^d}  z \, \dd P_{\partial_t X_t|X_t=x}.
$$
Since $X_t=(1-t)X_0+tX_1$, $t \in [0,1]$, we obtain $\partial_t X_t=X_1-X_0$ 
and further, for any $f \in  C_b(\R^d)$, that
\begin{align}
\int_{\R^d} f v_t \, \dd\mu_t
&=
\int_{\R^d} f(x) \int_{\R^d} z \, \dd P_{\partial_t X_t|X_t=x} \dd P_{X_t}
=
\int_{\R^d \times \R^d} f(x)z \, \dd P_{\partial_t X_t,X_t}
\\
&= 
\int_{\R^d \times \R^d} f(x)z \, \dd P_{X_1-X_0,X_t}
=
\int_{\R^d \times \R^d}  f(e_t(x,y))(y-x) \, \dd P_{X_0,X_1}
\\
&=\int_{\R^d} 
f\, \dd e_{t,\sharp}\left((y-x)\alpha\right).
\end{align}
\end{proof}

\begin{rem}\label{rem:flow_score}
    Let $X_t=(1-t)X_0 + t X_1$ for independent random variables $X_0,X_1$, where $X_0\sim \N(0,I_d)$. Then in \cite[Appendix A1]{zhang2024flow} it is shown that the formula for the vector field associated with $\mu_t$,
    $$v_t = \frac{1-t}{t}\nabla_x\log p_t(x)+ \frac{x}{t}, \quad  t \in (0,1)
$$
from Proposition \ref{prop:flow_score} can also be derived via the velocity field from Theorem \ref{thm.rv} and Tweedie's formula. Recall that Tweedie's formula \cite[Lemma 3.2]{daras2024consistent} for $X_t$ reads as
\begin{align}
\nabla_x  \log p_t(x)=\frac{t\E[X_1|X_t=x]-x}{(1-t)^2},
\end{align}
and thus $\E[X_1|X_t=x]=\frac{(1-t)^2}{t}\nabla_x\log p_t(x)+\frac{x}{t}$. Using $X_0=\frac{X_t-tX_1}{1-t}$ for $t\in(0,1)$ we obtain
\begin{align}
v_t(x)&=\E[\partial_t X_t|X_t=x]=\E[X_1-X_0|X_t=x]\\
&=\E\left[\frac{X_1}{1-t}-\frac{X_t}{1-t}\Big|X_t=x\right]\\
&=\frac{1-t}{t}\nabla_x\log p_t(x)+\frac{x}{t(1-t)}-\frac{x}{1-t}\\
&=\frac{1-t}{t}\nabla_x\log p_t(x)+\frac{x}{t},
\end{align}
where we used Tweedie's formula and the fact that $\E[X_t|X_t]=X_t$. \hfill $\diamond$
\end{rem}

%------------------------------------------------------------
\section{Flow Matching} \label{sec:fm}
%------------------------------------------------------------

In this section, we will see how we can approximate velocity fields of certain absolutely continuous curves $\mu_t: [0,1] \to \P(\R^d)$
by neural networks. 
If the velocity field is known and the starting measure 
$\mu_0$ is such that we can easily sample from, like the standard Gaussian one, then we can use the  ODE \eqref{eq:flow_ode}
to produce samples from $\mu_t$. 

We consider the velocity fields $v_t$ associated to the absolutely 
continuous curves $\nu_t$ from Proposition \ref{prop:lip_flow}. 
By the next proposition, these velocity fields can be obtained as minimizers of a certain loss function. 

\begin{prop} \label{thm:FM_lip}
Let $\mu_1 \in \P_2(\R^d)$, and let
$\K_t(y,\cdot): [0,1] \to \P_2(\R^d)$ be absolutely continuous curves 
with associated velocity fields $v_t^y$ for $\mu_1$-a.e. $y \in \R^d$.
Further, assume that  $(t,x,y)\mapsto v_t^y$ is measurable and 
$\|v_t^y \|_{L^2(\R^d \times \R^d,\alpha_t)} \in L^2([0,1])$.
For $\alpha_t\coloneqq \K_t(y, \cdot) \times_y \mu_1$,
we consider the absolutely continuous curve
$
\nu_t\coloneqq \pi^1_\sharp\alpha_t.
$
Then its associated velocity field $v: [0,1] \times \R^d \to \R^d$
defined by \eqref{eq:speed_2} fulfills
\begin{align}
    v=
    \argmin_{u\in L^2(\R^d,\nu_t\times_t\L_{(0,1)})}\E_{(t,x,y)\sim \alpha_t\times_t\L_{(0,1)}}\left[\|u_t(x)-v^y_t(x)\|^2\right]. 
\end{align}
\end{prop}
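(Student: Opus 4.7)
The plan is to reduce this to the classical fact that conditional expectation is the $L^2$-projection, rewritten here in the language of disintegration. Concretely, I will expand the squared norm, split the integral using the disintegration $\alpha_t=\alpha_t^x\times_x\nu_t$, and read off $v$ as the unique minimizer of the resulting functional.

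First I would verify that $v$, as defined in \eqref{eq:speed_2}, actually lies in $L^2(\R^d,\nu_t\times_t\L_{(0,1)})$. By Jensen's inequality applied to the probability measure $\alpha_t^x$,
\begin{align}
\|v_t(x)\|^2=\Bigl\|\int_{\R^d}v_t^y(x)\,\dd\alpha_t^x(y)\Bigr\|^2\le\int_{\R^d}\|v_t^y(x)\|^2\,\dd\alpha_t^x(y),
\end{align}
and integrating in $x$ against $\nu_t$ and in $t$ against $\L_{(0,1)}$ yields $\|v\|_{L^2(\R^d,\nu_t\times_t\L_{(0,1)})}^2\le\|v_\cdot^\cdot\|_{L^2(\R^d\times\R^d,\alpha_t\times_t\L_{(0,1)})}^2<\infty$ by hypothesis. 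Measurability of $v$ on $[0,1]\times\R^d$ follows from Fubini applied to the jointly measurable integrand $v_t^y(x)$.

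Next I would expand the objective. For any $u\in L^2(\R^d,\nu_t\times_t\L_{(0,1)})$,
\begin{align}
\|u_t(x)-v_t^y(x)\|^2=\|u_t(x)\|^2-2\langle u_t(x),v_t^y(x)\rangle+\|v_t^y(x)\|^2.
\end{align}
The last term is independent of $u$. Integrating the other two against $\alpha_t\times_t\L_{(0,1)}$ and using $\alpha_t=\alpha_t^x\times_x\nu_t$ with the definition \eqref{eq:speed_2}, the cross term becomes
\begin{align}
\int_0^1\!\!\int_{\R^d}\Bigl\langle u_t(x),\int_{\R^d}v_t^y(x)\,\dd\alpha_t^x(y)\Bigr\rangle\dd\nu_t(x)\dd t=\int_0^1\!\!\int_{\R^d}\langle u_t(x),v_t(x)\rangle\dd\nu_t(x)\dd t,
\end{align}
while the first term is already an integral against $\nu_t\times_t\L_{(0,1)}$ since its integrand does not depend on $y$. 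Hence, up to the $u$-independent constant $C\coloneqq\|v_\cdot^\cdot\|_{L^2(\R^d\times\R^d,\alpha_t\times_t\L_{(0,1)})}^2$,
\begin{align}
\E_{(t,x,y)\sim\alpha_t\times_t\L_{(0,1)}}\bigl[\|u_t(x)-v_t^y(x)\|^2\bigr]=\|u-v\|_{L^2(\R^d,\nu_t\times_t\L_{(0,1)})}^2-\|v\|_{L^2(\R^d,\nu_t\times_t\L_{(0,1)})}^2+C,
\end{align}
where I completed the square using the bilinearity of the inner product. The right-hand side is strictly convex in $u$ and uniquely minimized over $L^2(\R^d,\nu_t\times_t\L_{(0,1)})$ by $u=v$, which gives the claim.

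I do not expect a serious obstacle here; the only care needed is the Fubini-type switching to get the cross term into its disintegrated form, and the Jensen argument ensuring $v$ lives in the correct $L^2$ space so that completing the square is legitimate. The structural content is exactly the characterization of the conditional expectation $v_t(x)=\E_{y\sim\alpha_t^x}[v_t^y(x)]$ as the $L^2$-best approximation of $v_t^y(x)$ among functions of $(t,x)$ alone.
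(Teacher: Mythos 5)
Your proposal is correct and follows essentially the same route as the paper: expand the square, use the disintegration $\alpha_t=\alpha_t^x\times_x\nu_t$ to turn the cross term into an inner product with $v_t(x)=\int v_t^y(x)\,\dd\alpha_t^x(y)$, and complete the square so that the objective differs from $\|u-v\|^2_{L^2(\R^d,\nu_t\times_t\L_{(0,1)})}$ only by $u$-independent constants. The extra Jensen step confirming $v\in L^2(\R^d,\nu_t\times_t\L_{(0,1)})$ is a welcome bit of care that the paper handles implicitly via Theorem \ref{prop:lip_flow}.
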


\begin{proof}
Starting with
\begin{align}
    \|u_t(x)-v_t^y(x)\|^2 = \|u_t(x)\|^2 -2\big\langle u_t(x),v^y_t(x)\big\rangle + \|v^y_t(x)\|^2,
\end{align}
we deal with each part individually.
It holds
\begin{align}
\E_{(t,x,y)\sim \alpha_t\times_t\L_{(0,1)}}\left[\|u_t(x)\|^2\right]
&=
\E_{t\sim\L_{(0,1)}}\E_{ x\sim\nu_t}\left[\E_{y\sim \alpha_t^x}\left[\|u_t(x)\|^2\right]\right]\\
&= \E_{(t,x)\sim \nu_t\times_t \L_{(0,1)}}\left[\|u_t(x)\|^2\right].
\end{align}
For the second term,  we obtain with Fubini's theorem 
\begin{align}
\E_{(t,x,y)\sim\alpha_t\times_t\L_{(0,1)}}\big[\big\langle u_t(x),v_t^y(x)\big\rangle\big]
&= 
\E_{(t,x)\sim\nu_t\times_t\L_{(0,1)}}\big[\int_{\R^d} \big\langle u_t(x),v_t^y(x)\big\rangle \, \dd \alpha_t^{x}(y)\big]\\
&= 
\E_{(t,x)\sim\nu_t\times_t\L_{(0,1)}}
\big[\big\langle u_t(x),\int_{\R^d} v_t^y(x) \, \dd \alpha_t^{x}(y)\big\rangle\big]
\\
&=\E_{(t,x)\sim\nu_t\times_t\L_{(0,1)}}\big[\big\langle u_t(x),v_t^y(x)\big\rangle\big].
\end{align}
Putting everything together, we get
\begin{align}
&\E_{(t,x,y)\sim \alpha_t\times_t\L_{(0,1)}}\left[\|u_t(x)-v_t^y(x)\|^2\right]=\E_{(t,x)\sim\nu_t\times_t\L_{(0,1)}}\left[\|u_t(x)-v_t(x)\|^2\right]\\
&-\E_{(t,x)\sim\nu_t\times_t\L_{(0,1)}}\left[\|v_t(x)\|^2\right]+\E_{(t,x,y)\sim\alpha_t\times_t\L_{(0,1)}}\left[\|v^y_t(x)\|^2\right].
\end{align}
Since the last two terms do not depend on $u$ and
the minimizer of the first term is $v$, we obtain the assertion.
\end{proof}

In order to learn $v$, the  proposition suggests to use the loss
\begin{align}\label{eq:markov_loss_fin}
\text{FM}_{\alpha_t}(\theta) \coloneqq \E_{(t,x,y)\sim \alpha_t\times_t\L_{(0,1)}}[\|u^\theta_t(x)-v^y_t(x)\|^2].
\end{align}
The name ''FM'' comes from ''flow matching''.
In the original paper \cite{lipman2023flow}, the notation ''conditional flow matching'' was used which refers to a ''conditional flow path'', see Remark \ref{rem:orig}. 
In practice, we work with the empirical expectation.
To this end, we will need to be able to sample from $\alpha_t=\K_t(y,\cdot) \times_y\mu_1$ and to compute $v_t^y(x)$. A typical scenario is 
that samples $y_i$, $i=1,\ldots,n$ from the data distribution $\mu_1$ are available
and that it is easy to sample from the
 Markov kernel $\K_t(y_i,\cdot)$, $i=1,\ldots,n$. 

 \begin{example}\label{ex:lip_ex_loss}
A setting, where $\text{FM}_{\alpha_t}$ is tractable is Example \ref{ex:conv}.
Here $\mu_0 = \N(0,I_d)$ and
$$\K_t(y,\cdot)=\N(ty,(1-rt)^2I_d) \quad \text{and} \quad v_t^y=\frac{y-rx}{1-tr},$$ and we obtain 
\begin{align}
\text{FM}_{\alpha_t}(\theta)
&=
\E_{(t,x,y)\sim \alpha_t\times_t\L_{(0,1)}}\big[\|u_t^\theta(x)-v_t^y(x)\|^2\big]\\
&=
\E_{t\sim[0,1]}\E_{y\sim\mu_1}\E_{x\sim \N(ty,(1-rt)^2 I_d)}\big[\|u_t^\theta(x)-\frac{y-rx}{1-tr}\|^2 \big]\\
&=
\E_{t\sim[0,1]}\E_{y\sim\mu_1}\E_{x\sim \N(0, I_d)}\big[\|u_t^{ \theta}((1-tr)x+ty)-(y-rx)\|^2 \big]\\
&=
\E_{t\sim[0,1],  (x,y)\sim \mu_0\times \mu_1} \big[\|u_t^{ \theta}((1-tr)x+ty)-(y-rx)\|^2\big].
\end{align}
\end{example}

For $\alpha_t$ coming from a coupling $\alpha\in \Gamma(\mu_0,\mu_1)$
as in \eqref{push_a_t} with disintegration kernel 
$\K_t(y,\cdot)$, the associated velocity field $v_t^y$ has the simple form \eqref{yy}.
In this important case, the loss function \eqref{eq:markov_loss_fin} can be simplified.

\begin{prop}\label{prop:v_as_min}
Let $\alpha\in \Gamma(\mu_0,\mu_1)$ and $\mu_t=e_{t,\sharp}\alpha$. Then the velocity field $v: [0,1] \times \R^d \to \R^d$ induced by 
$\alpha$ fulfills
 \begin{align}\label{v_min_plan}
 v = \argmin_{u\in L^2(\R^d,\mu_t\times_t\L_{(0,1)})}\E_{(t,x,y)\sim \L_{(0,1)}\times \alpha}\left[\|u_t\left(e_t(x,y) \right)-(y-x)\|^2\right] .
 \end{align}
\end{prop}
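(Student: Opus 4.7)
The plan is to reduce this to Proposition~\ref{thm:FM_lip} by choosing the right family of Markov kernels. Specifically, I would apply Proposition~\ref{thm:FM_lip} with $\mathcal{K}_t(y,\cdot)\coloneqq e_{t,\sharp}(\alpha^y\times\delta_y)$, where $\alpha=\alpha^y\times_y\mu_1$ is the disintegration of $\alpha$ with respect to $\pi^2$. By Proposition~\ref{prop:desint}, these curves are absolutely continuous for $\mu_1$-a.e.\ $y$, with associated velocity fields $v_t^y(x)=\frac{y-x}{1-t}$ for $t\in(0,1)$. Moreover, by the construction in Proposition~\ref{prop:desint}, $\alpha_t=\mathcal{K}_t(y,\cdot)\times_y\mu_1 = (e_t,\pi^2)_\sharp\alpha$, and hence $\nu_t\coloneqq\pi^1_\sharp\alpha_t = e_{t,\sharp}\alpha = \mu_t$. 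Consequently, by Proposition~\ref{prop:vector_dis}, the associated velocity field $v_t(x)=\int_{\R^d} v_t^y(x)\,\dd\alpha_t^x(y)$ from \eqref{eq:speed_2} coincides with the velocity field induced by the coupling $\alpha$.

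Next, I would verify the technical hypotheses of Proposition~\ref{thm:FM_lip}. Measurability of $(t,x,y)\mapsto v_t^y(x)=\frac{y-x}{1-t}$ on $(0,1)\times\R^d\times\R^d$ is obvious from the explicit formula. For the integrability condition, I would exploit that $\alpha_t$ is supported on $\{(e_t(x',y'),y'): (x',y')\in \mathrm{supp}(\alpha)\}$, so that on this support
\begin{align}
v_t^{y'}\bigl(e_t(x',y')\bigr)=\frac{y'-(1-t)x'-ty'}{1-t}=y'-x'.
\end{align}
Pushing forward along $(e_t,\pi^2)$ therefore gives
\begin{align}
\|v_t^y\|_{L^2(\R^d\times\R^d,\alpha_t)}^2=\int_{\R^d\times\R^d}\|y'-x'\|^2\,\dd\alpha(x',y'),
\end{align}
which is finite (since $\mu_0,\mu_1\in\P_2(\R^d)$) and constant in $t$; in particular it lies in $L^2([0,1])$.

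Having verified the hypotheses, Proposition~\ref{thm:FM_lip} characterises $v$ as the minimiser of $\E_{(t,x,y)\sim \alpha_t\times_t \L_{(0,1)}}[\|u_t(x)-v_t^y(x)\|^2]$. The final step is a change of variables: since $\alpha_t=(e_t,\pi^2)_\sharp\alpha$, the push-forward formula \eqref{push} yields
\begin{align}
\int_{\R^d\times\R^d}\|u_t(x)-v_t^y(x)\|^2\,\dd\alpha_t(x,y)=\int_{\R^d\times\R^d}\|u_t(e_t(x',y'))-(y'-x')\|^2\,\dd\alpha(x',y'),
\end{align}
using the identity $v_t^{y'}(e_t(x',y'))=y'-x'$ derived above. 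Integrating over $t\in(0,1)$ transforms the loss from Proposition~\ref{thm:FM_lip} exactly into the right-hand side of \eqref{v_min_plan}, and hence the minimiser is $v$.

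The main obstacle I anticipate is purely bookkeeping: one has to be careful that the apparent singularity of $v_t^y(x)=\frac{y-x}{1-t}$ at $t=1$ is harmless, because when integrated against $\alpha_t$ (equivalently, when evaluated along the interpolation path) the factor $(1-t)$ cancels and we obtain the bounded expression $y'-x'$. Once this cancellation is noted, the rest is a direct application of Proposition~\ref{thm:FM_lip} together with the push-forward identity.
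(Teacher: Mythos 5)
Your proposal is correct and follows essentially the same route as the paper's proof: reduce to Proposition \ref{thm:FM_lip} via the kernels $\mathcal K_t(y,\cdot)=e_{t,\sharp}(\alpha^y\times\delta_y)$ from Proposition \ref{prop:desint}, identify $\nu_t=\mu_t$ and the induced velocity field via Proposition \ref{prop:vector_dis}, and then change variables using $(e_t,\pi^2)_\sharp\alpha=\alpha_t$ and the cancellation $v_t^{y}(e_t(x,y))=y-x$. Your explicit verification of the integrability hypothesis $\|v_t^y\|_{L^2(\R^d\times\R^d,\alpha_t)}\in L^2([0,1])$ is a welcome detail that the paper leaves implicit.
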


\begin{proof}
By Proposition \ref{prop:desint}, we know that 
$\K_t(y, \cdot)$ given via  
$$\alpha_t \coloneqq (e_t,\pi^2)_\sharp\alpha =\K_t(y,\cdot) \times_y\mu_1$$ 
is an absolutely continuous curve for 
$\mu_1$-a.e. $y \in \R^d$
with associated simple velocity field $v_t^y(x)\coloneqq \frac{y-x}{1-t}$, $t\in[0,1)$. Further, $\mu_t = \nu_t \coloneqq \pi^1_\sharp\alpha_t$ and the velocity field induced by  $\alpha$ in \eqref {eq:speed_1} coincides with those in \eqref{eq:speed_2}. 
Thus, by Proposition \ref{thm:FM_lip}, we can calculate
\begin{align}
    v&=\argmin_{u\in L^2(\R^d,\mu_t\times_t\L_{(0,1)})}\E_{(t,x,y)\sim \alpha_t\times_t\L_{(0,1)}}\left[\|u_t(x)-v_t^y(x)\|^2\right]\\
    &=\argmin_{u\in L^2(\R^d,\mu_t\times_t\L_{(0,1)})}\E_{(t,x,y)\sim \alpha_t\times_t\L_{(0,1)}}\left[\left\|u_t(x)-\frac{y-x}{1-t}\right\|^2\right]\\
    &=\argmin_{u\in L^2(\R^d,\mu_t\times_t\L_{(0,1)})}\E_{(t,x,y)\sim \alpha\times\L_{(0,1)}}\left[\|u_t(e_t(x,y))-(y-x)\|^2\right].
\end{align}
\end{proof}

Thus, for curves and velocity fields induced by plans $\alpha\in\Gamma(\mu_0,\mu_1)$ we can use the simplified flow matching objective 
    \begin{align}\label{eq:markov_loss}
\text{FM}_{\alpha}(\theta)=\E_{t\sim[0,1],(x,y)\sim \alpha}[\|u_t^\theta(e_t(x,y))-(y-x)\|^2].
\end{align}
Computing the empirical expectation  requires only samples from the coupling 
$\alpha\in \Gamma(\mu_0,\mu_1)$. 
Two canonical choices of couplings, which we can usually sample from, are
\begin{itemize}
\item the independent coupling 
$\alpha = \mu_0 \times \mu_1$, which requires only samples of both $\mu_0$ and $\mu_1$, and
\item an optimal coupling $\alpha \in \Gamma_o(\mu_0,\mu_1)$, 
which is usually approximated by applying an optimal transport solver on empirical measures approximating $\mu_i$, $i=0,1$.
\end{itemize}

Finally, for a curve arising from a stochastic process 
also the description of the velocity field 
by conditional expectation in \eqref{eq:rv},
$$v_t \coloneqq \E[\partial_t X_t|X_t = \cdot]$$ 
can be interpreted as a minimization problem. 
Namely, it is well-known, see, e.g., \cite[10.1.5 (5)]{bogachev2007measure}, 
that 
\[\E[X|Y=\cdot]=\argmin_{f\in L^2(\R^d,Y_\sharp \PP)}\E_{\PP}[\|X-f(Y)\|^2].\]
Thus, we obtain 
\begin{align}
	v&=\argmin_{u\in L^2(\R^d,\mu_t\times_t \L_{(0,1)})}\E_{\L_{(0,1)}\times\PP}[\|\partial_t X_t-u_t(X_t)\|^2].
\end{align}    
For the special stochastic process $X_t=(1-t)X_0+tX_1$ 
in Corollary \ref{cor:plna-rv},
leading to curves induced by plans, this becomes
\begin{align}
	v
	 &=\argmin_{u\in L^2(\R^d,\mu_t\times_t \L_{(0,1)})}\E_{
\L_{(0,1)}\times\PP}[\|X_1-X_0-u_t(X_t)\|^2]\\
&=\argmin_{u\in L^2(\R^d,\mu_t\times_t \L_{(0,1)})}\E_{(t,x,y)\sim \L_{(0,1)}\times P_{X_0,X_1}}[\|y-x-u_t(e_t(x,y))\|^2]
\end{align}
and we are back at Proposition \ref{v_min_plan}.

%-----------------------------------------------------
\section{Numerical Examples for Flow Matching}
%-----------------------------------------------------
\subsection{Flow Matching Algorithms}
Sampling from a product plan $\alpha=\mu_0\times\mu_1$ can be reduced to sampling from the individual measures $z_i\sim \mu_0,\, x_i\sim \mu_1$ and simply building the product $(z_i,x_i)\sim \alpha$. This is the reason that training a flow matching model from a product plan is particularly easy, as can be seen in Algorithm \ref{alg:flow_product}. 

Given two measures $\mu_0,\mu_1\in\P_2(\R^d)$, sampling from an optimal transport plan $\alpha\in \Gamma_o(\mu_0,\mu_1)$ is usually not possible. Often both measures are not known analytically but only an approximation by samples, so that an optimal plan cannot be obtained. But even the plan for the approximation is often not computable since usually the number of samples is so high that solving the OT problem is computationally too expensive. In this case a popular method is minibatch OT flow matching \cite{tong2023improving}. The idea is to partition the samples $x_1,\ldots,x_{N_t}\sim \mu_1$ into minibatches (smaller subsets), on which solving the OT problem is computational feasible. Details are described in Algorithm \ref{alg:minibatch_ot}.

Once a flow vector field $u^\theta$ is learned, sampling is then done by sampling $z_i\sim \mu_0$ from the source distribution and using an ODE solver for $f'(t)=u_t^\theta(f(t)),\, f(0)=z_i$ to compute $f(1)$.
\begin{algorithm}[H]
\begin{algorithmic}
\State $N_b$: batch size, $N_e$: number of epochs, $N_t$ number of samples from the target distribution
\State $x_1,\ldots,x_{N_t}$ samples from the target distribution $\mu_1$
\State \textbf{Network} $u^\theta:[0,1]\times \R^d\to \R^d$ depending on a parameter $\theta\in \R^m$. 
\For{$e=1,...,N_e$}
\For{$s=1,\ldots,\frac{N_t}{N_b}$}
\State Draw $(z_i)\sim \mu_0$ for $1\leq i\leq N_b$
\State Draw $t_i\sim \L_{(0,1)}$ for $1\leq i\leq N_b$
\State Compute $x_{t_i}^i=(1-t_i)z_i+t_ix_{i+(s-1)N_b}$
\State Compute $\L(\theta)=\frac{1}{N_b}\sum_{i=1}^{N_b}\left\|u^\theta(t,x_{t_i}^i)-(x_{i+(s-1)N_b}-z_i)\right\|^2$ and $\nabla_\theta \L(\theta)$
\State Update $\theta$ using $\nabla_\theta\L(\theta)$
\EndFor
\EndFor
\State \textbf{Return:} $\theta$
\end{algorithmic}
\caption{Learning $u^\theta$ from the plan $\alpha=\mu_0\times\mu_1\in \Gamma(\mu_0,\mu_1)$}
\label{alg:flow_product}
\end{algorithm}

\begin{algorithm}[H]
\begin{algorithmic}
\State $N_b$: batch size, $N_{bOT}$ OT batch size, $N_e$: number of epochs, $N_t$ number of samples from the target distribution
\State $x_1,\ldots,x_{N_t}$ samples from the target distribution $\mu_1$
\State \textbf{Network} $u^\theta:[0,1]\times \R^d\to \R^d$ depending on a parameter $\theta\in \R^m$. 
\For{$e=1,...,N_e$}
\For{$o=1,\ldots,\frac{N_t}{N_{bOT}}$}
\State Draw $z_i\sim \mu_0$ for $1\leq i\leq N_{bOT}$
\State Compute optimal transport map for $$W_2\left(\frac{1}{N_{bOT}}\sum_{i=1}^{N_{bOT}}\delta_{z_i},\frac{1}{N_{bOT}}\sum_{i=1}^{N_{bOT}}\delta_{x_{i+(o-1)N_{bOT}}}\right)$$
\State described by a function $T_o:\{1,\ldots,N_{bOT}\}\to \{1+(o-1)N_{bOT},\ldots,oN_{bOT}\}$
\For{$s=1,\ldots,\frac{N_{bOT}}{N_b}$}
\State Draw $t_i\sim \L_{(0,1)}$ for $1\leq i\leq N_b$
\State Compute $x_{t_i}^i=(1-t_i)z_{i}+t_ix_{T_o(i)}$
\State Compute $\L(\theta)=\frac{1}{N_b}\sum_{i=1}^{N_b}\left\|u^\theta(t_i,x_{t_i}^i)-(x_{T_o(i)}-z_{i})\right\|^2$ and $\nabla_\theta \L(\theta)$
\State Update $\theta$ using $\nabla_\theta\L(\theta)$
\EndFor
\EndFor
\EndFor
\State \textbf{Return:} $\theta$
\end{algorithmic}
\caption{Learning $u^\theta$ from minibatch OT flow matching for $\mu_0$ the source distribution and $\mu_1$ the target distribution.}
\label{alg:minibatch_ot}
\end{algorithm}

%----------------------------------------
\subsection{Numerical Examples}
%----------------------------------------
For our first example we chose standard $2d$ Gaussian distribution as source distribution and a Gaussian mixture model with eight equality weighted modes as target distribution, see\ref{fig:gaussian_mix}. In the middle, we can see the trajectories created by a vector field learned with minibatch OT flow matching, using a simple Euler method for solving the flow ODE. Here the trajectories are mostly straight, making sampling easier since the sampling quality does not depend as much on time discretization or the ODE solving method. On the left, the trajectories for a vector field learned via flow matching for a product plan is displayed, where again an Euler method was used to solve the flow ODE. Here the trajectories are much less straight and sample quality depends very much on the ODE solving quality. On the right we see the result for Neural ODEs, described in Section \ref{sec:CNF}, which has trajectories differing from both of the flow matching approaches.

\begin{figure}[ht]
\centering
\begin{subfigure}{0.32\linewidth}
    \includegraphics[width=\linewidth]{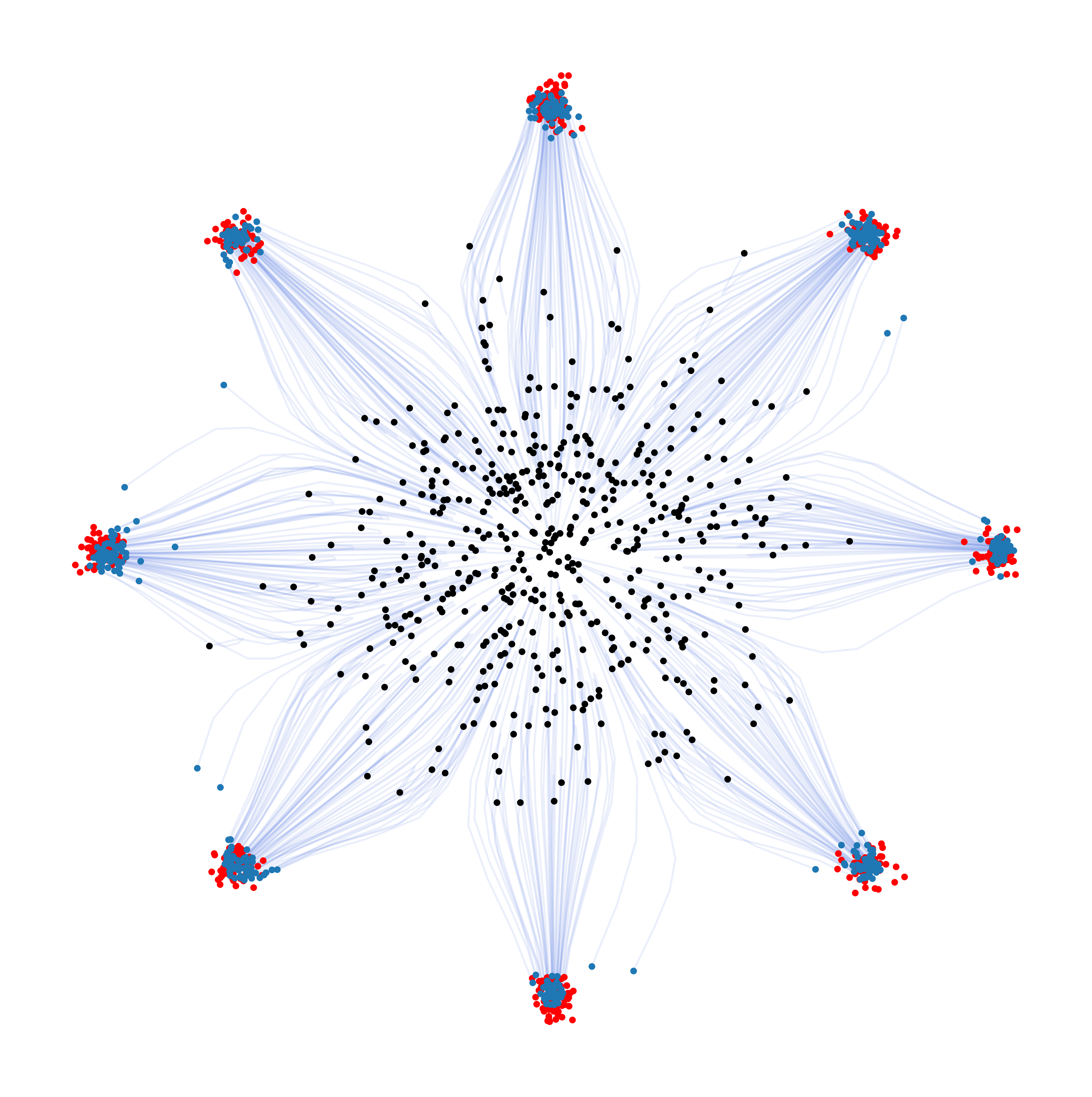}
\end{subfigure}
\begin{subfigure}{0.32\linewidth}
    \includegraphics[width=\linewidth]{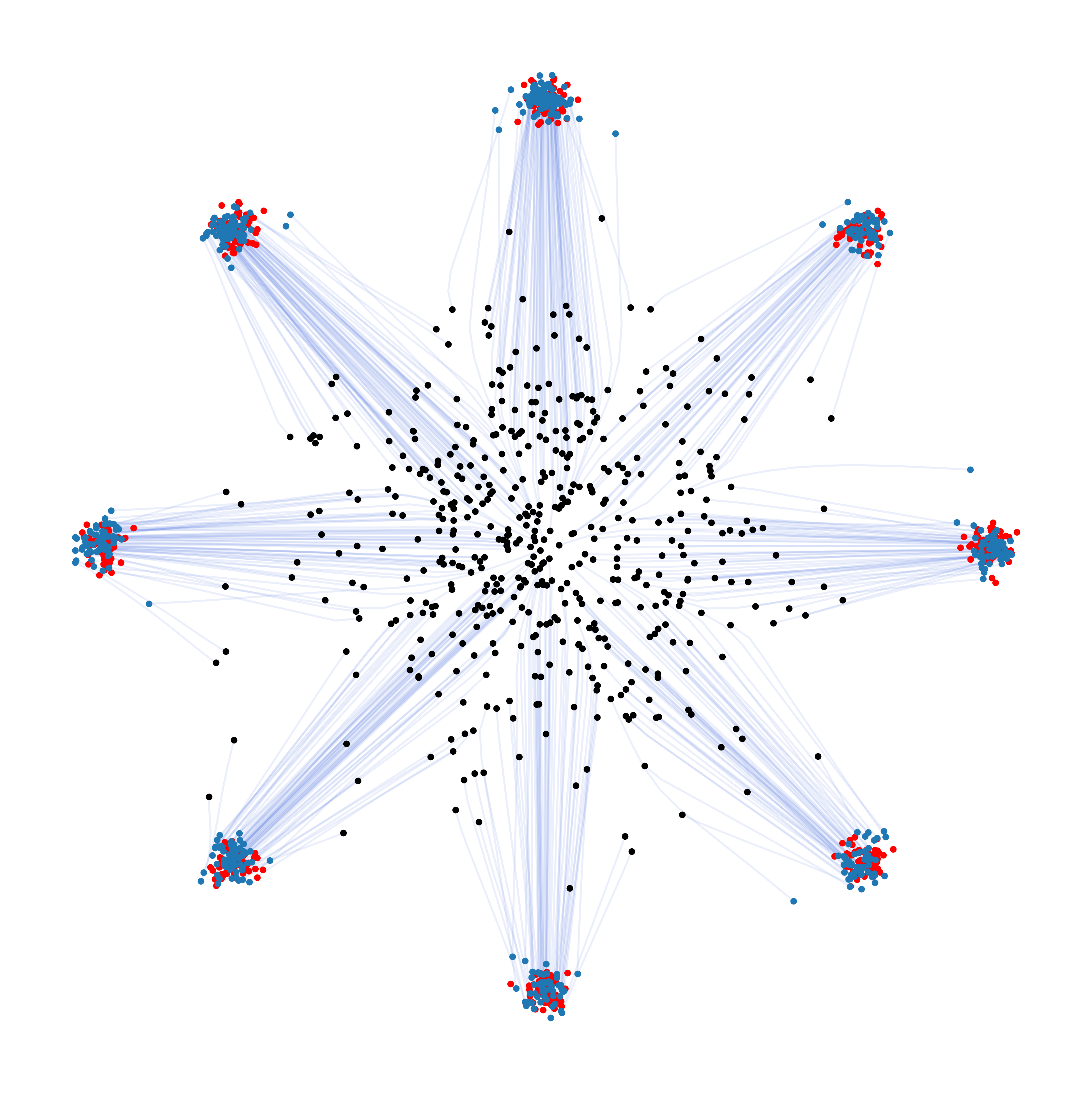}
\end{subfigure}
\begin{subfigure}{0.32\linewidth}
    \includegraphics[width=\linewidth]{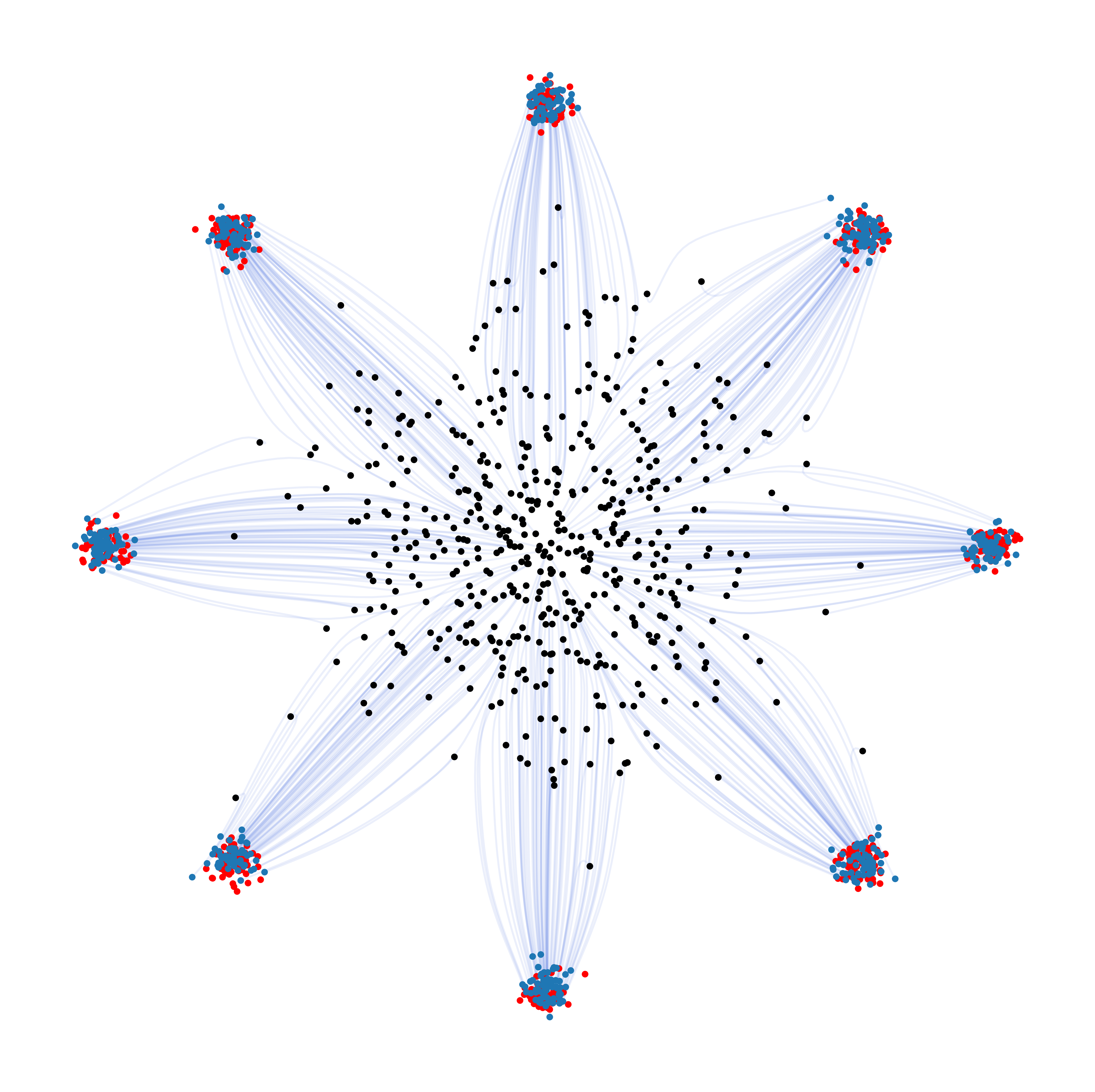}
\end{subfigure}

    \caption{Trajectories of points from a vector field $u^\theta$ obtained via flow matching from a coupling $\alpha\in\Gamma(\mu_0,\mu_1)$. We chose $\mu_0=\N(0,1)$ and as target distribution $\mu_1$ a Gaussian mixture model with $8$ modes. Black: points $\{z_i\}_{i=1}^{300}$ drawn from the source distribution $\mu_0=\N(0,1)$. Blue: points sampled via the vector field from $\{z_i\}_{i=1}^{300}$. Red: points sampled from the target distribution. Blue lines: trajectories of the vector field. The left image shows the results for $u^\theta$ trained using the independent coupling and the middle image shows the result when trained with minibatch OT flow matching. The right image shows the result from training a continuous normalizing flow as in Section \ref{sec:CNF}.}
    \label{fig:gaussian_mix}
\end{figure}

Finally, we show a high dimensional application of flow matching in Figure \ref{fig:cat}. Here we used the flow vector field from \cite{martin2024pnp}, which was already trained via minibatch OT flow matching on images of cat faces. For a comparison of the performance for the flow matching with different plans see \cite[Table 2]{tong2023improving}. 

\begin{figure}[ht]
\includegraphics[width=\linewidth]{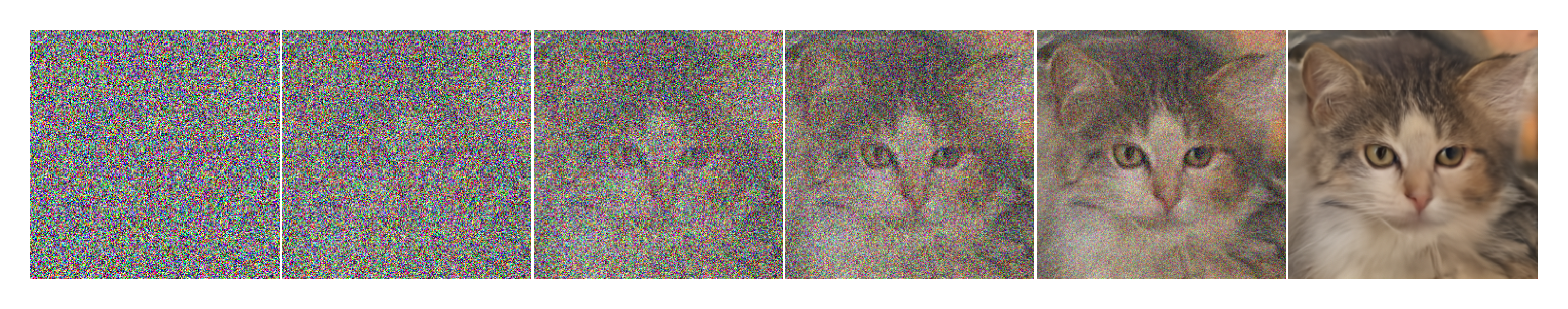}
\caption{Single trajectory for a flow matching model trained on cat images. For the trained model, see \cite{martin2024pnp}.
Note that for the standard Gaussian distribution we can sample in each direction
separately, so that the left starting sample contains the one-dimensional coordinates
in each of the $d= 256^2$ directions for the red-green-blue channels.}
\label{fig:cat}
\end{figure}
%---------------------------------------------------------------------
\section{Flow Matching in Bayesian Inverse Problems} \label{sec:fm_inverse_problems}
%---------------------------------------------------------------------

In this section, we are interested in Bayesian inverse problems 
\begin{align}
W=F(X)+\Xi
\end{align}
for random variables $W:\Omega\to\R^m$, $X:\Omega\to\R^d$, a forward operator $F:\R^d\to\R^m$ and a random variable $\Xi:\Omega\to\R^m$
which is responsible for noisy outputs. 
The aim  is to approximate the posterior distribution 
$P_{X|W=w}$. 
We propose to learn a map 
$$
g:\R^m\times\R^d \to \R^m\times \R^d,\quad (w,x) \mapsto (w,g_w(x))
$$ 
such that
$$g_\sharp(P_W\times \N(0,1))=P_{W,X}.$$  
This implies $g_{w,\sharp} \N(0,1) = P_{X|W=w}$, since we have for any $f \in  C_b(\R^m\times \R^d)$ that
\begin{align}
\int\limits_{\R^m \times \R^d} f \, \dd P_{W,X}
& = 
\int\limits_{\R^m \times \R^d} f \, \dd g_\sharp(P_W\times\N(0,1))
=
\int\limits_{\R^m} \int\limits_{\R^d} f(w,x) \, \dd (g_{w,\sharp}\N(0,1))\dd P_W.
\end{align}
In particular, we can sample from $P_{X|W=w}$ by sampling from $\N(0,1)$ and applying $g_w$. 
Again, we want to describe the generator $g$  via a curve in a space of probability measures. However, now we need a curve, where the $W$-marginal does not change, i.e., 
a curve in the space 
$$\P_{P_W} 
\coloneqq 
\{\mu\in\P_2(\R^m\times\R^d):\pi^1_\sharp\mu=P_W\}.$$ 
In the next subsection, we will equip the space 
$\P_{P_W}$ with a metric turning it into a complete metric space and determine couplings such that the corresponding curve stays in $\P_{P_W}$. 
Then we will use these curves for flow matching in order to learn conditional generating networks that approximate the posterior distribution $P_{X|W=w}$ for $w$ sampled from $P_W$. This section is based on our paper
\cite{chemseddine2024conditional}, see also  \cite{kerrigan2024dynamic}. 

%------------------------------------------------------------------
\subsection{Conditional Wasserstein Distance} \label{sec:condW}
%------------------------------------------------------------------
Conditional Wasserstein distances appear already in \cite[Section 2.4]{kloeckner2021extensions},
\cite{peszek2023heterogeneous}
as well as in the PhD thesis of Gigli \cite[Chapter 4]{gig}, who studied a metric on the so-called ''geometric tangent space'' of the Wasserstein-$2$ space. 
Recently, conditional Wasserstein distances were studied in the realm of machine learning in \cite{barboni2024understanding} as well as in functional optimal transport in \cite{hosseini2024conditional}. 
We will focus on its application in Bayesian flow matching as presented in our paper \cite{chemseddine2024conditional}, where also the
proofs of the following theorems and propositions are provided.

To fix the notation, let $\u=(w,x)\in \R^m\times \R^d$ 
and denote the projections by
$$\pi^1(\u) \coloneqq w, \quad \pi^2(\u) \coloneqq x.$$
For an arbitrary,  fixed reference measure $\eta\in\P_2(\R^m)$,  we consider the space 
\begin{align}
\P_\eta(\R^m\times \R^d):=\{\mu\in \P_2(\R^m\times\R^d):\pi^1_\sharp\mu=\eta\}.
\end{align}
In order to define a conditional Wasserstein distance on 
$\P_\eta$, we introduce, for $\mu,\nu\in \P_\eta(\R^m\times\R^d)$, 
the set of 4-plans
\begin{equation}\label{y-coupling}
  \Gamma_\eta =  \Gamma_\eta(\mu,\nu) \coloneqq \Big\{ \alpha \in \Gamma(\mu,\nu): \pi^{1,1}_{\sharp}\alpha = \Delta_{\sharp}\eta \Big\},
\end{equation}
 where 
 $$\pi^{1,1}(\u_1,\u_2) \coloneqq(\pi^1(\u_1),\pi^1(\u_2))
 = 
 \left( \pi^1 \left((w_1,x_1) \right) ,\pi^1\left( (w_2,x_2) \right) \right) =
 (w_1,w_2)
 $$ 
 and 
 $\Delta:\R^m \to \R^m\times \R^m$, $ w \mapsto (w,w)$ is the diagonal map. 
 Note that 
 $$\Delta^{-1} (w_1,w_2) = \emptyset \text{ if } w_1 \not = w_2
 \quad \text{and} \quad
 \Delta^{-1} (w_1,w_2) = w \text{ if }w_1 = w_2 = w.
 $$
 Now we define the \emph{conditional Wasserstein(-$2$) distance} by
        \begin{align}   \label{def:cond_wass}
        W_{2,\eta}(\mu,\nu) \coloneqq  \Big( 
        \min_{\alpha \in \Gamma_\eta(\mu,\nu)} \int_{(\R^m\times \R^d)^2}\|\u_1-\u_2\|^2 \, \dd \alpha 
        \Big)^{\frac 1 2}.
        \end{align}   
Indeed, the above minimum is obtained and the set of optimal couplings is denoted by $\Gamma_{\eta,o}$.       
The following theorem collects important properties of the conditional Wasserstein distance.

\begin{thm}\label{prop:cond_wass}
The conditional Wasserstein space 
$\left(\P_\eta(\R^m \times \R^d), W_{2,\eta} \right)$ is a complete metric space.
The conditional Wasserstein distance has the following properties:
\begin{itemize}
     \item[\rm{i)}] 
     For $\mu,\nu \in \P_\eta(\R^m \times \R^d)$,
     it holds
     $$ W_{2,\eta}^2(\mu,\nu)=\E_{w\sim\eta}\big[ W_2^2(\mu^w,\nu^w)\big],$$ 
     where $\mu^w,\nu^w$ denote the disintegrations of $\mu,\nu$ with respect to $\eta$, respectively.    
\item[\rm{ii)}] 
There exist optimal plans 
$\alpha_w \in \Gamma_o(\mu^w,\nu^w)$, $w \in \R^m$
such that 
 \begin{equation} \label{def_alpha}
 \alpha \coloneqq \underbrace{(\delta_w\times \alpha_w) }_{\in \P_2(\R^m \times \R^d \times \R^d)} \times_w\eta \in \Gamma_{\eta,o}(\mu,\nu).
 \end{equation}
 \end{itemize}
\end{thm}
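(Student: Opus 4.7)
The plan is to prove (i) first, then use it to deduce the metric and completeness claims, and finally address (ii) via a measurable selection argument, since the jointly measurable selection is really what makes everything work. Given $\alpha \in \Gamma_\eta(\mu,\nu)$, the constraint $\pi^{1,1}_\sharp \alpha = \Delta_\sharp \eta$ forces $\alpha$ to be concentrated on the set $\{(\u_1,\u_2): \pi^1(\u_1) = \pi^1(\u_2)\}$. Disintegrating $\alpha$ against the common $w$-coordinate yields a family $\alpha_w \in \P_2(\R^d\times\R^d)$ with
\[
\alpha = (\delta_w \times \alpha_w) \times_w \eta,
\]
and comparing marginals with $\pi^1_\sharp \mu = \pi^1_\sharp \nu = \eta$ forces $\alpha_w \in \Gamma(\mu^w,\nu^w)$ for $\eta$-a.e.\ $w$. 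The cost factorizes via Fubini as
\[
\int \|\u_1-\u_2\|^2 \, \dd\alpha = \int_{\R^m} \int_{\R^d \times \R^d} \|x_1 - x_2\|^2 \, \dd\alpha_w(x_1,x_2) \, \dd\eta(w),
\]
so pointwise optimization in $w$ gives the lower bound $W_{2,\eta}^2(\mu,\nu) \geq \E_{w\sim\eta}[W_2^2(\mu^w,\nu^w)]$. The reverse inequality (together with attainment of the minimum) is exactly the content of (ii): once we have a measurable family $w \mapsto \alpha_w \in \Gamma_o(\mu^w,\nu^w)$, the glued measure in \eqref{def_alpha} lies in $\Gamma_\eta(\mu,\nu)$ and realizes the bound.

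For (ii), I would invoke a measurable selection theorem. The disintegration maps $w \mapsto \mu^w, \nu^w$ are $\eta$-measurable as maps into $\P_2(\R^d)$ with the narrow topology; the multifunction $w \mapsto \Gamma_o(\mu^w,\nu^w) \subset \P_2(\R^d\times\R^d)$ has nonempty compact (narrowly closed) values by standard optimal transport theory, and one can show its graph is Borel measurable. Applying Kuratowski--Ryll-Nardzewski (or Aumann's selection theorem on a Polish space) yields an $\eta$-measurable selector $w \mapsto \alpha_w$. Then $\alpha$ defined by \eqref{def_alpha} is a well-defined element of $\P_2((\R^m\times\R^d)^2)$, lies in $\Gamma_\eta(\mu,\nu)$, and achieves the lower bound from (i).

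Given (i) and (ii), the metric properties are direct: symmetry and nonnegativity are obvious; $W_{2,\eta}(\mu,\nu) = 0$ implies $W_2(\mu^w,\nu^w)=0$ for $\eta$-a.e.\ $w$, hence $\mu^w = \nu^w$ and thus $\mu = \nu$ by reassembly. For the triangle inequality, I would apply (ii) to obtain optimal $\alpha^{12}_w \in \Gamma_o(\mu^w,\nu^w)$ and $\alpha^{23}_w \in \Gamma_o(\nu^w,\rho^w)$, glue them fiberwise via the classical gluing lemma on $(\R^d)^3$ to get $\gamma_w$, and then average against $\eta$; Minkowski in $L^2(\eta)$ combined with the fiberwise triangle inequality for $W_2$ yields $W_{2,\eta}(\mu,\rho) \leq W_{2,\eta}(\mu,\nu) + W_{2,\eta}(\nu,\rho)$. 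Completeness follows likewise from (i): a Cauchy sequence $(\mu_n)$ in $W_{2,\eta}$ is, by Fatou and a subsequence argument, pointwise Cauchy in $W_2$ for $\eta$-a.e.\ $w$; by completeness of $(\P_2(\R^d),W_2)$ one gets a limit $\mu^w$, and the candidate $\mu \coloneqq \mu^w \times_w \eta$ is verified to lie in $\P_\eta$ and to be the $W_{2,\eta}$-limit by dominated convergence applied to the fiberwise $W_2$-distances.

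The main obstacle is the measurable selection step in (ii): one must ensure that the map $w \mapsto \alpha_w$ can be chosen so that the glued $\alpha$ is a genuine probability measure on $(\R^m\times\R^d)^2$ and not merely a pointwise family. Everything else is a routine reduction to the classical $W_2$ theory via the factorization in (i) together with fiberwise gluing and Fubini.
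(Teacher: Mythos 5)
Your proposal is correct and follows essentially the same route as the proof the paper defers to in \cite{chemseddine2024conditional}: the constraint $\pi^{1,1}_\sharp\alpha=\Delta_\sharp\eta$ concentrates any admissible plan on the diagonal in $w$, so the cost factorizes fiberwise and gives the lower bound in (i), while the matching upper bound and (ii) come from a measurable selection of fiberwise optimal plans (Villani's measurable-selection corollary or Kuratowski--Ryll-Nardzewski, exactly as you indicate), after which the metric axioms and completeness reduce to the fiberwise $W_2$ theory via Minkowski in $L^2(\eta)$ and Fatou. The only cosmetic remark is that once (i) is in hand the triangle inequality follows directly from the pointwise triangle inequality for $W_2$ plus Minkowski, so the fiberwise gluing step you describe is not needed there.
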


Note that ii) means, for any $f \in C_b(\R^d)$, that
\begin{align}
\int\limits_{(\R^m \times \R^d)^2} f \, \dd \alpha
&= 
\int\limits_{\R^m} \int\limits_{\R^m \times \R^d \times \R^d} f(w_1,x_1,w_2,x_2) \,
\dd \delta_{w_{1}} (w_2) \dd \alpha_{w_1}(x_1,x_2)
\dd \eta(w_1)\\
&=\int\limits_{\R^m} \int\limits_{\R^d \times \R^d} f(w,x_1,w,x_2) \,
\dd \alpha_{w} (x_1,x_2)
\dd \eta(w).
\end{align}

\begin{figure}
\begin{minipage}{0.3\textwidth} % Zone pour le graphique
        \centering
\scalebox{0.7}{
 \begin{tikzpicture}[>={Stealth[length=8pt,angle'=28,round]}]
    \fill [blue] (-0.1,-0.1) rectangle (0.1,0.1);
    \node[below left=14pt of {(0,0)}, outer sep=2pt,fill=white] {$\delta_{0,0}$};
    \fill [red] (0,5) circle [radius=4pt];
    \node[above left=14pt of {(0,5)}, outer sep=2pt,fill=white] {$\delta_{0,n}$};
    \fill [red] (2,0) circle [radius=4pt];
    \node[below right=14pt of {(2,0)}, outer sep=2pt,fill=white] {$\delta_{1,0}$};
    \fill [blue] (2-0.1,5-0.1) rectangle (2+0.1,5+0.1);
    \node[above right=14pt of {(2,5)}, outer sep=2pt,fill=white] {$\delta_{1,n}$};
    % Connecting lines
    \draw [<-,green] (0.2,0) --(1.8,0);
    \draw [->,green] (0.2,5) -- (1.8,5);

    \draw [<-, brown] (0,0.2) -- (0,5);
    \draw [->, brown] (2,0) -- (2,4.8);
\end{tikzpicture}}
\end{minipage}
\begin{minipage}{0.7\textwidth}
\caption{Consider $\eta=\frac12\delta_{0}+\frac12\delta_1$, ${\color{red}\mu_n}=\frac12\delta_{0,n}+\frac12\delta_{1,0}$ and ${\color{blue}\nu_n}=\frac12\delta_{0,0}+\frac12\delta_{1,n}$. Then $W_2(\mu_n,\nu_n)=1$ is the length of {\color{green} $\longrightarrow$} and {\color{green} $\longrightarrow$} indicates the optimal transport map. Furthermore $W_{2,\eta}(\mu_n,\nu_n)=n$ is the length of {\color{brown} $\longrightarrow$} and {\color{brown} $\longrightarrow$} indicates the optimal transport for $W_{2,\eta}$.}\label{fig:counterexample}
\end{minipage}
\end{figure}

\begin{rem}
Proposition \ref{prop:cond_wass}$ii)$ indicates another use of the conditional Wasserstein distance. Namely if we have two measures $\mu_0,\mu_1\in\P_\eta$ and we have that $W_2(\mu_0,\mu_1)$ is small that does not necessarily mean that the posteriors are close. We can see in Figure \ref{fig:counterexample} that there exist $\eta\in \P_2(\R),\mu_n,\mu_n\in \P_{\eta}(\R\times\R)$ such that $W_2(\mu_n,\nu_n)=1$ and $\E_{w\sim \eta}[W_2^2(\mu_{0,w},\mu_{1,w})]=W_{2,\eta}(\mu_n,\nu_n)=n$. This means that $W_2$ is not a good measure in order to access if a generator $g$ is approximating the posteriors well. In contrast if we use $W_{2,\eta}$ we know that the posteriors are, at least in expectation, well approximated.
\end{rem}

As shown in \cite[Theorem 3,4]{kerrigan2024dynamic} and \cite{barboni2024understanding}, there is a counterpart of Theorem \ref{thm:abscont_ce} for the conditional Wasserstein distance.

\begin{thm}\label{thm:abscont_ce_c}
Let $\mu_t:I\to \P_{\eta}(\R^m\times \R^d)$ be a narrowly continuous curve. Then $\mu_t$ is absolutely continuous in with respect to $W_{2,\eta}$
in the sense of \eqref{eq:abs_cont},
if and only if there exists a measurable vector field $v:I\times \R^m\times\R^d \to \R^m\times\R^d$
such that the following three conditions are fulfilled:
\begin{itemize}     
  \item[\rm{i)}] $(v_t)_{j} = 0$ as element in $L^2(\R,\mu_t)$ for a.e. $t\in[0,1]$ and
    for all $j\leq m$,
\item[\rm{ii)}]  $\|v_t\|_{L^2(\R^m \times \R^d,\mu_t)}\in L^1(I)$, 
\item[\rm{iii)}]  $(\mu_t,v_t)$ fulfills \rm{\eqref{eq:ce}}.
\end{itemize}
In this case, the metric derivative $|\mu_t'|$ exists and we have 
$|\mu'_t|\leq \|v_t\|_{L^2(\mu_t,\R^m \times \R^d)}$ for a.e. $t\in I$. 
\end{thm}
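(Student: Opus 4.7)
The overall strategy is to reduce everything to the classical Theorem~\ref{thm:abscont_ce} applied on the fibers of the disintegration $\mu_t=\mu_t^w\times_w\eta$, via the fiberwise representation $W_{2,\eta}^2(\mu,\nu)=\E_{w\sim\eta}[W_2^2(\mu^w,\nu^w)]$ from Proposition~\ref{prop:cond_wass}(i). Condition (i) encodes precisely that a velocity field on $\R^m\times\R^d$ may not move mass across the first $m$ coordinates: pushing (CE) forward by $\pi^1$ would otherwise change the marginal $\pi^1_\sharp\mu_t=\eta$, which must stay constant in $t$ on $\P_\eta$.

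For the easy direction (existence of $v_t$ satisfying (i)--(iii) implies absolute continuity in $W_{2,\eta}$), write $v_t(w,x)=(0,v_t^w(x))$ using (i) and test (CE) against separable functions $\Phi(t,w,x)=\phi(w)\varphi(t,x)$ with $\phi\in C_c^\infty(\R^m)$ and $\varphi\in C_c^\infty((a,b)\times\R^d)$. A countable density argument on $\phi$ then shows that for $\eta$-a.e.\ $w$ the pair $(\mu_t^w,v_t^w)$ satisfies the classical (CE) on $\R^d$, while Fubini applied to (ii) gives $\|v_\cdot^w\|_{L^2(\R^d,\mu_\cdot^w)}\in L^1(I)$ for $\eta$-a.e.\ $w$. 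Theorem~\ref{thm:abscont_ce} thus yields the fiberwise bound $W_2(\mu_s^w,\mu_t^w)\leq\int_s^t\|v_r^w\|_{L^2(\R^d,\mu_r^w)}\,\dd r$; squaring, integrating against $\eta$, and applying Minkowski's integral inequality produces
\[
W_{2,\eta}(\mu_s,\mu_t)
= \Bigl(\int_{\R^m} W_2^2(\mu_s^w,\mu_t^w)\,\dd\eta(w)\Bigr)^{1/2}
\leq \int_s^t \|v_r\|_{L^2(\R^m\times\R^d,\mu_r)}\,\dd r,
\]
which is the required AC estimate and, by Lebesgue differentiation, gives $|\mu_t'|\leq\|v_t\|_{L^2(\R^m\times\R^d,\mu_t)}$.

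For the converse, suppose $\mu_t$ is absolutely continuous in $(\P_\eta,W_{2,\eta})$. Since $W_2\leq W_{2,\eta}$, Theorem~\ref{thm:abscont_ce} already supplies \emph{some} vector field on $\R^m\times\R^d$ satisfying (CE), but it is not necessarily of the form $(0,\tilde v)$. The plan is to show that for $\eta$-a.e.\ $w$ the disintegration $t\mapsto\mu_t^w$ is AC in $(\P_2(\R^d),W_2)$, apply Theorem~\ref{thm:abscont_ce} fiberwise to obtain a velocity $v_t^w$ of minimal $L^2(\R^d,\mu_t^w)$-norm (unique by Remark~\ref{rem:v_min}), and set $v_t(w,x):=(0,v_t^w(x))$; reversing the test-function argument of the easy direction then gives (i)--(iii). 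The main obstacle here is the joint Borel measurability of $(t,w,x)\mapsto v_t^w(x)$, since the fiberwise construction is only defined up to an $\eta$-null set of $w$ and a naive partition/variation argument on $W_{2,\eta}$ produces only sums of squares rather than sums. This is handled by combining the uniqueness from Remark~\ref{rem:v_min} with a Kuratowski--Ryll-Nardzewski measurable selection theorem applied to the multifunction $w\mapsto\{v\in L^2(\R^d,\mu_t^w):(\mu_\cdot^w,v)\text{ satisfies (CE) with minimal norm}\}$; see \cite{chemseddine2024conditional,kerrigan2024dynamic,barboni2024understanding} for the detailed execution.
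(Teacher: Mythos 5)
The paper does not prove this theorem: it is quoted verbatim from the literature (the text says ``As shown in \cite[Theorem 3,4]{kerrigan2024dynamic} and \cite{barboni2024understanding} \dots''), so there is no internal proof to compare against. Judged on its own terms, your forward direction is essentially right and is the standard argument: condition (i) forces $v_t=(0,v_t^w)$, testing \eqref{eq:ce} against products $\phi(w)\varphi(t,x)$ plus countable density yields the fiberwise continuity equation for $\eta$-a.e.\ $w$, and Proposition \ref{prop:cond_wass}(i) together with Minkowski's integral inequality converts the fiberwise estimates into the $W_{2,\eta}$ absolute-continuity bound. The only things you gloss over there are standard: one must fix a jointly measurable version of the disintegration $(t,w)\mapsto\mu_t^w$ (disintegrate $\mu_t\times_t\L_I$ with respect to $(t,w,x)\mapsto(t,w)$, exactly as in Proposition \ref{rem:v_t_alpha_x_mes}) and note that the fiber curves admit narrowly continuous representatives before invoking Theorem \ref{thm:abscont_ce}.

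The converse, however, has a genuine unresolved gap, and measurable selection does not close it. Your plan hinges on first establishing that $t\mapsto\mu_t^w$ is absolutely continuous in $(\P_2(\R^d),W_2)$ for $\eta$-a.e.\ $w$, but the hypothesis only gives $\E_{w\sim\eta}\big[W_2^2(\mu_s^w,\mu_t^w)\big]\le\big(\int_s^t m\big)^2$, an averaged bound on squares from which a pointwise-in-$w$ estimate $W_2(\mu_s^w,\mu_t^w)\le\int_s^t m^w$ with $m^w\in L^1(I)$ does not follow by any partition argument (you correctly notice the sums-of-squares obstruction, but the Kuratowski--Ryll-Nardzewski step you invoke only repairs measurability of an object whose fiberwise existence you have not yet secured). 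Note also that you cannot fall back on Theorem \ref{thm:abscont_ce} applied on $\R^m\times\R^d$ and then project: the minimal $W_2$-velocity field genuinely can have nonzero $w$-components (Figure \ref{fig:counterexample} shows $W_2$-optimal transport moving mass in the $w$-direction between elements of $\P_\eta$). The route taken in the cited works avoids fibers altogether: one repeats the compactness proof of \cite[Theorem 8.3.1]{AGS2008} on the product space using optimal plans $\alpha_h\in\Gamma_{\eta,o}(\mu_t,\mu_{t+h})$, which by definition are concentrated on $\{w_1=w_2\}$; the resulting discrete difference quotients have identically vanishing $w$-components, and this property survives the weak limit, yielding (i)--(iii) directly. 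I would either adopt that argument or supply a genuine proof of fiberwise absolute continuity before the selection step.
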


The following theorem ensures that the curve $\mu_t$ associated to a plan $\alpha\in \Gamma_{\eta}(\mu_0,\mu_1)$ fulfills $\mu_t\in \P_\eta(\R^m \times \R^d)$ for all $t\in[0,1]$ and the corresponding vector field does not move mass in the $w$ component.

\begin{thm}\label{prop:continuity}
Let $\mu_0,\mu_1\in \P_{\eta}(\R^m\times \R^d)$ and let $\alpha\in \Gamma_{\eta}(\mu_0,\mu_1)$. Let $(\mu_t,v_t)$ be the curve and measurable velocity field induced by $\alpha$, 
i.e.
\begin{equation} \label{meanth_c}
\mu_t = e_{t,\sharp} \alpha, 
\quad \text{and} \quad
v_t\mu_t=e_{t,\sharp}[(\u_2-\u_1)\alpha] \quad \text{for a.e. } t\in [0,1].
\end{equation} 
Then the following holds true:
\begin{itemize}
    \item[\rm{i)}] $\mu_t\in \P_{\eta}(\R^m\times\R^d)$ for all $t\in[0,1]$,
     \item[\rm{ii)}]  $(v_t)_{j} = 0$ as element in $L^2(\R,\mu_t)$ for a.e. $t\in[0,1]$ and
    for all $j\leq m$,
    \item[\rm{iii)}] 
   $(\mu_t,v_t)$ satisfy the continuity equation \eqref{eq:ce} and
    $$  
    \|v_t\|_{L^2(\R^m \times \R^d,\mu_t)}\leq \|y-x\|_{L^2\left((\R^m \times \R^d)^2,\alpha \right)} \quad \text{ for a.e. } t \in [0,1].
    $$
    In particular, we have $\|v_t\|_{L^2(\R^m \times \R^d,\mu_t)}\in L^1(I)$
    and $\mu_t$ is an absolutely continuous curve with respect to $W_{2,\eta}$.
    
\end{itemize}
\end{thm}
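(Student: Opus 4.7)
The plan is to reduce everything to the observation that the compatibility constraint $\pi^{1,1}_\sharp\alpha=\Delta_\sharp\eta$ forces $\alpha$ to be concentrated on the ``$w$-diagonal'' $\{(\u_1,\u_2)=((w_1,x_1),(w_2,x_2)):w_1=w_2\}$: since $\Delta_\sharp\eta$ sits inside $\{(w_1,w_2):w_1=w_2\}\subset\R^m\times\R^m$, the inverse image under $\pi^{1,1}$ has full $\alpha$-measure. From this single fact, (i) and (ii) will follow immediately, while (iii) will be a copy of Theorem \ref{prop:alpha_curve}(i) carried out in the ambient space $\R^m\times\R^d$.

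For (i), I would test $\pi^1_\sharp\mu_t$ against $\varphi\in C_b(\R^m)$. On the support of $\alpha$ we have $\pi^1(e_t(\u_1,\u_2))=(1-t)w_1+tw_2=w_1$ for every $t\in[0,1]$, so that
\begin{align}
\int_{\R^m}\varphi\,\dd\pi^1_\sharp\mu_t
=\int_{(\R^m\times\R^d)^2}\varphi\bigl(\pi^1(e_t(\u_1,\u_2))\bigr)\,\dd\alpha
=\int\varphi(w_1)\,\dd\alpha
=\int\varphi\,\dd\eta,
\end{align}
hence $\pi^1_\sharp\mu_t=\eta$ and $\mu_t\in\P_\eta$. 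For (ii), the defining identity $v_t\mu_t=e_{t,\sharp}[(\u_2-\u_1)\alpha]$ is an equation of $\R^{m+d}$-valued measures, so I would read off its $j$-th coordinate for $j\le m$. That coordinate equals $(w_2)_j-(w_1)_j$, which vanishes $\alpha$-almost everywhere by the observation above. Consequently $(v_t)_j\mu_t$ is the zero signed measure on $\R^m\times\R^d$, which forces $(v_t)_j=0$ as an element of $L^2(\R,\mu_t)$.

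For (iii), I would simply repeat the argument from Theorem \ref{prop:alpha_curve}(i): it uses only the structure of $e_t$ and the defining identity for $v_t$, so it goes through verbatim with $\R^d$ replaced by $\R^m\times\R^d$. This yields the continuity equation for $(\mu_t,v_t)$ and the bound
\begin{align}
\|v_t\|_{L^2(\R^m\times\R^d,\mu_t)}\le\|\u_2-\u_1\|_{L^2((\R^m\times\R^d)^2,\alpha)},
\end{align}
from which $\|v_t\|_{L^2}\in L^1([0,1])$ is immediate since $\alpha\in\P_2$. Together with the narrow continuity of $\mu_t=e_{t,\sharp}\alpha$ (Lemma \ref{thm:n_cont} applied in the ambient space) and items (i)--(ii), Theorem \ref{thm:abscont_ce_c} then delivers absolute continuity of $\mu_t$ with respect to $W_{2,\eta}$.

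There is no real obstacle; the only bookkeeping I would want to be careful about is the rigorous meaning of ``$\alpha$-a.e.\ $w_1=w_2$''. The cleanest way is to invoke Theorem \ref{prop:cond_wass}(ii) and write $\alpha=(\delta_w\times\alpha_w)\times_w\eta$ whenever the disintegration description is available, but it is not required: one can verify the claim directly by testing against tensor products $\varphi(w_1)\psi(w_2)$ with $\varphi,\psi\in C_b(\R^m)$ and using $\pi^{1,1}_\sharp\alpha=\Delta_\sharp\eta$.
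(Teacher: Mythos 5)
Your proof is correct. For part (iii) it follows the same route as the paper, which simply invokes Theorem \ref{prop:alpha_curve} (whose proof is dimension-agnostic) together with Lemma \ref{thm:n_cont} and Theorem \ref{thm:abscont_ce_c}. For parts (i) and (ii) the paper gives no argument at all and defers to \cite{chemseddine2024conditional}; you supply a clean self-contained one. Your key observation is exactly right and worth spelling out once: since $D^c=\{(w_1,w_2):w_1\neq w_2\}$ is Borel and $\Delta_\sharp\eta(D^c)=\eta(\Delta^{-1}(D^c))=\eta(\emptyset)=0$, the constraint $\pi^{1,1}_\sharp\alpha=\Delta_\sharp\eta$ gives $\alpha\bigl((\pi^{1,1})^{-1}(D^c)\bigr)=0$, i.e.\ $w_1=w_2$ holds $\alpha$-a.e. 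From this, $\pi^1(e_t(\u_1,\u_2))=w_1$ on a set of full $\alpha$-measure yields (i), and the coordinatewise reading of $v_t\mu_t=e_{t,\sharp}[(\u_2-\u_1)\alpha]$ yields $(v_t)_j\mu_t=0$, hence $(v_t)_j=0$ in $L^2(\R,\mu_t)$ for a.e.\ $t$ (the a.e.-in-$t$ restriction being inherited from \eqref{meanth_c}, as you correctly state). The only cosmetic point: your closing remark about testing against tensor products is unnecessary, since the measure-of-the-complement argument above already makes ``$\alpha$-a.e.\ $w_1=w_2$'' rigorous without any disintegration.
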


\begin{proof}
To proof $\rm{i)}$ and $\rm{ii)}$  follow from \cite[Lemma 5, Proposition 6]{chemseddine2024conditional}.  The first statements of $\rm{iii)}$ follow from Proposition \ref{prop:alpha_curve} and finally the absolutely continuity with respect to $W_{2,\eta}$ is implied by Theorem \ref{thm:abscont_ce_c}.
\end{proof}

\begin{rem}
Note that for an absolutely curve $\mu_t\in\P_{\eta}$ and a vector field $v_t\in L^2(\mu_t,\R^m\times\R^d)$ it is not necessarily true that $v_j=0$ for every $j\leq m$ since these components could encode $\eta$ preserving maps. However, Proposition \ref{prop:continuity} ensures that for a curve associated to a plan $\alpha\in \Gamma_\eta(\mu_0,\mu_1)$ this is the case and in particular for a map $\phi$ which is a solution of the ODE $\partial_t\phi_t=v_t(\phi_t);\phi_0=\Id$ we can conclude that $\pi^1(\phi_t(\u))=\pi^1(\u)$.
\end{rem}

The counterpart of Proposition \ref{prop:geodesic} reads as follows, see
\cite[Lemma 5]{chemseddine2024conditional}).

\begin{prop}\label{lem:geo_dis}
Let $\mu_0,\mu_1\in \P_{\eta}(\R^m\times \R^d)$ and
let $\alpha\in \Gamma_{\eta,o}(\mu_0,\mu_1)$.
Then the curve
$
\mu_t \coloneqq (e_t)_{\sharp}\alpha
$
is a geodesic in $(\P_{\eta}(\R^m \times \R^d), W_{2,\eta})$. In particular, $(P_\eta(\R^m \times \R^d),W_{2,\eta})$ is a geodesic space.
\end{prop}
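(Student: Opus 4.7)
The plan is to leverage the identity $W_{2,\eta}^2(\mu,\nu) = \E_{w \sim \eta}[W_2^2(\mu^w,\nu^w)]$ from Theorem \ref{prop:cond_wass}(i) together with the known geodesic property of standard optimal couplings (Proposition \ref{prop:geodesic}). Heuristically, a conditional geodesic should just be a fiberwise standard Wasserstein geodesic, and I would make this precise by disintegrating $\alpha$ over $\eta$ and propagating optimality from the fibers.

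First, I would argue that any $\alpha \in \Gamma_{\eta, o}(\mu_0, \mu_1)$ necessarily has the form $\alpha = (\delta_w \times \alpha_w) \times_w \eta$ with $\alpha_w \in \Gamma_o(\mu_0^w, \mu_1^w)$ for $\eta$-a.e.\ $w$. Since $\pi^{1,1}_\sharp \alpha = \Delta_\sharp \eta$ concentrates $\alpha$ on the set $\{w_1 = w_2\}$, a generic disintegration already takes the form $\alpha = (\delta_w \times \beta_w) \times_w \eta$ with $\beta_w \in \Gamma(\mu_0^w, \mu_1^w)$. On this set $\|\u_1 - \u_2\|^2 = \|x_1 - x_2\|^2$, so the $L^2$-cost of $\alpha$ equals $\E_{w \sim \eta}\int \|x_1 - x_2\|^2 \dd\beta_w$; combining optimality of $\alpha$ with the representation in Theorem \ref{prop:cond_wass}(i) forces $\beta_w$ to be $W_2$-optimal $\eta$-a.e., so I may rename $\beta_w = \alpha_w$.

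Next, I would compute the disintegration of $\mu_t := (e_t)_\sharp \alpha$ with respect to $\eta$. Because $e_t$ preserves the $w$-component on the diagonal $\{w_1 = w_2\}$ supporting $\alpha$, testing against $f \in C_b(\R^m \times \R^d)$ shows $\mu_t = \mu_t^w \times_w \eta$ with $\mu_t^w := (e_t^x)_\sharp \alpha_w$, where $e_t^x(x_1, x_2) := (1-t) x_1 + t x_2$ acts only on the $x$-coordinates. Since each $\alpha_w$ is $W_2$-optimal, Proposition \ref{prop:geodesic} applied fiberwise in $(\P_2(\R^d), W_2)$ yields that $t \mapsto \mu_t^w$ is a geodesic, i.e.\ $W_2(\mu_s^w, \mu_t^w) = (t-s)\, W_2(\mu_0^w, \mu_1^w)$ for all $0 \le s \le t \le 1$. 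Integrating via Theorem \ref{prop:cond_wass}(i) then gives
\begin{align}
W_{2,\eta}^2(\mu_s, \mu_t) &= \E_{w \sim \eta}[W_2^2(\mu_s^w, \mu_t^w)] = (t-s)^2\, \E_{w \sim \eta}[W_2^2(\mu_0^w, \mu_1^w)] \\
&= (t-s)^2\, W_{2,\eta}^2(\mu_0, \mu_1),
\end{align}
which is exactly the constant-speed geodesic identity. The ``in particular'' claim then follows because Theorem \ref{prop:cond_wass} guarantees $\Gamma_{\eta,o}(\mu_0, \mu_1) \neq \emptyset$ for any $\mu_0, \mu_1 \in \P_\eta(\R^m \times \R^d)$.

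The main obstacle is the first step: measurably selecting the fiberwise optimal plans $\alpha_w$ and legitimately exchanging the infimum over $\Gamma_\eta$ with the integral over $w$. This measurable-selection issue is precisely what Theorem \ref{prop:cond_wass} already encapsulates, and once that is invoked the remainder is a direct push-forward calculation plus an application of the standard Benamou--Brenier/geodesic result on $\P_2(\R^d)$.
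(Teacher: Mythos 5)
Your proof is correct. The paper itself gives no argument for this proposition, deferring entirely to the citation of \cite[Lemma 5]{chemseddine2024conditional}, and your fiberwise strategy --- disintegrate the optimal $4$-plan over the diagonal, deduce $\eta$-a.e.\ optimality of the fibers $\alpha_w$ from Theorem \ref{prop:cond_wass}(i), push forward fiberwise, apply Proposition \ref{prop:geodesic} on each fiber, and reassemble with Theorem \ref{prop:cond_wass}(i) --- is the natural and complete route. The only delicate points (that the disintegration $\beta_w$ really has marginals $\mu_0^w,\mu_1^w$, and the measurability needed to integrate $w\mapsto W_2^2(\mu_s^w,\mu_t^w)$) are standard consequences of the disintegration theorem and of the bound $W_2^2(\mu_s^w,\mu_t^w)\le (t-s)^2W_2^2(\mu_0^w,\mu_1^w)$, and you correctly flag that the genuine measurable-selection issue only arises in the converse construction already encapsulated in Theorem \ref{prop:cond_wass}(ii).
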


\begin{figure}
\begin{minipage}{0.4\textwidth}
\begin{subfigure}{0.45\textwidth}
\scalebox{0.6}{
    \begin{tikzpicture}[>={Stealth[length=8pt,angle'=28,round]}]
    \fill [blue] (-0.1,-0.1) rectangle (0.1,0.1);
    \node[below left=14pt of {(0.2,0.2)}, outer sep=2pt,fill=white] {$\delta_{0,0}$};
    \fill [red] (0,5) circle [radius=4pt];
    \node[above left=14pt of {(0.2,4.8)}, outer sep=2pt,fill=white] {$\delta_{0,5}$};
    \fill [green] (1,5) circle [radius=4pt];
    \fill [red] (2,0) circle [radius=4pt];
    \fill [green] (1,0) circle [radius=4pt];
    \node[below right=14pt of {(1.8,0.2)}, outer sep=2pt,fill=white] {$\delta_{1,0}$};
    \fill [blue] (2-0.1,5-0.1) rectangle (2+0.1,5+0.1);
    \node[above right=14pt of {(1.8,4.8)}, outer sep=2pt,fill=white] {$\delta_{1,5}$};
    % Connecting lines
    \draw [->, brown] (1.8,0) -- (0.2,0);
    \draw [->, brown] (0.2,5) -- (1.8,5);
    
    %\draw [dotted, black] (2,0) -- (2,5);
\end{tikzpicture}
} 
\caption{ }
\end{subfigure}
\begin{subfigure}{0.45\textwidth}
\scalebox{0.6}{
    \begin{tikzpicture}[>={Stealth[length=8pt,angle'=28,round]}]
    \fill [blue] (-0.1,-0.1) rectangle (0.1,0.1);
    \node[below left=14pt of {(0.2,0.2)}, outer sep=2pt,fill=white] {$\delta_{0,0}$};
    \fill [red] (0,5) circle [radius=4pt];
    \fill [green] (0,2.5) circle [radius=4pt];
    \node[above left=14pt of {(0.2,4.8)}, outer sep=2pt,fill=white] {$\delta_{0,5}$};
    \fill [red] (2,0) circle [radius=4pt];
    \fill [green] (2,2.5) circle [radius=4pt];
    \node[below right=14pt of {(1.8,0.2)}, outer sep=2pt,fill=white] {$\delta_{1,0}$};
    \fill [blue] (2-0.1,5-0.1) rectangle (2+0.1,5+0.1);
    \node[above right=14pt of {(1.8,4.8)}, outer sep=2pt,fill=white] {$\delta_{1,5}$};
    \draw [<-, brown] (2,4.8) -- (2,0.2);
    \draw [->, brown] (0,4.8) -- (0,0.2);
    % Connecting lines

\end{tikzpicture}
}
\caption{ }
\end{subfigure}
\end{minipage}
\begin{minipage}{0.6\textwidth}
\caption{Consider $\eta=\frac12\delta_{0}+\frac12\delta_1$, ${\color{red}\mu_0}=\frac12\delta_{0,5}+\frac12\delta_{1,0}$ and ${\color{blue}\mu_1}=\frac12\delta_{0,0}+\frac12\delta_{1,5}$. \rm{(a)} Geodesic with respect to $W_2$, green: {\color{green}$\mu_{\frac12}$}. \rm{(b)} Geodesic with respect to $W_{2,\eta}$, green: {\color{green}$\mu_{\frac12}$}. }
\label{fig:cond_geodesic}

\end{minipage}
\end{figure}
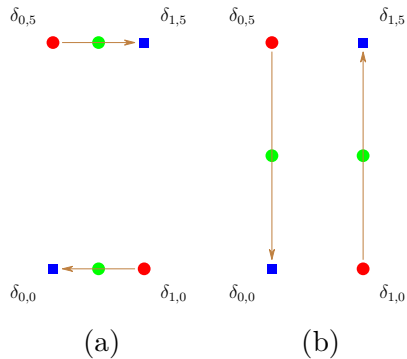
An illustration of the difference between geodesics with respect to $W_2$ and geodesics with respect to $W_{2,\eta}$ is given in Figure \ref{fig:cond_geodesic}.
%-------------------------------------------------------
\subsection{Almost Conditional Couplings}
%-------------------------------------------------------
One drawback of the space $\P_\eta(\R^m \times \R^d)$ is that we can in general not approximate $\mu\in\P_{\eta}(\R^m \times \R^d)$ by an empirical measure, if $\eta$ is not empirical. 
In other words, an empirical approximation 
$\mu_n = \frac{1}{n} \sum_{i=1}^n \delta_{(w_i,x_i)}$
of $\mu\in \P_{\eta}(\R^m \times \R^d)$ with respect to $W_2$ is usually not an element of $\P_{\eta}(\R^m \times \R^d)$. Even if we approximate 
two measures $\mu,\nu\in \P_{\eta}(\R^m \times \R^d)$ 
with respect to $W_2$ by empirical measures 
$\mu_n,\nu_n$ with $\pi^1_\sharp \mu_n=\pi^1_\sharp\nu_n$, it is not necessarily true that there is a sequence $\alpha_n\in \Gamma_{\pi^1_\sharp \mu_n,o}(\mu_n,\nu_n)$ such that $\alpha_n$ converges to an  $\alpha \in \Gamma_{\eta,o}(\mu,\nu)$ as \cite[Example 9]{chemseddine2024conditional} shows.

One way to overcome the above drawback is to relax the hard constraint 
$\pi^{1,1}\alpha=\Delta_{\sharp}\eta$ in the definition of the conditional Wasserstein distance.
To this end, we define
$$d_\beta(\u_1,\u_2) \coloneqq \|x_1-x_2\|^2+\beta\|w_1-w_2\|^2, \quad \beta > 0.$$
Then 
\begin{align}
W_{\beta}(\mu,\nu) \coloneqq \min_{\alpha\in\Gamma(\mu,\nu)}
\Big( \int\limits_{(\R^m \times \R^d)^2} d_\beta(\u_1,\u_2) \, \dd\alpha \Big)^\frac12
\end{align}
defines a metric  on $\P_2(\R^m \times \R^d)$.
It turns out that for $0 \ll \beta $ the optimal couplings $\alpha$ in this metric are ''close'' to fulfilling $\pi^{1,1}\alpha = \Delta \eta$ in the following sense, see \cite[Proposition 10]{chemseddine2024conditional}. 

\begin{prop}\label{prop_beta}
Let $\mu_0,\mu_1\in \P_{\eta}(\R^{m} \times \R^d)$ and let $(\alpha_\beta)_\beta$ be a sequence of optimal transport plans from $\mu_0$ to $\mu_1$ with respect to $W_{\beta}$.
Then we have 
\[
\int\limits_{\R^m\times \R^m} \|w_1-w_2\|^2 \, \dd{\pi^{1,1}}_{\sharp}\alpha_{\beta} 
=
\int\limits_{(\R^m\times \R^d)^2} \|w_1-w_2\|^2 \, \dd \alpha_\beta
\rightarrow 0
\quad \text{as} \quad
\beta\to\infty.
\]
\end{prop}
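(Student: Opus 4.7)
The plan is to exploit Theorem \ref{prop:cond_wass}(ii) to produce a competitor plan whose cost is easy to control, and then use the optimality of $\alpha_\beta$ to get a bound that forces the $w$-cost to vanish as $\beta \to \infty$.

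More precisely, by Theorem \ref{prop:cond_wass}(ii) there exists $\tilde\alpha \in \Gamma_{\eta,o}(\mu_0,\mu_1) \subset \Gamma(\mu_0,\mu_1)$ satisfying $\pi^{1,1}_\sharp \tilde\alpha = \Delta_\sharp \eta$. Since $\Delta_\sharp \eta$ is supported on the diagonal $\{(w,w) : w \in \R^m\}$, we have $w_1 = w_2$ for $\tilde\alpha$-a.e. $(\u_1,\u_2)$, so
\begin{align}
\int_{(\R^m\times\R^d)^2} d_\beta(\u_1,\u_2) \, \dd\tilde\alpha
= \int_{(\R^m\times\R^d)^2} \|x_1-x_2\|^2 \, \dd\tilde\alpha
= W_{2,\eta}^2(\mu_0,\mu_1),
\end{align}
independently of $\beta$.

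Next I would invoke the optimality of $\alpha_\beta$ with respect to $W_\beta$, which gives
\begin{align}
\int \|x_1-x_2\|^2 \, \dd\alpha_\beta + \beta \int \|w_1-w_2\|^2 \, \dd\alpha_\beta
\leq W_{2,\eta}^2(\mu_0,\mu_1).
\end{align}
Dropping the nonnegative $x$-term and dividing by $\beta$ yields
\begin{align}
\int_{(\R^m\times\R^d)^2} \|w_1-w_2\|^2 \, \dd\alpha_\beta
\leq \frac{W_{2,\eta}^2(\mu_0,\mu_1)}{\beta} \xrightarrow{\beta\to\infty} 0,
\end{align}
which is the second integral in the claim. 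The first equality in the statement is immediate from the definition of the push-forward, since $(w_1,w_2) = \pi^{1,1}(\u_1,\u_2)$ and the integrand depends only on $\pi^{1,1}$.

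There is no serious obstacle here: the only subtlety is verifying that $W_{2,\eta}(\mu_0,\mu_1) < \infty$, which follows from $\mu_0,\mu_1 \in \P_\eta(\R^m\times\R^d) \subset \P_2(\R^m\times\R^d)$ and Theorem \ref{prop:cond_wass}(i). Everything else is a one-line comparison argument between $\alpha_\beta$ and the conditionally optimal plan $\tilde\alpha$, exploiting that $\tilde\alpha$ pays zero in the $w$-direction so that its $W_\beta$-cost is uniformly bounded in $\beta$.
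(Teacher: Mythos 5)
Your proof is correct and is essentially the standard comparison argument that the paper's cited source (\cite[Proposition 10]{chemseddine2024conditional}) uses: compare the optimal $W_\beta$-cost of $\alpha_\beta$ against a competitor $\tilde\alpha\in\Gamma_\eta(\mu_0,\mu_1)$, which pays nothing in the $w$-direction, and divide by $\beta$. All the supporting details you flag (existence and finite cost of $\tilde\alpha$, the push-forward identity) check out.
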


The distance $W_{\beta}$ successfully addresses the issue of approximating measures by empirical measures as the following proposition from \cite[Proposition 12]{chemseddine2024conditional} shows.

\begin{prop}\label{prop:ex_fix}
Let $\mu,\nu\in P_{\eta}(\R^m\times \R^d)$ and let $\mu_n,\nu_n$ be empirical measures which  converge narrowly to $\mu,\nu$. 
Then, for a sequence $\beta_k \to\infty$, there exists an increasing subsequence $n_k$ and  optimal plans $\alpha_{n_k} \in \Gamma(\mu_{n_k},\nu_{n_k})$ for $W_{\beta_k}(\mu_{n_k},\nu_{n_k})$ such that $\alpha_{n_k}$ converges narrowly to an optimal plan $\alpha\in\Gamma_{\eta,o}(\mu,\nu)$.
\end{prop}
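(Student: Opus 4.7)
The plan is to run a compactness-and-selection argument combined with the penalty principle of Proposition \ref{prop_beta}. I will build a narrowly convergent subsequence of the optimal $W_{\beta_k}$-plans $\alpha_{n_k}$, use a gluing-based competitor to force the narrow limit to live on the diagonal $\{w_1=w_2\}$, and finally compare with an arbitrary element of $\Gamma_{\eta,o}(\mu,\nu)$ via a second, more delicate competitor to conclude optimality.

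First I would select $n_k$ strictly increasing so that $W_2^2(\pi^1_\sharp\mu_{n_k},\pi^1_\sharp\nu_{n_k})\le 1/(k\beta_k)$. This uses the (implicit) $W_2$-convergence $\pi^1_\sharp\mu_n,\pi^1_\sharp\nu_n\to\eta$, the natural strengthening of narrow convergence for empirical approximations of a measure in $\mathcal P_2(\R^m)$ (arising e.g.\ from the strong law of large numbers in Wasserstein distance). For each such $k$, pick $\alpha_{n_k}\in\Gamma(\mu_{n_k},\nu_{n_k})$ optimal for $W_{\beta_k}$. Since $(\mu_{n_k})$ and $(\nu_{n_k})$ are tight, so is $(\alpha_{n_k})$, and Prokhorov yields (along a sub-subsequence I keep denoting by $k$) a narrow limit $\alpha$. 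Pushing forward under $\pi^i$, $i=1,2$, identifies the marginals and gives $\alpha\in\Gamma(\mu,\nu)$.

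Next, to promote $\alpha$ to $\Gamma_\eta(\mu,\nu)$, I build the competitor
\[
\tilde\alpha_{n_k}(\dd w_1,\dd x_1,\dd w_2,\dd x_2) = \mu_{n_k}^{w_1}(\dd x_1)\,\nu_{n_k}^{w_2}(\dd x_2)\,\gamma_k(\dd w_1,\dd w_2),
\]
with $\gamma_k\in\Gamma_o(\pi^1_\sharp\mu_{n_k},\pi^1_\sharp\nu_{n_k})$. This lies in $\Gamma(\mu_{n_k},\nu_{n_k})$, satisfies $\int\|w_1-w_2\|^2\dd\tilde\alpha_{n_k}=W_2^2(\pi^1_\sharp\mu_{n_k},\pi^1_\sharp\nu_{n_k})\le 1/(k\beta_k)$, and leaves $\int\|x_1-x_2\|^2\dd\tilde\alpha_{n_k}$ bounded in $k$ by the uniform second-moment bound on $\mu_{n_k},\nu_{n_k}$. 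The $W_{\beta_k}$-optimality of $\alpha_{n_k}$ then gives $\beta_k\int\|w_1-w_2\|^2\dd\alpha_{n_k}\le\int\|x_1-x_2\|^2\dd\tilde\alpha_{n_k}+1/k$, so $\int\|w_1-w_2\|^2\dd\alpha_{n_k}\to 0$. Since $\|w_1-w_2\|^2$ is nonnegative and lower semicontinuous, the Portmanteau theorem yields $\int\|w_1-w_2\|^2\dd\alpha=0$, hence $w_1=w_2$ $\alpha$-a.e.; combined with $\pi^1_\sharp\alpha=\mu$, this gives $\pi^{1,1}_\sharp\alpha=\Delta_\sharp\eta$, i.e.\ $\alpha\in\Gamma_\eta(\mu,\nu)$.

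Finally, to upgrade $\alpha$ to $\Gamma_{\eta,o}(\mu,\nu)$, I fix any $\alpha'\in\Gamma_{\eta,o}(\mu,\nu)$ and aim to build a second competitor $\alpha'_{n_k}\in\Gamma(\mu_{n_k},\nu_{n_k})$ with $\alpha'_{n_k}\weakly\alpha'$, with $\int\|x_1-x_2\|^2\dd\alpha'_{n_k}\to\int\|x_1-x_2\|^2\dd\alpha'$, and with $\beta_k\int\|w_1-w_2\|^2\dd\alpha'_{n_k}\to 0$. Given such an $\alpha'_{n_k}$, lower semicontinuity of $\alpha\mapsto\int\|x_1-x_2\|^2\dd\alpha$ together with the $W_{\beta_k}$-optimality of $\alpha_{n_k}$ yields
\[
\int\|x_1-x_2\|^2\dd\alpha \le \liminf_k\int\|x_1-x_2\|^2\dd\alpha_{n_k} \le \lim_k\int\|x_1-x_2\|^2\dd\alpha'_{n_k} = \int\|x_1-x_2\|^2\dd\alpha',
\]
and since $\alpha'$ was arbitrary in $\Gamma_{\eta,o}(\mu,\nu)$, $\alpha$ is optimal. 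Constructing $\alpha'_{n_k}$ is the main obstacle: using the fiberwise decomposition $\alpha'=(\delta_w\otimes\alpha'_w)\times_w\eta$ from Theorem \ref{prop:cond_wass}(ii), the candidate is obtained by a multi-marginal gluing of $\alpha'$ with optimal transports $\mu\to\mu_{n_k}$ and $\nu\to\nu_{n_k}$, rerouting mass originally placed on fiber $w$ through the nearby $\mu_{n_k}$- and $\nu_{n_k}$-fibers $w_1,w_2$. Controlling the $w$-error and the $x$-cost of this rerouting simultaneously is the technical heart of the argument, and depends on the same quantitative $W_2$-rate on first marginals already exploited in the subsequence extraction.
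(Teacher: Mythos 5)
Your overall strategy — compactness of the penalized optimal plans, a cheap competitor to force the narrow limit onto the diagonal $\{w_1=w_2\}$, and a recovery sequence to compare with an optimal conditional plan — is the right one (the paper itself defers the proof to \cite[Proposition 12]{chemseddine2024conditional}, which argues along these lines). Steps 1--3 are essentially correct, granting your (reasonable, and explicitly flagged) strengthening of the hypothesis from narrow to $W_2$-convergence, which is needed anyway for the second-moment bounds you use.

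However, there is a genuine gap at exactly the point you label ``the technical heart'': the recovery sequence $\alpha'_{n_k}$ is never constructed, and the subsequence you extract in Step 1 is not the one Step 4 needs. In Step 1 you only arrange $\beta_k W_2^2(\pi^1_\sharp\mu_{n_k},\pi^1_\sharp\nu_{n_k})\to 0$, which suffices for the diagonal-support argument but gives no control on how far $\mu_{n_k}$ is from $\mu$ (or $\nu_{n_k}$ from $\nu$) relative to $\beta_k$; any competitor built by transporting $\alpha'$ onto the empirical marginals incurs a $w$-displacement of order $W_2(\mu_{n_k},\mu)+W_2(\nu_{n_k},\nu)$, which after multiplication by $\beta_k$ need not vanish for the $n_k$ you chose. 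The fix is to strengthen the selection to $W_2^2(\mu_{n_k},\mu)+W_2^2(\nu_{n_k},\nu)\le 1/(k\beta_k)$ (this subsumes your Step 1 condition, since $W_2(\pi^1_\sharp\mu_n,\pi^1_\sharp\nu_n)\le W_2(\mu_n,\mu)+W_2(\nu_n,\nu)$, both first marginals of $\mu,\nu$ being $\eta$). With that selection the competitor is much simpler than the fiberwise rerouting you sketch: glue $\sigma_k\in\Gamma_o(\mu_{n_k},\mu)$, $\alpha'$, and $\tau_k\in\Gamma_o(\nu,\nu_{n_k})$ and project onto the outer coordinates. Minkowski's inequality in $L^2$ of the glued measure gives $\bigl(\int\|x_1-x_2\|^2\,\dd\alpha'_{n_k}\bigr)^{1/2}\le W_2(\mu_{n_k},\mu)+\bigl(\int\|x_1-x_2\|^2\,\dd\alpha'\bigr)^{1/2}+W_2(\nu,\nu_{n_k})$, while the $w$-cost satisfies $\beta_k\int\|w_1-w_2\|^2\,\dd\alpha'_{n_k}\le 2\beta_k\bigl(W_2^2(\mu_{n_k},\mu)+W_2^2(\nu_{n_k},\nu)\bigr)\le 2/k\to 0$ because $\alpha'$ moves no mass in $w$. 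Then $\liminf_k\int\|x_1-x_2\|^2\,\dd\alpha_{n_k}\le\limsup_k W_{\beta_k}^2(\mu_{n_k},\nu_{n_k})\le\int\|x_1-x_2\|^2\,\dd\alpha'$, and lower semicontinuity of the cost under narrow convergence closes the argument. Until this competitor is actually written down, your proof establishes only that the limit lies in $\Gamma_\eta(\mu,\nu)$, not that it is optimal.
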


\begin{rem} For $\beta>0$ let $a_\beta:\R^m\times\R^d\to \R^m\times \R^d $ be defined by $a_\beta(w,x)=(\sqrt{\beta} w,x)$. Then an easy calculation shows that $W_\beta(\mu,\nu)=W_2(a_{\beta,\sharp}\mu,a_{\beta,\sharp}\nu)$ and there is a one-to-one correspondence between optimal plans for $W_\beta(\mu,\nu)$ and $W_2(a_{\beta,\sharp}\mu,a_{\beta,\sharp}\nu)$ given by $\alpha \mapsto (a_\beta,a_\beta)_\sharp\alpha$. Thus $W_\beta$ and optimal plans can be computed by standard $W_2$ solvers.
\hfill $\diamond$
\end{rem}

%-----------------------------------------------
\subsection{Bayesian Flow Matching}
%-----------------------------------------------
Since $\Gamma_\eta(\mu_0,\mu_1)\subseteq\Gamma(\mu_0,\mu_1)$ we can learn the vector field $v_t$ corresponding to $\alpha\in \Gamma_\eta(\mu_0,\mu_1)$ with the extended flow matching objective \eqref{eq:markov_loss}, i.e.,
\begin{align}
\CFM(\theta):=\E_{t\sim \mathcal L[0,1],(w,x_0,w,x_1)\sim\alpha}\left[\|v^\theta(t,e_t((w,x_0,w,x_1))-(0,x_1-x_0)\|^2\right].
\end{align}
Since $\pi^{1,1}_\sharp \alpha=\Delta_\sharp \eta$,
we only have to consider samples $(w,x_0,w,x_1)$ from $ \alpha$.
Further, since  $v_j=0$ for $j\leq m$,
we can parametrize $v_t^\theta$ as function with values in $\R^d$ instead of $\R^m\times \R^d$. 
In this way, we have for a solution of 
$$\partial_t \phi_t(x)=v_t(\phi_t(x)), \quad \phi_0 (x) =x$$ automatically that 
$\pi^1_\sharp (\phi_{t,\sharp}\mu_0)=\eta$.

\begin{rem}\label{rem:cond_plans}
In order to get absolutely continuous curves from plans, we have used i) direct product of measures
or ii) optimal transport plans in the previous sections.
Let us see how similar settings look in the Bayesian case.
\\
i) Unfortunately, for $\mu_0,\mu_1 \in \mathcal P_\eta(\R^m \times \R^d)$ it does not hold that $\mu_0 \times \mu_1 \in \Gamma_\eta(\mu_0,\mu_1)$.
A remedy is to use $\mu_0 = \eta \times \nu \in \P_2(\R^m\times \R^d)$ with some $\nu \in \P_d(\R^d)$ and the plan
$$
\alpha \coloneqq \tilde \Delta_\sharp(\nu\times \mu_1)\in \Gamma_\eta(\mu_0,\mu_1),
$$
where
$$
\tilde \Delta:\R^d\times \R^m\times \R^d\to (\R^m\times \R^d)^2, \quad \tilde \Delta (x_0,w_1,x_1) \coloneqq (w_1,x_0,w_1,x_1).
$$
In order to sample from $\alpha$ we only need to be able to sample from $\nu$ and $\mu_1$. 
\\
ii) We can use Proposition \ref{prop:cond_wass} ii) and solve the optimal transport problem  for fixed samples $w$ from $\eta$,  to obtain an optimal coupling $\alpha\in \Gamma_{\eta,o}(\mu_0,\mu_1)$. In order to ensure that the $\alpha_w\in \Gamma_o(\mu_0^w,\mu_1^w)$ can be used to build $\alpha\in \Gamma_{\eta,o}(\mu_0,\mu_1)$,  some measurability conditions must  be ensured. For examples where these conditions are fulfilled, see e.g. \cite[Corollary 1.2]{gonzalez2024linearization} and \cite[Proposition 8]{chemseddine2024conditional}.\\
Of course we can also choose $\beta\gg0$ and compute an optimal transport plan $\alpha$ for $W_{\beta}(\mu_0,\mu_1)$. We then use the loss 
\begin{align}
\CFM(\theta):=\E_{t\sim \mathcal L[0,1],(w_0,x_0,w_1,x_1)\sim\alpha}\left[\|v^\theta(t,e_t((w_0,x_0,w_1,x_1))-(0,x_1-x_0)\|^2\right].
\end{align}
Although this is not exactly the loss for the plan $\alpha$ and  we only have approximately $w_0\cong w_1$, in practice this leads to good results, see alsonext section. Furthermore, we use minibatch OT described in Algorithm \ref{alg:minibatch_ot} to approximately minimize $\CFM$. We will call this strategy minibatch OT Bayesian flow matching with respect to $W_{\beta}$.
\end{rem}

%-----------------------------------------------
\subsection{Numerical Examples of Bayesian Flow Matching}
%-----------------------------------------------
In order to show the feasibility of Bayesian flow matching, we provide two examples. 

\paragraph{1. Conditional image generation.}
We used the dataset Cifar10 \cite{krizhevsky2009learning},    which consists of 10 classes of color images of size $32\times32\times3$. Here $\eta=\P_2(\R^ {10})$ is defined by $\eta=\frac{1}{10}\sum_{i=1}^{10}\delta_{e_i}$, where $e_i$ is the standard basis of $\R^10$. Then samples from $\mu_1$ are of the form $(e_{l(x_i)},x_i)$, where $l(x_i)$ is the class of $x_i$. The source measure is $\mu_0\coloneqq \eta\times\N(0,1)$. 
For conditional image generation, we used trained flow matching models from \cite{chemseddine2024conditional}. 
For training with respect to $W_{2,\beta}$ 
the minibatch OT training algorithm \ref{alg:minibatch_ot} was used for Figure \ref{fig:bayesian_flow}[(a),(b)]. 
For the plan $\alpha\in\Gamma_\eta(\mu_0,\mu_1)$ in Figure \ref{fig:bayesian_flow}[(c)], we applied the coupling from Remark \ref{rem:cond_plans} and for training Algorithm \ref{alg:flow_product}. 

Figure \ref{fig:bayesian_flow} shows generated images, where in row $i$ the samples are generated from $(e_i,z)$ where $10$ different samples $z\sim \N(0,1)$ are drawn. For $\beta=1$, Figure \ref{fig:bayesian_flow}[(a)] illustrates that the generation is not class conditional, i.e. in every row there are images of multiple classes. 
For $\beta=100$ and for the conditional plan from Remark \ref{rem:cond_plans},
we can see in Figure \ref{fig:bayesian_flow} that the generation is class conditional, meaning that every row only contains samples from one class.

\begin{figure}[ht]
\centering
\begin{subfigure}{0.325\linewidth}
    \includegraphics[width=\linewidth]{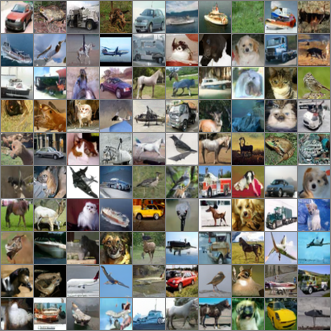}
    \caption{$\alpha$ for $W_{2,1}(\mu_0,\mu_1)$}
\end{subfigure}
\begin{subfigure}{0.325\linewidth}
\includegraphics[width=\linewidth]{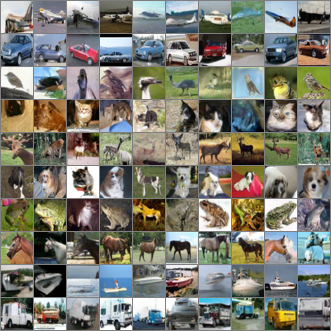}
\caption{$\alpha$ for $W_{2,100}(\mu_0,\mu_1)$}
\end{subfigure}
\begin{subfigure}{0.325\linewidth}
\includegraphics[width=\linewidth]{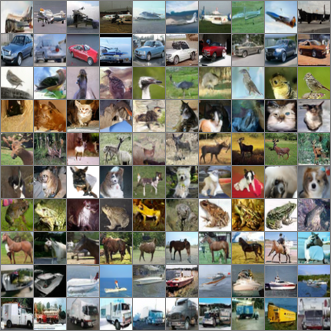}
\caption{$\alpha\in \Gamma_{\eta}(\mu_0,\mu_1)$}
\end{subfigure}
\caption{(Bayesian) Flow matching on Cifar10.} 
\label{fig:bayesian_flow}
\end{figure}

\paragraph{Simple Bayesian inverse problem.}
The following example can be found in \cite[Section 8.2]{chemseddine2024conditional}. 
Let $X \in \R^5$ be a random variable for which the law $P_X$ is a Gaussian mixture model with 10 modes with means uniformly chosen in $[-1,1]^5$ and standard derivations $0.1$. 
We consider the Bayesian inverse problem 
$$
Y=AX+\Xi \quad \text{with}  \quad A \coloneqq \big(\frac{0.1}{i}\delta_{i,j} \big)_{i,j=1}^5.
$$
The noise variable $\Xi$, which is independent from $X$, is distributed as $\N(0,0.1)$. Then by \cite[Lemma 11]{HHS22} the posterior distribution $P_{X|Y=y}$ is a Gaussian mixture model which can be analytically computed. This allows us to sample from $P_{X|Y=y}$. \\
\emph{Training Bayesian flow matching with $\alpha$ from Remark \ref{rem:cond_plans}.} Here we sample from $\alpha\in \Gamma(P_Y\times \N(0,I_5),P_{Y,X})$ by sampling $x_i\sim P_X$, computing $y_i\coloneqq f(x_i)+\xi_i\sim P_Y$ for $\xi_i\sim \N(0,0.1\Id)$ and $z_i\sim \N(0,\Id)$. We then use $(y_i,z_i,y_i,x_i)$ in order to approximate $\alpha$ and to compute $\CFM(\theta)$. Batching in $i$ and $t$ is done as in Algorithm \ref{alg:flow_product}.\\
\emph{Training minibatch OT Bayesian flow matching with respect to $W_{2,100}$.} Here we use Algorithm \ref{alg:minibatch_ot} for $\mu_0=P_Y\times \N(0,\Id)$ and $\mu_1=P_{Y,X}$. In order to be able to sample from $\mu_0$ for a minibatch $(y_i,x_i)_{i=1}^{N_{bOT}}\sim P_{Y,X}$, where $N_{bOT}$ is the minibatch size, we sample $z_i\sim \N(0,\Id)$ and use $(y_i,z_i)_{i=1}^{N_{bOT}}\sim \mu_0$.

The results are depicted in Figure \ref{fig:inverse_problem}. To create this figure, we draw $x\sim P_X$ and computed an observation $y=Ax+\xi\sim P_Y$. Then we draw samples $x^{y,n},\, 1\leq n\leq 1000$ from $P_{X|Y=y}$. The orange colored histograms on the diagonal are the one dimensional histograms of the projection onto the $i-th$ component $\langle e_i,x^{y,n}\rangle$. For $i\neq j$ the orange colored two dimensional histograms of $(\langle e_i,x^{y,n}\rangle, \langle e_j,x^{y,n}\rangle)$ are shown in position $(i,j)$. The blue histograms are the output obtained by the flow matching procedures, i.e. instead of $x^{y,n}$, we use $\phi_1(y,z^n)$ for $z^n\sim \N(0,I_5)$, $1\leq n\leq 1000$, where $\phi_t$ solves $\partial_t \phi_t = u^\theta_t(\phi_t),\, \phi_0(y,z)=(y,z)$.

\begin{figure}[h!]
    \centering
    \begin{subfigure}{0.49\linewidth}
            \includegraphics[width=\linewidth]{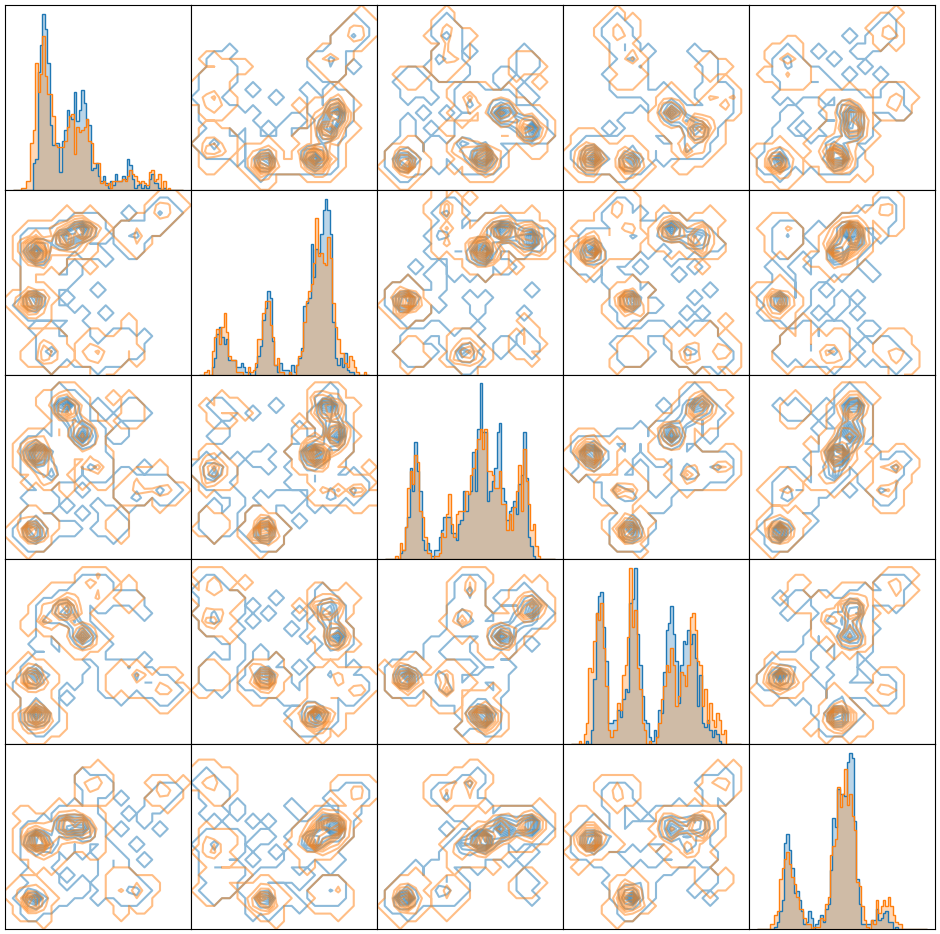}
    \caption{minibatch OT flow matching for $W_{2,100}$}
    \label{fig:enter-label}
    \end{subfigure}
    \begin{subfigure}{0.49\linewidth}
        \includegraphics[width=\linewidth]{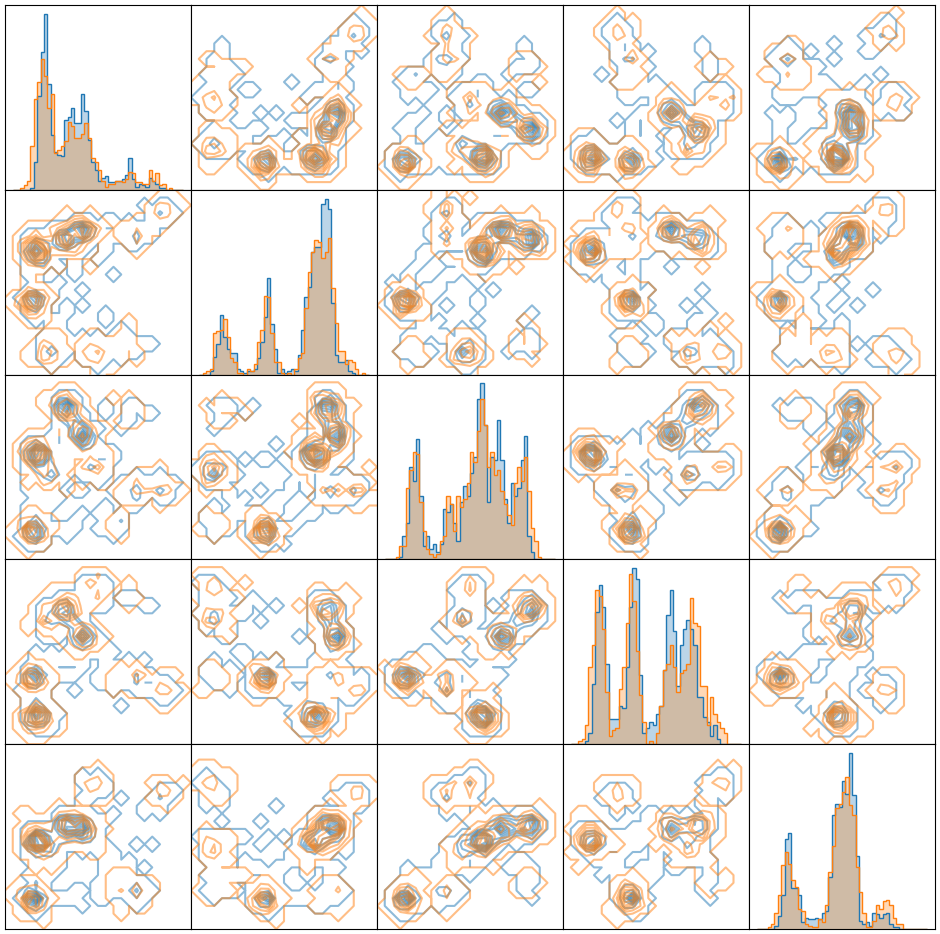}
        \caption{$\alpha$ from Remark \ref{rem:cond_plans}}
    \end{subfigure}
    \caption{Bayesian flow matching for an inverse problem.}
    \label{fig:inverse_problem}
\end{figure}

%------------------------------------------------------------
\section{Continuous Normalizing Flows} \label{sec:CNF}
%------------------------------------------------------------
Flow matching is related to continuous normalizing flows,
which historically were introduced earlier, but appears to be more complicated
when learned with a likelihood loss.
To provide a more circumvent view, we add this section.
A remark on (noncontinuous) normalizing flows is given in 
Appendix \ref{appc}.

Continuous normalizing flows aim to find a vector field $v_t$ such that the solution $\psi:[0,1]\times \R^d\to \R^d$ of 
\begin{equation} \label{eq:ode_pure}
\partial_t\psi(t,x)=v_t(\psi(t,x)), \quad \psi(0,x)=x,
\end{equation}
 satisfies $\psi(1,\cdot)_\sharp \mu_0=\mu_1$. More precisely,
for every $t \in [0,1]$, the solution $\psi(t, \cdot): \R^d \to \R^d$ has to be a diffeomorphism. 
In contrast to flow matching, continuous normalizing flow techniques
work without previously determining the curve via plans, Markov kernels or a
stochastic process, but approximate the vector field by a neural network using an appropriate loss function, namely the log-likelihood one.

Therefore, we first derive  general properties of vector fields and associated curves  to make the process invertible in the next Subsection \ref{subsec:flow}. In other words, we will take care about the time reverse ODE. 
This will facilitate the finding of an appropriate loss function in Subsection \ref{sec:loss}. 
In Subsection \ref{subsec:adj}, we deal with the minimization of the loss
by computing gradients with the adjoint method.

In this section, we will exclusively work with absolutely continuous measures and switch from the notation of measures to those of densities. 
In particular, we will write $p_t\coloneqq \psi(t,\cdot)_\sharp \rho_0$ for the density $p_t$ of $\psi(t,\cdot)_\sharp\mu_0$, if it exists. 

%------------------------------------------------------------
\subsection{Curves of Probability Measures from ODEs} \label{subsec:flow}
%------------------------------------------------------------
We start with classical results from the theory of ODEs,  
for existence and regularity see, e.g., \cite[Corollary 2.6, Theorem 2.10]{teschl2024ordinary}.

\begin{thm}\label{thm:ode_reg}
Assume that $v_t\in C^l([0,1]\times\R^d,\R^d)$,
$l\geq 1$ fulfills a global Lipschitz condition in the second variable, i.e., there exists  $K>0$ such that 
\begin{align}
    \|v_t(x)-v_t(y)\|\leq K\|x-y\| \quad \text{for all }x,y\in\R^d,t\in[0,1].
\end{align}
Then, for any fixed $t_0\in [0,1]$ and $x_0\in\R^d$, 
the initial value problem $f(t_0)=x_0$ and
\begin{equation} \label{ode_init} 
f'(t)=v_t(f(t)),
\end{equation}
admits a unique global solution $f:[0,1]\to\R^d$. 
Furthermore, if we define 
$\boldsymbol{\psi}(t,s,x)\coloneqq f(t)$ for the solution $f$
of \eqref{ode_init} with initial condition $f(s)  = x$, then it holds $\boldsymbol{\psi} \in C^l([0,1]\times[0,1]\times\R^d,\R^d)$. 
\end{thm}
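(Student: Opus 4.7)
The plan is to split the proof into three parts: (i) global existence and uniqueness of the flow for arbitrary initial data $(s,x)$, (ii) joint continuity of $\boldsymbol{\psi}$ in $(t,s,x)$, and (iii) the $C^l$ regularity claim.

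For (i), I would rewrite the initial value problem in integral form
\[
f(t) = x + \int_s^t v_r(f(r)) \, \dd r
\]
and apply the Banach fixed-point theorem to the associated Picard operator on $C([0,1], \R^d)$ equipped with the Bielecki norm $\|f\|_K \coloneqq \sup_{t \in [0,1]} \mathrm{e}^{-2K t}\|f(t)\|$. The global Lipschitz assumption makes this operator a strict contraction on the whole interval $[0,1]$, bypassing any blow-up argument, and hence defines $\boldsymbol{\psi}(t,s,x)$ for all $(t,s,x) \in [0,1]\times[0,1]\times\R^d$. The construction also yields the semigroup identity $\boldsymbol{\psi}(t,s,x) = \boldsymbol{\psi}(t,r,\boldsymbol{\psi}(r,s,x))$, which will be essential later.

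For (ii), the key estimate is Grönwall applied to the integral equation: from the global Lipschitz bound,
\[
\|\boldsymbol{\psi}(t,s,x) - \boldsymbol{\psi}(t,s,y)\| \le \|x-y\| + K\Big|\int_s^t \|\boldsymbol{\psi}(r,s,x)-\boldsymbol{\psi}(r,s,y)\| \, \dd r \Big|,
\]
so $\|\boldsymbol{\psi}(t,s,x)-\boldsymbol{\psi}(t,s,y)\| \le \mathrm{e}^{K|t-s|}\|x-y\|$. Combined with the fact that $t \mapsto \boldsymbol{\psi}(t,s,x)$ is Lipschitz (since $v_t$ is bounded on compacts) and the semigroup identity (to move the base-point $s$), this yields joint continuity of $\boldsymbol{\psi}$ on the product domain.

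For (iii), I would argue by induction on $l$. Formally differentiating the integral equation in $x$ produces the variational equation
\[
\partial_x \boldsymbol{\psi}(t,s,x) = I + \int_s^t (\nabla_x v_r)(\boldsymbol{\psi}(r,s,x)) \, \partial_x \boldsymbol{\psi}(r,s,x) \, \dd r,
\]
a linear inhomogeneous ODE with continuous coefficients (by the base case). The same Bielecki contraction in a matrix-valued space provides a unique continuous solution, which one identifies with the genuine derivative of $\boldsymbol{\psi}$ via a Grönwall estimate on difference quotients. Temporal regularity follows directly from $\partial_t \boldsymbol{\psi} = v_t(\boldsymbol{\psi})$, while regularity in $s$ is extracted from the semigroup identity, giving $\partial_s \boldsymbol{\psi}(t,s,x) = -\partial_x \boldsymbol{\psi}(t,s,x)\, v_s(x)$. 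Higher-order mixed derivatives then satisfy linear ODEs whose right-hand sides involve only lower-order derivatives of $\boldsymbol{\psi}$ and $v$, all continuous by inductive hypothesis, which keeps the argument going up to order $l$.

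The main obstacle is the rigorous identification, in the inductive step, of the formal solution of the variational equation with the actual partial derivative $\partial_x \boldsymbol{\psi}$. This requires controlling the first-order Taylor remainder of $v_r$ along the trajectory, uniformly in $r \in [0,1]$, and then a further Grönwall estimate applied to the difference between the normalized increment $h^{-1}\bigl[\boldsymbol{\psi}(\cdot,s,x+h) - \boldsymbol{\psi}(\cdot,s,x)\bigr]$ and the candidate derivative. Once this bookkeeping is carried out, combining spatial smoothness with the flow ODE and the semigroup identity yields the full $C^l$ statement on $[0,1]\times[0,1]\times\R^d$.
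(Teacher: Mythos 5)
The paper does not prove this theorem at all --- it is quoted as a classical result with a pointer to \cite[Corollary 2.6, Theorem 2.10]{teschl2024ordinary} --- so there is no internal proof to compare against; your argument is precisely the standard textbook proof underlying that citation (Picard--Lindel\"of via a weighted contraction, Gr\"onwall for continuous dependence, and the variational equation with an induction on $l$ for smooth dependence, including the correct identity $\partial_s\boldsymbol{\psi}(t,s,x)=-\partial_x\boldsymbol{\psi}(t,s,x)\,v_s(x)$), and the outline is sound. One small technical caveat: the Bielecki weight $\mathrm{e}^{-2Kt}$ only yields a contraction for $t\geq s$; since the flow must also run backward in time ($t<s$), you should use the weight $\mathrm{e}^{-2K|t-s|}$ (or treat backward time by the substitution $t\mapsto -t$), after which the factor $1/2$ contraction estimate holds on all of $[0,1]$ regardless of the base point $s$.
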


Concerning the reverse ODE, we obtain the following corollary.

\begin{cor}\label{cor:solutions_diffeo}
In the setting of Theorem \ref{thm:ode_reg}, we have that $\boldsymbol{\psi}(t,s,\boldsymbol{\psi}(s,t,x))=x$. 
In particular, for $(t,s) \in [0,1] \times [0,1]$,
the function $\psi^{t,s}: \R^d \to \R^d$ defined by
$\psi^{t,s}(x)\coloneqq \boldsymbol{\psi}(t,s,x)$
is a $C^l(\R^d,\R^d)$ diffeomorphism with inverse $\psi^{s,t}$.
\end{cor}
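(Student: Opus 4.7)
The plan is to derive the identity $\boldsymbol{\psi}(t,s,\boldsymbol{\psi}(s,t,x))=x$ purely from the uniqueness of solutions to the initial value problem given in Theorem \ref{thm:ode_reg}, and then read off the diffeomorphism property as a corollary together with the $C^l$-regularity already contained in that theorem.

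Concretely, fix $x\in\R^d$ and $s,t\in[0,1]$ and define the two curves
\begin{align}
g(r) \coloneqq \boldsymbol{\psi}(r,t,x), \qquad h(r) \coloneqq \boldsymbol{\psi}(r,s,\boldsymbol{\psi}(s,t,x)).
\end{align}
By the very definition of $\boldsymbol{\psi}$, both $g$ and $h$ solve $f'(r)=v_r(f(r))$ on $[0,1]$, and both satisfy the same initial condition at time $r=s$, namely $g(s)=\boldsymbol{\psi}(s,t,x)=h(s)$. The global Lipschitz assumption on $v$ places us in the hypotheses of Theorem \ref{thm:ode_reg}, so the initial value problem at time $s$ with value $\boldsymbol{\psi}(s,t,x)$ has a unique global solution. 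Therefore $g\equiv h$ on $[0,1]$, and evaluating at $r=t$ yields $x=g(t)=h(t)=\boldsymbol{\psi}(t,s,\boldsymbol{\psi}(s,t,x))$, which is the claimed identity. Swapping the roles of $s$ and $t$ gives the reverse composition.

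For the second assertion, Theorem \ref{thm:ode_reg} already tells us that $\boldsymbol{\psi}\in C^l([0,1]\times[0,1]\times\R^d,\R^d)$, hence in particular $\psi^{t,s}\in C^l(\R^d,\R^d)$ and likewise $\psi^{s,t}\in C^l(\R^d,\R^d)$. The identity just established reads $\psi^{t,s}\circ\psi^{s,t}=\Id_{\R^d}$ and, by the symmetric argument, $\psi^{s,t}\circ\psi^{t,s}=\Id_{\R^d}$. Thus $\psi^{t,s}$ is a bijection whose inverse $\psi^{s,t}$ is also $C^l$, which is precisely the definition of a $C^l$-diffeomorphism.

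There is no real obstacle here: the whole content is the uniqueness half of Picard–Lindelöf, which is ensured by the global Lipschitz hypothesis. The only thing to be mildly careful about is that one must invoke uniqueness on the full interval $[0,1]$ (not just locally), which is why the global Lipschitz constant $K$ in Theorem \ref{thm:ode_reg} — rather than a merely local one — is used; this guarantees that both $g$ and $h$ are defined on all of $[0,1]$ and coincide there.
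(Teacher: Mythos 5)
Your argument is correct and is essentially the paper's own proof: both rest on uniqueness of the global solution with initial value $\boldsymbol{\psi}(s,t,x)$ prescribed at time $s$, compared against the solution through $x$ at time $t$. You merely spell out the diffeomorphism conclusion (two-sided inverse plus $C^l$-regularity from Theorem \ref{thm:ode_reg}) a bit more explicitly than the paper does.
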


\begin{proof} Let $f$ and $g$ be a solution of \eqref{ode_init} with 
initializations $f(t) = x$,
and $g(s) = f(s) = \boldsymbol{\psi}(s,t,x)$, respectively.
By Theorem \ref{thm:ode_reg}, we know that $f=g$ on $[0,1]$
and therefore
$$
g(t) = \boldsymbol{\psi}\left( t,s,\boldsymbol{\psi}(s,t,x) \right) = f(t) = x. 
$$
\end{proof}

The next proposition shows that under the above smoothness assumptions
$\psi(t, \cdot): \R^d \to \R^d$ is indeed a $C^2$ diffeomorphism.

\begin{prop} \label{soln}
Assume that $v_t\in C^l([0,1]\times\R^d,\R^d)$,
$l\geq 2$ globally Lipschitz in the second variable.
Then \eqref{eq:ode_pure} admits a unique global solution and 
$\psi\in C^l([0,1]\times \R^d,\R^d)$. 
If $\mu_0=p_0\L$ with $p_0\in C^{l-1}(\R^d)$, then the curve $\mu_t \coloneqq \psi(t,\cdot)_\sharp \mu_0$ admits a density $p_t\in C^{l-1}([0,1]\times\R^d)$.
Furthermore, if there exists $t_0\in[0,1]$ such that 
$p_{t_0}(x)>0$  for all $x\in \R^d$, then $p_t(x)>0$ for all $x\in\R^d$ and all $t\in[0,1]$.
\end{prop}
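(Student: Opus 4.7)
The proof will unfold in three steps corresponding to the three claims.

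First, I would obtain the solution $\psi$ and its regularity as an immediate consequence of Theorem \ref{thm:ode_reg} applied with $s = 0$: set $\psi(t,x) \coloneqq \boldsymbol{\psi}(t,0,x)$, which solves \eqref{eq:ode_pure} and lies in $C^l([0,1]\times\R^d,\R^d)$. By Corollary \ref{cor:solutions_diffeo}, for each $t$ the map $\psi(t,\cdot) = \psi^{t,0}$ is a $C^l$ diffeomorphism with inverse $\psi^{0,t}$, and moreover $(t,y) \mapsto \psi^{0,t}(y) = \boldsymbol{\psi}(0,t,y)$ is again $C^l$ jointly in its arguments by Theorem \ref{thm:ode_reg}.

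For the density, I would invoke the change-of-variable formula \eqref{push_density_1}, which gives
\begin{align}
p_t(y) = p_0\bigl(\psi^{0,t}(y)\bigr)\, \bigl|\det \nabla_y \psi^{0,t}(y)\bigr|.
\end{align}
The composition $p_0 \circ \psi^{0,t}$ is $C^{l-1}$ jointly in $(t,y)$ because $p_0 \in C^{l-1}(\R^d)$ and $\psi^{0,t}$ is $C^l$ jointly. The Jacobian $\nabla_y \psi^{0,t}(y)$ is $C^{l-1}$ jointly, hence so is its determinant. To remove the absolute value, I observe that $(t,y) \mapsto \det \nabla_y \psi^{0,t}(y)$ is continuous, never vanishes (since $\psi^{0,t}$ is a diffeomorphism), and equals $1$ at $t=0$; by connectedness of $[0,1]\times\R^d$ and the intermediate value theorem, the determinant is strictly positive throughout, so $|\det \nabla_y \psi^{0,t}(y)| = \det \nabla_y \psi^{0,t}(y)$ is $C^{l-1}$ jointly. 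This establishes $p_t \in C^{l-1}([0,1]\times\R^d)$.

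For the positivity claim, the key is the semigroup property of the flow. By uniqueness of solutions, $\psi^{t,0} = \psi^{t,t_0}\circ \psi^{t_0,0}$, so $\mu_t = \psi^{t,t_0}_{\sharp} \mu_{t_0}$. Applying \eqref{push_density_1} once more yields
\begin{align}
p_t(y) = p_{t_0}\bigl(\psi^{t_0,t}(y)\bigr)\, \bigl|\det \nabla_y \psi^{t_0,t}(y)\bigr|,
\end{align}
and both factors on the right are strictly positive: the first by hypothesis on $p_{t_0}$, the second because $\psi^{t_0,t}$ is a diffeomorphism. Hence $p_t(y) > 0$ for every $y \in \R^d$ and every $t \in [0,1]$.

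The only mildly delicate point is the joint smoothness of the inverse $\psi^{0,t}$ in $(t,y)$, which could easily be overlooked; this is why I emphasize that Theorem \ref{thm:ode_reg} already supplies joint $C^l$-regularity of $\boldsymbol{\psi}(t,s,x)$ in all three variables, so the reverse-time flow inherits the same regularity for free, and no implicit function theorem argument is needed.
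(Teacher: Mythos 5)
Your proposal is correct and follows the same route as the paper's proof: existence, uniqueness and $C^l$-regularity from Theorem \ref{thm:ode_reg}, the diffeomorphism property from Corollary \ref{cor:solutions_diffeo}, and the density formula from the change-of-variable identity \eqref{push_density_1}, from which the remaining claims follow. You simply spell out the details the paper compresses into ``the latter equation implies the remaining claims'' --- the joint $(t,y)$-regularity of the inverse flow, the sign of the Jacobian determinant, and the propagation of positivity via the flow's group property --- all of which are handled correctly.
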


\begin{proof}
The existence and uniqueness follow directly from Theorem \ref{thm:ode_reg}. Since $\psi(t,\cdot)$ is a $C^l(\R^d,\R^d)$ diffeomorphism by Corollary \ref{cor:solutions_diffeo} we know that $\det (\nabla_x\psi(t,\cdot)(x))\neq 0$. Furthermore using \eqref{push_density_1} we conclude that $\mu_t$ is absolutely continuous with density 
$$p_t(x)=\frac{p_0\circ \psi(t,\cdot)^{-1}(x)}{\det (\nabla_x\psi(t,\cdot)(x))}.$$ 
The latter equation implies the remaining claims.
\end{proof}

\begin{cor}\label{cor:ce}
    Let $v_t \in C^2([0,1]\times\R^d,\R^d)$ be globally Lipschitz and $\mu_0=p_0\L$ with $p_0\in C^{1}(\R^d)$. Let $\psi\in C^l([0,1]\times \R^d,\R^d)$ be the solution of
    \eqref{eq:ode_pure} and $p_t \coloneqq \psi(t,\cdot)_\sharp p_0$. Then $(p_t,v_t)$ fulfills the continuity equation in a strong sense
    $$
    \partial_t p_t + \div(p_t \, v_t) = 0, \quad x \in \R^d, t \in [0,1] .
    $$
\end{cor}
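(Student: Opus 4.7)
The plan is to reduce Corollary \ref{cor:ce} to a pointwise identity obtained by direct differentiation of the change-of-variables formula, relying on the higher regularity established in Proposition \ref{soln}. Write $\phi_t \coloneqq \psi(t,\cdot)$ and $J(t,x) \coloneqq \det(\nabla_x \phi_t(x))$. By Proposition \ref{soln} we have $\psi \in C^2([0,1]\times\R^d,\R^d)$ and $p \in C^1([0,1]\times\R^d)$; combined with $v \in C^2$, this shows that
$$\partial_t p_t + \div(p_t v_t) = \partial_t p_t + \langle \nabla_x p_t, v_t \rangle + p_t \, \div v_t$$
is a continuous function on $[0,1]\times\R^d$, so it suffices to verify the identity pointwise.

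The density formula of Proposition \ref{soln} is equivalent, via the substitution $y = \phi_t(x)$, to $p_t(\phi_t(x))\, J(t,x) = p_0(x)$ for all $(t,x) \in [0,1]\times\R^d$. All factors on the left are $C^1$ in $t$, so differentiating and using $\partial_t \phi_t = v_t \circ \phi_t$ yields
\begin{align}
0 = (\partial_t p_t)(\phi_t(x))\, J(t,x) + \langle \nabla_x p_t(\phi_t(x)), v_t(\phi_t(x)) \rangle\, J(t,x) + p_t(\phi_t(x))\, \partial_t J(t,x).
\end{align}
The crux is the computation of $\partial_t J$. Differentiating $\partial_t \phi_t = v_t \circ \phi_t$ in $x$, and using that the mixed partials of $\phi$ commute by its $C^2$ regularity, gives $\partial_t \nabla_x \phi_t = (\nabla v_t)(\phi_t)\, \nabla_x \phi_t$, and then Jacobi's formula yields
\begin{align}
\partial_t J(t,x) = J(t,x)\, \tr\bigl((\nabla_x \phi_t(x))^{-1}(\nabla v_t)(\phi_t(x))\, \nabla_x \phi_t(x)\bigr) = J(t,x)\, (\div v_t)(\phi_t(x)).
\end{align}

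Substituting back and dividing by $J(t,x)$, which is nonzero by Corollary \ref{cor:solutions_diffeo} (and in fact strictly positive since $J(0,\cdot) \equiv 1$ and $J$ is continuous and never vanishes), reduces the relation to $\partial_t p_t(y) + \langle \nabla_x p_t(y), v_t(y) \rangle + p_t(y)\,(\div v_t)(y) = 0$ at $y = \phi_t(x)$; the surjectivity of $\phi_t$, again from Corollary \ref{cor:solutions_diffeo}, then delivers the strong continuity equation at every $(t,y) \in [0,1] \times \R^d$. The only real obstacle is the Jacobi formula step, which is routine once the $C^2$ regularity of $\phi_t$ is in hand. An essentially equivalent alternative route is to invoke Remark \ref{rem:conversely} to obtain the continuity equation in the distributional sense and then upgrade it to the pointwise form by testing against $\varphi \in C_c^\infty((0,1)\times \R^d)$ and exploiting the continuity of $\partial_t p_t + \div(p_t v_t)$.
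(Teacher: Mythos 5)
Your proof is correct, but it takes a genuinely different route from the paper's. The paper's proof is a two-step distributional argument: it invokes Remark \ref{rem:conversely} to get the continuity equation in the weak sense, integrates by parts, and then uses the $C^1$ regularity of $(t,x)\mapsto p_t(x)$ from Proposition \ref{soln} together with the fundamental lemma of the calculus of variations (a continuous function that integrates to zero against every $\varphi\in C_c^\infty$ vanishes identically) to upgrade to the strong form. This is essentially the "alternative route" you sketch in your last sentence. Your primary argument instead differentiates the change-of-variables identity $p_t(\phi_t(x))\,J(t,x)=p_0(x)$ in time and uses Jacobi's/Liouville's formula $\partial_t J = J\,(\div v_t)\circ\phi_t$; the steps are all justified by the $C^2$ regularity of $\psi$ (so that mixed partials commute and $J$ is $C^1$ in $t$), the positivity of $J$ (from $J(0,\cdot)\equiv 1$ and non-vanishing), and the surjectivity of $\phi_t$ from Corollary \ref{cor:solutions_diffeo}. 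What each approach buys: the paper's route is shorter because it reuses the already-established weak continuity equation and only needs to observe that the integrand is continuous, whereas your Liouville computation is fully self-contained and pointwise, never passing through distributions, at the cost of the extra determinant calculus. Both ultimately rest on the same regularity inputs from Proposition \ref{soln}, so neither is more general than the other here.
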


\begin{proof}
By Remark \ref{rem:conversely}, we know that \eqref{eq:ode_pure} implies for all $\varphi \in C_c^\infty((0,1) \times\R^d)$ that
$$
0 =
\int_0^1\int_{\R^d} \langle \nabla_x\varphi,v_t\rangle + \partial_t \varphi \, \dd [\psi(t,\cdot)_\sharp\mu_0]\dd t 
= 
\int_0^1\int_{\R^d} \big( \div (v_t p_t) + \partial_t p_t \big) \varphi \, \dd x \dd t.
$$
By Theorem \ref{soln}, we know that $p_t \in C^{1}(\R^d)$, so that the function in the
inner brackets is continuous, which yields the assertion.
\end{proof}

%------------------------------------------------------
\subsection{Likelihood Loss for Continuous Normalizing Flow} \label{sec:loss}
%------------------------------------------------------

Opposite to flow matching, we assume that the initial density of the ODE is the
target density, and the final one approximates the latent density, i.e.,
$$
p_0 \approx p_{\text{data}}, \quad p_1 = p_{\text{latent}}.
$$
We assume that the vector field and thus, also the corresponding ODE solution depend on a parameter vector $\theta \in \R^n$, i.e.,
\begin{align}\label{eq:ode_theta}
\partial_t\psi^\theta(t,x)=v_t^\theta(\psi(t,x)), \quad \psi^\theta(0,x)=x.
\end{align}

By the previous subsection, we know for
$v_t^\theta\in C^2([0,1]\times\R^d,\R^d)$ with Lipschitz continuous second component for all $t\in[0,1]$,
that
$S^\theta \coloneqq  \psi^\theta(1,\cdot)$
is a $C^2$ diffeomorphism and
$T^\theta \coloneqq \left( S^\theta \right)^{-1}$ 
fulfills $p_0:=T^\theta_\sharp p_1$. 
Moreover, we have for $p_t:=\psi^\theta(t,\cdot)_\sharp p_0$ that 
$$
p_1 = S^{\theta}_\sharp p_0
=S^{\theta}_\sharp\, T^\theta_\sharp p_Z=p_Z
\quad \text{and} \quad
T_\sharp^\theta p_1 
=
T_\sharp^\theta \, S_\sharp^\theta p_0=p_0. 
$$ 
As  \emph{loss function}, the log-likelihood function appears to be a reasonable choice
\begin{align}\label{eq:kl_loss2}
\mathcal{L}(\theta)=\E_{x\sim p_X}[-\log T^\theta_\sharp p_1]
=
\E_{x\sim p_X}[-\log p_0].
\end{align}
Alternatively, this may be reformulated using the Kullback-Leibler divergence between $p_X$ and  $T^\theta_\sharp p_1$,
see Remark \ref{logl_kl}.
In order to minimize the loss function using backpropagation,
we must compute the gradient of $\log(T^\theta_{\sharp}p_1) = \log p_0(x)$ with respect to $\theta$. 
To this end, we introduce the function $\ell:[0,1] \times \R^d \to \R ^d$ by
\begin{equation} \label{noch}
\ell(t,x) \coloneqq \log \big(p_t(\psi(t,x) ) \big) - \log \big(p_0(x) \big),
\end{equation}
so that the loss function can be rewritten setting $t=1$ as
\begin{align}\label{loss*}
  \E_{x\sim p_X}[- \log p_0(x)]=\E_{x\sim p_X}\left[\ell(1,x)-\log p_1(\psi(1,x))\right].
\end{align}
Interestingly, $\ell$ is the solution of an ODE which includes the
velocity field $v_t$.

\begin{prop}\label{prop:log_div}
Let $v \in C^2([0,1]\times\R^d,\R^d)$ be Lipschitz continuous in the second component for all $t\in [0,1]$ and let $p_0\in C^1(\R^d)$ be a strictly positive density. 
For the solution $\psi\in C^l([0,1]\times \R^d,\R^d)$ of \eqref{eq:ode_pure}, let $p_t \coloneqq \psi(t,\cdot)_\sharp p_0$. Then 
$\ell:[0,1] \times \R^d \to \R ^d$ in \eqref{noch}
is the solution of the ODE 
\begin{align} \label{sol_l}
\partial_t \ell(t,x) = -(\nabla \cdot  v_t)(\psi(t,x)), \quad  \ell(x,0)=0
\end{align}
\end{prop}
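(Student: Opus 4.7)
The plan is a direct computation using the chain rule combined with the (strong form of the) continuity equation established in Corollary \ref{cor:ce}. The initial condition $\ell(0,x)=0$ is immediate since $\psi(0,x)=x$, so both $\log p_t(\psi(t,x))$ and $\log p_0(x)$ coincide at $t=0$. The substance is the ODE.

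First, I would verify the regularity needed to manipulate $\log p_t$ pointwise. By Proposition \ref{soln}, the assumptions $v \in C^2$ (Lipschitz in $x$) and $p_0 \in C^1$ yield $p_t \in C^1([0,1]\times\R^d)$. Moreover, since $p_0$ is strictly positive everywhere, Proposition \ref{soln} gives $p_t(x)>0$ for all $(t,x)$, so $\log p_t \circ \psi$ is well-defined and continuously differentiable. Corollary \ref{cor:ce} then provides the strong-form continuity equation $\partial_t p_t + \nabla \cdot (p_t v_t) = 0$, which I will expand as $\partial_t p_t = -\langle \nabla p_t, v_t \rangle - p_t (\nabla \cdot v_t)$.

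Next, I apply the chain rule to $t \mapsto \log p_t(\psi(t,x))$. Using $\partial_t \psi(t,x) = v_t(\psi(t,x))$ from \eqref{eq:ode_pure}, this gives
\begin{align}
\partial_t \ell(t,x)
&= \frac{1}{p_t(\psi(t,x))}\Bigl[(\partial_t p_t)(\psi(t,x)) + \langle (\nabla p_t)(\psi(t,x)), v_t(\psi(t,x))\rangle\Bigr].
\end{align}
Substituting the continuity equation evaluated at $\psi(t,x)$ into the bracket, the two gradient terms cancel, leaving exactly $-p_t(\psi(t,x))(\nabla \cdot v_t)(\psi(t,x))$; dividing by $p_t(\psi(t,x))$ yields $\partial_t \ell(t,x) = -(\nabla \cdot v_t)(\psi(t,x))$, which is the desired ODE.

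The only delicate point—and the reason the proposition explicitly asks for the stronger hypotheses $v \in C^2$ and $p_0 \in C^1$ rather than just the distributional setting of Theorem \ref{thm:abscont_ce}—is justifying that the continuity equation holds pointwise rather than merely in the sense of distributions, so that evaluation at $\psi(t,x)$ is meaningful. This is exactly what Corollary \ref{cor:ce} provides, so no obstacle remains; the rest is a one-line cancellation.
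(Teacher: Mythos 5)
Your proof is correct and follows essentially the same route as the paper's: chain rule on $t\mapsto \log p_t(\psi(t,x))$, substitution of the flow ODE for $\partial_t\psi$, and cancellation of the gradient terms via the strong-form continuity equation from Corollary \ref{cor:ce}. Your additional remarks on positivity of $p_t$ and the regularity needed for pointwise evaluation are a welcome clarification but do not change the argument.
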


\begin{proof}
By the chain rule, we obtain 
\begin{align}
\partial_t \ell(t,x) &= \frac{\dd }{\dd t}\log \big( p_t(\psi(t,x) ) \big)
=
\frac{1}{p_t(\psi(t,x))}\frac{\dd}{\dd t} p_t(\psi(t,x))\\
&=
\frac{1}{p_t(\psi(t,x))}\Big(\langle\nabla_x p_t(\psi(t,x)),\partial_t \psi(t,x)\rangle+\partial_t p_t(\psi(t,x))\Big).
\end{align}
and further by applying the definition of $\psi$ for the first term and the continuity equation from Corollary \ref{cor:ce}
for the second one, 
\begin{align*}
\partial_t \ell(t,x) &=\frac{1}{p_t(\psi(t,x))}\Big(\langle\nabla_x p_t(\psi(t,x)),v_t(\psi(t,x))\rangle
-\mathrm{div}\big(p_t(\psi(t,x) \big)v_t(\psi(t,x))\Big)\\
&=\frac{1}{p_t(\psi(t,x))}\Big(\langle\nabla_x p_t(\psi(t,x)),v_t(\psi(t,x))\rangle\\
&\quad\qquad-\langle\nabla_x p_t(\psi(t,x)),v_t(\psi(t,x))\rangle-p_t(\psi(t,x)) (\div \, v_t)(\psi(t,x))\Big)\\
&=-\div(v_t(\psi(t,x))).
\end{align*}
\end{proof}

Now we can combine the ODEs for $\psi$ and $\ell$ into the ODE system
\begin{align}
\begin{pmatrix}
\partial_t {\Psi_1}(t,x,y,\theta)\\
\partial_t {\Psi_2}(t,x,y,\theta)\\
\partial_t {\Psi_3}(t,x,y,\theta)
\end{pmatrix}
=
\begin{pmatrix}
v_t\big(\Psi_1(t,x,y,\theta),\theta \big)\\
-(\div_x \, v_t)\big(\Psi_1(t,x,y,\theta), \theta \big)\\
0
\end{pmatrix}
,\quad
\begin{pmatrix}
\Psi_1(0,x,y,\theta)\\
\Psi_2(0,x,y,\theta)\\
\Psi_3(0,x,y,\theta)
\end{pmatrix}
=
\begin{pmatrix}
x\\
0\\
\theta
\end{pmatrix}
\end{align}
where $x\in \R^d,y\in \R$. Note that $\Psi_2$ corresponds to $\ell$ from \eqref{noch} and $\Psi_3$ remains $\theta$. Making the velocity field dependent on a parameter $\theta\in\R^n$, we can rewrite the loss function \eqref{loss*} as
\begin{align}\label{eq:kl_loss}
\mathcal{L}(\theta)&=\E_{x\sim p_X}\big[\Psi_2(1,x,0,\theta)- \log p_1\big(\Psi_1(1,x,0,\theta) \big)\big].
\end{align}
Now the main challenge is to find  $\nabla_\theta \mathcal{L}(\theta)$. 
To this end, consider 
$$F(x,y,\theta) \coloneqq y-\log p_1(x).$$
Then
\[
F \circ(\Psi(1,\cdot,\cdot,\cdot))(x,y,\theta)
=
\Psi_2(1,x,0,\theta)-\log p_1\big(\Psi_1(1,x,0,\theta)\big)
\]
and we can compute by the Leibniz rule for measure spaces
\begin{equation} \label{losss}
\nabla_\theta \mathcal{L}(\theta)
=
\E_{x\sim p_X}[\nabla_\theta (F \circ(\Psi (1,\cdot,\cdot,\cdot))(x,y,\theta)]. 
\end{equation} 
In the next subsection, we show how to compute 
$\nabla_\theta (F \circ \Psi (1,\cdot,\cdot,\cdot))(x,y,\theta)$
by solving a system of ODEs.

%---------------------------------------------------------------
\subsection{Computing Gradients with the Adjoint Method} \label{subsec:adj}
%---------------------------------------------------------------
This section is based on the original paper \cite{CRBD2018} as well as the blog \cite{schurovadj}.

To start from an arbitrary time $s\in[0,1]$, we use the following notation.
For $v_t\in C^2([0,1]\times\R^d,\R^d)$,
let 
$\boldsymbol{\psi}:[0,1]\times [0,1] \times \R^d\to \R^d$ 
be the solution of 
\begin{align}\label{eq:ode_st}
\partial_t \boldsymbol{\psi}(t,s,x) = v_t(\boldsymbol{\psi}(t,s,x)), \quad \boldsymbol{\psi}(s,s,x)=x.
\end{align}
For fixed $s=0$, we have
$\psi (t,x) = \boldsymbol{\psi}(t,0,x)$.
For an arbitrary $F\in C^2(\R^d)$, we define a function $a: [0,1] \times \R^d \to \R^d$ by
$$a_t(x) \coloneqq \nabla_x \big( F\circ \boldsymbol{\psi}(1,t,\cdot)\big)(\psi(t,x)).$$ 
Note that $a_t(x)$ is a row vector here.

\begin{prop}
Let $v_t\in C^2([0,1]\times \R^d,\R^d)$  be Lipschitz continuous in the second variable, and let
$\psi:[0,1]\times \R^d \to \R^d$ be  the solution of \eqref{eq:ode_pure}.
Then it holds
\begin{align}\label{eq:adjoint_equation}
\partial_t a_t(x)=- a_t(x) \, (\nabla_x v_t)(\psi(t,x)).
\end{align}
\end{prop}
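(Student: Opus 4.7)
The plan is to exploit the semigroup property of the flow, namely
\[
\boldsymbol{\psi}(1,t,\psi(t,x)) = \psi(1,x) \qquad \text{for all } t \in [0,1],
\]
which just says that flowing $x$ up to time $t$ and then from time $t$ to time $1$ is the same as flowing $x$ directly up to time $1$. Composing with $F$, the quantity
\[
H(t,x) := (F\circ \boldsymbol{\psi}(1,t,\cdot))(\psi(t,x)) = F(\psi(1,x))
\]
is therefore independent of $t$. This $t$-invariance is the only conceptual input; the rest is the chain rule combined with the first variational equation of the flow.

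First, differentiating $H$ in $x$ and writing $J(t,x) := \nabla_x \psi(t,x)$ for the flow Jacobian (with $a_t(x)$ regarded as a row vector, as in the statement), the chain rule gives
\[
\nabla_x F(\psi(1,x)) = \nabla_x H(t,x) = a_t(x)\, J(t,x).
\]
Since the left-hand side does not depend on $t$, differentiating this identity in $t$ yields
\[
(\partial_t a_t(x))\, J(t,x) + a_t(x)\, \partial_t J(t,x) = 0.
\]

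Next, I would invoke the standard variational equation, obtained by differentiating the ODE $\partial_t \psi(t,x) = v_t(\psi(t,x))$ with respect to $x$:
\[
\partial_t J(t,x) = (\nabla_x v_t)(\psi(t,x))\, J(t,x).
\]
Substituting and then right-multiplying by $J(t,x)^{-1}$ produces
\[
\partial_t a_t(x) = -a_t(x)\,(\nabla_x v_t)(\psi(t,x)),
\]
which is \eqref{eq:adjoint_equation}. The invertibility of $J(t,x)$ is guaranteed by Corollary \ref{cor:solutions_diffeo} together with Proposition \ref{soln}, which ensure that $\psi(t,\cdot)$ is a $C^l$-diffeomorphism under the present hypotheses on $v_t$, and the $C^2$-regularity of $v_t$ is exactly what is needed to legitimately differentiate $J$ in $t$ and to regard $(\nabla_x v_t)$ as a continuous $d\times d$ matrix field. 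The main obstacle is therefore not analytic but bookkeeping: keeping the row/column convention for $a_t$ and the Jacobian consistent so that the matrix $(\nabla_x v_t)(\psi(t,x))$ acts on the right of $a_t(x)$ in the final formula.
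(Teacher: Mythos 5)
Your proposal is correct and follows essentially the same route as the paper: both exploit the $t$-independence of $\nabla_x\bigl(F\circ\boldsymbol{\psi}(1,t,\cdot)\circ\psi(t,\cdot)\bigr)$, differentiate the resulting identity $a_t(x)\,\nabla_x\psi(t,x)=\mathrm{const}$ in $t$, insert the variational equation $\partial_t\nabla_x\psi=(\nabla_x v_t)(\psi)\,\nabla_x\psi$, and cancel the invertible Jacobian. The only cosmetic difference is that the paper evaluates the constant quantity at $t=0$ as $a_0(x)$ rather than writing it as $\nabla_xF(\psi(1,x))$.
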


\begin{proof}
Since $v_t\in C^2([0,1]\times\R^d,\R^d)$, 
we have by Theorem \ref{thm:ode_reg}
that  $\boldsymbol{\psi}\in C^2([0,1]\times[0,1]\times\R^d,\R^d)$. 
Noting that 
$$
\boldsymbol{\psi}(1,0,x)
=
\boldsymbol{\psi}(1,t,\cdot)\circ \boldsymbol{\psi}(t,0,x)=\boldsymbol{\psi}(1,t,\cdot)\left( \psi(t,x) \right),$$ 
we obtain 
\begin{align}
a_0(x)
&=
\nabla_x \big( F\circ \boldsymbol{\psi}(1,0,\cdot)\big)(\psi(0,x))
\\
&=
\nabla_x 
\big( F\circ(\boldsymbol{\psi}(1,t,\cdot)\circ \boldsymbol{\psi}(t,0,\cdot)) \big)(\psi(0,x))\\
&=\nabla_x \big( (F\circ\boldsymbol{\psi}(1,t,\cdot))\circ \psi(t,\cdot) \big)(x)
=
a_t(x) \,  \nabla_x \psi(t,x). 
\end{align}
Since the left hand side does not depend on $t$, we obtain after differentiation with respect to $t$ that
\begin{align}\label{ccc}
0=\partial_t(a_t(x)) \,  \nabla_x \psi(t,x) + a_t(x) \,  \partial_t(\nabla_x\psi(t,x)).
\end{align}
Next, we compute
\begin{align} 
\partial_t(\nabla_x\psi(t,x))=\nabla_x(\partial_t\psi(t,x))=\nabla_x(v_t(\psi(t,x)))=(\nabla_x v_t)(\psi(t,x))\nabla_x\psi(t,x).
\end{align}
Then we get in \eqref{ccc} that
\begin{align}
0=\partial_t(a_t(x)) \,  \nabla_x \psi(t,x) + a_t(x) \,  (\nabla_x v_t)(\psi(t,x))\nabla_x\psi(t,x).
\end{align}
By Corollary \ref{cor:solutions_diffeo}, we know that
$\psi(t,x)$ invertible with differentiable inverse, so that 
the matrix $\nabla_x\psi(t,x)$ is invertible and we obtain the assertion \eqref{eq:adjoint_equation}.
\end{proof}

To compute gradients of $v_t^\theta$ with respect  
$\theta\in\R^n$, we extend the ODE \eqref{eq:ode_st}
for $\boldsymbol{\psi}: [0,1] \times[0,1] \times \R ^d \times \R^n \to \R^d \times \R^n$ and 
$v: \times[0,1] \times \R^d \times \R^n \to \R^d$ as
\begin{align}
\partial_t\boldsymbol{\psi}(t,s,x,\theta)
= 
\begin{pmatrix}
v_t(\boldsymbol{\psi}(t,s,x,\theta)\\
0
\end{pmatrix}, 
\quad \boldsymbol{\psi}(s,s,x,\theta) =(x,\theta)
\end{align}
where we use the same symbols $\boldsymbol{\psi}$ and $v_t$ for convenience.
Again, we write $\psi$ for $\boldsymbol{\psi}(\cdot,0,\cdot,\cdot)$. 
Now let $F\in C^2(\R^d\times \R^n)$ be a function \emph{which does not depend on the second component} and define $a: [0,1] \times \R^d \times \R^n \to \R^d \times \R^n$ by
\begin{align}
a_t(x,\theta)&\coloneqq \nabla_{x,\theta}(F\circ \boldsymbol{\psi}(1,t,\cdot,\cdot))(\psi(t,x,\theta))
=\big( a_t^x,a_t^\theta\big).
\end{align}
Note that in the loss function \eqref{losss} we need
\begin{align}\label{eq:a0theta}
a_0^\theta = \nabla_{\theta}(F\circ \psi(1,\cdot,\cdot))(x,\theta).
\end{align}
Then \eqref{eq:adjoint_equation} modifies to
\begin{align}\label{eq:adjoint_equation2}
\partial_t a_t(x)=- a_t(x) \, \nabla_{x,\theta} 
\begin{pmatrix} 
v_t\\0
\end{pmatrix}
\big(\psi(t,x,\theta) \big).
\end{align}
Since 
$$
\nabla_{x,\theta}\begin{pmatrix}
v_t\\
0
\end{pmatrix}=\begin{pmatrix} \nabla_x v_t&\nabla_\theta v_t\\0&0\end{pmatrix},
$$ 
we obtain finally
\begin{align}\label{eq:node}
\begin{pmatrix}
\partial_t\psi(t,x,\theta)\\
\partial_t a_t^x(x,\theta)\\
\partial_t a_t^\theta(x,\theta)
\end{pmatrix}=
\begin{pmatrix}
v_t(\psi(t,x,\theta),\theta)\\
a_t^x(x,\theta)(\nabla_x v_t)(\psi(t,x,\theta))\\ a_t^x(x,\theta)(\nabla_\theta v_t)(\psi(t,x,\theta))
\end{pmatrix}
\end{align}
 As noted above, we need to compute $a_0^\theta$ to get $\nabla_\theta\L(\theta)$ in \eqref{losss}. In \eqref{eq:node} the initial conditions at $t=0$ are implicitly already encoded, since we used for our calculations that $\psi(0,x,\theta)=(x,\theta)$. 
 However, to get $a_0^\theta$, we need to find the appropriate condition for $t=1$ such that the solution of \eqref{eq:node} matches the initial condition at $t=0$. Since $F$ does not depend on $\theta$ and we know that $\boldsymbol{\psi}(1,1,x,\theta)=(x,\theta)$, 
we can conclude $$a_1^\theta=\nabla_{\theta}(F\circ\boldsymbol{\psi}(1,1,\cdot,\cdot))=\nabla_\theta F=0.$$ 
Furthermore, with $x^1\coloneqq\psi(1,x,\theta)$ we have that 
\begin{align}\label{eq:q1x}
a_1^x(x,\theta)=\nabla_x(F\circ \boldsymbol\psi(1,1,\cdot))(x^1)=(\nabla_x F)(x^1).
\end{align}
Hence we can obtain $a_0^\theta= \nabla_{\theta}(F\circ \psi(1,\cdot,\cdot))(x,\theta)$ by solving \eqref{eq:node} with initial conditions
\begin{align}
\psi(1,x,\theta)=(x^1,\theta);\ \ \ \ a_1^x(x,\theta)=(\nabla_x F)(x^1);\ \ \ \ a_1^\theta(x,\theta)=0.
\end{align}
For the special  loss \eqref{losss} the variable $x$ consists of two parts, namely $(x,y)$ and $\psi$ corresponds to $\Psi$ and thus we can evaluate the gradient of the loss function.
%-----------------------------------------------
\subsection{Numerical Examples of Continuous Normalizing Flows}
%-----------------------------------------------
By \eqref{losss} and \eqref{eq:a0theta}, training a continuous normalizing flow needs the computation of
\begin{equation}
\nabla_\theta \mathcal{L}(\theta)
=
\E_{x\sim p_X}[a_0^\theta(x)], 
\end{equation} 
where we approximate the expectation by an empirical expectation. In order to compute $a_0^\theta$ we need to solve \eqref{eq:node}. The only ingredients for solving this equation are the vector field $v_t(x,y,\theta)$, the gradients thereof, and the initial conditions. The velocity field $v_t$ and its gradients are available since we parametrize it by a neural network. For the first initial condition we need $\psi(1,x,0,\theta)$ which we can obtain by solving $\partial_tf(t)=v_t(f(t),\theta), f(0)=(x,0)$. For the second initial condition we need $\nabla_{x,y} F$ for $F(x,y)=y-\log p_1(x)$. Thus we need to be given a density where the gradient $\nabla_x\log p_1$ is tractable, which is the case, e.g., for the standard Gaussian distribution. In total we need samples from $p_X$ and a density $p_1$ for which $\nabla_x \log p_1$ is tractable. The computation of $a_0^\theta(x)$ can then be done by solving \eqref{eq:node}, for which we used the library \cite{torchdiffeq}. Figure \ref{fig:cnf_gmm} shows the trajectories for a vector field obtained via continuous normalizing flow training with three different target distributions. For Figure \ref{fig:cnf_gmm8} we chose a GMM with $8$ equally weighted modes. The trajectories differ from the ones obtained by flow matching in Figure \ref{fig:gaussian_mix}. For Figure \ref{fig:cnf_moons} resp. Figure \ref{fig:cnf_spirals} we chose the moons resp. spirals data set from \cite{politorchdyn}.

\begin{figure}[ht]
    \centering
    \begin{subfigure}{0.32\linewidth}
    \includegraphics[width=1\linewidth]{images/cnf_gaussians.png}
    \caption{GMM}\label{fig:cnf_gmm8}
    \end{subfigure}
    \begin{subfigure}{0.32\linewidth}
    \includegraphics[width=1\linewidth]{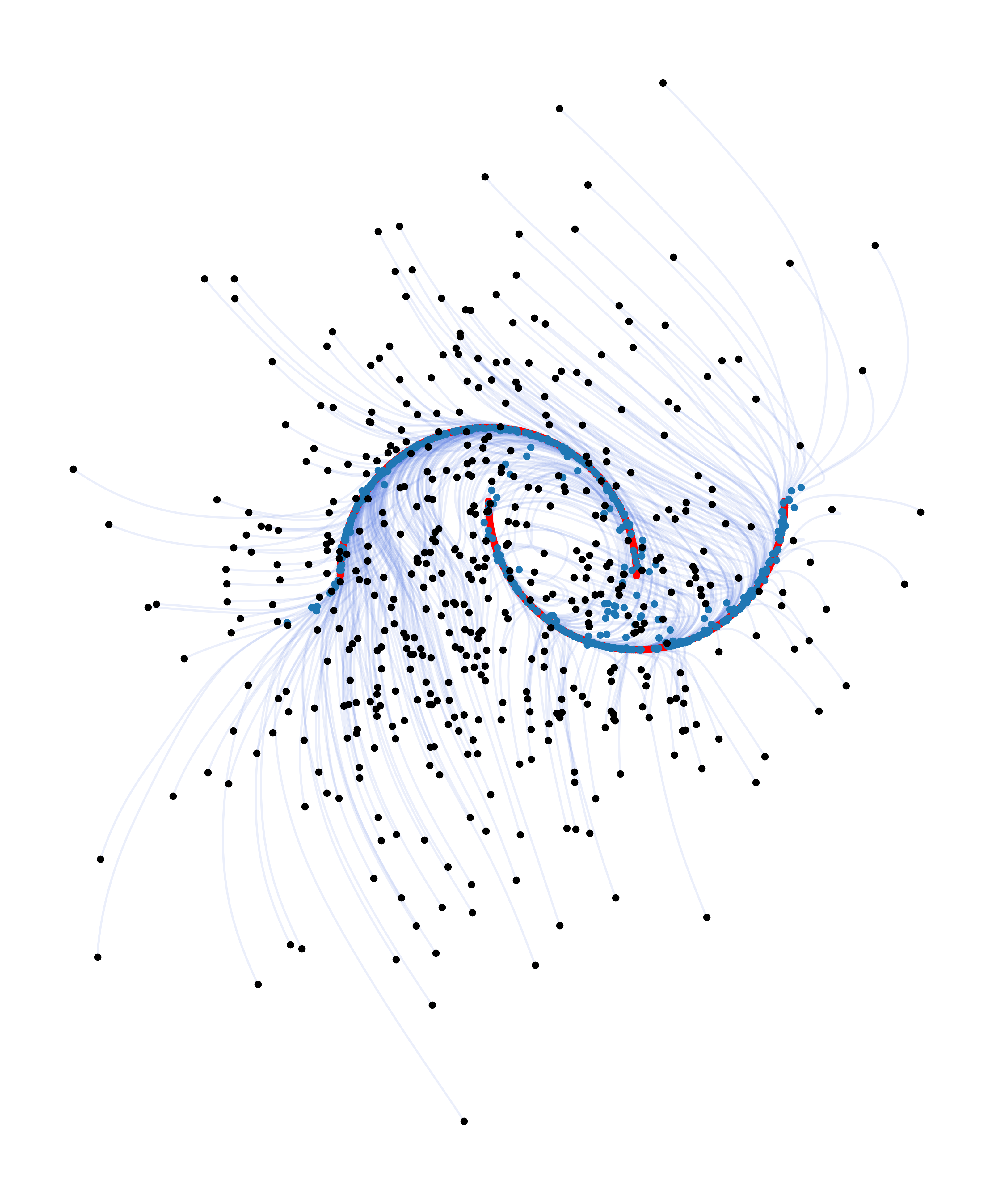}
    \caption{Moons}\label{fig:cnf_moons}
    \end{subfigure}
    \begin{subfigure}{0.32\linewidth}
    \includegraphics[width=1\linewidth]{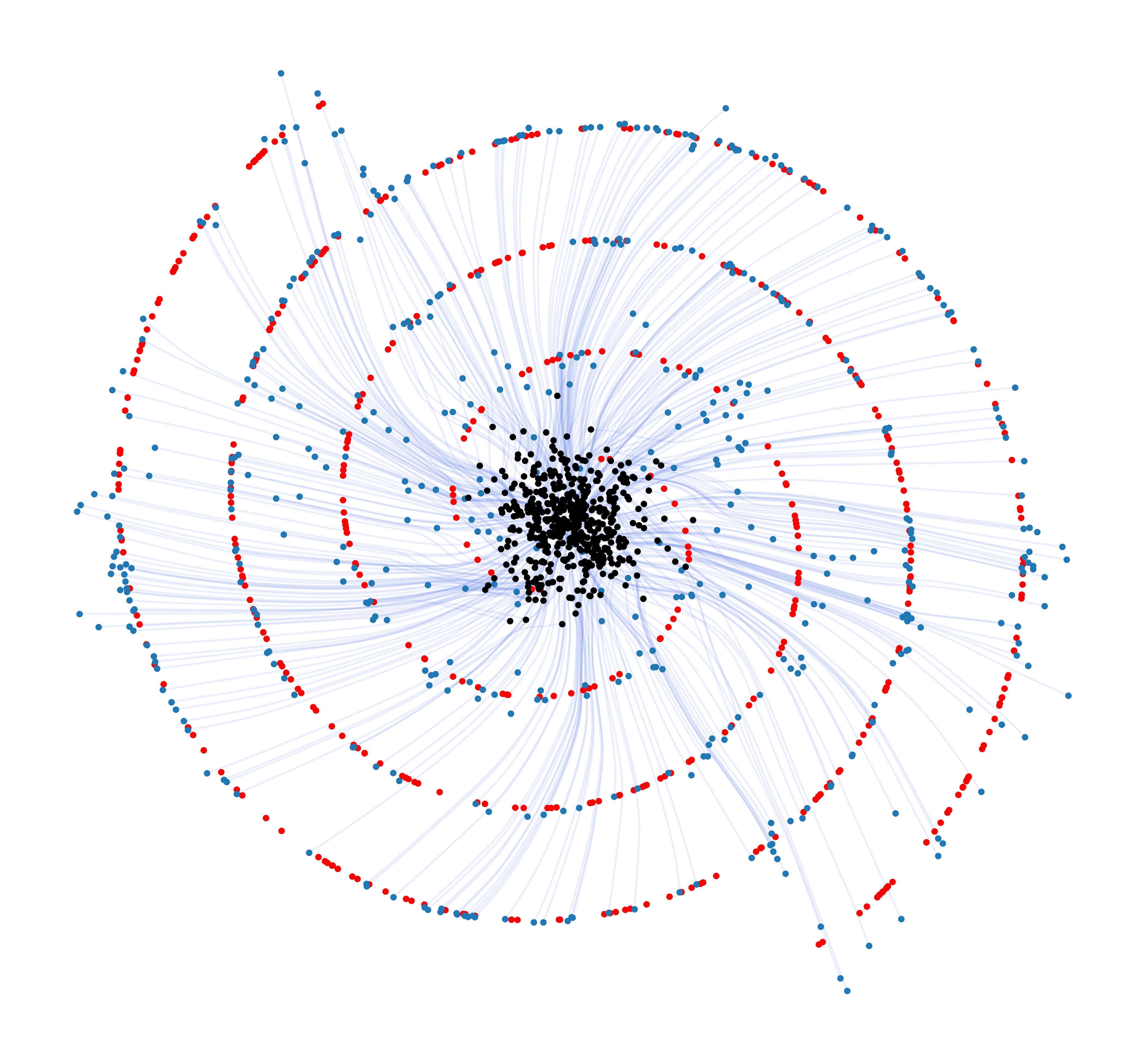}
    \caption{Spirals}\label{fig:cnf_spirals}
    \end{subfigure}    
    \caption{Trajectories of points from a vector field $v_t^\theta$ obtained via a vector field obtained by training a CNF. We chose $\mu_0=\N(0,1)$. Black: points $\{z_i\}_{i=1}^{300}$ drawn from the source distribution $\mu_0=\N(0,1)$. Blue: points sampled via the vector field from $\{z_i\}_{i=1}^{300}$. Red: points sampled from the target distribution. Blue lines: trajectories of the vector field. }
    \label{fig:cnf_gmm}
\end{figure}

%------------------------------------------------------------
\section{A Glimpse at Score-Based Diffusion} \label{sec:diffusion}
%------------------------------------------------------------
Diffusion models go back to  \cite{pmlr-v37-sohl-dickstein15} and \cite{SE2019}.
They can be transformed into CNFs, thus allowing tractable likelihood computation with numerical ODE solvers \cite{song2021maximum}.
However, diffusion models are based on stochastic differential equations (SDEs).
Roughly speaking,
the forward SDE computes $(X_t)_{t \in [0,T]}$ starting with the target distribution by
\begin{equation}  \label{eq}
{\rm d} X_t = f(t,X_t) \, {\rm d} t + g(t) \, {\rm d} W_t, \quad
X_0 \sim P_{{\rm{data}}},
\end{equation}
where $W_t$ is a standard Brownian motion. 
A usual choice is
$$
f(t,X_t) \coloneqq -\frac{1}{2}\beta_t X_t
\quad \text{
and} \quad
g(t) \coloneqq \sqrt{\beta_t}
$$
with a positive, increasing so-called ''time schedule'' 
$\beta$,
e.g., $\beta_t = \beta_{\min} + t(\beta_{\max} - \beta_{\min})$ 
for some 
$0 < \beta_{\min} \ll \beta_{\max}$.
Then the SDE becomes a linear one
$$
{\rm d} X_t = -\frac12 \beta_t X_t\, {\rm d} t + \sqrt{\beta_t} \, {\rm d} W_t, \quad
X_0 \sim P_{{\rm{data}}},
$$
which has the closed form solution  
\begin{equation} \label{closed_form_beta}
    X_t= \sqrt{1-e^{-h(t)}} Z + e^{\frac{-h(t)}{2}} X_0, \quad h(t):= \int_0^t \beta_s \, \dd s,
\end{equation}
where $Z \sim \mathcal{N}(0, I_d)$.
Hence, $X_t$ in \eqref{closed_form_beta} reaches $Z$ for $t \to \infty$.
Then $X_t \sim P_{X_t}$, $t >0$ has a density $p_{X_t}$, and
under certain assumptions, a reverse process becomes
\begin{equation}\label{eq_rev_finite}
{\rm d} Y_{t} = \left(-f(T-t, Y_t) +  g(T-t)^2  \nabla \log p_{X_{T-t}}(Y_t)\right) \, {\rm d} t + g(T-t) \, {\rm d}  W_t, \quad Y_0 \sim X_T.
\end{equation}
However, the reverse SDE depends on the so-called \emph{score} 
$\nabla \log p_{X_t}$.
Score-based models intend to approximate the score by a neural network $s^\theta$ by minimizing, for $T >0$ large enough
$$
\L(\theta)\coloneqq\min_{\theta} \mathbb{E}_{t \sim \mathcal L_{[0,T]}} \mathbb{E}_{x \sim P_{X_t} }\left[ \Vert s_t^\theta(x) - \nabla \log p_{X_t}(x) \Vert^2 \right].
$$
Usually, we do not have access to $\nabla \log p_t$. Fortunately, by \cite[Appendix]{vincent2011connection}, the loss function can be rewritten, up to a constant, as
$$
 \mathcal L (\theta) = \mathbb{E}_{x_0 \sim P_{data}, t \sim \mathcal L_{[0,T]}} \mathbb{E}_{x \sim P_{X_t|X_0 = x_0} }
 \left[ \Vert s^\theta_t(x)
  - \nabla \log p_{X_t|X_0 = x_0}(x) \Vert^2 \right],
$$
which does not involve the score itself, but instead the conditional distribution. 
For \eqref{closed_form_beta}, this is simply a Gaussian
$$P_{X_t|X_0 = x_0} = \mathcal{N} \big(b_t \, x_0, (1- b_t^2) \,  I_d \big)
\quad \text{with} \quad
b_t = \text{e}^{-\frac{h(t)}{2}}
$$
since $X_0$ and $Z$ are independent. This implies
\begin{align}
\nabla \log p_{X_t|X_0=x_0}(x) &=
\nabla \left(-\tfrac{1}{2(1-b_t^2)} \left\|x - b_t x_0 \right\|^2\right)
=-\frac{ x - b_t x_0 }{1-b_t^2}.
\end{align}
Plugging this into the loss function, we get
\begin{align}
\mathcal L (\theta) & = \E_{t \sim \L_{[0,T]}} \mathbb{E}_{x_0 \sim P_{\text{data}}}\mathbb{E}_{x \sim \mathbb{P}_{X_t|X_0 = x_0} }
\Big[ \big\| s^\theta_t(x) + \frac{
x - b_t x_0}{1-b_t^2} \big\|^2 \Big]. 
\end{align}
Once the score is computed, we can use it in the reverse SDE \eqref{eq_rev_finite}, starting with the 
$Z \sim \mathcal N(0,I_d)$ which approximates $X_T$ for $T$ large enough.

For the forward PDE \eqref{eq}, the corresponding densities $p_t= p_{X_t}$ fulfill the \emph{Fokker-Planck equation} 
\begin{align}
\partial_t p_t(x)
&=-\nabla\cdot(f(t,x)p_t(x))+\frac{g(t)^2}{2}\Delta p_t(x)\\
&=-\nabla\cdot \big( \underbrace{\left( f(t,x)-\frac{g(t)^2}{2} \nabla_x \log p_t(x) 
\right)}_{v_t}p_t(x)\big) \label{fp}
\end{align}
which is the continuity equation for $P_{X_t}$. 
This gives the relation to the previous sections. In particular, given the score, we can also use the flow ODE \eqref{eq:flow_ode}
for sampling. 
Moreover, we could compute the log density via Proposition \ref{prop:log_div}. 

For $f(t,x)= -\frac12 \beta_t x$ as above, we obtain $v_t(x) =-\frac12 \beta_t x - g(t)^2\nabla \log p_t(x)$. This is closely related to the vector field associated to the independent coupling which was computed to be $v_t(x)=\frac{x}{t}+\frac{1-t}{t}\nabla\log p_t(x)$ in Proposition \ref{prop:flow_score} and Remark \ref{rem:flow_score}. 

Finally, we like to mention that the above (Fokker-Planck) continuity equation \eqref{fp} arises from a so-called Wasserstein gradient flow
of the KL divergence $KL(p,p_{\text{data}})$, i.e., the velocity field $v_t$ is in  the subdifferential of the above KL function at $p_t$, see e.g.
\cite[Chapter 10.4]{AGS2008}.

\textbf{Acknowledgement.}
Many thanks to R. Beinert and R. Duong for reading the manuscript and  to R. Duong for pointing out Example \ref{ex:v_explode}. C. Wald and G. Steidl gratefully acknowledge funding by the DFG within the SFB “Tomography Across the Scales” (STE 571/19-1, project number: 495365311) and G. Steidl acknowledges funding by the Deutsche Forschungsgemeinschaft under Germany's Excellence Strategy (The Berlin Mathematics Research Center MATH+).

%%%%%%%%%%%%%%%%%%%%%%%%%%%%
% Bibliography
%%%%%%%%%%%%%%%%%%%%%%%%%%%%
\bibliographystyle{abbrv}
\bibliography{references}

%-------------------------
\appendix
\section{Proof of Theorem \ref{m-markov}} \label{app_a}
%---------------------------------------------------------
Recall that a family $\mathcal A$ of subsets of a set $X$ is called \emph{monotone class},
if 
\begin{itemize}
    \item 
$\bigcup_{i=1}^\infty A_n\in \mathcal{A}$ for every increasing sequence $A_i\in \mathcal{A}$, and 
\item $\bigcap_{i=1}^\infty A_i\in\mathcal{A}$ for every decreasing sequence $A_i\in \mathcal A$. 
\end{itemize}
For a family of subsets $B$ of $X$, we call the smallest monotone class containing $B$ the \emph{monotone class generated by} $B$.  
The following theorem can be found, e.g., in \cite[Theorem 1.9.3)]{bogachev2007measure}.

\begin{thm}[Monotone class theorem] 
Let $\mathcal{A}$ be an  algebra of sets, i.e., a collection of sets such that $A \in \mathcal A$ implies $X\setminus A \in \mathcal A$, and
$A,B \in \mathcal A$ implies $A \cup B \in \mathcal A$.
Then the $\sigma$-algebra generated by $\mathcal{A}$ coincides with the monotone class generated by $\mathcal{A}$. In particular, any monotone class containing $\mathcal{A}$ also contains the $\sigma$-algebra generated by $\mathcal A$. 
\end{thm}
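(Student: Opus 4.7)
The strategy is the classical ``good sets'' argument. Let $\mathcal{M}$ denote the monotone class generated by $\mathcal{A}$, and let $\sigma(\mathcal{A})$ denote the $\sigma$-algebra generated by $\mathcal{A}$. One inclusion is immediate: every $\sigma$-algebra is in particular a monotone class (countable increasing unions and countable decreasing intersections are special cases of countable unions/intersections), so $\sigma(\mathcal{A})$ is itself a monotone class containing $\mathcal{A}$, giving $\mathcal{M}\subseteq \sigma(\mathcal{A})$. Everything reduces to proving $\mathcal{M}$ is a $\sigma$-algebra, since then $\mathcal{M}\supseteq \sigma(\mathcal{A})$ follows from minimality of $\sigma(\mathcal{A})$.

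To verify closure under complements, I would introduce
\[
\mathcal{M}_1 \coloneqq \{A\in\mathcal{M}: X\setminus A\in\mathcal{M}\}
\]
and check two things: (a) $\mathcal{A}\subseteq \mathcal{M}_1$, which is immediate from $\mathcal{A}$ being an algebra; and (b) $\mathcal{M}_1$ is itself a monotone class, which uses that complements interchange increasing and decreasing sequences together with the fact that $\mathcal{M}$ is a monotone class. Minimality of $\mathcal{M}$ then forces $\mathcal{M}_1=\mathcal{M}$.

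The main obstacle, and the heart of the proof, is closure under finite unions: one must bootstrap from unions of elements of $\mathcal{A}$ to unions of elements of $\mathcal{M}$. I would fix $B$ and consider
\[
\mathcal{M}_2(B) \coloneqq \{A\in\mathcal{M}: A\cup B\in\mathcal{M}\}.
\]
A short check shows $\mathcal{M}_2(B)$ is a monotone class for every $B\in\mathcal{M}$. First take $B\in\mathcal{A}$: then $\mathcal{A}\subseteq \mathcal{M}_2(B)$ (since $\mathcal{A}$ is an algebra), so $\mathcal{M}_2(B)=\mathcal{M}$, which means $A\cup B\in\mathcal{M}$ for all $A\in\mathcal{M}$ and $B\in\mathcal{A}$. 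By symmetry $A\cup B = B\cup A$, this in turn says $\mathcal{A}\subseteq \mathcal{M}_2(A)$ for every $A\in\mathcal{M}$, so again $\mathcal{M}_2(A)=\mathcal{M}$, establishing closure of $\mathcal{M}$ under pairwise (hence finite) unions.

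Finally I would assemble the pieces: $X\in\mathcal{A}\subseteq\mathcal{M}$; $\mathcal{M}$ is closed under complements and finite unions by the two steps above. For a countable family $A_1,A_2,\ldots\in\mathcal{M}$, set $B_n \coloneqq A_1\cup\cdots\cup A_n\in\mathcal{M}$; this is an increasing sequence, so $\bigcup_n A_n=\bigcup_n B_n\in\mathcal{M}$ by the monotone class property. Hence $\mathcal{M}$ is a $\sigma$-algebra, which together with $\mathcal{A}\subseteq \mathcal{M}$ gives $\sigma(\mathcal{A})\subseteq \mathcal{M}$, completing the argument. The ``in particular'' statement is then immediate: any monotone class $\mathcal{C}\supseteq \mathcal{A}$ contains $\mathcal{M}=\sigma(\mathcal{A})$.
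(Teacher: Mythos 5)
Your proof is correct. Note, however, that the paper does not prove this theorem at all: it is quoted in Appendix A as a classical result with a reference to Bogachev (Theorem 1.9.3), so there is no in-paper argument to compare against. What you have written is the standard ``good sets'' proof of the monotone class theorem — reduce everything to showing the generated monotone class $\mathcal{M}$ is a $\sigma$-algebra, establish closure under complements via $\mathcal{M}_1$, closure under unions via the two-stage bootstrap with $\mathcal{M}_2(B)$ (first $B\in\mathcal{A}$, then $B\in\mathcal{M}$ by symmetry), and pass from finite to countable unions through increasing limits — and all steps check out. The only cosmetic point: to get $X\in\mathcal{A}$ from the definition of algebra as stated, one should observe that $\mathcal{A}$ is nonempty, so $X=A\cup(X\setminus A)\in\mathcal{A}$ for any $A\in\mathcal{A}$; in the paper's application this is automatic since $\emptyset\in\mathcal{A}$ there.
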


\textbf{Theorem \ref{m-markov}}
Let $\mu_t: I\to \mathcal P_2(\R^d)$ be a narrowly continuous curve. Then, for every Borel set $B\subseteq \R^d$, 
we have that $t\mapsto \mu_t(B)$ is measurable, i.e., $\mu_t:I\times\B(\R^d)\to \R$ is a Markov kernel.

\begin{proof}
\underline{Step 1:} Let $\mathcal{A}$ be the set of all measurable sets $B\subseteq \R^d$ such that there exists a series $f_n\in C_b(\R^d), \|f\|_{\infty}\leq 1$ such that $\|f_n-1_B\|_{L^1(\mu_t)}\to 0$ for all $t\in I$. Note that for such sequences with $\|f_n-1_A\|_{L^1(\mu_t)}\to0, \|g_n-1_B\|_{L^1(\mu_t)}\to0$, we have that $\|\min\{f_n,g_n\}-1_{A\cap B}\|_{L^1(\mu_t)}\to 0$ and $\|(1-f_n)-1_{X \setminus A}\|_{L^1(\mu_t)}\to 0$. The former follows from $\min\{a,b\}=\frac{a+b}{2}-\frac{|a-b|}{2}$ and the fact that the $L^1$-limit is linear and interchanges with the absolute value by the reverse triangle inequality. The latter uses $1\in L^1(\mu_t)$, i.e. the finiteness of the measure $\mu_t$. 
Since trivially, $\emptyset \in \mathcal A$, it follows that $\mathcal A$ is an algebra of sets and we checked the first requirement of the monotone class theorem.\\
\underline{Step 2:} We will show that for $A\in \mathcal A$, the map $t\mapsto \mu_t(A)$ is measurable. Let $f_n\in C_b(\R^d), \|f_n\|_\infty\leq1$ be such that $\|f_n-1_A\|_{L^1(\mu_t)}\to 0$ for all $t$. For all $t$ we have by dominated convergence and $\|f_n-1_A\|_{L^1(\mu_t)}\to 0$ that \[t\to \mu_t(A)= \int 1_A \dd \mu_t=\int\lim_{n\to\infty} f_n \dd\mu_t=\lim_{n\to\infty} \int f_n \dd\mu_t.\] 
Consequently, since  $t\to \int f_n \dd\mu_t$ is continuous, we can conclude that $t\mapsto \mu_t(A)$ is measurable as a pointwise limit of continuous functions. \\
Furthermore it is easy to show that any open measurable set is in $\mathcal{A}$ since for an open set $A$ we can approximate $1_A$ pointwise from below by a series of continuous bounded functions. Thus the $\sigma$-algebra generated by $\mathcal{A}$ contains $\B(\R^d)$.\\
\underline{Step 3:} We show that $\mathcal C$, which contains all $C\in \B(\R^d)$ such that $t\mapsto \mu_t(C)$ is measurable, is a monotone class.
Thus we have to show that for an increasing sequence of measurable sets $C_i\in \mathcal C$ and $C=\cup_{i\in \N}C_i$, we have that $C\in \mathcal C$, i.e. $t\mapsto \mu_t(A)$ is measurable and the same for intersections. But this is true since $\mu_t(C)=\lim_{i\to\infty}\mu_t(C_i)$ and pointwise limits of measurable functions are measurable.\\
Finally, since by Step 2 it holds $\mathcal A\subseteq \mathcal C$, we can conclude that $\B(\R^d)\subseteq \mathcal C$ by the monotone class theorem and thus the claim.
\end{proof}

\section{Measurability of $v_t$ in Lemma \ref{prop:vector_dis}}\label{appendix:v_mes}

\begin{prop}\label{rem:v_t_alpha_x_mes}

For $\mu_0,\mu_1\in \P_2(\R^d)$, let $\alpha\in \Gamma(\mu_0,\mu_1)$
and
$\mu_t \coloneqq e_{t,\sharp}\alpha$. Let $\bar{\alpha}=a_\sharp(\mathcal{L}_{(0,1)} \times \alpha)$, where $a(t,x,y)=(t,e_t(x,y),y)$ and denote the disintegration with respect to $\pi^{t,1}(t,x,y)=(t,x)$ by $\bar{\alpha}^{t,x}$. Finally, define 
\begin{align}
    v(t,x)\coloneqq \int_{\R^d}  \frac{y-x}{1-t}\dd\bar{\alpha}^{t,x}(y)
\end{align}
for $(t,x)\in (0,1)\times \R^n$. Then $v$ is the velocity field induced by $\alpha$. For $\alpha_t=(e_t,\pi^2)_\sharp\alpha$, $\alpha_t=\alpha^x_t\times_x\mu_t$ and $v_t(x) = \int_{\R^d} \frac{y-x}{1-t} 
\, \dd \alpha^x_{t}(y)$ we have that
\begin{equation} 
v(t,\cdot) =v_t
\end{equation}
for a.e. $t\in(0,1)$.
\end{prop}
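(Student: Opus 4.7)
The point of the proposition is that the na\"ive definition $v_t(x)=\int\frac{y-x}{1-t}\,d\alpha_t^x(y)$ does not a priori deliver a jointly measurable function of $(t,x)$, because for each $t$ only a $\mu_t$-a.e.\ version of $\alpha_t^x$ is specified; the remedy is to run the disintegration theorem once, on the enlarged measure $\bar\alpha$ on $(0,1)\times\R^d\times\R^d$, so that a single jointly measurable kernel $\bar\alpha^{t,x}$ is obtained, and then to verify that its slices agree for a.e.\ $t$ with $\alpha_t^x$.

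First I would compute the marginal of $\bar\alpha$ under $\pi^{t,1}$. For $\varphi\in C_b((0,1)\times\R^d)$,
\begin{align}
\int \varphi(t,x)\,\dd\pi^{t,1}_\sharp\bar\alpha
&= \int_0^1\!\int_{\R^d\times\R^d}\varphi(t,e_t(x,y))\,\dd\alpha(x,y)\,\dd t
= \int_0^1\!\int_{\R^d}\varphi(t,z)\,\dd\mu_t(z)\,\dd t,
\end{align}
so $\pi^{t,1}_\sharp\bar\alpha=\mu_t\times_t\L_{(0,1)}$. The disintegration theorem then produces a family $\{\bar\alpha^{t,x}\}_{(t,x)}$ of probability measures on $\R^d$ such that $(t,x)\mapsto\bar\alpha^{t,x}(B)$ is Borel for every $B\in\B(\R^d)$ and $\bar\alpha=\bar\alpha^{t,x}\times_{(t,x)}(\mu_t\times_t\L_{(0,1)})$.

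Next I would read off the measurability of $v$. Since $(t,x,y)\mapsto (y-x)/(1-t)$ is continuous on $(0,1)\times\R^d\times\R^d$ and since $(t,x)\mapsto\bar\alpha^{t,x}$ is a measurable Markov kernel, a standard monotone class / Fubini argument (approximating by simple functions in $y$) shows that
$$ v(t,x)=\int_{\R^d}\frac{y-x}{1-t}\,\dd\bar\alpha^{t,x}(y) $$
is Borel measurable on $(0,1)\times\R^d$.

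The heart of the proof is to identify $\bar\alpha^{t,\cdot}$ with $\alpha_t^\cdot$ for a.e.\ $t$. By construction of $\bar\alpha$, for any $\psi\in C_b((0,1)\times\R^d\times\R^d)$,
\begin{align}
\int\psi\,\dd\bar\alpha
&= \int_0^1\!\int_{\R^d\times\R^d}\psi(t,e_t(x,y),y)\,\dd\alpha(x,y)\,\dd t
= \int_0^1\!\int_{\R^d\times\R^d}\psi(t,x',y)\,\dd\alpha_t(x',y)\,\dd t,
\end{align}
so that $\bar\alpha=\alpha_t\times_t\L_{(0,1)}$. Comparing with the disintegration above and applying Fubini,
\begin{align}
\int_0^1\!\int_{\R^d}\!\int_{\R^d}\psi\,\dd\bar\alpha^{t,x}(y)\,\dd\mu_t(x)\,\dd t
= \int_0^1\!\int_{\R^d}\!\int_{\R^d}\psi\,\dd\alpha_t^{x}(y)\,\dd\mu_t(x)\,\dd t.
\end{align}
Choosing $\psi$ of the separated form $h(t)\varphi(x,y)$ for $h\in C_b((0,1))$ and varying $\varphi$ over a countable dense subset of $C_b(\R^d\times\R^d)$ yields, off a common $\L_{(0,1)}$-null set $N$, equality of the two measures $\bar\alpha^{t,x}\times_x\mu_t$ and $\alpha_t^x\times_x\mu_t$ on $\R^d\times\R^d$ for every $t\notin N$. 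Uniqueness of disintegration then gives $\bar\alpha^{t,\cdot}=\alpha_t^{\cdot}$ as $\mu_t$-a.e.\ defined families, for every $t\notin N$.

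Finally, for such $t$,
$$ v(t,x)=\int_{\R^d}\frac{y-x}{1-t}\,\dd\bar\alpha^{t,x}(y)
=\int_{\R^d}\frac{y-x}{1-t}\,\dd\alpha_t^{x}(y)=v_t(x)\qquad \text{for $\mu_t$-a.e.\ }x, $$
which is the claimed equality $v(t,\cdot)=v_t$ for a.e.\ $t\in(0,1)$. Combined with Proposition \ref{prop:vector_dis}, this shows that the Borel measurable field $v$ satisfies $v_t\mu_t=e_{t,\sharp}[(y-x)\alpha]$ for a.e.\ $t$, i.e.\ $v$ is the velocity field induced by $\alpha$ in the sense of \eqref{eq:ind_v}. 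The main technical obstacle is the interplay between the \emph{joint} disintegration of $\bar\alpha$ (which yields measurability) and the \emph{slice-wise} disintegrations $\alpha_t^x$ (which only exist up to null sets that depend on $t$); the Fubini/uniqueness argument above is exactly what is needed to reconcile them and to upgrade pointwise--in--$t$ equalities to a single measurable representative.
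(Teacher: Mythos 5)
Your proposal is correct and follows essentially the same route as the paper's proof: compute $\pi^{t,1}_\sharp\bar\alpha=\mu_t\times_t\L_{(0,1)}$, disintegrate the enlarged measure $\bar\alpha$ to obtain a single measurable kernel, and identify $\bar\alpha^{t,\cdot}$ with $\alpha_t^{\cdot}$ off a common null set via a countable dense family of test functions. The only cosmetic difference is that the paper verifies \eqref{eq:v_char} directly by a change of variables before the identification step, whereas you deduce it afterwards from $v(t,\cdot)=v_t$ together with Proposition \ref{prop:vector_dis}; both orderings are equally valid.
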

\begin{proof} Note that $\pi^{t,1}_\sharp\bar{\alpha}=(t,e_t)_\sharp\left(\mathcal{L}_{(0,1)} \times \alpha\right)=\mu_t\times_t\mathcal{L}_{(0,1)}$. Hence 
\begin{align}
    \int f(t,x)\int \frac{y-x}{1-t}\dd\bar{\alpha}^{t,x}(y)\dd\left(\mu_t\times_t\mathcal{L}_{(0,1)}\right)&=\int f(t,x)\frac{y-x}{1-t}\dd\bar{\alpha}(t,x,y)\\
    &=\int f(t,e_t(x,y))(y-x)\dd (\alpha\times \mathcal{L}_{(0,1)})\\
    &=\int f(t,x)\dd e_\sharp((y-x)\alpha\times \mathcal{L}_{(0,1)}).
\end{align}
Thus, $v$ is the vector field induced by $\alpha$ since we verified \eqref{eq:v_char}.
As in the proof of Lemma \ref{eq:velo} it follows $v(t,\cdot)=v_t$ for almost all $t\in(0,1)$. More precisely, by carefully checking the definition of disintegration we will show that $\bar{\alpha}^{t,x}=\alpha_t^x$ for a.e. $t\in[0,1]$. Let $\{f_n\}_{n\in\NN}\subset C_b(\R^d\times \R^d)$ be a dense subset. Then for every $h\in C_b([0,1])$ we have that
\begin{align}
\int_0^1 h(t)\int_{\R^d}\int_{\R^d} f_n(x,y)\dd\bar{\alpha}^{t,x}(y)\dd\mu_t(x)\dd t&=\int_0^1 h(t)\int_{\R^d\times\R^d} f_n(e_t(x,y),y)\dd \alpha\dd t\\
    &=\int_0^1 h(t)\int_{\R^d\times\R^d} f_n(x,y)\dd \alpha_t\dd t
\end{align}
and therefore
\begin{align}
\int_{\R^d}\int_{\R^d} f_n(x,y)\dd\bar{\alpha}^{t,x}(y)\dd\mu_t(x)=\int_{\R^d\times\R^d} f_n(x,y)\dd \alpha_t\qquad\text{for a.e. $t\in[0,1]$}.
\end{align}
Thus there exists a zero set $N$ such that 
\begin{align}
\int_{\R^d}\int_{\R^d} f_n(x,y)\dd\bar{\alpha}^{t,x}(y)\dd\mu_t(x)=\int_{\R^d\times\R^d} f_n(x,y)\dd \alpha_t
\end{align}
for all $t\in[0,1]\setminus N$ and all $n\in\NN$.
Using that $\{f_n\}_{n\in\NN}\subset C_b(\R^d\times \R^d)$ is dense we can conclude that $\bar{\alpha}^{t,x}=\alpha_t^x$ as Markov kernels for a.e. $t\in[0,1]$. Hence, $v(t,\cdot)=v_t$ for a.e. $t\in(0,1)$.
\end{proof}

\section{Remark on Normalizing Flows}\label{appc}
Let us have a quick look at normalizing flows
which are invertible neural networks. Other invertible neural
networks are residual networks \cite{CBDJ2019,HZRS2016}, which we will not address in this paper.

A \emph{normalizing flow} between two distributions 
$\mu_i \in \mathcal P(\R^d)$, $i=0,1$
is a
$C^1$ diffeomorphism $T: \R^d \to \R^d$ such that
$$
\mu_1 = T_\sharp \mu_0.
$$
Normalizing flows were approximated by neural networks $T^\theta$
which have a special architecture in order to be invertible
\cite{AKRK2019,DSB2017}.
To this end, a loss function is minimized, e.g. the KL divergence between the push-forward of the latent distribution
$T^\theta_\sharp \mu_0$ and the data distribution $\mu_1$.
Due to the special network architecture, normalizing flows 
are are not suited for in high dimensional problems. Moreover,
they suffer from a limited expressiveness, e.g., 
when trying to map a unimodal (Gaussian) distribution to a multimodal one, their Lipschitz constants explodes \cite{HN2021}.
Stochastic normalizing flows \cite{WKN2020}
circumvent this problem by introducing stochastic layers.
For the definition of stochastic normalizing flows via Markov kernels and an overview, we refer to 
\cite{HHG2023,HHS22}.

In contrast to normalizing flows, recent generative models like flow matching and continuous
normalizing flows as well as score based diffusion models
use, instead of learning a map $T : \R^d \to
\R^d$,
curves in the
space of probability measures, which slowly transform the latent distribution into the
data distribution. In this way, at every time step only a small change of probability measures has to be learned as opposed to learning everything in one step.

\end{document}